\documentclass[twoside,11pt]{article}

\usepackage{jmlr2e}

\usepackage{lastpage}
\jmlrheading{25}{2024}{1-\pageref{LastPage}}{11/23; Revised
	4/24}{5/24}{23-1563}{Yuheng Ma and Hanfang Yang}
\ShortHeadings{Optimal Locally Private Nonparametric Classification with Public Data}{Ma and Yang}


\usepackage{float}

\usepackage{subfigure}
\usepackage{multirow}
\usepackage[all]{xy}
\usepackage{enumitem}
\usepackage{algorithmic} 
\setlist[enumerate]{itemsep=0mm}
\usepackage{color}
\usepackage{jmlr2e}
\usepackage{wrapfig}
\usepackage{graphics}
\usepackage{epsfig}
\usepackage[ruled]{algorithm2e}
\usepackage{booktabs}
\usepackage{url}
\usepackage{psfrag}
\usepackage{relsize}
\usepackage{amsmath,amsfonts,mathrsfs,mathtools,amssymb}
\usepackage{hyperref}
\hypersetup{
	colorlinks=true,
	linkcolor=blue,
	filecolor=magenta,      
	urlcolor=cyan,
	pdftitle={Overleaf Example},
	pdfpagemode=FullScreen,
	hidelinks
}
\usepackage{tikz,pgfplots}
\usepackage{tkz-tab}
\usepackage[format=hang,justification=justified,singlelinecheck=false,font=footnotesize]{caption}

\usepackage{tikz,pgfplots}
\pgfplotsset{compat=newest}
\newlength\figureheight
\newlength\figurewidth
\usetikzlibrary{shapes,arrows}
\usepackage[format=hang,justification=justified,singlelinecheck=true,font=footnotesize]{caption}
\usepackage{flowchart}

\newtheorem{assumption}{Assumption}
\newcommand{\eins}{\boldsymbol{1}}

\allowbreak

\begin{document}
	
	\title{Optimal Locally Private Nonparametric Classification \\ with Public Data}
	
	\author{\name Yuheng Ma \email yma@ruc.edu.cn \\
		\addr School of Statistics \\ 
		Renmin University of China \\
		100872 Beijing, China 
		\vspace{0.2cm}
		\\
		\name Hanfang Yang \email hyang@ruc.edu.cn \\
		\addr Center for Applied Statistics, School of Statistics \\ 
		Renmin University of China \\
		100872 Beijing, China 
		\vspace{0.2cm}
	}

	\editor{Po-Ling Loh}
	
	\maketitle

	\begin{abstract}
		
		In this work, we investigate the problem of public data assisted non-interactive Local Differentially Private (LDP) learning with a focus on non-parametric classification. 
		Under the posterior drift assumption, we for the first time derive the mini-max optimal convergence rate with LDP constraint.
		Then, we present a novel approach, the locally differentially private classification tree, which attains the mini-max optimal convergence rate. 
		Furthermore, we design a data-driven pruning procedure that avoids parameter tuning and provides a fast converging estimator.
		Comprehensive experiments conducted on synthetic and real data sets show the superior performance of our proposed methods.
		Both our theoretical and experimental findings demonstrate the effectiveness of public data compared to private data, which leads to practical suggestions for
		prioritizing non-private data collection.
		
	\end{abstract}
	
	\begin{keywords}
		Decision Tree, Local Differential Privacy, Non-parametric Classification, Posterior Drift, Public Data
	\end{keywords}

	\section{Introduction}\label{sec:introduction}

	Local differential privacy (LDP) \citep{kairouz2014extremal, duchi2018minimax}, which is a variant of differential privacy (DP) \citep{dwork2006calibrating}, has gained considerable attention in recent years, particularly among industrial developers \citep{erlingsson2014rappor, tang2017privacy}. 
	Unlike central DP, which relies on a trusted curator who has access to the raw data, LDP assumes that each sample is possessed by a data holder, and each holder privatizes their data before it is collected by the curator.
	Although this setting provides stronger protection, learning with perturbed data requires more samples \citep{duchi2018minimax} compared to the central setting. 
	Moreover, basic techniques such as principal component analysis \citep{wang2020principal, bie2022private}, data standardization \citep{bie2022private}, and tree partitioning \citep{wu2022ensemble} are troublesome or even prohibited. 
	Consequently, LDP introduces challenges for various machine learning tasks that are otherwise considered straightforward, including density estimation \citep{duchi2018minimax}, mean estimation \citep{duchi2018minimax}, Gaussian estimation \citep{joseph2019locally}, and change-point detection \citep{berrett2021locally}.

	Fortunately, in certain scenarios, private estimation performance can be enhanced with an additional public data set.
	From an empirical perspective, numerous studies demonstrated the effectiveness of using public data \citep{papernotsemi, papernot2018scalable, yu2021large, yu2022differentially, nasr2023effectively, gu2023choosing}.
	In most cases, the public data is out-of-distribution and collected from other related sources. 
	Yet, though public data from an identical distribution has been extensively studied \citep{bassily2018model, alon2019limits, kairouz2021nearly, amid2022public, ben2023private, lowy2023optimal, wang2023generalized}, only a few works systematically described the out-of-distribution relationship.
	Focusing on unsupervised problems, i.e. Gaussian estimation, \cite{bie2022private} discussed the gain of using public data with respect to the total variation distance between private and public data distributions.
	As for supervised learning,
	\cite{gu2023choosing} discussed the similarities between data sets to guide the selection of appropriate public data.
	\cite{ma2023decision} leveraged public data to create informative partitions for LDP regression trees. 
	Neither of them discussed the relationship between regression functions of private and public data distributions. 
	Aiming at the gap between theory and practice in supervised private learning with public data, we pose the following intriguing questions:
	\begin{align*}
		\textit{1. Theoretically,  when is labeled public data beneficial for LDP learning?}
	\end{align*}
	\begin{align*}
		\textit{2. Under such conditions, how to effectively leverage public data?}
	\end{align*}
	Hyper-parameter tuning is an essential yet challenging task in private learning \citep{papernot2021hyperparameter, mohapatra2022role} and remains an unsolved problem in local differential privacy. 
	Common strategies, such as cross-validation and information criteria, remain unavailable due to their requirement for multiple queries to the training data. 
	Tuning by omitting a validation set is restrictive in terms of the size of the potential hyperparameter space \citep{ma2023decision}.
	Moreover, leveraging public data necessitates more complex models, which often results in an increased number of hyperparameters to tune.
	Consequently, models with public data may face difficulties in parameter selection, which leads to the third consideration: 
	\begin{align*}
		\textit{3. Can data-driven hyperparameter tuning approaches be derived?}
	\end{align*}

	In this work, we answer the above questions from the perspective of non-parametric classification with the non-interactive LDP constraint. 
	Though parametric methods enjoy better privacy-utility trade-off \citep{duchi2018minimax} under certain assumptions, they are vulnerable to model misspecification. 
	Under LDP setting where we have limited prior knowledge of the data, non-parametric methods ensure the worst-case performance. 
	As for private classification, the local setting remains rarely explored compared to the central setting. 
	A notable reason is that most gradient-based methods \citep{song2013stochastic, abadi2016deep} are prohibited due to the high demand for memory and communication capacity on the terminal machine \citep{tramer2022considerations}. 
	We consider non-interactive LDP, where the communication between the curator and the data holders is limited to a single round. 
	This type of method is favored by practitioners \citep{smith2017interaction, zheng2017collect, daniely2019locally} since they avoid multi-round protocols that are prohibitively slow in practice due to network latency.

	Under such background, we summarize our contributions by answering the questions:
	\begin{itemize}
		\item For the first question, we propose to adopt the framework of \textit{posterior drift}, a setting in transfer learning \citep{cai2021transfer}, for our analysis. 
		Specifically, given the target distribution $\mathrm{P}$ and the source distribution $\mathrm{Q}$, we require $\mathrm{P}_X = \mathrm{Q}_X$, i.e. they have the same marginal distributions. 
		Moreover, their Bayes decision rule are identical, i.e. $\left(\mathrm{P}(Y = 1| X ) -{1}/{2}\right) \cdot \left(\mathrm{Q}(Y = 1| X) - {1}/{2}\right)\geq 0$.
		The assumption covers a wide range of distributions and includes $\mathrm{P} = \mathrm{Q}$ as a special case. 
		Theoretically, we for the first time establish a mini-max lower bound for nonparametric LDP learning with public data.
		
		\item To answer the second question, we propose the \textit{Locally differentially Private Classification Tree} (LPCT), a novel algorithm that leverages both public and private data.
		Specifically, we first create a tree partition on public data and then estimate the regression function through a  weighted average of public and private data. 
		Besides inheriting the merits of tree-based models such as efficiency, interpretability, and extensivity to multiple data types, LPCT is superior from both theoretical and empirical perspectives. 
		Theoretically, we show that LPCT attains the mini-max optimal convergence rate. 
		Empirically, we conduct experiments on both synthetic and real data sets to show the effectiveness of LPCT over state-of-the-art competitors.

		\item To answer the last question, we propose the \textit{Pruned Locally differentially Private Classification Tree} (LPCT-prune), a data-driven classifier that is free of parameter tuning. 
		Specifically, we query the privatized information with a sufficient tree depth and conduct an efficient pruning procedure.
		Theoretically, we show that LPCT-prune achieves the optimal convergence rate in most cases and maintains a reasonable convergence rate otherwise.
		Empirically, we illustrate that LPCT-prune with a default parameter performs comparable to LPCT with the best parameters. 
		
	\end{itemize}

	This article is organized as follows.
	In Section \ref{sec:relatedwork}, we provide a comprehensive literature review.
	Then we formally define our problem setting and introduce our methodology in Section \ref{sec::methodology}.
	The obtained theoretical results are presented in Section \ref{sec:theoreticalresults}.
	Then we develop the data-driven estimator in Section \ref{sec:datadriventreepruning}.
	Extensive experiments are conducted on synthetic data and real data in Section \ref{sec:simulation} and \ref{sec:realdata} respectively.
	The conclusion and discussions are in Section \ref{sec:comment}.

	\section{Related Work}\label{sec:relatedwork}
	
	\subsection{Private Learning with Public Data}
	
	Leveraging public data in private learning is a research topic that is both theoretically and practically meaningful to closing the gap between private and non-private methods.
	There is a long list of works addressing the issue of private learning with public data from a central differential privacy perspective.
	Through unlabeled public data, \cite{papernotsemi, papernot2018scalable} fed knowledge privately into student models, whose theoretical benefits are established by \cite{liu2021revisiting}.
	Empirical investigations have demonstrated the effectiveness of pretraining on public data and fine-tuning privately on sensitive data \citep{li2022large, yu2022differentially, ganesh2023public, yu2023selective}.
	Using the information obtained in public data, \cite{wang2020differentially, kairouz2021nearly, yu2021not, zhou2021bypassing, amid2022public, nasr2023effectively} investigated preconditioning or adaptively clipping the private gradients, which reduce the required amount of noise in DP-SGD and accelerate its convergence.
	Theoretical works such as \cite{bassily2018model, alon2019limits, bassily2020private} studied sample complexity bounds for PAC learning and query release based on the VC dimension of the function space.
	\cite{bie2022private} used public data to standardize private data and showed that the sample complexity of Gaussian mean estimation can be augmented in the sense of the range parameter. 
	More recently, by relating to sample compression schemes, \cite{ben2023private} presented sample complexities for several problems including Gaussian mixture estimation.
	\cite{ganesh2023public} explained the necessity of public data by investigating the loss landscape.

	In contrast, there is less attention focused on the local setting.
	\cite{su2023pac} considered half-space estimation with an auxiliary unlabeled public data. 
	They construct several weak LDP classifiers and label the public data by majority vote. 
	They established sample complexity that is linear in the dimension and polynomial in other terms for both private and public data.
	\cite{wang2023generalized} employed public data to estimate the leading eigenvalue of the covariance matrix, which will serve as a reference for data holders to clip their statistics. 
	Given the learned clipping threshold, the sample complexity for GLM with LDP for a general class of functions is improved to polynomial concerning error, dimension, and privacy budget, which is shown impossible without public data \citep{smith2017interaction,dagan2020interaction}. 
	Both \cite{wang2023generalized} and \cite{su2023pac} considered the unlabeled data, while our work further leverages the information contained in the labels if labeled public data is available. 
	More recently, \cite{ma2023decision} enhanced LDP regression by using public data to create an informative partition.
	However, the relationship between the source and target distributions is vaguely defined. 
	Also, their theoretical results are no better than using just private data. 
	\cite{lowy2023optimal} studied mean estimation, DP-SCO, and DP-ERM with public data.
	They assume that the public data and the private data are identically distributed. 
	Also, the theoretical results are established under sequentially-interactive LDP.
	Despite the distinctions in the research problem, their conclusions are analogous to ours: the asymptotic optimal error can be achieved by using either the private or public estimation solely, while weighted average estimation achieves better constant as well as empirical performance.

	\subsection{Locally Private Classification}
	LDP classification with parametric assumptions can be viewed as a special case of LDP-ERM \citep{smith2017interaction, zheng2017collect, wang2018empirical, daniely2019locally, dagan2020interaction, wang2023generalized}.
	Most works provide evidence of hardness on non-interactive LDP-ERM.
	Non-parametric methods possess a worse privacy-utility trade-off \citep{duchi2018minimax} and are considered to be harder than parametric problems. 
	\cite{zheng2017collect} investigated the kernel ridge regression and established a bound of estimation error of the order $n^{-1/4}$.
	Their results only apply to losses that are strongly convex and smooth. 
	The works most related to ours is that of \cite{berrett2019classification, berrett2021strongly}.
	\cite{berrett2019classification} studied non-parametric classification with the h\"{o}lder smoothness assumption. 
	They proposed a histogram-based method using the Laplace mechanism and showed that this approach is mini-max optimal under their assumption. 
	The strong consistency of this approach is established in \cite{berrett2021strongly} as a by-product.
	Our results include their conclusion as a special case and hold in a stronger sense, i.e. ``uniformly with high probability" instead of ``in expectation".

	\subsection{Private Hyperparameter Tuning}\label{sec:relatedparametertuning}
	
	Researchers highlighted the importance and difficulty of parameter tuning under DP constraint 
	\citep{papernot2021hyperparameter, wang2023dp} 
	focusing on central DP. 
	\cite{chaudhuri2013stability} explored this problem under strong stability assumptions.
	\cite{liu2019private} presented DP algorithms for selecting the best parameters in several candidates.
	Compared to the original algorithm, their method suffers a factor of 3 in the privacy parameter, meaning that 2/3 of the privacy budget is spent on the selection process. 
	\cite{papernot2021hyperparameter} improved the result with a much tighter privacy bound in Renyi-DP by running the algorithm random times and returning the best result. 
	Recently, \cite{wang2023dp} proposed a framework that reduces DP parameter selection to a non-DP counterpart.
	This approach leverages existing hyperparameter optimization methods and outperforms the uniform sampling approaches \citep{liu2019private, papernot2021hyperparameter}.
	\cite{ramaswamy2020training} proposed to tune hyperparameters on public data sets.
	\cite{mohapatra2022role} argued that, under a limited privacy budget, DP optimizers that require less tuning are preferable.  
	As far as we know, there is a lack of research focusing on the LDP parameter selection.
	\cite{butucea2020local} investigated adaptive parameter selection in non-parametric density estimation, and provided a data-driven wavelet estimator.

	\section{Locally Differentially Private Classification Tree}\label{sec::methodology}
	In this section, we introduce the \textit{Locally differentially Private Classification Tree} (LPCT). 
	After explaining necessary notations and problem setting in Section \ref{sec:problemsetting}, we propose our decision tree partition rule in Section \ref{sec:partition}.
	Finally, Section \ref{sec:ldpanypartition} introduces our privacy mechanism and proposes the final algorithm.

	\subsection{Problem Setting}\label{sec:problemsetting}

	We introduce necessary notations. 
	For any vector $x$, let $x^i$ denote the $i$-th element of $x$. 
	Recall that for $1 \leq p < \infty$, the $L_p$-norm of $x = (x^1, \ldots, x^d)$ is defined by $\|x\|_p := (|x^1|^p + \cdots + |x^d|^p)^{1/p}$. 
	Let $x_{i:j} = (x_i, \cdots, x_j)$ be the slicing view of $x$ in the $i,\cdots, j$ position.
	Throughout this paper, we use the notation $a_n \lesssim b_n$ and $a_n \gtrsim b_n$ to denote that there exist positive constant $c$ and $c'$ such that $a_n \leq c b_n$ and $a_n \geq c' b_n$, for all $n \in \mathbb{N}$.
	In addition, we denote $a_n\asymp b_n$ if $a_n\lesssim b_n$ and $b_n\lesssim a_n$.
	Let $a\vee b = \max (a,b)$ and $a\wedge b = \min (a,b)$. 
	Besides, for any set $A\subset \mathbb{R}^d$, the diameter of $A$ is defined by $\mathrm{diam}(A):=\sup_{x,x'\in A}\|x-x'\|_2$. 
	Let $f_1 \circ f_2$ represent the composition of functions $f_1$ and $f_2$.
	Let $A\times B$ be the Cartesian product of sets where $A\in\mathcal{X}_1$ and $B\in \mathcal{X}_2$. 
	For measure $\mathrm{P}$ on $\mathrm{X}_1$ and $\mathrm{Q}$ on $\mathcal{X}_2$, define the product measure $\mathrm{P} \otimes \mathrm{Q}$ on $\mathcal{X}_1\times \mathcal{X}_2$ as $\mathrm{P} \otimes \mathrm{Q} (A\times B)= \mathrm{P}(A)\cdot \mathrm{Q}(B)$. For an integer $k$, denote the $k$-fold product measure on $\mathcal{X}_1^k$ as $\mathrm{P}^k$.
	Let the standard Laplace random variable have probability density function $e^{-|x|} / 2$ for $x\in \mathbb{R}$.

	It is legitimate to consider binary classification while an extension of our results to multi-class classification is straightforward. 
	Suppose there are two unknown probability measures, the private measure $\mathrm{P}$ and the public measure $\mathrm{Q}$ on $\mathcal{X}\times \mathcal{Y} =  [0,1]^d \times\{0,1\}$. 
	We observe $n_{\mathrm{P}}$ i.i.d. private samples ${D}^{\mathrm{P}} = \{(X_1^{\mathrm{P}}, Y_1^{\mathrm{P}}), \dots,(X_{n_{\mathrm{P}}}^{\mathrm{P}}, Y_{n_{\mathrm{P}}}^{\mathrm{P}})\}$ drawn from the target distribution ${\mathrm{P}}$, and $n_{\mathrm{Q}}$ i.i.d. public samples ${D}^{\mathrm{Q}} = \{(X_1^{\mathrm{Q}}, Y_1^{\mathrm{Q}}), \dots,(X_{n_{\mathrm{Q}}}^{\mathrm{Q}}, Y_{n_{\mathrm{Q}}}^{\mathrm{Q}})\}$ drawn from the source distribution ${\mathrm{Q}}$. 
	The data points from the distributions ${\mathrm{P}}$ and ${\mathrm{Q}}$ are also mutually independent. 
	Our goal is to classify under the target distribution $\mathrm{P}$. Given the observed data, we construct a classifier $\widehat{f}:\mathcal{X} \rightarrow\mathcal{Y}$ which minimizes the classification risk under the target distribution $\mathrm{P}$ :
	\begin{align*}
		\mathcal{R}_{\mathrm{P}}(\widehat{f}) \triangleq \mathbb{P}_{(X, Y) \sim \mathrm{P}}(Y \neq \widehat{f}(X)).
	\end{align*}
	Here $\mathbb{P}_{(X, Y) \sim \mathrm{P}}(\cdot)$ means the probability when $(X, Y)$ are drawn from distribution $\mathrm{P}$. The Bayes risk, which is the smallest possible risk with respect to $\mathrm{P}$, is given by $\mathcal{R}_{\mathrm{P}}^{*}:=\inf \{\mathcal{R}_{\mathrm{P}}(f) | f: \mathcal{X} \to \mathcal{Y} \text{ is measurable} \}$. 
	In such binary classification problems, the regression functions are defined as
	\begin{align}\label{equ:defofgroundtruth}
		\eta_{\mathrm{P}}^*(x) \triangleq \mathrm{P}(Y=1 \mid X=x) \quad \text { and } \quad \eta_{\mathrm{Q}}^*(x) \triangleq \mathrm{Q}(Y=1 \mid X=x),
	\end{align}
	which represent the conditional distributions $\mathrm{P}_{Y \mid X}$ and $\mathrm{Q}_{Y \mid X}$. 
	The function that achieves Bayes risk with respect to $\mathrm{P}$ is called {Bayes function}, namely, $f^*_{\mathrm{P}}(x):=  \eins\left(\eta_{\mathrm{P}}^*\left(x\right) > 1/2\right)$.

	We consider the following setting. 
	The estimator $\widehat{f}$ is considered as a random function with respect to both $D^{\mathrm{P}}$ and $D^{\mathrm{Q}}$, while its construction process with respect to $D^{\mathrm{P}}$ is \textit{locally differentially private} (LDP).
	The rigorous definition of LDP is as follows.

	\begin{definition}[\textbf{Local Differential Privacy}]\label{def:ldp}
		Given data $\{(X_i^{\mathrm{P}},Y_i^{\mathrm{P}})\}_{i=1}^{n_{\mathrm{P}}}$, a privatized information $Z_i$, which is a random variable on $\mathcal{S}$, is released based on $(X_i^{\mathrm{P}}, Y_i^{\mathrm{P}})$ and $Z_1,\cdots, Z_{i - 1}$. 
		Let $\sigma(\mathcal{S})$ be the $\sigma$-field on $\mathcal{S}$. 
		$Z_i$ is drawn conditional on $(X_i^{\mathrm{P}}, Y_i^{\mathrm{P}})$ and $Z_{1 : i -1}$ via the distribution $\mathrm{R}_i \left(S \mid X_i^{\mathrm{P}}=x, Y_i^{\mathrm{P}} = y, Z_{1 : i -1} = z_{1 : i -1}\right) $ for $S\in \sigma(\mathcal{S})$. 
		Then the mechanism $\mathrm{R} = \{\mathrm{R}_i\}_{i = 1}^{n_{\mathrm{P}}}$ is \textit{sequentially-interactive $\varepsilon$-locally differentially private} ($\varepsilon$-LDP) if
		\begin{align*}
			\frac{\mathrm{R}_i\left(S \mid X_i^{\mathrm{P}}=x, Y_i^{\mathrm{P}} = y,  Z_{1 : i -1} = z_{1 : i -1} \right)}{\mathrm{R}_i\left(S \mid X_i^{\mathrm{P}}=x^{\prime}, Y_i^{\mathrm{P}} = y^{\prime},  Z_{1 : i -1} = z_{1 : i -1} \right)}  \leq e^{\varepsilon}
		\end{align*}
		for all $1 \leq i\leq  n_{\mathrm{P}}, S \in \sigma(\mathcal{S}), x, x^{\prime} \in \mathcal{X}, y, y^{\prime} \in \mathcal{Y}$, and $z_{1:i-1} \in \mathcal{S}^{i-1}$. 
		Moreover, if \begin{align*}
			\frac{\mathrm{R}_i\left(S \mid X_i^{\mathrm{P}}=x, Y_i^{\mathrm{P}} = y \right)}{\mathrm{R}_i\left(S \mid X_i^{\mathrm{P}}=x^{\prime}, Y_i^{\mathrm{P}} = y^{\prime}\right)}  \leq e^{\varepsilon}
		\end{align*}
		for all $1 \leq i\leq  n_{\mathrm{P}}, S \in \sigma(\mathcal{S}), x, x^{\prime} \in \mathcal{X}, y, y^{\prime} \in \mathcal{Y}$, then $\mathrm{R}$ is non-interactive $\varepsilon$-LDP. 
	\end{definition}
	
	This formulation is widely adopted \citep{duchi2018minimax, berrett2019classification}. 
	In contrast to central DP where the likelihood ratio is taken concerning some statistics of all data, LDP requires individuals to guarantee their own privacy by considering the likelihood ratio of each $(X_i^{\mathrm{P}}, Y_i^{\mathrm{P}})$. Once the view $z$ is provided, no further processing can reduce the deniability about taking a value $(x, y)$ since any outcome $z$ is nearly as likely to have come from some other initial value $(x^{\prime}, y^{\prime})$. 
	Sequentially-interactive mechanisms fulfill the requirements for a wide range of methods including gradient-based approaches.
	However, they can prove to be prohibitively slow in practice.
	Conversely, non-interactive mechanisms are quite efficient but suffer from poorer performance \citep{smith2017interaction}.
	Our analysis does not encompass the more general fully-interactive mechanisms due to their complex nature \citep{duchi2018minimax}.

	Besides the privacy guarantee of $\mathrm{P}$ data, we consider the existence of additional public data from distribution $\mathrm{Q}$.
	To depict the relationship between $\mathrm{P}$ and $\mathrm{Q}$, we consider the \textit{posterior drift} setting in transfer learning literature \citep{cai2021transfer}. 
	Under the posterior drift model, let $\mathrm{P}_X = \mathrm{Q}_X$ be the identical marginal distribution. The main difference between $\mathrm{P}$ and $\mathrm{Q}$ lies in the regression functions $\eta_{\mathrm{P}}^*(x)$ and $\eta_{\mathrm{Q}}^*(x)$.
	Specifically, let $\eta_{\mathrm{Q}}^*(x)=\phi \left(\eta_{\mathrm{P}}^*(x) \right)$ for some strictly increasing link function $\phi(\cdot)$ with $\phi\left({1}/{2}\right)={1}/{2}$. 
	The condition that $\phi$ is strictly increasing leads to the situation where those $X$ that are more likely to be labeled $Y=1$ under $\mathrm{P}$ are more likely to be labeled $Y=1$ under $\mathrm{Q}$. 
	The assumption $\phi\left({1}/{2}\right)={1}/{2}$ guarantees that those $X$ that are non-informative under $\mathrm{P}$ are the same under $\mathrm{Q}$, and more importantly, the sign of $\eta_{\mathrm{P}}^* - {1}/{2}$ and $\eta_{\mathrm{Q}}^* - {1}/{2}$ are identical.

	\subsection{Decision Tree Partition}
	\label{sec:partition}

	In this section, we formalize the decision tree partition process.
	A binary tree partition $\pi$ is a disjoint cover of $\mathcal{X}$ obtained by recursively split grids into subgrids. 
	While our methodology applies to any tree partition, it can be challenging to use general partitions such as the original CART \citep{breiman1984classification} for theoretical analysis. 
	Following \cite{cai2023extrapolated, ma2023decision}, we propose a new splitting rule called the \textit{max-edge partition rule}. 
	This rule is amenable to theoretical analysis and can also achieve satisfactory practical performance. 
	In the non-interactive setting, the private data set is unavailable during the partition process and the partition is created solely on the public data. 
	Given public data set $\{(X_i^{\mathrm{Q}}, Y_i^{\mathrm{Q}})\}_{i=1}^{n_{\mathrm{Q}}}$, the partition rule is stated as follows:
	\begin{itemize}
		\item Let $A_{(0)}^1 := [0,1]^d$ be the initial rectangular cell and $\pi_0 := \{ A_{(0)}^j\}_{j\in \mathcal{I}_{0}}$ be the initialized cell partition. 
		$\mathcal{I}_{0} = \{1\}$ stands for the initialized index set.
		In addition, let $p\in\mathbb{N}$ represent the maximum depth of the tree. 
		The parameter is fixed beforehand by the user and possibly depends on $n$. 
		\item Suppose we have obtained a partition $\pi_{i-1}$ of $\mathcal{X}$ after $i-1$ steps of the recursion. Let $\pi_i = \emptyset$. 
		In the $i$-th step, let $A_{(i-1)}^j\in \pi_{i-1}$ be $\times_{\ell=1}^d [a_{\ell}, b_{\ell}]$ for $j\in\mathcal{I}_{i-1}$. 
		We choose the edge to be split among the longest edges. The index set of longest edges is defined as
		\begin{align*}
			\mathcal{M}_{(i-1)}^{j} = \left\{k\mid |b_k - a_k| = \max_{\ell = 1,\cdots,d} |b_{\ell} - a_{\ell}|, \; k = 1,\cdots, d \right\}.
		\end{align*}
		\item Assume we split along the $\ell$-th dimension for $\ell \in \mathcal{M}_{(i-1)}^{j}$, $A_{(i-1)}^j$ is then partitioned into a left sub-cell ${A}_{(i-1)}^{j,0}(\ell)$ and a right sub-cell ${A}_{(i-1)}^{j,1}(\ell)$ along the midpoint of the chosen dimension, where ${A}_{(i-1)}^{j,0}(\ell) = \left\{x\mid x\in A_{(i-1)}^j, x^{\ell} < {(a_{\ell}+b_{\ell})}/{2} \right\}$ and ${A}_{(i-1)}^{j,1}(\ell) = A_{(i-1)}^j/ {A}_{(i-1)}^{j,0}(\ell)$.
		Then the dimension to be split is chosen by
		\begin{align}\label{equ:varreduction}
			\min\; \underset{\ell \in \mathcal{M}_{(i-1)}^{j}}{\arg\min} \;\; \text{score}\left( {A}_{(i-1)}^{j,0}(\ell), {A}_{(i-1)}^{j,1}(\ell), D^{\mathrm{Q}}\right) ,
		\end{align} 
		where the $\mathrm{score}(\cdot )$ function can be criteria for decision tree classifiers such as the Gini index or information gain.  
		The $\min$ operator implies that if the $\arg\min$ function returns multiple indices, we select the smallest index.
		Thus, the partition process is feasible even if there is no public data, where all axes have the same score. 
		\item Once $\ell$ is selected, let $\pi_i = \pi_i \cup \{{A}_{(i-1)}^{j,0}(\ell), {A}_{(i-1)}^{j,1}(\ell))\}$. 
	\end{itemize}
	The complete process is presented in Algorithm \ref{alg:partition} and illustrated in Figure \ref{fig:partitionillustration}. 
	For each grid, the partition rule selects the midpoint of the longest edges that achieves the largest score reduction.
	This procedure continues until either node has no samples or the depth of the tree reaches its limit. 
	We also define relationships between nodes which are necessary in Section \ref{sec:datadriventreepruning}.
	For node $A^j_{(i)}$, we define its parent node as
	\begin{align*}
		\mathrm{parent}(A^j_{(i)}) := A^{j'}_{(i-1)} \ \ \ \text{s.t.} \ \ \  
		A_{(i-1)}^{j'} \in \pi_{i-1 }, \ \ A_{(i)}^{j} \subset A_{(i-1)}^{j'}. 
	\end{align*}
	For $i'< i$, the ancestor of a node $A_{(i)}^j$ with depth $i'$ is then defined as 
	\begin{align*}
		\mathrm{ancestor}(A_{(i)}^j, i') = \mathrm{parent}^{i - i' } (A_{(i)}^j) = \underbrace{ \mathrm{parent} \circ \cdots \circ \mathrm{parent}}_{i - i'}(A_{(i)}^j),
	\end{align*}
	which is the grid in $\pi_{i'}$ that contains $A_{(i)}^j$.

	\begin{algorithm}[!t]
		\caption{Max-edge Partition}
		\label{alg:partition}
		{\bfseries Input: }{ Public data ${D}^{\mathrm{Q}}$, depth $p$.  }\\
		{\bfseries Initialization: } $\pi_0 = [0,1]^d$. 
		
		\For{$i = 1$ {\bfseries to} $p$}{
			$\pi_i = \emptyset$\\
			\For{$j$ in $\mathcal{I}_{i-1}$}{
				Select $\ell$ as in \eqref{equ:varreduction}.\\
				$\pi_i = \pi_i \cup \{{A}_{(i-1)}^{j,0}(\ell), {A}_{(i-1)}^{j,1}(\ell)\}$.}
		}
		{\bfseries Output: }{ Parition $\pi_p$}
		
	\end{algorithm}

	\begin{figure}[!t]
		\centering
		\includegraphics[width = .6\textwidth]{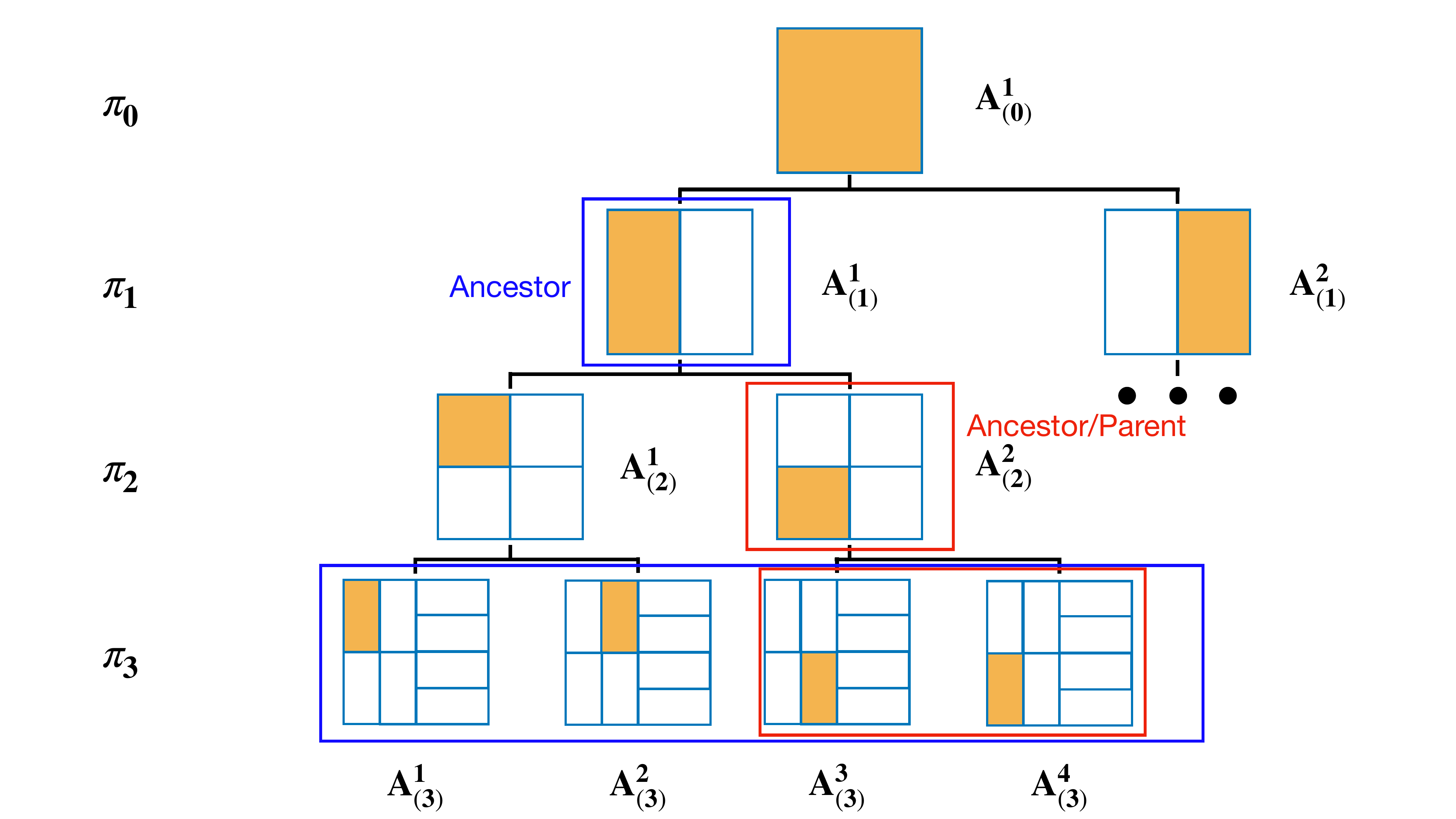}
		\caption{Partition created by the max-edge rule. The areas filled with orange represent the corresponding $A_{(i)}^j$ and the blue lines represent the partition boundaries after each level of partitioning. Red boxes contain an ancestor with depth 2, which is also a parent. Blue boxes contain an ancestor with depth 1.}
		\label{fig:partitionillustration}
	\end{figure}

	\subsection{Privacy Mechanism for Any Partition}\label{sec:ldpanypartition}

	This section focuses on the privacy mechanism based on general tree partitions.
	We first introduce the necessary definitions and then present our private estimation based on the Laplacian mechanism in \eqref{equ:twosampleeta}. 
	To avoid confusion, we refer the readers to a clear summarization of the defined estimators in Appendix \ref{sec:estimatorsummarization}.

	For index set $\mathcal{I}$, let $\pi = \pi_p= \{ A^j_{(p)} \}_{j \in \mathcal{I}_p}$ be the tree partition created by Algorithm  \ref{alg:partition}.
	A \textit{population classification tree estimation} is defined as 
	\begin{align}\label{equ:etapi}
		\overline{\eta}_{\mathrm{P}}^{\pi}(x) =  \sum_{j\in\mathcal{I}_p} \eins\{x\in A^j_{(p)}\}\frac{\int_{A^j_{(p)}}\eta_{\mathrm{P}}^*(x') \,d\mathrm{P}_X(x')}{\int_{A^j_{(p)}}\, d\mathrm{P}_X(x')}
	\end{align}
	The label is inferred using the \textit{population tree classifier} which is defined as $\overline{f}_{\mathrm{P}}^{\pi}(x)
	= \eins\left( \overline{\eta}_{\mathrm{P}}^{\pi}(x) > {1}/{2}\right)$.
	Here, we let $0/0 = 0$ by definition.
	
	To get an empirical estimator given the data set $D=\{(X_1^{\mathrm{P}},Y_1^{\mathrm{P}}),\ldots,(X_{n_{\mathrm{P}}}^{\mathrm{P}},Y_{n_{\mathrm{P}}}^{\mathrm{P}})\}$, we estimate the numerator and the denominator of \eqref{equ:etapi} separately. 
	To estimate the denominator, each sample 
	$(X_i^{\mathrm{P}}, Y_i^{\mathrm{P}})$ contributes a one-hot vector $U_i^{\mathrm{P}} \in \{0,1\}^{|\mathcal{I}_p|}$ where the $j$-th element of $U_i^{\mathrm{P}}$ is $\eins\{X_i^{\mathrm{P}} \in A^j_{(p)}\}$. 
	Then an estimation of $\int_{A^j_{(p)}}d\mathrm{P}_X(x')$ is $\frac{1}{n_{\mathrm{P}}}\sum_{i=1}^{n_{\mathrm{P}}} U_i^{\mathrm{P}j}$, which is the number of samples in $A^j_{(p)}$ divided by $n_{\mathrm{P}}$.  
	Analogously, let $V_i^{\mathrm{P}} = Y_i \cdot U_i^{\mathrm{P}} \in \{0,1\}^{|\mathcal{I}_p|}$. 
	An estimation of $\int_{A^j_{(p)}}\eta^*_{\mathrm{P}}(x')d\mathrm{P}_X(x')$ is $\frac{1}{n_{\mathrm{P}}}\sum_{i=1}^{n_{\mathrm{P}}} V_i^{\mathrm{P}j}$, which is the sum of the labels in $A^j_{(p)}$ divided by $n_{\mathrm{P}}$. 
	Combining the pieces, a \textit{classification tree estimation} is defined as 
	\begin{align}\label{equ:reformetapi}
		\widehat{\eta}^{\pi}_{\mathrm{P}}(x) = \sum_{j\in\mathcal{I}_p}\eins\{x\in A^j_{(p)}\}\frac{ \sum_{i=1}^{n_{\mathrm{P}}}V_i^{\mathrm{P}j}}{ \sum_{i=1}^{n_{\mathrm{P}}}U_i^{\mathrm{P}j}}.
	\end{align}
	The corresponding classifer is defined as $\widehat{f}^{\pi}_{\mathrm{P}}(x)
	= \eins\left(\widehat{\eta}^{\pi}_{\mathrm{P}}(x) > {1}/{2}\right)$.
	In other words,  $\widehat{\eta}^{\pi}_{\mathrm{P}}(x)$ estimates $\eta^*_{\mathrm{P}}(x)$ by the average of the responses in the cell. 
	In the non-private setting, each data holder prepares $U_i^{\mathrm{P}}$ and $V_i^{\mathrm{P}}$ according to the partition $\pi$ and sends it to the curator. 
	Then the curator aggregates the transmission following \eqref{equ:reformetapi}.

	To protect the privacy of each data, we propose to estimate the numerator and denominator of the population regression tree using a privatized method. 
	Specifically, $U_i^{\mathrm{P}}$ and $V_i^{\mathrm{P}}$ are combined with the Laplace mechanism \citep{dwork2006calibrating} before being sent to the curator. 
	For i.i.d. standard Laplace random variables $\zeta_i^j$ and $\xi_i^j$, let
	\begin{align}\label{equ:privatizeprocedureU}
		\Tilde{U}_i^{\mathrm{P}} := U_i^{\mathrm{P}} + \frac{4}{\varepsilon}\zeta_i = U_i^{\mathrm{P}} + (\frac{4}{\varepsilon}\zeta_i^1, \cdots ,  \frac{4}{\varepsilon}\zeta_i^{|\mathcal{I}_p|}).
	\end{align}
	Then a privatized estimation of $\int_{A^j_{(p)}}d\mathrm{P}_X(x)$ is $\frac{1}{n_{\mathrm{P}}}\sum_{i=1}^{n_{\mathrm{P}}} \tilde{U}_i^{\mathrm{P}j}$.
	Analogously, let
	\begin{align}\label{equ:privatizeprocedureV}
		\Tilde{V}_i^{\mathrm{P}} := V_i^{\mathrm{P}} + \frac{4 }{\varepsilon}\xi_i = V_i^{\mathrm{P}} + (\frac{4}{\varepsilon}\xi_i^1, \cdots ,  \frac{4}{\varepsilon}\xi_i^{|\mathcal{I}_p|}).
	\end{align}
	An estimation of $\int_{A^j_{(p)}}\eta^*_{\mathrm{P}}(x)d\mathrm{P}_X(x)$ is $\frac{1}{n_{\mathrm{P}}}\sum_{i=1}^{n_{\mathrm{P}}} \tilde{V}_i^{\mathrm{P}j}$.
	Then using the privatized information $(\tilde{U}_i^{\mathrm{P}}, \tilde{V}_i^{\mathrm{P}}), i= 1 , \cdots, n_{\mathrm{P}}$, we can estimate the regression function as 
	\begin{align}\label{equ:etadppi}
		\tilde{\eta}^{\pi}_{\mathrm{P}}(x) = 
		\sum_{j\in\mathcal{I}_p}\eins\{x\in A^j_{(p)}\}\frac{\sum_{i=1}^{n_{\mathrm{P}}}\tilde{V}^{\mathrm{P}j}_i}{ \sum_{i=1}^{n_{\mathrm{P}}}\tilde{U}^{\mathrm{P}j}_i}. 
	\end{align}
	The procedure is also derived by  \cite{berrett2019classification}.
	As an alternative, one can use random response \citep{warner1965randomized} since both $U$ and $V$ are binary vectors. 
	The following proposition shows that the estimation procedures satisfy LDP. 
	\begin{proposition}\label{thm:dppartition}
		Let $\pi = \{ A^j \}_{j \in \mathcal{I}}$ be any partition of $\mathcal{X}$ with $\cup_{j\in\mathcal{I}} A^j = \mathcal{X}$ and  $A^i\cap A^j = \emptyset$, $i\neq j$. 
		Then the privacy mechanism 
		defined in \eqref{equ:privatizeprocedureU} and \eqref{equ:privatizeprocedureV} is non-interactive $\varepsilon$-LDP. 
	\end{proposition}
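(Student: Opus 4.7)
The plan is to reduce this to the standard privacy analysis of the Laplace mechanism, observing that the partition $\pi$ plays no role in the privacy argument because it is constructed entirely from the independent public sample $D^{\mathrm{Q}}$. Conditional on $\pi$, the randomized map $(x,y)\mapsto Z_i:=(\widetilde U_i^{\mathrm{P}},\widetilde V_i^{\mathrm{P}})$ depends only on $(X_i^{\mathrm{P}},Y_i^{\mathrm{P}})$ and on fresh Laplace noise $(\zeta_i,\xi_i)$ independent of everything else; in particular it does not use $Z_{1:i-1}$, so once the LDP likelihood-ratio bound is established the non-interactive part is immediate.

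First I would bound the $\ell_1$-sensitivities of the deterministic functions being privatized. Since $\{A^j\}_{j\in\mathcal{I}}$ is a disjoint cover of $\mathcal{X}$, for any $x,x'\in\mathcal{X}$ the vector $U(x)$ is one-hot, so $\|U(x)-U(x')\|_1\le 2$ with equality only when $x,x'$ fall in different cells. Similarly $V(x,y)=y\cdot U(x)$ is either zero or one-hot, hence $\|V(x,y)-V(x',y')\|_1\le 2$ for all $(x,y),(x',y')\in\mathcal{X}\times\{0,1\}$. Thus each coordinate-wise noisy release $\widetilde U_i^{\mathrm{P}}$ and $\widetilde V_i^{\mathrm{P}}$ is a Laplace mechanism applied to a function of sensitivity at most $2$ with noise scale $4/\varepsilon$.

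Next, for either release I would write out the density ratio explicitly: for neighbouring inputs $(x,y),(x',y')$, the density of $\widetilde U_i^{\mathrm{P}}=u$ equals $\prod_{j}\tfrac{\varepsilon}{8}\exp\bigl(-\tfrac{\varepsilon}{4}|u^j-U^j(x)|\bigr)$, so the ratio is at most $\exp\bigl(\tfrac{\varepsilon}{4}\|U(x)-U(x')\|_1\bigr)\le e^{\varepsilon/2}$, and analogously for $\widetilde V_i^{\mathrm{P}}$. By independence of $\zeta_i$ and $\xi_i$ the joint density of $Z_i$ factorizes, so multiplying the two bounds gives the required $e^{\varepsilon}$-ratio on the conditional distribution $\mathrm{R}_i(S\mid X_i^{\mathrm{P}}=x,Y_i^{\mathrm{P}}=y)$ for every measurable $S$. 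Since this bound holds uniformly in $(x,y),(x',y')$ and is independent of any past $Z_{1:i-1}$, the mechanism $\mathrm{R}=\{\mathrm{R}_i\}$ satisfies the non-interactive $\varepsilon$-LDP condition of Definition~\ref{def:ldp}.

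There is no genuinely hard step here; the only thing to be careful about is the joint release. One must not apply a sensitivity-$2$ argument to $\widetilde U$ alone and reuse the full budget $\varepsilon$ on $\widetilde V$. The cleanest fix, which I would present, is either (i) treat $(U,V)$ as one vector with $\ell_1$-sensitivity $\le 4$ privatized by Laplace$(4/\varepsilon)$ noise in each coordinate, or equivalently (ii) observe that $\widetilde U$ and $\widetilde V$ are each $(\varepsilon/2)$-LDP and invoke basic composition. Both give the same final $e^{\varepsilon}$ bound and make transparent why the noise scale $4/\varepsilon$ (rather than $2/\varepsilon$) is the correct choice.
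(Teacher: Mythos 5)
Your proposal is correct and follows essentially the same route as the paper's proof: bound the one-hot sensitivity of $U$ and $V$ by $2$, show each Laplace release with scale $4/\varepsilon$ has density ratio at most $e^{\varepsilon/2}$, and multiply using independence of the two noise vectors to get $e^{\varepsilon}$. Your explicit remarks on the budget split between $\widetilde U$ and $\widetilde V$ and on why non-interactivity is automatic are slightly more careful than the paper's writeup, but the argument is the same.
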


	In the case where one also has access to the $\mathrm{Q}$-data in addition to the $\mathrm{P}$-data,  the $\mathrm{Q}$ data can be used to help the classification task under the target distribution $\mathrm{P}$ and should be taken into consideration.
	The utilization of $\mathrm{P}$ data must satisfy the local DP constraint while the $\mathrm{Q}$ data can be arbitrarily adopted. 
	We accommodate the existing private estimation \eqref{equ:etadppi} and non-private estimation \eqref{equ:reformetapi} by taking a weighted average based on both estimators.
	Denote the encoded information from $D^{\mathrm{P}}$ and $D^{\mathrm{Q}}$ as $(U_i^{\mathrm{P}}, V_i^{\mathrm{P}})$ and $(U_i^{\mathrm{Q}}, V_i^{\mathrm{Q}})$, respectively. 
	We use the same partition for $\mathrm{P}$ estimator and $\mathrm{Q}$ estimator. 
	Namely, \eqref{equ:reformetapi} and \eqref{equ:etadppi} are constructed via an identical $\pi$.
	When taking the average, data from the different distributions should have different weights since the signal strengths are different between $\mathrm{P}$ and $\mathrm{Q}$. 
	To make the classification at $x \in \mathcal{X}$, the \textit{Locally differentially Private Classification Tree estimation} (LPCT) is defined as follows:
	\begin{align}\label{equ:twosampleeta}
		\tilde{\eta}_{\mathrm{P},\mathrm{Q}}^{\pi}(x) = 
		\sum_{j\in\mathcal{I}_p}\eins\{x\in A^j_{(p)}\}\frac{
			\sum_{i=1}^{n_{\mathrm{P}}} \tilde{V}^{\mathrm{P} j }_i + \lambda \sum_{i=1}^{n_{\mathrm{Q}}}{V}^{\mathrm{Q} j}_i
		}
		{ 
			\sum_{i=1}^{n_{\mathrm{P}}} \tilde{U}^{\mathrm{P} j }_i +  \lambda\sum_{i=1}^{n_{\mathrm{Q}}}{U}^{\mathrm{Q} j}_i
		}.
	\end{align}
	Here, we let $\lambda \geq  0$. 
	The \textit{locally private tree classifier} is $\tilde{f}_{\mathrm{P},\mathrm{Q}}^{\pi}(x) = \eins( \tilde{\eta}_{\mathrm{P},\mathrm{Q}}^{\pi}(x) > {1}/{2})$.
	In each leaf node $A^j_{(p)}$, the classifier is assigned a label based on the weighted average of estimations from $\mathrm{P}$ and $\mathrm{Q}$ data. 
	The parameter $\lambda$ serves to balance the relative contributions of both $\mathrm{P}$ and $\mathrm{Q}$.
	A higher value of $\lambda$ indicates that the final predictions are predominantly influenced by the public data, and vice versa.

	The estimator defined in \eqref{equ:twosampleeta} easily adapts to solve the non-private transfer learning problem in \cite{cai2021transfer} if we define 
	\begin{align}\label{equ:twosamplenonprivateeta}
		\widehat{\eta}_{\mathrm{P},\mathrm{Q}}^{\pi}(x) = 
		\sum_{j\in\mathcal{I}_p}\eins\{x\in A^j_{(p)}\}\frac{
			\sum_{i=1}^{n_{\mathrm{P}}} {V}^{\mathrm{P} j }_i + \lambda \sum_{i=1}^{n_{\mathrm{Q}}}{V}^{\mathrm{Q} j}_i
		}
		{ 
			\sum_{i=1}^{n_{\mathrm{P}}} {U}^{\mathrm{P} j }_i +  \lambda\sum_{i=1}^{n_{\mathrm{Q}}}{U}^{\mathrm{Q} j}_i
		}.
	\end{align}
	Similar to the nearest neighbor estimator in \cite{cai2021transfer}, \eqref{equ:twosamplenonprivateeta} estimates by taking weighted averages of labels from nearby samples.
	Our theoretical results in Section \ref{sec:theoreticalresultsforlpct} imply that \eqref{equ:twosamplenonprivateeta} is also rate optimal in terms of rate for non-private transfer learning.
	A key advantage for incorporating \eqref{equ:twosamplenonprivateeta} is that, given the partition,  it stores only the prediction values at each node, which makes it convenient to inject noise. 
	Conversely, the nearest neighbor estimator (similarly, the kernel estimator) requires a query to the entire training data set for each single testing sample, which prohibits any modification towards a private estimator. 
	See \cite{kroll2021density} for an example of such limitations.

	\section{Theoretical Results}\label{sec:theoreticalresults}
	In this section, we present the obtained theoretical results. 
	We present the matching lower and upper bounds of excess risk in Section \ref{sec:convergencerateminimax} and \ref{sec:theoreticalresultsforlpct}. 
	Finally, we demonstrate that LPCT is free from the unknown range parameter in Section \ref{sec:rangeparameter}.
	All technical proofs can be found in Appendix \ref{app:proofs}.

	\subsection{Mini-max Convergence Rate}\label{sec:convergencerateminimax}
	
	We first give the necessary assumptions on the distribution $\mathrm{P}$ and $\mathrm{Q}$.
	Then we provide the mini-max rate under the assumptions.

	\begin{assumption}\label{asp:alphaholder}
		\label{asp:boundedmarginal}
		Let $\alpha \in (0, 1]$. Assume the regression function $\eta^{*}_{\mathrm{P}} : \mathcal{X} \to \mathbb{R}$ is $\alpha$-H\"{o}lder continuous, i.e. there exists a constant $c_L > 0$ such that for all $x_1, x_2 \in \mathcal{X}$,  $|\eta^*_{\mathrm{P}}(x_1) - \eta^*_{\mathrm{P}}(x_2)| \leq c_L \|x_1 - x_2\|^{\alpha}$.
		Also, assume that the marginal density functions of $\mathrm{P}$ and $\mathrm{Q}$ are upper and lower bounded, i.e. $\underline{c}\leq d\mathrm{P}_X(x)\leq \overline{c}$ for some $\underline{c}, \overline{c}>0$. 
	\end{assumption}
	
	\begin{assumption}[Margin Assumption] \label{asp:margin}
		Let $\beta \geq 0, C_{\beta}>0$. Assume
		\begin{align*}
			\mathrm{P}_X\left(\left|\eta^*_{\mathrm{P}}(X)-\frac{1}{2}\right|<t\right) \leq C_{\beta} t^{\beta}.
		\end{align*}
	\end{assumption}
	
	\begin{assumption}[Relative Signal Exponent]\label{asp:rse}
		Let the relative signal exponent $\gamma \in(0, \infty)$ and $C_\gamma \in(0, \infty)$. Assume that 
		\begin{align*}
			\left(\eta^*_{\mathrm{Q}}(x)-\frac{1}{2}\right)\left(\eta^*_{\mathrm{P}}(x)-\frac{1}{2}\right) \geq 0, \; \text{and} \;\;\;\;
			\left|\eta^*_{\mathrm{Q}}(x)-\frac{1}{2}\right| \geq C_{\gamma} \left| \eta^*_{\mathrm{P}}(x)-\frac{1}{2} \right|^\gamma. 
		\end{align*}
	\end{assumption}

	Assumption \ref{asp:alphaholder} and \ref{asp:margin} are standard conditions that have been widely used for non-parametric classification problems \citep{audibert2007fast, samworth2012optimal, chaudhuri2014rates}. 
	Assumption \ref{asp:rse} depicts the similarity between the conditional distribution of public data and private data. 
	When $\gamma$ is small, the signal strength of $\mathrm{Q}$ is strong compared to $\mathrm{P}$. 
	In the extreme instance of $\gamma = 0$, $\eta^*_{\mathrm{Q}}(x)$ consistently remains distant from ${1}/{2}$ by a constant.
	In contrast, when $\gamma$ is large, there is not much information contained in $\mathrm{Q}$. 
	The special case of $\gamma = \infty$ allows $\eta^*_{\mathrm{Q}}(x)$ to always be ${1}/{2}$ and is thus non-informative.
	We present the following theorem that specifies the mini-max lower bound with LDP constraints on $\mathrm{P}$ data under the above assumptions.
	The proof is based on first constructing two function classes and then applying the information inequalities for local privacy \citep{duchi2018minimax} and Assouad's Lemma \citep{tsybakov2008introduction}.

	\begin{theorem}\label{thm:lowerbound}
		Denote the function class of $(\mathrm{P}, \mathrm{Q})$ satisfying Assumption \ref{asp:alphaholder}, \ref{asp:margin}, and \ref{asp:rse} by $\mathcal{F}$. Then for any estimator $\widehat{f}$ that is sequentially-interactive $\varepsilon$-LDP, there holds
		\begin{align}\label{equ:minimaxconvergencerate}
			\inf_{\widehat{f}}\sup_{\mathcal{F}}\mathbb{E}_{X, Y}\left[ \mathcal{R}_{\mathrm{P}} (\widehat{f}) - \mathcal{R}_{\mathrm{P}}^*  \right] \gtrsim \left(n_{\mathrm{P}} \left(e^{\varepsilon} - 1\right)^2 + n_{\mathrm{Q}}^{\frac{2 \alpha + 2 d}{2 \gamma\alpha + d}}\right)^{-\frac{\alpha(\beta + 1)}{2\alpha + 2d}}. 
		\end{align}
	\end{theorem}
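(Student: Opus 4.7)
The plan is to reduce the minimax risk lower bound to a multiple hypothesis testing problem and apply Assouad's lemma together with the Duchi--Jordan--Wainwright information contraction inequality for locally private channels. Concretely, I would construct a family of distribution pairs $(\mathrm{P}_\sigma, \mathrm{Q}_\sigma)$ indexed by $\sigma \in \{-1,+1\}^M$ that all lie in $\mathcal{F}$, and then show that separating neighboring hypotheses requires an effective sample size of the stated magnitude. Both the cube-count $M$ and the side length $h$ are tuning parameters to be optimized at the end.

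For the construction, I would partition $[0,1]^d$ into cubes of side length $h$, designate $M \asymp h^{\alpha\beta - d}$ of them as ``active'', and on each active cube place a smooth bump $\psi_i$ supported in that cube with $\|\psi_i\|_\infty \asymp h^\alpha$ and $\alpha$-Hölder seminorm controlled by a constant. I would take $\mathrm{P}_X = \mathrm{Q}_X$ uniform on $[0,1]^d$, set $\eta^*_{\mathrm{P},\sigma}(x) = 1/2 + \sigma_i \psi_i(x)$ on active cube $i$, and push $\eta^*_{\mathrm{P}}$ away from $1/2$ by a constant elsewhere, so that only active cubes contribute to the margin integrand. The companion $\eta^*_{\mathrm{Q},\sigma}$ is defined through a link $\phi$ satisfying $\phi(1/2 + u) = 1/2 + C_\gamma \operatorname{sign}(u) |u|^\gamma$, which is strictly increasing, fixes $1/2$, and gives Assumption \ref{asp:rse} with equality on the bumps. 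The count $M$ is calibrated so that $\mathrm{P}_X(|\eta^*_{\mathrm{P}} - 1/2| \le t) \le C_\beta t^\beta$ holds at $t \asymp h^\alpha$, i.e.\ $M h^d \lesssim h^{\alpha\beta}$, and $\alpha$-Hölder continuity follows from the shape of the bump profile.

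By Assouad's lemma applied cube-by-cube in the Hamming metric, the excess classification risk satisfies
\begin{align*}
\inf_{\widehat{f}} \sup_{\mathcal{F}} \mathbb{E}\bigl[\mathcal{R}_{\mathrm{P}}(\widehat{f}) - \mathcal{R}_{\mathrm{P}}^*\bigr] \gtrsim M \cdot h^{\alpha + d} \cdot \Bigl(1 - \max_{\sigma \sim \sigma'} \operatorname{TV}(\mathcal{L}_\sigma, \mathcal{L}_{\sigma'})\Bigr),
\end{align*}
where $\mathcal{L}_\sigma$ denotes the joint law of the $n_{\mathrm{Q}}$ public samples together with the $n_{\mathrm{P}}$ privatized views of private samples under hypothesis $\sigma$, and $\sigma \sim \sigma'$ means a single-coordinate flip. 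Here $h^{\alpha+d}$ is the excess risk contributed by one miscalibrated cube and $M h^{\alpha+d} \asymp h^{\alpha(\beta+1)}$. Since the two data blocks are independent, the KL divergence decomposes additively. For the public block, the Bernoulli KL estimate gives $n_{\mathrm{Q}} \cdot \operatorname{KL}(\mathrm{Q}_\sigma, \mathrm{Q}_{\sigma'}) \lesssim n_{\mathrm{Q}} h^{2\gamma\alpha + d}$. For the private block, the Duchi--Jordan--Wainwright contraction yields $\operatorname{KL} \lesssim n_{\mathrm{P}} (e^\varepsilon - 1)^2 \operatorname{TV}(\mathrm{P}_\sigma, \mathrm{P}_{\sigma'})^2 \lesssim n_{\mathrm{P}} (e^\varepsilon - 1)^2 h^{2\alpha + 2d}$, and Pinsker's inequality converts the sum into the required bound on TV.

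Choosing $h$ so that $n_{\mathrm{P}}(e^\varepsilon - 1)^2 h^{2\alpha + 2d} + n_{\mathrm{Q}} h^{2\gamma\alpha + d} \le c_0$ for a small absolute constant $c_0$ keeps the TV bounded away from $1$; the optimal choice balances the two terms at $h \asymp \min\bigl\{(n_{\mathrm{P}}(e^\varepsilon-1)^2)^{-1/(2\alpha+2d)},\, n_{\mathrm{Q}}^{-1/(2\gamma\alpha+d)}\bigr\}$. Substituting into $h^{\alpha(\beta+1)}$ and using $a \vee b \asymp a + b$ gives exactly the claimed rate. The main obstacle I anticipate is the information-theoretic step: I must verify that the Duchi--Jordan--Wainwright contraction applies cleanly in the sequentially-interactive setting to the product of private likelihoods, and then combine it additively with the non-private KL of the public block under the same hypothesis family. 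A secondary technical difficulty is engineering the link function $\phi$ so that Assumption \ref{asp:rse} holds with the correct exponent while keeping $\eta^*_{\mathrm{P}}$ globally Hölder and the margin condition tight, without contaminating the leading constants.
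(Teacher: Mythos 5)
Your proposal follows essentially the same route as the paper's proof: a hypercube of bump perturbations calibrated so that the $\mathrm{Q}$-bumps are the $\gamma$-th power of the $\mathrm{P}$-bumps, Assouad's lemma, the Duchi--Jordan--Wainwright contraction bound $\lesssim n_{\mathrm{P}}(e^{\varepsilon}-1)^2\,\mathrm{TV}^2$ for the privatized block, additive combination with the non-private divergence of the independent public block, and the same balancing of $h$ (the paper's $r$). The only differences are cosmetic — the paper works with Hellinger distance and the Hellinger form of Assouad rather than KL plus Pinsker, and uses balls rather than cubes — so the proposal is correct and matches the paper's argument.
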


	If $n_{\mathrm{Q}}= 0$, our results reduce to the special case of LDP non-parametric classification with only private data.
	In this case, the mini-max lower bound is of the order $({n_{\mathrm{P}} (e^{\varepsilon} - 1)^2})^{-\frac{\alpha(\beta + 1)}{2\alpha + 2 d}}$. 
	Previously, \cite{berrett2019classification} provided an estimator that reaches the mini-max optimal convergence rate for small $\varepsilon$. 
	If $n_{\mathrm{P}}  = 0$, the result recovers the learning rate $n_{\mathrm{Q}}^{-\frac{\alpha(\beta + 1)}{2\gamma\alpha +  d}}$ established for transfer learning under posterior drift \citep{cai2021transfer}. 
	In this case, we have only source data from $\mathrm{Q}$ and want to generalize the classifier on the target distribution $\mathrm{P}$. 
	
	The mini-max convergence rate \eqref{equ:minimaxconvergencerate} is the same as if one fits an estimator using only $\mathrm{P}$ data with sample size $n_{\mathrm{P}}  + (e^{\varepsilon} - 1)^{-2}\cdot n_{\mathrm{Q}}^{\frac{2\alpha + 2d}{2 \gamma\alpha + d}}$.
	The contribution of $\mathrm{Q}$ data is substantial when $n_{\mathrm{P}} \ll (e^{\varepsilon} - 1)^{-2}\cdot n_{\mathrm{Q}}^{\frac{2\alpha + 2d}{2\gamma \alpha + d}}$. 
	Otherwise, the convergence rate is not significantly better than using $\mathrm{P}$ data only. 
	The quantity $(e^{\varepsilon} - 1)^{-2}\cdot n_{\mathrm{Q}}^{\frac{2\alpha + 2d}{2\gamma \alpha + d}}$, which is referred to as {effective sample size}, represent the theoretical gain of using $\mathrm{Q}$ data with sample size $n_{\mathrm{Q}}$ under given $\varepsilon, \alpha, d$, and $\gamma$.  
	Notably, the smaller $\gamma$ is, the more effective the $\mathrm{Q}$ data becomes, and the greater gain is acquired by using $\mathrm{Q}$ data.
	Regarding different values of $\gamma$, our work covers the setting in previous work when a small amount of effective public samples are required \citep{bie2022private, ben2023private}, and when a large amount of noisy/irrelevant public samples are required \citep{ganesh2023public, yu2023selective}. 
	Also, the public data is more effective for small $\varepsilon$, i.e. under stronger privacy constraints. 
	Intuitively, in the presence of stringent privacy constraints, the utilization of $\mathrm{Q}$ data is as effective as incorporating a substantial amount of $\mathrm{P}$ data.
	Furthermore, even for a constant $\varepsilon$, our effective sample size significantly exceeds $n_{\mathrm{Q}}^{\frac{2\alpha + d}{2\gamma \alpha + d}}$ as in the non-private scenario \citep{cai2021transfer}. This suggests that, in comparison to transfer learning in the non-private setting, public data assumes greater value in the private setting.

	\subsection{Convergence Rate of LPCT}\label{sec:theoreticalresultsforlpct}
	
	In the following, we show the convergence rate of LPCT reaches the lower bound with proper parameters. 
	The proof is based on a carefully designed decomposition in Appendix \ref{sec:proofoftheoremutility}.
	Define the quantity 
	\begin{align}\label{equ:defofdelta}
		\delta := \left( \frac{n_{\mathrm{P}} \varepsilon^2}{\log (n_{\mathrm{P}} + n_{\mathrm{Q}})} + \left(\frac{n_{\mathrm{Q}}}{\log (n_{\mathrm{P}} + n_{\mathrm{Q}})}\right)^{\frac{2\alpha + 2d}{2 \gamma\alpha + d}}\right)^{-\frac{d}{2\alpha + 2 d}}.
	\end{align}

	\begin{theorem}\label{thm:utility}
		Let $\pi$ be constructed through the max-edge partition in Section \ref{sec:partition} with depth $p$.
		Let $\tilde{f}^{\pi}_{\mathrm{P}, \mathrm{Q}}$ be the locally private two-sample tree classifier. 
		Suppose Assumption \ref{asp:alphaholder}, \ref{asp:margin}, and \ref{asp:rse} hold.
		Then, for $n_{\mathrm{P}}^{-\frac{\alpha}{2\alpha + 2d}}\lesssim \varepsilon \lesssim n_{\mathrm{P}}^{\frac{d}{4\alpha + 2d}}$, if $\lambda  \asymp \delta^{ \frac{(\gamma \alpha-\alpha) \wedge(2 \gamma \alpha-d)}{d}}$ and $2^p \asymp \delta^{-1}$, there holds
		\begin{align}\label{equ:convergencerate}
			\mathcal{R}_{\mathrm{P}} (\tilde{f}^{\pi}_{\mathrm{P}, \mathrm{Q}}) - \mathcal{R}_{\mathrm{P}}^*   \lesssim \delta^{\frac{\alpha(\beta + 1)}{d}} = 
			\left(\frac{n_{\mathrm{P}} \varepsilon^2}{\log (n_{\mathrm{P}} + n_{\mathrm{Q}})} + \left(\frac{n_{\mathrm{Q}}}{\log (n_{\mathrm{P}} + n_{\mathrm{Q}})}\right)^{\frac{2\alpha + 2d}{2 \gamma\alpha + d}}\right)^{-\frac{\alpha(\beta + 1)}{2\alpha + 2 d}}
		\end{align}
		with probability $1 -3/(n_{\mathrm{P}} + n_{\mathrm{Q}})^2 $ with respect to $\mathrm{P}^{n_{\mathrm{P}}}\otimes\mathrm{Q}^{n_{\mathrm{Q}}}\otimes\mathrm{R} $ where $\mathrm{R}$ is the joint distribution of privacy mechanisms in \eqref{equ:privatizeprocedureU} and \eqref{equ:privatizeprocedureV}.
	\end{theorem}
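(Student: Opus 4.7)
The plan is to follow the classical plug-in recipe for classification, adapted to the weighted estimator \eqref{equ:twosampleeta}. By the well-known identity
\begin{equation*}
\mathcal{R}_{\mathrm{P}}(\tilde{f}^{\pi}_{\mathrm{P},\mathrm{Q}}) - \mathcal{R}_{\mathrm{P}}^{*} = \mathbb{E}_{X \sim \mathrm{P}_X}\bigl[\,|2\eta^{*}_{\mathrm{P}}(X) - 1|\,\eins(\tilde{f}^{\pi}_{\mathrm{P},\mathrm{Q}}(X) \neq f^{*}_{\mathrm{P}}(X))\,\bigr],
\end{equation*}
I would fix a threshold $t^{*} \asymp \delta^{\alpha/d}$ and split $\mathcal{X}$ at $|\eta^{*}_{\mathrm{P}} - 1/2| \leq t^{*}$. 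On the low-signal side the margin Assumption \ref{asp:margin} immediately yields a contribution $\lesssim (t^{*})^{\beta+1} = \delta^{\alpha(\beta+1)/d}$, matching the target rate. It then suffices to show that on the high-signal region $\{|\eta^{*}_{\mathrm{P}} - 1/2| > t^{*}\}$ the classifier $\tilde{f}^{\pi}_{\mathrm{P},\mathrm{Q}}$ agrees with $f^{*}_{\mathrm{P}}$ with the claimed probability.

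For that, on a leaf $A^{j}_{(p)}$ I would write $\tilde{\eta}^{\pi}_{\mathrm{P},\mathrm{Q}}(x) - 1/2 = (\mathcal{N}^{j} - \mathcal{D}^{j}/2)/\mathcal{D}^{j}$ with $\mathcal{N}^{j}, \mathcal{D}^{j}$ the weighted numerator and denominator in \eqref{equ:twosampleeta}, and decompose each into its mean, binomial deviation, and Laplace noise (the latter only for the $\mathrm{P}$-part). Since $\mathrm{P}_X = \mathrm{Q}_X$, the mean of $\mathcal{N}^{j} - \mathcal{D}^{j}/2$ equals
\begin{equation*}
n_{\mathrm{P}}\int_{A^{j}_{(p)}}(\eta^{*}_{\mathrm{P}} - \tfrac{1}{2})\,d\mathrm{P}_X + \lambda n_{\mathrm{Q}}\int_{A^{j}_{(p)}}(\eta^{*}_{\mathrm{Q}} - \tfrac{1}{2})\,d\mathrm{P}_X,
\end{equation*}
and by Assumption \ref{asp:rse} both summands carry the sign of $\eta^{*}_{\mathrm{P}}(x) - 1/2$, modulo a partition bias $\lesssim 2^{-p\alpha/d}$ coming from $\alpha$-H\"{o}lder continuity (here I use the geometric fact that the max-edge rule of Section \ref{sec:partition} forces $\mathrm{diam}(A^{j}_{(p)}) \lesssim 2^{-p/d}$). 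The resulting signal is therefore at least of order $\bigl(n_{\mathrm{P}} + \lambda n_{\mathrm{Q}} C_{\gamma} |\eta^{*}_{\mathrm{P}}(x) - 1/2|^{\gamma - 1}\bigr)\,|\eta^{*}_{\mathrm{P}}(x) - 1/2|\,\mathrm{P}_X(A^{j}_{(p)})$.

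Against this I would control the stochastic part: Bernstein's inequality for the binomial counts and a Laplace-tail bound for the $|\mathcal{I}_p|$-vector of Laplace variables, combined with a union bound over the at most $2^{p}$ leaves, give a fluctuation of order $\sqrt{\log n \cdot \mathrm{P}_X(A^{j}_{(p)})(n_{\mathrm{P}} + \lambda^{2} n_{\mathrm{Q}})} + \sqrt{n_{\mathrm{P}} \log n}/\varepsilon$ uniformly over leaves on an event of probability $\geq 1 - 3/(n_{\mathrm{P}} + n_{\mathrm{Q}})^{2}$. Setting $2^{p} \asymp \delta^{-1}$ equalises the partition bias and $t^{*}$, and the prescribed $\lambda \asymp \delta^{((\gamma-1)\alpha \wedge (2\gamma\alpha - d))/d}$ is precisely the choice that keeps the ratio of fluctuation to signal at most $O(1)$ on the high-signal region; the assumed range $n_{\mathrm{P}}^{-\alpha/(2\alpha+2d)} \lesssim \varepsilon \lesssim n_{\mathrm{P}}^{d/(4\alpha+2d)}$ is what guarantees that $2^{p}$ is neither so small that the bias dominates nor so large that the per-leaf counts drown in Laplace noise.

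The main obstacle is the signal-strength bookkeeping: because Assumption \ref{asp:rse} couples $|\eta^{*}_{\mathrm{Q}} - 1/2|$ to $|\eta^{*}_{\mathrm{P}} - 1/2|^{\gamma}$ nonlinearly, which of the two terms in the weighted numerator dominates depends jointly on the local value of $\eta^{*}_{\mathrm{P}}$ and on $\lambda n_{\mathrm{Q}}/n_{\mathrm{P}}$. This is precisely why the optimal exponent of $\lambda$ is the minimum $(\gamma\alpha-\alpha)\wedge(2\gamma\alpha - d)$ appearing in the statement, and why the ``effective sample size'' inside $\delta$ carries the exotic exponent $(2\alpha+2d)/(2\gamma\alpha + d)$. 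Tracking both regimes simultaneously, while also keeping the Laplace tails (of scale $1/\varepsilon$) under control, is the bookkeeping-heavy heart of the argument.
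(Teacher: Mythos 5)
Your proposal follows essentially the same route as the paper's proof: the same three-way split of $\tilde{\eta}^{\pi}_{\mathrm{P},\mathrm{Q}}-1/2$ into approximation bias (signal strength), sampling fluctuation, and Laplace noise, the same high-/low-signal dichotomy at $t^{*}\asymp\delta^{\alpha/d}$ resolved by the margin assumption, and the same balancing of $2^{p}$ and $\lambda$ against the two fluctuation terms. The one point your sketch glosses over is that the leaves are built from $D^{\mathrm{Q}}$ itself, so the concentration of the $\mathrm{Q}$-counts cannot be obtained by a union bound over the realized $2^{p}$ cells; the paper instead takes a supremum over all axis-parallel rectangles via a covering-number (VC) argument, which costs only logarithmic factors and leaves your rates unchanged.
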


	Compared to Theorem \ref{thm:lowerbound}, the LPCT with the best parameter choice reaches the mini-max lower bound up to a logarithm factor. 
	Both parameters $\lambda$ and $p$ should be assigned a value of the correct order to achieve the optimal trade-off. 
	The depth $p$ increases when either $n_{\mathrm{P}}$ or $n_{\mathrm{Q}}$ grows. 
	This is intuitively correct since the depth of a decision tree should increase with a larger number of samples. 
	The strength ratio $\lambda$, on the other hand, is closely related to the value of $\gamma$. 
	As explained earlier, a small $\gamma$ guarantees a stronger signal in $\eta_{\mathrm{Q}}^*$. 
	Thus, the value of $\lambda$ is decreasing for $\gamma$. 
	In this case, when the signal strength of $\eta_{\mathrm{Q}}^*$ is large, we rely more on the estimation by $\mathrm{Q}$ data by assigning a larger ratio to it.
	
	We note that when $\varepsilon$ is large, there is a gap between $(e^{\varepsilon} - 1)^2$ in the lower bound and $\varepsilon^2$ in the upper bound. 
	The problem can be mitigated by employing the random response mechanism \citep{warner1965randomized} instead of the Laplace mechanism, as mentioned in Section \ref{sec:ldpanypartition}.
	The theoretical analysis generalizes analogously. 
	Here, we do not address this issue as we want to focus on public data. 
	The theorem restricts $n_{\mathrm{P}}^{-\frac{\alpha}{2\alpha + 2d}}\lesssim \varepsilon \lesssim n_{\mathrm{P}}^{\frac{d}{4\alpha + 2d}}$. 
	When $\varepsilon \lesssim n_{\mathrm{P}}^{-\frac{\alpha}{2\alpha + 2d}}$, the estimator is drastically perturbed by the random noise and the convergence rate is no longer optimal.
	As $\varepsilon$ gets larger, the upper bound of excess risk decreases until $\varepsilon \asymp n_{\mathrm{P}}^{\frac{d}{4\alpha + 2d}}$. 
	When $\varepsilon \gtrsim n_{\mathrm{P}}^{\frac{d}{2\alpha + 2d}}$, the random noise is negligible and the convergence rate remains to be 
	\begin{align*}
		\left(\frac{n_{\mathrm{P}} }{\log (n_{\mathrm{P}} + n_{\mathrm{Q}})} + \left(\frac{n_{\mathrm{Q}}}{\log (n_{\mathrm{P}} + n_{\mathrm{Q}})}\right)^{\frac{2\alpha + d}{2 \gamma\alpha + d}}\right)^{-\frac{\alpha(\beta + 1)}{2\alpha +  d}},
	\end{align*}
	which recovers the rate of non-private learning as established in \cite{cai2021transfer}.

	We discuss the order of ratio $\lambda$. 
	When $\gamma \geq \frac{d}{\alpha } - 1$, we have $\lambda \asymp \delta^{\frac{(\gamma -1)\alpha}{d}}$.
	This rate is exactly the same as in the conventional transfer learning case \citep{cai2021transfer} and is enough to achieve the optimal trade-off. 
	Yet, under privacy constraints, this choice of $\lambda$ will fail when $\gamma <\frac{d}{\alpha } - 1$. 
	Roughly speaking, if we let $\lambda \asymp \delta^{\frac{(\gamma -1)\alpha}{d}}$ in this case, there is a probability such that the noise $\frac{4}{\varepsilon}\sum_{i=1}^{n_{\mathrm{P}}}\zeta_i^j$ in \eqref{equ:twosampleeta}
	will dominate the non-private density estimation
	$\sum_{i=1}^{n_{\mathrm{P}}} {U}^{\mathrm{P} j }_i +  \lambda\sum_{i=1}^{n_{\mathrm{Q}}}{U}^{\mathrm{Q} j}_i$.
	When the domination happens, the point-wise estimation is decided by random noises, and thus the overall excess risk fails to converge. 
	To alleviate this issue, we assign a larger value $\lambda \asymp\delta^{\frac{2\gamma\alpha - d}{d}} > \delta^{\frac{(\gamma -1)\alpha}{d}}$
	to ensure 
	$\frac{4}{\varepsilon}\sum_{i=1}^{n_{\mathrm{P}}}\zeta_i^j \lesssim \sum_{i=1}^{n_{\mathrm{P}}} {U}^{\mathrm{P} j }_i +  \lambda\sum_{i=1}^{n_{\mathrm{Q}}}{U}^{\mathrm{Q} j}_i$.
	As an interpretation, in the private setting,  we place greater reliance on the estimation by public data compared to the non-private setting.

	Last, we discuss the computational complexity of LPCT. We first consider the average computation complexity of LPCT.
	The training stage consists of two parts.
	Similar to the standard decision tree algorithm \citep{breiman1984classification}, the partition procedure takes $\mathcal{O}(p n_{\mathrm{Q}} d )$ time.
	The computation of \eqref{equ:twosampleeta} takes $\mathcal{O}(p( n_{\mathrm{P}} + n_{\mathrm{Q}}) )$ time.
	From the proof of Theorem \ref{thm:utility}, we know that $2^{p} \asymp \delta^{-1}$, where $\delta^{-1} \lesssim n_{\mathrm{P}} + n_{\mathrm{Q}}$.
	Thus the training stage complexity is around $\mathcal{O}\left((n_{\mathrm{P}} + n_{\mathrm{Q}}d )\log (n_{\mathrm{P}} + n_{\mathrm{Q}})\right)$ which is linear.
	Since each prediction of the decision tree takes $\mathcal{O}(p)$ time, the test time for each test instance is no more than $\mathcal{O} (\log (n_{\mathrm{P}} + n_{\mathrm{Q}}) )$.
	As for storage complexity, since LPCT only requires the storage of the tree structure and the prediction value at each node, the space complexity of LPCT is $\mathcal{O}(\delta^{-1} )$ which is also sub-linear.
	In short, LPCT is an efficient method in time and memory.

	\subsection{Eliminating the Range Parameter}\label{sec:rangeparameter}
	
	We discuss how public data can eliminate the range parameter contained in the convergence rate. 
	When $\mathcal{X}$ is unknown, one must select a predefined range $[-r, r]^d$ for creating partitions.
	Only then do the data holders have a reference for encoding their information. 
	In this case, the convergence rate of \cite{berrett2019classification} becomes $({r^{2d}}/({n_{\mathrm{P}} \varepsilon^2)})^{\frac{(1+\beta)\alpha}{2\alpha + 2d}} \vee (1 - \int_{[-r, r]^d} d\mathrm{P(x)})$, which is slower than the known $\mathcal{X} = [0,1]^d$ for any $r$. 
	A small $r$ means that a large part of the informative $\mathcal{X}$ will be ignored (the second term is large), whereas a large $r$ may create too many redundant cells (the first term is large).
	See \cite{gyorfi2022rate} for detailed derivation. 
	Without much prior knowledge, this parameter can be hard to tune. 
	The removal of the range parameter is first discussed in \cite{bie2022private}, where $d+1$ public data is required. 
	Our result shows that we do not need to tune this range parameter with the help of public data.

	\begin{theorem}\label{thm:rangeparameter}
		Suppose the domain $\mathcal{X} = \times_{j=1}^d[a^j,b^j]$ is unknown. 
		Let $\widehat{a^j} = \min_{i}X_i^j$ and $\widehat{b^j} = \max_{i}X_i^j$.
		Define the min-max scaling map $\mathcal{M}: \mathbb{R}^d \to \mathbb{R}^d$ where 
		$\mathcal{M}(x)^j = (x^j - \widehat{a^j}) / (\widehat{b^j} - \widehat{a^j})$. 
		The the min-max scaled $D^{\mathrm{P}}$ and $D^{\mathrm{Q}}$ is written as $\mathcal{M}(D^{\mathrm{P}})$ and $\mathcal{M}(D^{\mathrm{Q}})$. 
		Then for $\tilde{f}^{\pi}_{\mathrm{P}, \mathrm{Q}}$ trained within $[0,1]^d$ on $\mathcal{M}(D^{\mathrm{P}})$ and $\mathcal{M}(D^{\mathrm{Q}})$ as in Theorem \ref{thm:utility}, we have 
		\begin{align*}
			\mathcal{R}_{\mathrm{P}} \left((\tilde{f}^{\pi}_{\mathrm{P}, \mathrm{Q}} \cdot \eins_{[0,1]^d}) \circ \mathcal{M}\right) - \mathcal{R}_{\mathrm{P}}^*   \lesssim  
			\delta^{\frac{\alpha(\beta + 1)}{d}} + \frac{\log n_{\mathrm{Q}}}{n_{\mathrm{Q}}}
		\end{align*}
		with probability $1 -3/(n_{\mathrm{P}} + n_{\mathrm{Q}})^2  - d / n_{\mathrm{Q}}^2$ with respect to $\mathrm{P}^{n_{\mathrm{P}}}\otimes\mathrm{Q}^{n_{\mathrm{Q}}}\otimes\mathrm{R}$.
	\end{theorem}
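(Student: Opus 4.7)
The plan is to decompose the excess risk according to whether the test point $X$ falls inside or outside the scaled domain $[0,1]^d$ after applying $\mathcal{M}$, bound the ``out-of-range'' mass by deviation of $(\widehat{a^j},\widehat{b^j})$ from $(a^j,b^j)$, and then apply Theorem \ref{thm:utility} verbatim on the scaled data for the ``in-range'' part.

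First I would argue that with high probability (with respect to $\mathrm{Q}^{n_{\mathrm{Q}}}$) the estimated range is close to the true one. Since the marginal density is lower bounded by $\underline{c}$, for each $j$ the probability that no $X_i^{\mathrm{Q},j}$ lies in $[a^j, a^j + t]$ is at most $(1-\underline{c} t)^{n_{\mathrm{Q}}} \le e^{-\underline{c} n_{\mathrm{Q}} t}$. Choosing $t = c \log n_{\mathrm{Q}} / n_{\mathrm{Q}}$ and union bounding over the $2d$ endpoints yields $\max_j(|\widehat{a^j}-a^j| \vee |\widehat{b^j}-b^j|) \lesssim \log n_{\mathrm{Q}}/n_{\mathrm{Q}}$ with probability at least $1 - d/n_{\mathrm{Q}}^2$ (tightening the constant absorbs the factor of 2). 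On this event, the ``lost mass'' $\mathrm{P}_X\bigl(\mathcal{M}^{-1}(\mathbb{R}^d\setminus[0,1]^d)\bigr) = \mathrm{P}_X\bigl(\cup_{j}([a^j,\widehat{a^j}]\cup[\widehat{b^j},b^j])\bigr)$ is at most $\overline{c} \cdot 2d \cdot c\log n_{\mathrm{Q}}/n_{\mathrm{Q}}$, because the marginal density is upper bounded by $\overline{c}$. Since the prediction $(\tilde f^{\pi}_{\mathrm{P},\mathrm{Q}} \cdot \eins_{[0,1]^d}) \circ \mathcal{M}$ is zero outside $\mathcal{M}^{-1}([0,1]^d)$, the excess risk contributed by out-of-range $X$ is at most this mass, i.e.\ $O(\log n_{\mathrm{Q}}/n_{\mathrm{Q}})$.

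For the in-range contribution, I would consider the pushforward distribution $\mathrm{P}' = \mathcal{M}_\sharp \mathrm{P}$ restricted to $[0,1]^d$ (and similarly $\mathrm{Q}'$). The key structural observation is that, conditional on the high-probability event, $\mathcal{M}$ is an affine bijection with each coordinate scale factor $1/(\widehat{b^j}-\widehat{a^j})$ bounded between two positive constants (for $n_{\mathrm{Q}}$ large enough). Consequently: the scaled regression function $\eta_{\mathrm{P}'}^*(u) = \eta_{\mathrm{P}}^*(\mathcal{M}^{-1}(u))$ is $\alpha$-Hölder with constant $c_L' = c_L \cdot \max_j(\widehat{b^j}-\widehat{a^j})^{-\alpha}$; the scaled marginal density is bounded above and below by constant multiples of $\underline{c}, \overline{c}$; the margin condition and the relative signal exponent are preserved because they only involve $\eta_{\mathrm{P}}^*(x)-1/2$ pointwise, which is invariant under reparametrization of $x$. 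Hence Assumptions \ref{asp:alphaholder}, \ref{asp:margin}, \ref{asp:rse} hold for $(\mathrm{P}', \mathrm{Q}')$ with adjusted constants, and Theorem \ref{thm:utility} applied to the scaled data delivers $\mathcal{R}_{\mathrm{P}'}(\tilde f^{\pi}_{\mathrm{P}',\mathrm{Q}'})-\mathcal{R}^*_{\mathrm{P}'} \lesssim \delta^{\alpha(\beta+1)/d}$ with probability $1 - 3/(n_{\mathrm{P}}+n_{\mathrm{Q}})^2$. Translating back, the excess risk restricted to $\mathcal{M}^{-1}([0,1]^d)$ inherits the same bound.

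Combining the two contributions by a final union bound yields the claimed rate $\delta^{\alpha(\beta+1)/d} + \log n_{\mathrm{Q}}/n_{\mathrm{Q}}$ with probability at least $1 - 3/(n_{\mathrm{P}}+n_{\mathrm{Q}})^2 - d/n_{\mathrm{Q}}^2$. The main obstacle I anticipate is technical rather than conceptual: one must carefully decouple the randomness used to construct $\mathcal{M}$ (public $X$'s only) from the randomness used in the LDP mechanism $\mathrm{R}$ and the private labels, so that Theorem \ref{thm:utility} may be invoked conditionally on the high-probability range event. In particular, the partition $\pi$ is computed from the already-scaled public data, so one needs to verify that the max-edge rule and the resulting cell diameters behave correctly under the affine reparametrization — but this follows because the max-edge rule and all its convergence guarantees are invariant, up to constants in $\underline{c}, \overline{c}, c_L$, under coordinatewise affine rescaling by bounded factors.
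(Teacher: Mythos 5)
Your proposal is correct and follows essentially the same route as the paper's proof: decompose the risk over $\mathcal{M}^{-1}([0,1]^d)$ and its complement, invoke Theorem \ref{thm:utility} on the scaled data for the in-range part, and bound the out-of-range mass by a $\log n_{\mathrm{Q}}/n_{\mathrm{Q}}$ order-statistics argument with failure probability $d/n_{\mathrm{Q}}^2$. The only cosmetic difference is that the paper bounds the endpoint deviation directly in terms of the marginal CDF $F^j$ rather than via the density bounds $\underline{c},\overline{c}$, and it leaves implicit the verification (which you spell out) that the assumptions are preserved under the coordinatewise affine rescaling.
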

	
	Compared to Theorem \ref{thm:utility}, the upper bound of Theorem \ref{thm:rangeparameter} increases by $\log n_{\mathrm{Q}} / n_{\mathrm{Q}}$, which is a minor term in most cases. 
	Moreover, Theorem \ref{thm:rangeparameter} fails with additional probability $d / n_{\mathrm{Q}}^2$ since we are estimating $[a^j, b^j]$ with $[\widehat{a^j} , \widehat{b^j}]$. 
	Both changes indicate that sufficient public data is required to resolve the unknown range issue.

	\section{Data-driven Tree Structure Pruning}\label{sec:datadriventreepruning}

	In this section, we develop a data-driven approach for parameter tuning and investigate its theoretical properties.
	In Section \ref{sec:naivestrategy}, we derive a selection scheme in the spirit of Lepski's method \citep{lepskii1992asymptotically}, a classical method for adaptively choosing the hyperparameters.
	In Section \ref{sec:plpct}, we make slight modifications to the scheme and illustrate the effectiveness of the amended approach.

	\subsection{A Naive Strategy}\label{sec:naivestrategy}

	We first develop a private analog of Lepski's method \citep{lepskii1992asymptotically} for LPCT.
	There are two parameters: the depth $p$ and the relative ratio $\lambda$, both of which have an optimal value depending on the unknown $\alpha$ and $\gamma$. 
	We do not set the parameters to specific values. 
	Instead, we first query the privatized information with a sufficient tree depth $p_0$.
	Then, we introduce a greedy procedure to locally prune the tree to an appropriate depth and select $\lambda$ based on local information.

	We provide a preliminary result that will guide the derivation of the pruning procedure.
	For max-edge partition $\pi_{p_0}$ with depth $p_0$ and node $A_{(p_0)}^j$, the \textit{depth $k$ ancestor private tree estimation} is defined as 
	\begin{align}\label{equ:defofetapik}
		\tilde{\eta}^{\pi_k}_{\mathrm{P}, \mathrm{Q}}(x) = 
		\sum_{j\in\mathcal{I}_{p_0}}\eins\{x\in A^j_{(p_0)}\}
		\frac{\sum_{\mathrm{ancestor}(A^{j'}_{(p_0)}, k) = \mathrm{ancestor}(A^{j}_{(p_0)}, k) }\left(
			\sum_{i=1}^{n_{\mathrm{P}}} \tilde{V}^{\mathrm{P} j' }_i + \lambda\sum_{i=1}^{n_{\mathrm{Q}}}{V}^{\mathrm{Q} j'}_i\right)
		}
		{ 
			\sum_{\mathrm{ancestor}(A^{j'}_{(p_0)}, k) = \mathrm{ancestor}(A^{j}_{(p_0)}, k) }\left(
			\sum_{i=1}^{n_{\mathrm{P}}} \tilde{U}^{\mathrm{P} j' }_i + \lambda\sum_{i=1}^{n_{\mathrm{Q}}}{U}^{\mathrm{Q} j'}_i\right)
		}
	\end{align}
	and consequently the classifier is defined as $\tilde{f}^{\pi_k}_{\mathrm{P}, \mathrm{Q}}(x)  = \eins (\tilde{\eta}^{\pi_k}_{\mathrm{P}, \mathrm{Q}}(x) >{1}/{2})$.
	See Figure \ref{fig:pruning1} for an illustration of estimation at ancestor nodes.

	\begin{figure}[]
		\centering
		\subfigure[Initial estimation]{
			\begin{minipage}{0.3\linewidth}
				\centering
				\includegraphics[width=\textwidth]{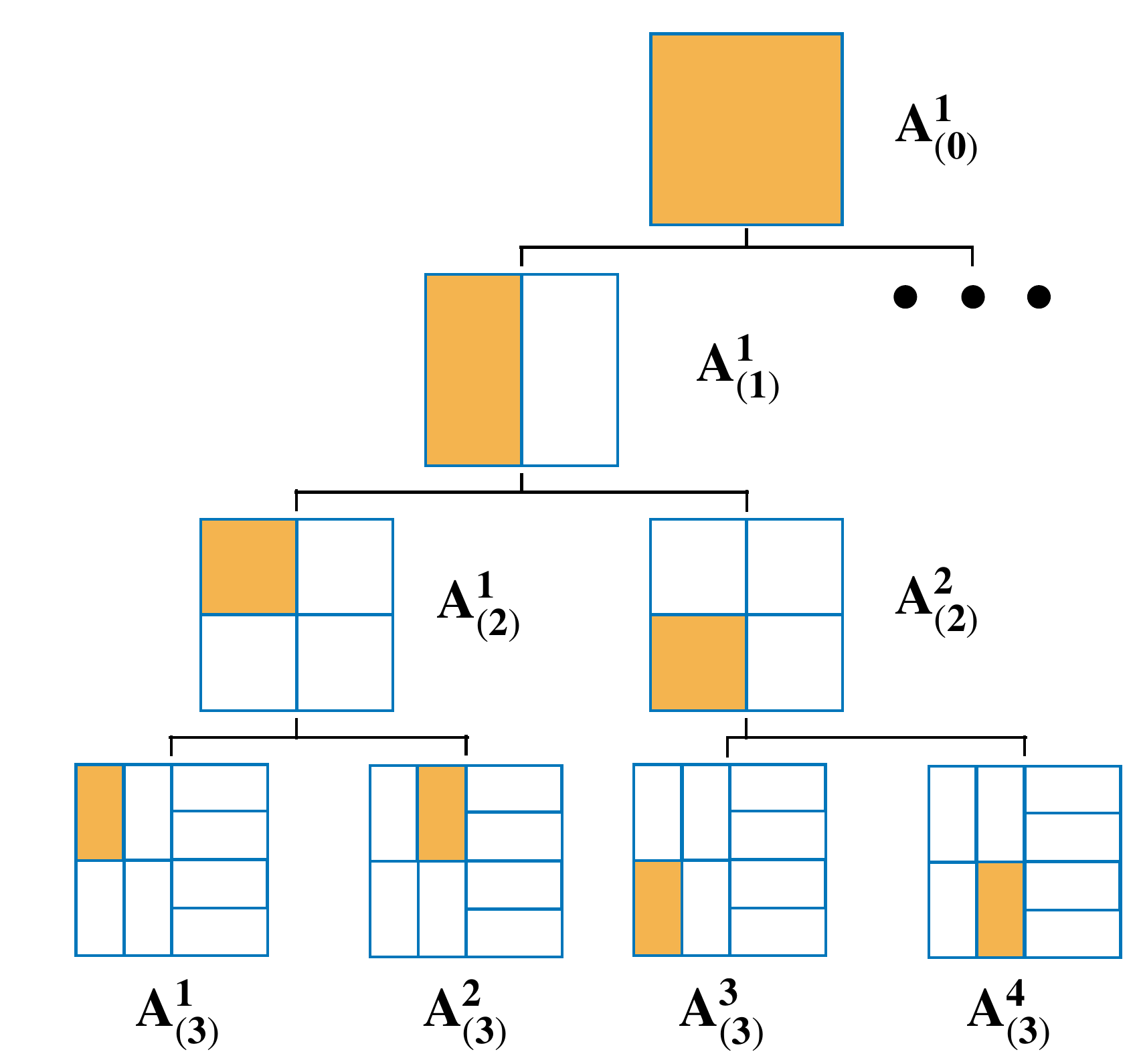}
			\end{minipage}
			\label{fig:pruning1}
		}
		\subfigure[During pruning]{
			\begin{minipage}{0.3\linewidth}
				\centering
				\includegraphics[width=\textwidth]{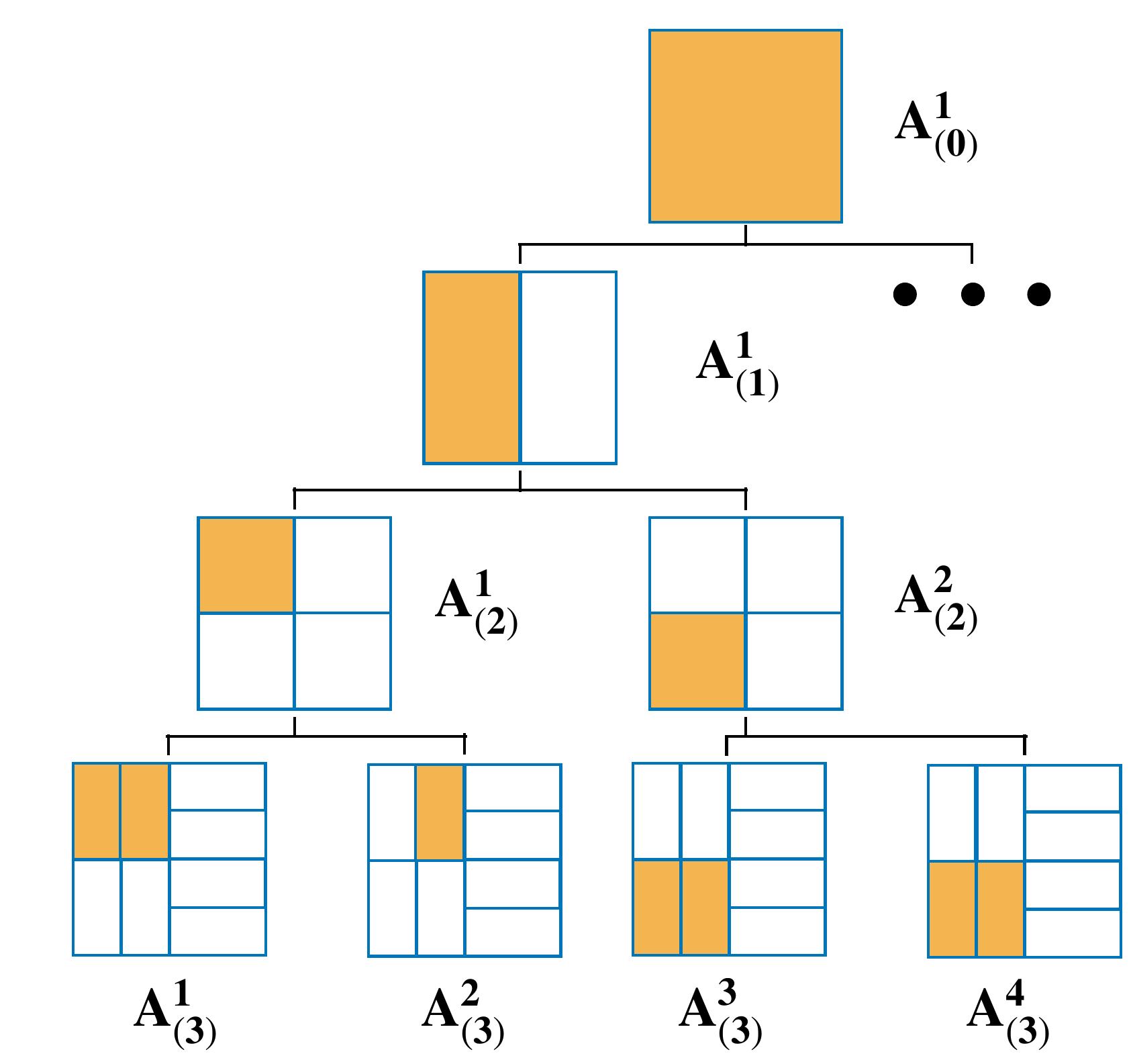}
			\end{minipage}
			\label{fig:pruning2}
		}
		\subfigure[Pruned estimation]{
			\begin{minipage}{0.3\linewidth}
				\centering
				\includegraphics[width=\textwidth]{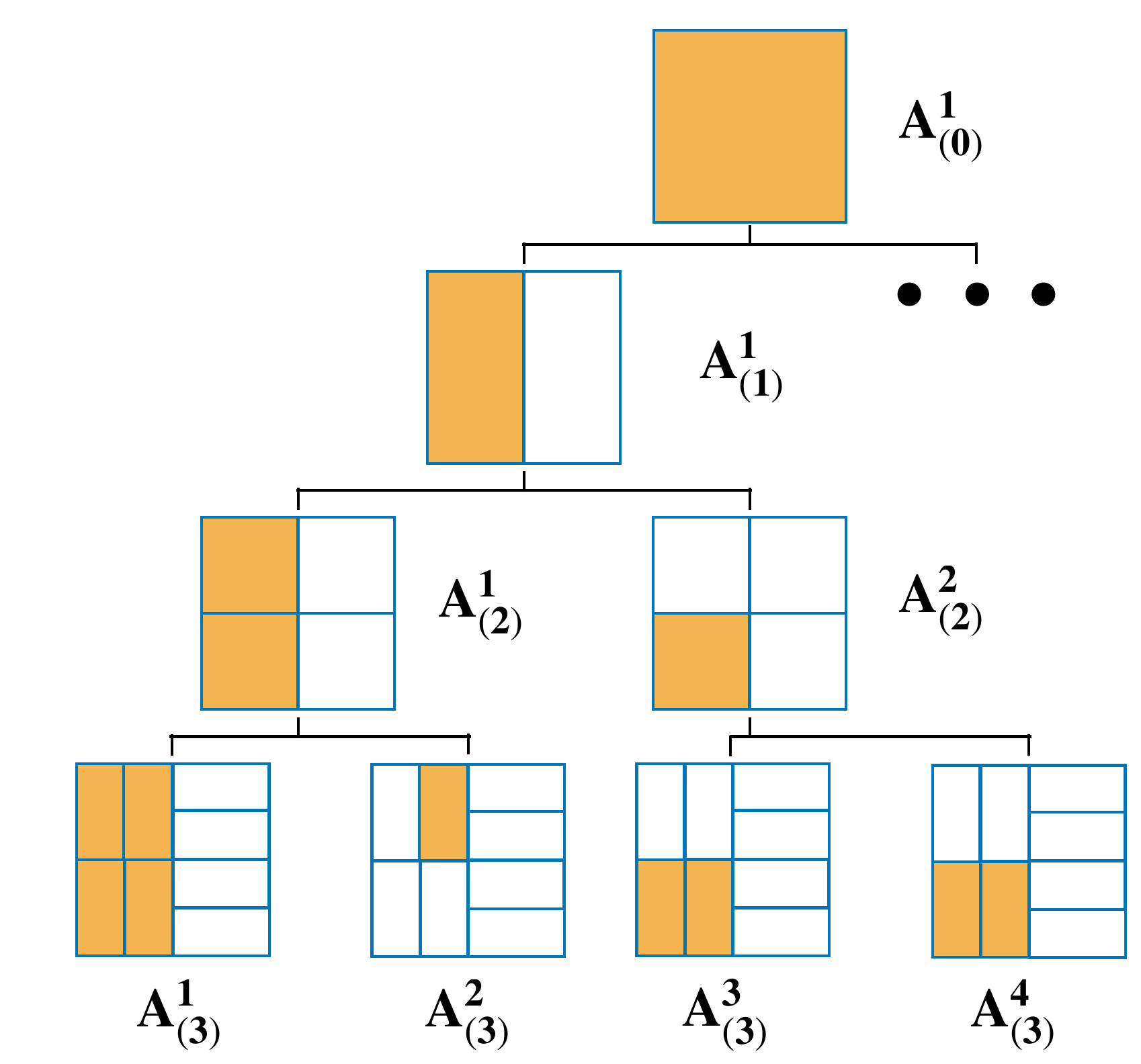}
			\end{minipage}
			\label{fig:pruning3}
		}
		\caption{Illustration of pruning process. The yellow area filled at node $A_{(i)}^j$ means that for $x\in A_{(i)}^j$, we use the average of labels in the yellow area instead of $A_{(i)}^j$ itself. 
			The prediction in $A_{(3)}^1$ is pruned to its depth - 1 ancestor; the prediction in $A_{(3)}^3$ is pruned to its depth - 2 ancestor; the prediction in $A_{(3)}^2$ is pruned to its depth - 3 ancestor, i.e. not pruned. }
		\label{fig:pruning}
		\vskip -0.1in
	\end{figure}

	\begin{proposition}\label{prop:lepskibound}
		Let $\pi$ be constructed through the max-edge partition in Section \ref{sec:partition} with depth $p_0$.
		Let $\tilde{\eta}^{\pi_k}_{\mathrm{P}, \mathrm{Q}}$ be the tree estimator with depth $k$ ancestor node  in \eqref{equ:defofetapik}. 
		Let $\widehat{\eta}^{\pi_k}_{\mathrm{P}, \mathrm{Q}}$ be the non-private tree estimator with partition $\pi_k$ in \eqref{equ:twosamplenonprivateeta}. Define the quantity 
		\begin{align}\label{equ:rkj}
			r_{k}^j :=\sqrt{\frac{4C_1\left(2^{p_0 - k + 3}\varepsilon^{-2} n_{\mathrm{P}}\right)\vee \left( \sum_{j'}\sum_{i=1}^{n_{\mathrm{P}}}\tilde{U}_i^{\mathrm{P}j'} \right) + C_1 \lambda^2\sum_{j'}\sum_{i=1}^{n_{\mathrm{P}}}U_i^{\mathrm{Q}j'}}{\left( \sum_{j'} \sum_{i=1}^{n_{\mathrm{P}}} \tilde{U}^{\mathrm{P} j}_i + \lambda \sum_{j'} \sum_{i=1}^{n_{\mathrm{Q}}}{U}^{\mathrm{Q} j}_i\right)^2}}
		\end{align}
		where the summation over $j'$ is with respect to all descendent nodes of the depth-$k$ ancestor, i.e. $\mathrm{ancestor}(A^{j'}_{(p_0)}, k) = \mathrm{ancestor}(A^{j}_{(p_0)}, k)$.
		The constant $C_1$ is specified in the proof. 
		Then, for all $\lambda$ and ${p_0}$, if $x\in A^{j}_{(p_0)}$, there holds
		\begin{align*}
			\left|\tilde{\eta}^{\pi_k}_{\mathrm{P}, \mathrm{Q}}(x)  - \mathbb{E}_{Y|X}\left[\widehat{\eta}^{\pi_k}_{\mathrm{P}, \mathrm{Q}}(x) \right]\right| \leq r_k^j
		\end{align*}
		with probability $1-3/(n_{\mathrm{P}} + n_{\mathrm{Q}})^2 $ with respect to $\mathrm{P}_{Y^{\mathrm{P}} | X^{\mathrm{P}}}^{n_{\mathrm{P}}} \otimes \mathrm{Q}_{Y^{\mathrm{Q}} | X^{\mathrm{Q}}}^{n_{\mathrm{Q}}}\otimes \mathrm{R}$ where $\mathrm{R}$ is the joint distribution of privacy mechanisms in \eqref{equ:privatizeprocedureU} and \eqref{equ:privatizeprocedureV}.
		$\mathbb{E}_{Y|X}$ is taken with respect to $\mathrm{P}_{Y|X}^{n_\mathrm{P}}\otimes \mathrm{Q}_{Y|X}^{n_\mathrm{Q}}$.
	\end{proposition}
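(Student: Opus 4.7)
The plan is to write $\tilde{\eta}^{\pi_k}_{\mathrm{P},\mathrm{Q}}(x) = \tilde T/\tilde S$ and $\mathbb E_{Y|X}[\widehat{\eta}^{\pi_k}_{\mathrm{P},\mathrm{Q}}(x)] = T/S$, where, abbreviating sums over the descendants $j'$ of the depth-$k$ ancestor of $A^j_{(p_0)}$,
\begin{equation*}
\tilde T = \sum_{j',i}\tilde V^{\mathrm{P}j'}_i + \lambda\sum_{j',i} V^{\mathrm{Q}j'}_i,\quad \tilde S = \sum_{j',i}\tilde U^{\mathrm{P}j'}_i + \lambda\sum_{j',i} U^{\mathrm{Q}j'}_i,
\end{equation*}
and where $T,S$ are obtained from $\tilde T,\tilde S$ by deleting the Laplace noise and replacing each $V^{\mathrm{P}j'}_i, V^{\mathrm{Q}j'}_i$ with its conditional mean $\eta^*_\mathrm{P}(X^\mathrm{P}_i)U^{\mathrm{P}j'}_i, \eta^*_\mathrm{Q}(X^\mathrm{Q}_i)U^{\mathrm{Q}j'}_i$. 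The identity $\tilde T/\tilde S - T/S = [(\tilde T - T) - (T/S)(\tilde S - S)]/\tilde S$, combined with $T/S \in [0,1]$, reduces the claim to a high-probability upper bound on the numerator, divided by the observed $\tilde S$.

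Conditionally on all $X$'s, I decompose the numerator into four mutually independent pieces: the Laplace sum $(4/\varepsilon)\sum_{j',i}\xi^{j'}_i$ from $\tilde T - T_0$ (with $T_0 = \sum V^{\mathrm{P}j'}_i+\lambda\sum V^{\mathrm{Q}j'}_i$ the non-private numerator), the weighted Laplace sum $-(T/S)(4/\varepsilon)\sum_{j',i}\zeta^{j'}_i$ from $\tilde S - S$, and two centered-label sums $W_\mathrm{P} = \sum_{i: X^\mathrm{P}_i\in \mathrm{ancestor}}(Y^\mathrm{P}_i - \eta^*_\mathrm{P}(X^\mathrm{P}_i))$ and $\lambda W_\mathrm{Q}$ absorbed in $T_0 - T$. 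Bounded Bernstein applied to the Bernoulli pieces gives $|W_\mathrm{P}|\lesssim \sqrt{N_\mathrm{P}\log(n_\mathrm{P}+n_\mathrm{Q})} + \log(n_\mathrm{P}+n_\mathrm{Q})$ (and similarly for $W_\mathrm{Q}$ with $N_\mathrm{Q}$), while sub-exponential Bernstein applied to the $2^{p_0-k}n_\mathrm{P}$ independent standard Laplace summands yields $(4/\varepsilon)|\sum\xi^{j'}_i|\lesssim \sqrt{2^{p_0-k+3}\varepsilon^{-2}n_\mathrm{P}\log(n_\mathrm{P}+n_\mathrm{Q})} + \varepsilon^{-1}\log(n_\mathrm{P}+n_\mathrm{Q})$, and the same for $\zeta$. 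A union bound over these four events absorbs all $\log$ factors into $C_1$ and realises the stated probability $1 - 3/(n_\mathrm{P}+n_\mathrm{Q})^2$.

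To make the bound data-driven it remains to eliminate the unobservable $N_\mathrm{P}$ in favour of $\tilde N_\mathrm{P} = \sum_{j',i}\tilde U^{\mathrm{P}j'}_i$. The same Laplace concentration already used bounds $|\tilde N_\mathrm{P} - N_\mathrm{P}|\lesssim\sqrt{2^{p_0-k+3}\varepsilon^{-2}n_\mathrm{P}}$ on the good event. Thus, on that event, either (a) $N_\mathrm{P}$ dominates the Laplace scale $2^{p_0-k+3}\varepsilon^{-2}n_\mathrm{P}$ and is comparable to $\tilde N_\mathrm{P}$ up to universal constants, or (b) it does not, and the Laplace variance already dominates the Bernoulli variance $N_\mathrm{P}$. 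This dichotomy produces exactly the $4C_1(2^{p_0-k+3}\varepsilon^{-2}n_\mathrm{P})\vee \tilde N_\mathrm{P}$ term, while $C_1\lambda^2 N_\mathrm{Q}$ needs no conversion because $N_\mathrm{Q}$ is public. Dividing by $\tilde S$ and packaging the four Bernstein bounds by the elementary $\sqrt{a} + \sqrt{b} \leq \sqrt{2(a+b)}$ yields $r^j_k$.

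The principal obstacles I expect are (i) passing from $N_\mathrm{P}$ to $\tilde N_\mathrm{P}$ tightly enough that the bound remains sharp in both the Laplace-dominated and the Bernoulli-dominated regimes, which is exactly what the $\vee$ in $r^j_k$ records, and (ii) controlling the ratio when $\tilde S$ happens to be small, which forces $\tilde S^2$ (rather than $S\tilde S$ or $S^2$) to appear in the denominator of $r^j_k$ and motivates writing the decomposition with $\tilde S$ alone in the denominator from the outset. Everything else is standard concentration plus a union bound.
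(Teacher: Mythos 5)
Your proposal is correct and follows essentially the same route as the paper: the identity $\tilde T/\tilde S - T/S = [(\tilde T - T) - (T/S)(\tilde S - S)]/\tilde S$ with $T/S\in[0,1]$ is exactly the paper's decomposition into the two centered-label sums and the Laplace term, each bounded by Bernstein (Bernoulli) and sub-exponential (Laplace, over the $2^{p_0-k}n_{\mathrm{P}}$ summands) concentration with a union bound over cells absorbing the $\log$ factors into $C_1$. The paper also handles the passage from $\sum_{j',i}U_i^{\mathrm{P}j'}$ to $\sum_{j',i}\tilde U_i^{\mathrm{P}j'}$ by precisely the dichotomy you describe (comparing $\sum\tilde U$ against a multiple of $2^{p_0-k}\varepsilon^{-2}n_{\mathrm{P}}$), which is the source of the $\vee$ in $r_k^j$.
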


	The above proposition indicates that if $\tilde{\eta}^{\pi_k}_{\mathrm{P}, \mathrm{Q}}(x) - {1}/{2} \geq r_k^j$,
	then there holds
	\begin{align*}
		\mathbb{E}_{Y|X}\left[\widehat{\eta}^{\pi_k}_{\mathrm{P}, \mathrm{Q}}(x)\right] - \frac{1}{2} =  \mathbb{E}_{Y|X}\left[\widehat{\eta}^{\pi_k}_{\mathrm{P}, \mathrm{Q}}(x)\right] - \tilde{\eta}^{\pi_k}_{\mathrm{P}, \mathrm{Q}}(x) + \tilde{\eta}^{\pi_k}_{\mathrm{P}, \mathrm{Q}}(x) - \frac{1}{2} \geq 0
	\end{align*}
	with probability $1-3/(n_{\mathrm{P}} + n_{\mathrm{Q}})^2 $. 
	Similar conclusion holds when $\tilde{\eta}^{\pi_k}_{\mathrm{P}, \mathrm{Q}}(x)  - {1}/{2} < r_k^j$.
	In other words, when we have
	\begin{align}\label{equ:pruningcondition}
		\frac{|\tilde{\eta}^{\pi_k}_{\mathrm{P}, \mathrm{Q}}(x)  - \frac{1}{2}|}{r_k^j } \geq 1,
	\end{align}
	the population version of ancestor estimation is of the same sign as the sample version estimation.
	As long as \eqref{equ:pruningcondition} holds, it is enough to use the sample version estimation, and our goal is to find the best $k$ for $\mathbb{E}_{Y|X}[\widehat{\eta}^{\pi_k}_{\mathrm{P}, \mathrm{Q}}(x)]$ to approximate $\eta_{\mathrm{P}}^*(x)$. 
	Under continuity assumption \ref{asp:alphaholder}, the approximation error is bounded by the radius of the largest cell (see Lemma \ref{lem::treeproperty}). 
	Consequently, one can show that $\mathcal{R}_{\mathrm{P}}(\eins(\mathbb{E}_{Y|X}[\widehat{\eta}^{\pi_k}_{\mathrm{P}, \mathrm{Q}}(x)] > {1}/{2}))$ is monotonically decreasing with respect to $k$. 
	Thus, our goal is to find the largest $k$ such that \eqref{equ:pruningcondition} holds.
	On the other hand, \eqref{equ:pruningcondition} is dependent on $\lambda$.
	We select the best possible $\lambda$ in order to let \eqref{equ:pruningcondition} hold for each $j$ and $k$, i.e. 
	\begin{align}\label{equ:bestlamda}
		\lambda^j_k = {\arg\max}_{\lambda} \frac{|\tilde{\eta}^{\pi_k}_{\mathrm{P}, \mathrm{Q}}(x)  - {1}/{2}|}{r_k^j}
	\end{align}
	for $x\in A^j_{(p_0)}$. 
	The optimization problem in \eqref{equ:bestlamda} has a closed-form solution that can be computed efficiently and explicitly. The derivation of the closed-form solution is postponed to Section \ref{sec:derivationpruning}.

	Based on the above analysis, we can perform the following pruning procedure.
	We first query with a sufficient depth, i.e. we select $p_0$ large enough, create a partition on $D^{\mathrm{Q}}$, and receive the information from data holders. 
	Then we prune back by the following procedure:
	\begin{itemize}
		\item For $j \in \mathcal{I}_{p_0}$, we do the following operation: for $k = p_0, \cdots ,2$ and $r_k^j$ defined in \eqref{equ:rkj}, we assign $\lambda_k^j$ as in \eqref{equ:bestlamda}. 
		Then if \eqref{equ:pruningcondition} holds, let $k_j = k$; else, $k \leftarrow k - 1$. 
		\item If $k_j$ is not assigned, we select $k_j = \arg\max_k  {|\tilde{\eta}^{\pi_k}_{\mathrm{P}, \mathrm{Q}}(x)  - {1}/{2}|}/{r_k^j}$. 
		\item Assign $\tilde{\eta}^{{prune}}_{\mathrm{P}, \mathrm{Q}}(x) = \tilde{\eta}^{\pi_{k_j}}_{\mathrm{P}, \mathrm{Q}}(x)$ for all $x\in A_{(p_0)}^j$. 
	\end{itemize}

	This process can be done efficiently. The exact process is illustrated in Figure \ref{fig:pruning}. 
	We first calculate estimations as well as the relative signal strength at all nodes. 
	Then we trace back from each leaf node to the root and find the node that maximizes the statistic \eqref{equ:pruningcondition}. 
	The prediction value at the leaf node is assigned as the prediction at the ancestor node with the maximum statistic value.
	In total, the pruning causes additional time complexity at most $\mathcal{O}(2^{p_0})$, which is ignorable compared to the original complexity as long as $2^{p_0}\lesssim n_{\mathrm{P}} + d n_{\mathrm{Q}}$.

	\subsection{Pruned LDP Classification Tree}\label{sec:plpct}
	
	In the non-private case, i.e. when $\varepsilon = \infty$, $\tilde{\eta}^{{prune}}_{\mathrm{P}, \mathrm{Q}}$ is essentially the Lepski's method \citep{lepskii1992asymptotically, cai2021transfer}. 
	However, in the presence of privacy concerns, a tricky term $2^{p_0 - k + 3}\varepsilon^{-2} n_{\mathrm{P}}$ arises in \eqref{equ:rkj} due to the Laplace noises.
	Specifically, if $2^{p_0 - k + 3}\varepsilon^{-2} n_{\mathrm{P}}\geq \sum_{j'}\sum_{i=1}^{n_{\mathrm{P}}}\tilde{U}_i^{\mathrm{P}j'} $,  the first term in the numerator of \eqref{equ:rkj} contains no information other than the level of noise. 
	As a result, the theoretical analysis of Lepski's method fails to adapt.

	To deal with the issue, we simultaneously perform two pruning procedures using information of solely $\mathrm{P}$ or $\mathrm{Q}$ data.
	We first query with a large depth $p_0 = \lfloor\frac{d}{2 + 2d} \log_2 (n_{\mathrm{P}} \varepsilon^2 + n_{\mathrm{Q}}^{\frac{2 + 2d}{d}}) \rfloor$ and half of the budget $\varepsilon / 2$.
	When $2^{p_0 - k + 3}\varepsilon^{-2} n_{\mathrm{P}}\geq \sum_{j'}\sum_{i=1}^{n_{\mathrm{P}}}\tilde{U}_i^{\mathrm{P}j'} $, we compare the values of $|\tilde{\eta}_{\mathrm{P}, \mathrm{Q}}^{\pi_k}(x) - 1 / 2| / r_k^j$ with $\lambda = 0$ and $\lambda  = \infty$, associating to $\mathrm{P}$ and $\mathrm{Q}$ data, respectively. 
	If either of the statistics is larger than 1, we terminate the pruning.
	Else, we compare the values of the statistics.
	If the statistic associated with $\mathrm{Q}$ is larger, we return the estimation $\widehat{f}^{\pi_k}_{\mathrm{Q}}$.
	In this case, we guarantee that the pruned classifier is at least as good as the estimation using only $\mathrm{Q}$ data. 
	If the statistic associated with $\mathrm{P}$ is larger, we proceed the pruning to $k - 1$.
	If we reach $k \leq  \lfloor \frac{d}{2 + 2d} \log_2 (n_{\mathrm{P}} \varepsilon^2 ) \rfloor$ for any $j$, we terminate the whole algorithm and return \eqref{equ:etadppi} with depth $ \lfloor \frac{d}{2 + 2d} \log_2 (n_{\mathrm{P}} \varepsilon^2 ) \rfloor$, which costs another $\varepsilon / 2$.
	In this case, we ensure $\mathrm{Q}$ data is unimportant and adopt a terminating condition that is close to the optimal value $\lfloor \frac{d}{2\alpha + 2d} \log_2 (n_{\mathrm{P}} \varepsilon^2 ) \rfloor$.
	The detailed algorithm is presented in Algorithm \ref{alg:prunedeta}.
	Moreover, we show that the adjusted pruning procedure will result in a near-optimal excess risk.

	\begin{algorithm}[!p]
		\caption{Pruned LPCT}
		\label{alg:prunedeta}
		\label{alg:ldpdt}
		
		{\bfseries Input: }{ Private data ${D}^{\mathrm{P}} = \{ ( X_i^{\mathrm{P}}, Y_i^{\mathrm{P}} )\}_{i=1}^{n_{\mathrm{P}}}$, public data ${D}^{\mathrm{Q}} = \{ ( X_i^{\mathrm{Q}}, Y_i^{\mathrm{Q}} )\}_{i=1}^{n_{\mathrm{Q}}}$.  }\\
		{\bfseries Initialization: }{ $p_0= \lfloor \frac{d}{2 + 2d} \log_2 (n_{\mathrm{P}} \varepsilon^2 + n_{\mathrm{Q}}^{\frac{2 + 2d}{d}}) \rfloor$, \texttt{flag} $=0$. }\\
		Create $\pi$ with depth $p_0$ on $D^{\mathrm{Q}}$. Query \eqref{equ:privatizeprocedureU} and \eqref{equ:privatizeprocedureV} with $\varepsilon / 2$.
		\\
		\For{$A_{(p_0)}^j$ in $\pi$}{
			\For{$k$ = $p_0$, $\cdots$, 1}{
				\eIf{$2^{p_0 - k + 3}\varepsilon^{-2} n_{\mathrm{P}}\geq \sum_{j'}\sum_{i=1}^{n_{\mathrm{P}}}\tilde{U}_i^{\mathrm{P}j'} $}{
					$r_k^{\mathrm{Q}j} = \sqrt{\frac{4 \log (n_{\mathrm{P}} + n_{\mathrm{Q}} )}{\sum_{j'}\sum_{i=1}^{n_{\mathrm{P}}}U_i^{\mathrm{Q}j'}}}$,
					$v_k^{\mathrm{Q}j} = |\widehat{\eta}_{\mathrm{Q}}^{\pi_k} - {1}/{2}| /r_k^{\mathrm{Q}j} $
					\\
					$r_k^{\mathrm{P}j} = \sqrt{\frac{2^{p_0 - k + 5}\varepsilon^{-2} n_{\mathrm{P}} \log (n_{\mathrm{P}} + n_{\mathrm{Q}} )}{\left(\sum_{j'}\sum_{i=1}^{n_{\mathrm{P}}}\tilde{U}_i^{\mathrm{P}j'}\right)^2}}$,
					$v_k^{\mathrm{P}j} = |\tilde{\eta}_{\mathrm{P}}^{\pi_k} - {1}/{2}| /r_k^{\mathrm{P}j} $\\
					\eIf{$v_k^{\mathrm{Q}j}\leq v_k^{\mathrm{P}j}$}{
						$s^j_k = \tilde{\eta}_{\mathrm{P}}^{\pi_k}$, $r_k^j = r_k^{\mathrm{P}j}$, $v_k^j = v_k^{\mathrm{P}j}$\\
						\If{ $k \leq \lfloor \frac{d}{2 + 2d} \log_2 (n_{\mathrm{P}} \varepsilon^2 ) \rfloor$ }{ $k_j = k$,  \texttt{flag} $ = 1$, \textbf{break}}
					}{
						$s^j_k = \widehat{\eta}_{\mathrm{Q}}^{\pi_k}$, $r_k^j = r_k^{\mathrm{Q}j}$, $v_k^j = v_k^{\mathrm{Q}j}$
					}
				}{
					$r_k^j = \sqrt{\frac{\left(32\sum_{j'}\sum_{i=1}^{n_{\mathrm{P}}}\tilde{U}_i^{\mathrm{P}j'} + 4 \lambda^2\sum_{j'}\sum_{i=1}^{n_{\mathrm{P}}}U_i^{\mathrm{Q}j'} \right) \log (n_{\mathrm{P}} + n_{\mathrm{Q}} )}{\left(\sum_{j'}\sum_{i=1}^{n_{\mathrm{P}}} \tilde{U}^{\mathrm{P} j}_i + \lambda\sum_{j'}\sum_{i=1}^{n_{\mathrm{Q}}}{U}^{\mathrm{Q} j'}_i\right)^2}}$\\
					$v_k^j = \max_{\lambda}|\tilde{\eta}_{\mathrm{P}, \mathrm{Q}}^{\pi_k} - {1}/{2}| /r_k^j $, $s_k^j = \tilde{\eta}_{\mathrm{P}, \mathrm{Q}}^{\pi_k}$  with $\lambda  = \lambda_k^j$ in \eqref{equ:bestlamda}
				}
				\If{$v_k^j\geq 1$}{
					$k_j = k$, 
					\textbf{break} 
				}
			}
		}
		\eIf{$\mathtt{flag} $}{
			$p = \lfloor \frac{d}{2 + 2d} \log_2 (n_{\mathrm{P}} \varepsilon^2 ) \rfloor$. 
			Create $\pi_p$ on $D^{\mathrm{Q}}$.  Query \eqref{equ:privatizeprocedureU} and \eqref{equ:privatizeprocedureV} with $\varepsilon / 2$.\\
			{\bfseries Output:} $\tilde{\eta}_{\mathrm{P}, \mathrm{Q}}^{prune} = \tilde{\eta}_{\mathrm{P}}^{\pi_p}$.	
		}{{\bfseries Output: }{$\tilde{\eta}_{\mathrm{P}, \mathrm{Q}}^{prune} = \sum_j s_k^j \cdot \eins\{A_{(p_0)}^j\}$. }}
	\end{algorithm} 
	\vskip -0.3in

	\begin{theorem}\label{thm:utility3}
		Let $\tilde{f}^{prune}_{\mathrm{P}, \mathrm{Q}} = \eins (\tilde{\eta}_{\mathrm{P}, \mathrm{Q}}^{prune} \geq 1 / 2)$ be the pruned LPCT defined in Algorithm \ref{alg:ldpdt}. 
		Suppose Assumption \ref{asp:alphaholder}, \ref{asp:margin}, and \ref{asp:rse} hold.
		Let $\delta$ be defined in \eqref{equ:defofdelta}. 
		Then, for $n_{\mathrm{P}}^{-\frac{\alpha}{2\alpha + 2d}}\lesssim \varepsilon \lesssim n_{\mathrm{P}}^{\frac{d}{4 + 2d}}$, with probability $1 -4/(n_{\mathrm{P}} + n_{\mathrm{Q}})^2 $ with respect to $\mathrm{P}^{n_{\mathrm{P}}}\otimes\mathrm{Q}^{n_{\mathrm{Q}}}\otimes\mathrm{R}$, we have
		\begin{enumerate}
			\item[(i)] if $\frac{n_{\mathrm{P}} \varepsilon^2}{\log (n_{\mathrm{P}} + n_{\mathrm{Q}})} \lesssim \left(\frac{n_{\mathrm{Q}}}{\log (n_{\mathrm{P}} + n_{\mathrm{Q}})}\right)^{\frac{2\alpha + 2d}{2 \gamma\alpha + d}}$, $\mathcal{R}_{\mathrm{P}} (\tilde{f}^{prune}_{\mathrm{P}, \mathrm{Q}}) - \mathcal{R}_{\mathrm{P}}^*   \lesssim \delta^{\frac{\alpha(\beta+1)}{d}}$;
			\item[(ii)]  if $\frac{n_{\mathrm{P}} \varepsilon^2}{\log (n_{\mathrm{P}} + n_{\mathrm{Q}})} \gtrsim \left(\frac{n_{\mathrm{Q}}}{\log (n_{\mathrm{P}} + n_{\mathrm{Q}})}\right)^{\frac{2\alpha + 2d}{2 \gamma\alpha + d}}$, $\mathcal{R}_{\mathrm{P}} (\tilde{f}^{prune}_{\mathrm{P}, \mathrm{Q}}) - \mathcal{R}_{\mathrm{P}}^*   \lesssim    \delta^{\frac{\alpha(\alpha + d) ( \beta + 1)}{d(1+d)}}$.
		\end{enumerate}
	\end{theorem}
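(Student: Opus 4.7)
The plan is to combine the Lepski-type pointwise bound of Proposition~\ref{prop:lepskibound} with an approximation-error estimate that exploits Assumptions~\ref{asp:alphaholder} and \ref{asp:rse}, and then analyze the two branches of Algorithm~\ref{alg:ldpdt} separately according to which of $n_{\mathrm{P}}\varepsilon^2$ and $n_{\mathrm{Q}}^{(2\alpha+2d)/(2\gamma\alpha+d)}$ controls the effective sample size. First, I would apply Proposition~\ref{prop:lepskibound} with a union bound over all leaves $j\in\mathcal{I}_{p_0}$ and all depths $k\le p_0$ so that the concentration $|\tilde\eta^{\pi_k}_{\mathrm{P},\mathrm{Q}}(x)-\mathbb{E}_{Y|X}[\widehat\eta^{\pi_k}_{\mathrm{P},\mathrm{Q}}(x)]|\le r_k^j$ holds simultaneously with the claimed probability; the extra union-bound factor only contributes a $\log(n_{\mathrm{P}}+n_{\mathrm{Q}})$ term already absorbed into $\delta$.

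Second, I would control the approximation error $|\mathbb{E}_{Y|X}[\widehat\eta^{\pi_k}_{\mathrm{P},\mathrm{Q}}(x)]-\eta^*_{\mathrm{P}}(x)|$. The max-edge diameter bound used in the proof of Theorem~\ref{thm:utility} gives every cell at depth $k$ diameter $\lesssim 2^{-k/d}$, so Assumption~\ref{asp:alphaholder} yields a bias of order $2^{-\alpha k/d}$ for the $\mathrm{P}$-contribution, while Assumption~\ref{asp:rse} together with the monotone link $\phi$ guarantees that the $\mathrm{Q}$-contribution only adds a constant multiple of the same bias and preserves the sign of $\eta^*_{\mathrm{P}}-1/2$. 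At the stopping depth $k_j$, the sign-preservation observation immediately following Proposition~\ref{prop:lepskibound} then implies that $x$ is classified correctly whenever $|\eta^*_{\mathrm{P}}(x)-1/2|$ exceeds a deterministic threshold $t_\star$ of the order of the bias plus $r_{k_j}^j$; Assumption~\ref{asp:margin} converts this pointwise guarantee into an excess-risk bound $\lesssim t_\star^{\beta+1}$.

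The decomposition into regimes (i) and (ii) corresponds to which branch of Algorithm~\ref{alg:ldpdt} is taken. In regime (i), where the $n_{\mathrm{Q}}$-term dominates in $\delta$, I would show that for every $j$ the pruning reaches an oracle depth $k_j^\star$ with $2^{k_j^\star}\asymp\delta^{-1}$ before the noise-domination trigger $2^{p_0-k+3}\varepsilon^{-2}n_{\mathrm{P}}\ge\sum\tilde U_i^{\mathrm{P}j'}$ forces the flag branch. This follows because the maximizer $\lambda^j_k$ from \eqref{equ:bestlamda} is of the same order as the oracle weight $\delta^{((\gamma\alpha-\alpha)\wedge(2\gamma\alpha-d))/d}$ prescribed by Theorem~\ref{thm:utility}, and the comparison of $v_k^{\mathrm{Q} j}$ and $v_k^{\mathrm{P} j}$ selects the two-sample estimator with the correct balance, so $t_\star\asymp\delta^{\alpha/d}$ and the margin bound yields the optimal $\delta^{\alpha(\beta+1)/d}$. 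In regime (ii) the flag is set because $p_0$ coincides up to constants with $\lfloor\tfrac{d}{2+2d}\log_2(n_{\mathrm{P}}\varepsilon^2)\rfloor$ and the private signal dominates, so the algorithm refits a private tree on $D^{\mathrm{P}}$ alone at this depth with the second half of the budget. For this $\alpha$-agnostic depth the bias $2^{-\alpha p/d}\asymp(n_{\mathrm{P}}\varepsilon^2)^{-\alpha/(2+2d)}$ balances the Laplace-noise standard deviation of the same order, giving pointwise error $\asymp(n_{\mathrm{P}}\varepsilon^2)^{-\alpha/(2+2d)}$; the margin assumption gives excess risk $(n_{\mathrm{P}}\varepsilon^2)^{-\alpha(\beta+1)/(2+2d)}$, which a short algebraic manipulation identifies with $\delta^{\alpha(\alpha+d)(\beta+1)/(d(1+d))}$.

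The hardest step is regime (i): because the stopping rule uses the data-dependent $r_k^j$ (which depends on the realized Laplace noise through $\tilde U_i^{\mathrm{P}j'}$) and the maximizer $\lambda^j_k$ also depends on the random estimators themselves, one must replace $r_k^j$ by a deterministic surrogate of the correct order via Bernstein-type concentration for $\sum U_i^{\mathrm{P}j'}$ and tail bounds on the Laplace sum, and then verify simultaneously over all nodes that the data-driven argmax in $\lambda$ is compatible up to constants with the oracle choice from Theorem~\ref{thm:utility}. Secondary technicalities are checking that the $\varepsilon/2$ privacy split only changes absorbed constants, and that the upper restriction $\varepsilon\lesssim n_{\mathrm{P}}^{d/(4+2d)}$ is exactly what is needed for the noise-dominance trigger to fire at the regime boundary rather than prematurely.
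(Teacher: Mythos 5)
Your overall architecture---Proposition~\ref{prop:lepskibound} plus sign preservation plus the margin assumption, split according to which term dominates $\delta$---is the paper's architecture, and your regime-(ii) computation $2^{-\alpha p/d}\asymp(n_{\mathrm{P}}\varepsilon^2)^{-\alpha/(2+2d)}\asymp\delta^{\alpha(\alpha+d)/(d(1+d))}$ is exactly how that exponent arises. Two steps, however, diverge from what actually works.

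First, in regime (i) you route the argument through the claim that the data-driven maximizer $\lambda_k^j$ of \eqref{equ:bestlamda} is of the same order as the oracle weight from Theorem~\ref{thm:utility}. This is neither needed nor easy: the closed form in Section~\ref{sec:derivationpruning} gives $\lambda_k^j$ proportional to the ratio $(\widehat{\eta}_{\mathrm{Q}}^{\pi_k}(x)-1/2)/(\tilde{\eta}_{\mathrm{P}}^{\pi_k}(x)-1/2)$, which blows up or changes sign on cells where the noisy private estimate sits near $1/2$, so a uniform-over-nodes matching with the oracle order would be delicate at best. The paper sidesteps this entirely: because $v_k^j$ is a maximum over $\lambda$ (and, in the noise-dominated branch, the maximum of $v_k^{\mathrm{P}j}$ and $v_k^{\mathrm{Q}j}$), it is lower bounded by its value at $\lambda=\infty$, namely $\sqrt{\sum_{j'}\sum_i U_i^{\mathrm{Q}j'}}\,\bigl|\widehat{\eta}_{\mathrm{Q}}^{\pi_k}(x)-1/2\bigr|$ up to constants; Lemmas~\ref{lem:boundingofapproximationerror} and~\ref{lem:cellbound} show this $\mathrm{Q}$-only statistic already exceeds the threshold at $2^{k}\asymp\delta^{-1}$ for every $x$ in the margin region, so termination fires at least that deep and the optimal rate follows. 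You should replace the argmax-matching step by this one-sided bound.

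Second, in regime (ii) you assert that the flag branch is always taken, which is false; the paper splits the case further. If some leaf reaches $k\le\lfloor\tfrac{d}{2+2d}\log_2(n_{\mathrm{P}}\varepsilon^2)\rfloor$, the algorithm refits the private tree at that depth and your bias--noise computation applies (note the bias strictly dominates the Laplace noise there when $\alpha<1$ rather than balancing it, which is harmless but worth stating correctly). If no leaf triggers termination, the output remains the pruned estimator $\sum_j s_{k_j}^j\eins\{A_{(p_0)}^j\}$, and one must use the lower bound $k_j>\lfloor\tfrac{d}{2+2d}\log_2(n_{\mathrm{P}}\varepsilon^2)\rfloor$ on every stopping depth to control the cell diameters and hence the bias by $\delta^{\alpha(\alpha+d)/(d(1+d))}$, after which the same sign-preservation and margin argument closes the case. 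Without this subcase the proof of (ii) is incomplete. (A smaller inaccuracy: $p_0$ involves $n_{\mathrm{Q}}^{(2+2d)/d}$ and need not coincide with $\lfloor\tfrac{d}{2+2d}\log_2(n_{\mathrm{P}}\varepsilon^2)\rfloor$ even in regime (ii).)
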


	The conclusion of Theorem \ref{thm:utility3} is twofold. 
	On one hand, when $\mathrm{Q}$ data dominates $\mathrm{P}$ data, i.e. $\frac{n_{\mathrm{P}} \varepsilon^2}{\log (n_{\mathrm{P}} + n_{\mathrm{Q}})} \lesssim (\frac{n_{\mathrm{Q}}}{\log (n_{\mathrm{P}} + n_{\mathrm{Q}})})^{\frac{2\alpha + 2d}{2 \gamma\alpha + d}}$, the pruned estimator remains rate-optimal. 
	In this case, the estimator is appropriately pruned to the correct depth.
	On the other hand, when $\mathrm{P}$ data dominates $\mathrm{Q}$ data, the convergence rate suffers degradation.
	The degradation factor $(\alpha + d) / ( 1 + d)$ is mild, and even diminishes when $\alpha = 1$, i.e. $\eta^*_{\mathrm{P}}$ is Lipschitz continuous. 
	In this case, there are two possible regions in which our conclusion is derived.
	If $(\frac{n_{\mathrm{Q}}}{\log (n_{\mathrm{P}} + n_{\mathrm{Q}})})^{\frac{2\alpha + 2d}{2 \gamma\alpha + d}} \lesssim (\frac{n_{\mathrm{P}} \varepsilon^2}{\log (n_{\mathrm{P}} + n_{\mathrm{Q}})} )^{\frac{2\alpha + 2d}{2  + d}}$, the estimator triggers termination and is computed with depth $ \lfloor \frac{d}{2 + 2d} \log_2 (n_{\mathrm{P}} \varepsilon^2 ) \rfloor$. 
	If the opposite, i.e. $\frac{n_{\mathrm{P}} \varepsilon^2}{\log (n_{\mathrm{P}} + n_{\mathrm{Q}})} \gtrsim (\frac{n_{\mathrm{Q}}}{\log (n_{\mathrm{P}} + n_{\mathrm{Q}})})^{\frac{2\alpha + 2d}{2 \gamma\alpha + d}} \gtrsim (\frac{n_{\mathrm{P}} \varepsilon^2}{\log (n_{\mathrm{P}} + n_{\mathrm{Q}})} )^{\frac{2\alpha + 2d}{2  + d}}$, pruning fails to identify the optimal depth $ \lfloor \frac{d}{2\alpha + 2d} \log_2 (n_{\mathrm{P}} \varepsilon^2 ) \rfloor$.
	Yet, it also does not trigger termination at $ \lfloor \frac{d}{2 + 2d} \log_2 (n_{\mathrm{P}} \varepsilon^2 ) \rfloor$.
	In this region, we take a shortcut to avoid intricate analysis, while the true convergence rate is more complex yet faster than that stated in Theorem \ref{thm:utility3}.
	Notably, one drawback of this shortcut is the discontinuity of the upper bound at the boundary $\frac{n_{\mathrm{P}} \varepsilon^2}{\log (n_{\mathrm{P}} + n_{\mathrm{Q}})} \asymp (\frac{n_{\mathrm{Q}}}{\log (n_{\mathrm{P}} + n_{\mathrm{Q}})})^{\frac{2\alpha + 2d}{2 \gamma\alpha + d}}$.

	Algorithm \ref{alg:prunedeta} lacks true adaptiveness, given that its upper bound suffers a significant loss factor of ${(\alpha + d)}/{(1+d)}$.
	Conventional adaptive methods typically incur a loss factor of logarithmic order \citep{butucea2020local, cai2021transfer}. 
	Following the approach outlined in \cite{butucea2020local}, one potential avenue for improvement involves fitting multiple trees for $p = \lfloor \frac{d}{2+2d}\log_2 \left(n_{\mathrm{P}} \varepsilon^2\right)\rfloor,\cdots, \lfloor\frac{d}{2 + 2d} \log_2 (n_{\mathrm{P}} \varepsilon^2 + n_{\mathrm{Q}}^{\frac{2 + 2d}{d}}) \rfloor$ and designing a test statistic to zero out the suboptimal candidates.

	In summary, this proposed scheme offers benefits from two perspectives:
	(1) It avoids multiple queries to the data. 
	All data holders send their privatized messages at most twice. 
	(2) It selects sensitive parameters $p$ and $\lambda$ in a data-driven manner. 
	In sacrifice, the bound of excess risk is reasonably loosened compared to the model with the best parameters, and one more query to the private data is required.

	\section{Experiments on Synthetic Data}\label{sec:simulation}
	
	We validate our theoretical findings by comparing LPCT with several of its variants on synthetic data in this section. All experiments are conducted on a machine with 72-core Intel Xeon 2.60GHz and 128GB memory.
	Reproducible codes are available on GitHub\footnote{https://github.com/Karlmyh/LPCT.
	}.

	\subsection{Simulation Design}

	We motivate the design of simulation settings.
	While conducting experiments across various combinations of $\gamma$, $n_{\mathrm{P}}$, and $n_{\mathrm{Q}}$, our primary focus is on two categories of data: (1) both $n_{\mathrm{Q}}$ and $\gamma$ are small, i.e. we have a small amount of high-quality public data (2) both $n_{\mathrm{Q}}$ and $\gamma$ are large, i.e. we have a large amount of low-quality public data.
	When $\gamma$ is small but $n_{\mathrm{Q}}$ is large, using solely public data yields sufficient performance, rendering private estimation less meaningful.
	Conversely, when $\gamma$ is large but $n_{\mathrm{Q}}$ is small, incorporating public data into the estimation offers limited assistance.
	Moreover, we require $n_{\mathrm{P}}$ to be much larger than $n_{\mathrm{Q}}$ since practical data collection becomes much easier with a privacy guarantee.

	We consider the following pair of distributions $\mathrm{P}$ and $\mathrm{Q}$.
	The marginal distributions $\mathrm{P}_X = \mathrm{Q}_X$ are both uniform distributions on $\mathcal{X} = [0,1]^2$.
	The regression function of $\mathrm{P}$ is 
	\begin{align*}
		\eta_{\mathrm{P}}(x) = \frac{1}{2} + \mathrm{sign}(x^1 - \frac{1}{3}) \cdot \mathrm{sign}(x^2 - \frac{1}{3})
		\cdot
		\left(\left|\frac{(3x^1-1)(3x^2-1)}{10}\right|
		\cdot \max\left(0, 1 - \left|2x^2 - 1\right|\right)\right)^{\frac{1}{10}}
	\end{align*}
	while the regression function of $\mathrm{Q}$ with parameter $\gamma$ is
	$$\eta_{\mathrm{Q}}(x) = \frac{1}{2} + \frac{2}{5}\cdot \left(\frac{5}{2}\cdot \left(\eta_{\mathrm{P}}(x) - \frac{1}{2}\right)\right)^{\gamma}.$$
	It can be easily verified that the constructed distributions above satisfy Assumption \ref{asp:alphaholder}, \ref{asp:margin}, and \ref{asp:rse}.
	Throughout the following experiments, we take $\gamma$ in 0.5 and 5.
	For better illustration, their regression functions are plotted in Figure \ref{fig:distribution}.
	\begin{figure}[h]
		\centering
		\subfigure[$\eta_{\mathrm{P}}$]{
			\begin{minipage}{0.31\linewidth}
				\centering
				\includegraphics[width=\textwidth]{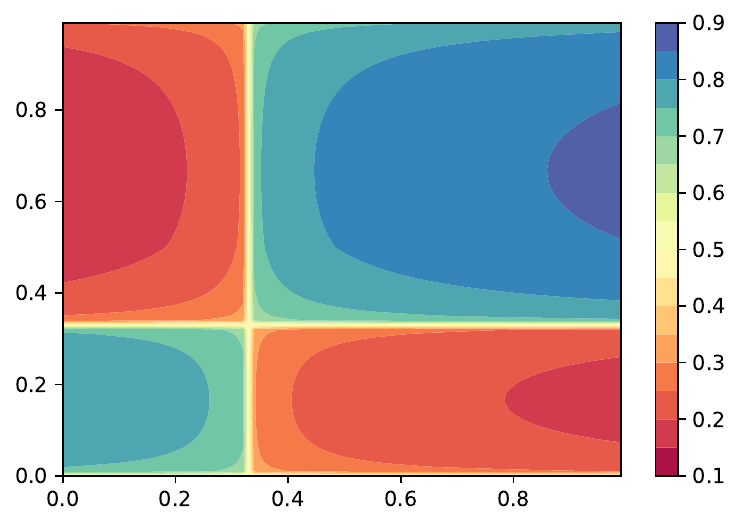}
			\end{minipage}
			\label{fig:distribution1}
		}
		\subfigure[$\eta_{\mathrm{Q}}$ with $\gamma = 0.5$]{
			\begin{minipage}{0.31\linewidth}
				\centering
				\includegraphics[width=\textwidth]{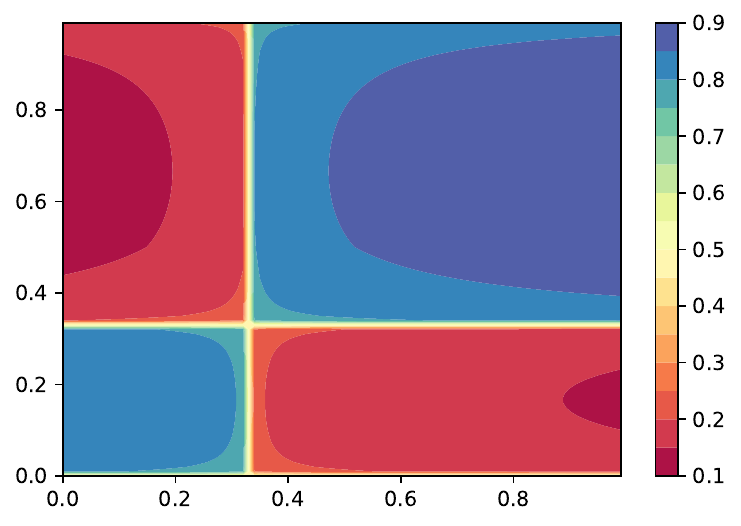}
			\end{minipage}
			\label{fig:distribution2}
		}
		\subfigure[$\eta_{\mathrm{Q}}$ with $\gamma = 5$]{
			\begin{minipage}{0.31\linewidth}
				\centering
				\includegraphics[width=\textwidth]{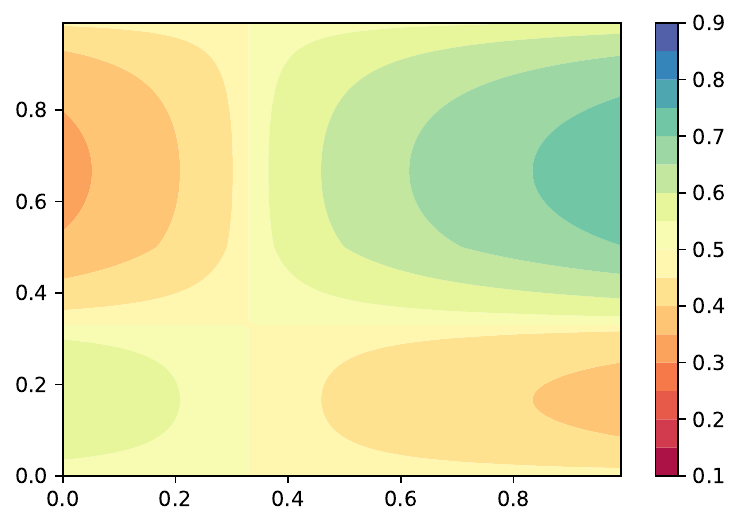}
			\end{minipage}
			\label{fig:distribution3}
		}
		\caption{Contour plots of the simulation distributions. }
		\label{fig:distribution}
		\vskip -0.1in
	\end{figure}

	We conduct experiments with privacy budgets $\varepsilon \in \{0.5, 1,2,4,8,1000\}$.
	Here, $\varepsilon = 1000$ can be regarded as the non-private case which marks the performance limitation of our methods.
	The other budget values cover all commonly seen magnitudes of privacy budgets.
	We take classification accuracy as the evaluation metric.
	In the simulation studies, the comparison methods and their abbreviation are as follows:
	\begin{itemize}
		\item \textbf{LPCT} is the proposed algorithm with the max-edge partition rule. We select the parameters among $p\in\{1,\cdots,8\}$ and $\lambda\in\{0.1$ $, 0.5, 1, 2, 5, 10, 50, 100, 200,$ $ 300, 400, $ $ 500, 750, 1000, 1250, $ $ 1500, 2000\}$.
		We choose the Gini index as the reduction criterion.
		\item \textbf{LPCT-P} and \textbf{LPCT-Q} are the estimation associated to $\mathrm{P}$ and $\mathrm{Q}$ data, respectively.
		Their parameter grids are the same as LPCT.
		\item \textbf{LPCT-R}
		is the proposed algorithm using the random max-edge partition \citep{cai2023extrapolated}.
		Specifically, we randomly select an edge among the longest edges to cut at the midpoint.
		Its parameter grids are the same as LPCT.
		\item \textbf{LPCT-prune} is the pruned locally private classification tree in Algorithm \ref{alg:prunedeta}.
		We use constants that yield a tighter bound than \eqref{equ:rkj}.
	\end{itemize}
	For each model, we report the best result over its parameter grids, with the best result determined based on the average result of at least 100 replications.

	\subsection{Simulation Results}

	In Section \ref{sec:simulationunderlyingparameter} and \ref{sec:simulationsamplesize}, we analyze the influence of underlying parameters and sample sizes to validate the theoretical findings.
	In Section \ref{sec:simulationmodelparameter}, we analyze the parameters of LPCT to understand their behaviors. 
	We illustrate various ways that public data benefits the estimation in Section \ref{sec:simulationgainofpublicdata}.

	\subsubsection{Influence of underlying parameters}
	\label{sec:simulationunderlyingparameter}
	
	\paragraph{Privacy-utility trade-off}
	We analyze how the privacy budget $\varepsilon$ influences the quality of prediction in terms of accuracy.
	We generate 10,000 private samples, with 50 and 1,000 public samples for $\gamma = 0.5$ and $\gamma = 5$, respectively.
	The results are presented in Figure \ref{fig:epsilon-acc-1} and \ref{fig:epsilon-acc-2}.
	In both cases, the performances of all models increase as $\varepsilon$ increases.
	Furthermore, the performance of LPCT is consistently better than both LPCT-P and LPCT-Q, demonstrating the effectiveness of our proposed method in combining both data.
	LPCT-prune performs reasonably worse than LPCT.
	In the high privacy regime ($\varepsilon \leq 1$), LPCT-P is almost non-informative and our estimator performs similarly to LPCT-Q.
	In the low privacy regime ($\varepsilon \geq 8$), the performance of our estimator improves rapidly along with LPCT-P.
	In the medium regime, neither LPCT-P nor LPCT-Q obviously outperforms each other, and there is a clear performance gain from using LPCT.

	\begin{figure}[!t]
		\centering
		\subfigure[Privacy-utility trade-off with $\gamma = 0.5$ and $n_{\mathrm{Q}} = 50$]{
			\begin{minipage}{0.3\linewidth}
				\centering
				\includegraphics[width=\textwidth]{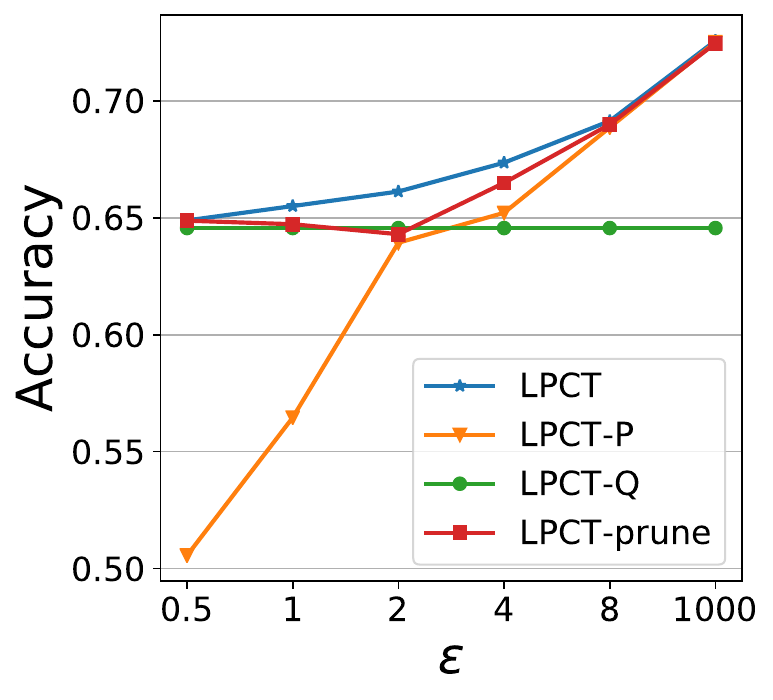}
			\end{minipage}
			\label{fig:epsilon-acc-1}
		}
		\subfigure[Privacy-utility trade-off with $\gamma = 5$ and $n_{\mathrm{Q}} = 1,000$]{
			\begin{minipage}{0.3\linewidth}
				\centering
				\includegraphics[width=\textwidth]{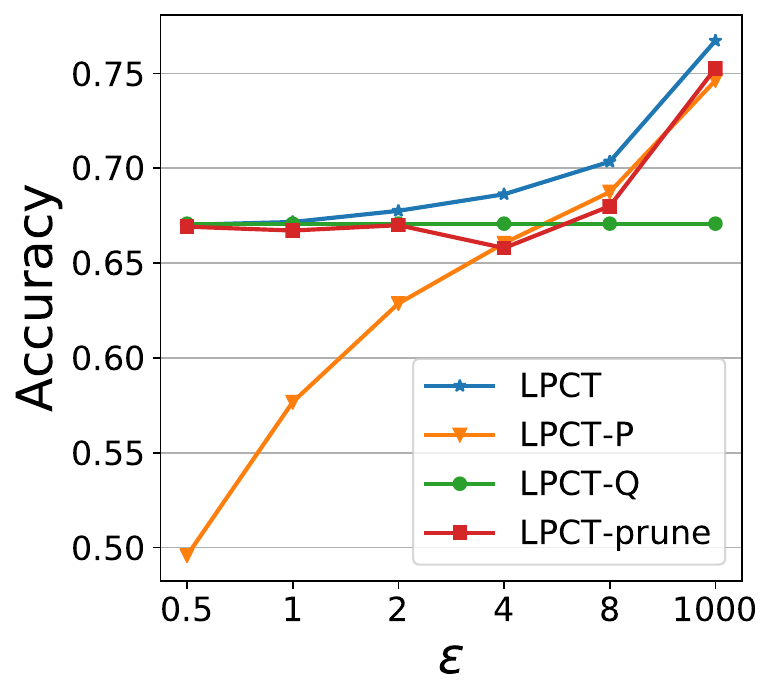}
			\end{minipage}
			\label{fig:epsilon-acc-2}
		}
		\subfigure[Accuracy - $\gamma$]{
			\begin{minipage}{0.3\linewidth}
				\centering
				\includegraphics[width=\textwidth]{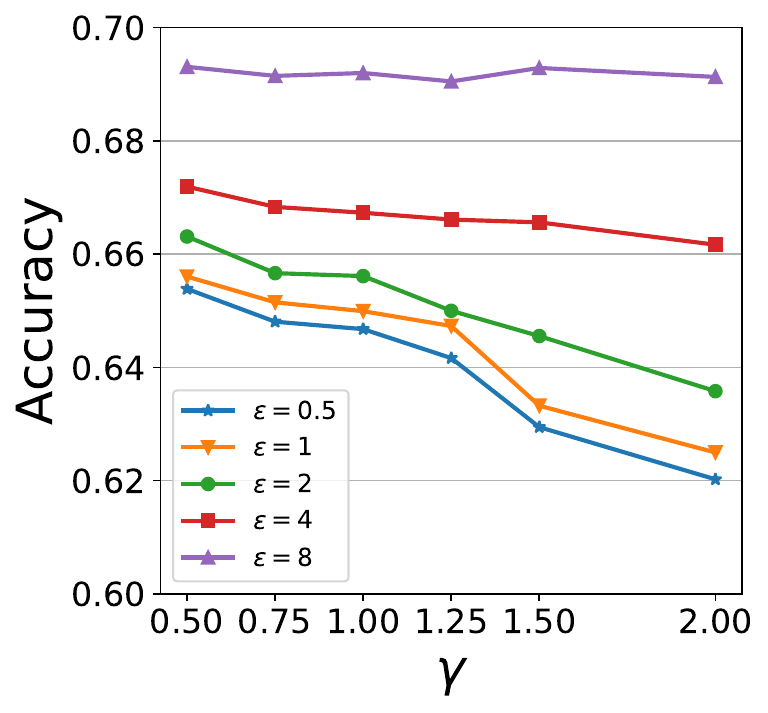}
			\end{minipage}
			\label{fig:gamma-acc-1}
		}
		\caption{Illustration of relationship between accuracy and underlying parameters, i.e. $\varepsilon$ and $\gamma$.  }
		\label{fig:epsilon-gamma-acc}
		\vskip -0.1in
	\end{figure}

	\paragraph{Influence of $\gamma$}
	We analyze how the relative signal exponent $\gamma$ influences the performance.
	We set $\gamma = 0.5$, $n_{\mathrm{P}} = 10,000$ and $n_{\mathrm{Q}} = 50$, while varying $\gamma$ in $\{0.5, 0.75, 1, 1.25, 1.5, 2\}$.
	In Figure \ref{fig:gamma-acc-1}, we can see that the accuracy of LPCT is decreasing with respect to $\gamma$ for all $\varepsilon$, which aligns with Theorem \ref{thm:utility}.
	Since $\gamma$ only influences the performance through $\mathrm{Q}$ data, its effect is less apparent when the ${\mathrm{P}}$ data takes the dominance, for instance $\varepsilon = 8$.

	\subsubsection{Influence of sample sizes}
	\label{sec:simulationsamplesize}

	\begin{figure}[!b]
		\centering
		\subfigure[Accuracy - $n_{\mathrm{P}}$]{
			\begin{minipage}{0.3\linewidth}
				\centering
				\includegraphics[width=\textwidth]{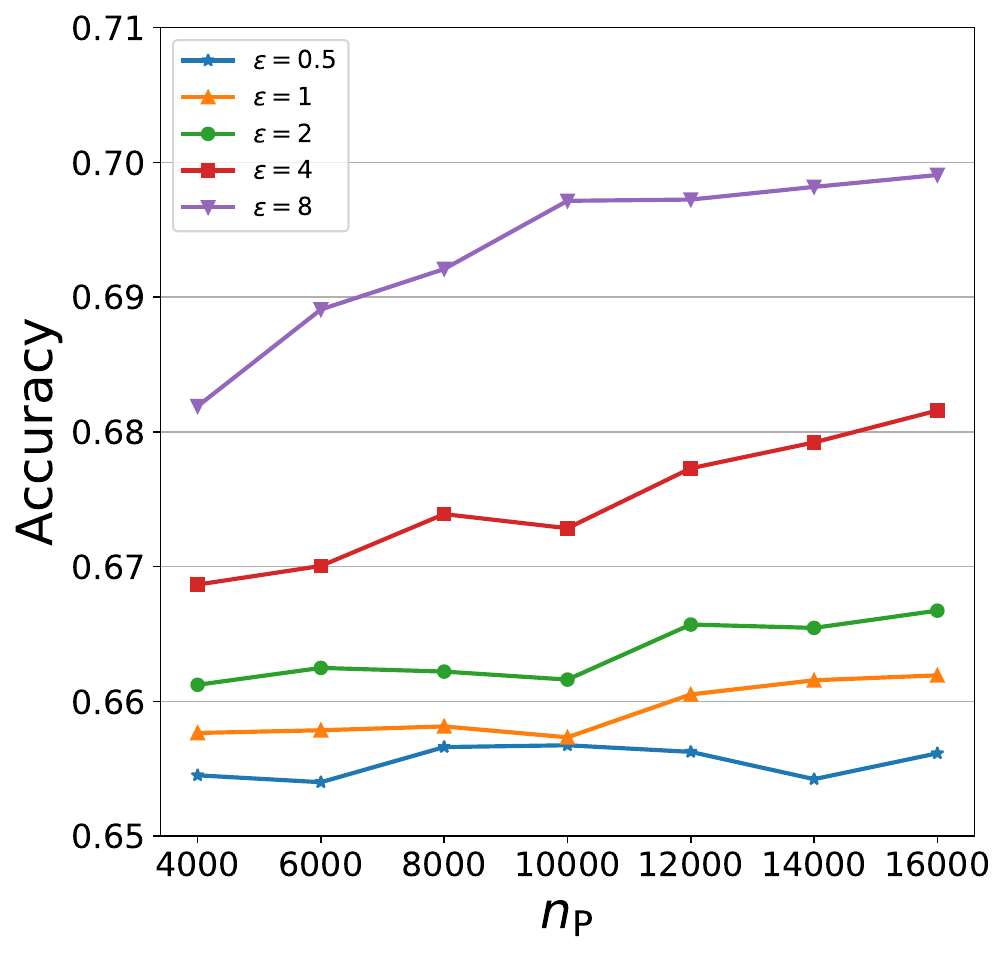}
			\end{minipage}
			\label{fig:np-acc}
		}
		\subfigure[With LPCT-P and LPCT-Q]{
			\begin{minipage}{0.3\linewidth}
				\centering
				\includegraphics[width=\textwidth]{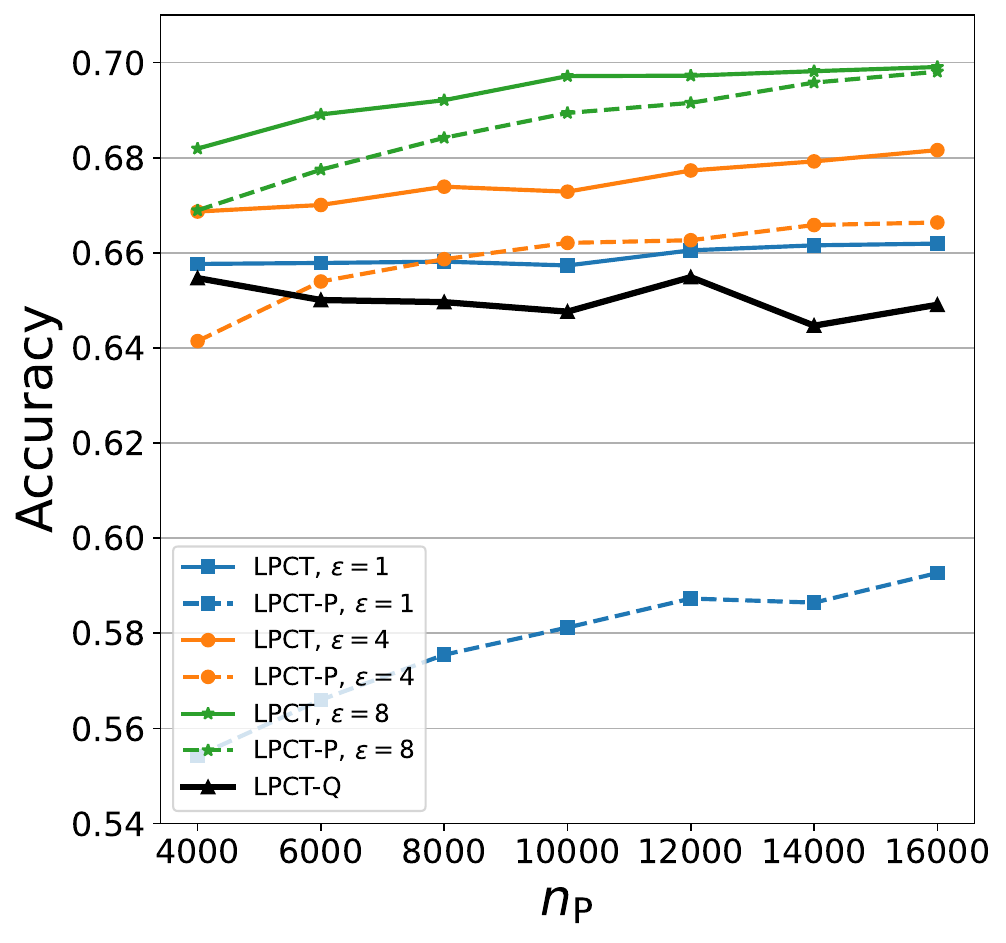}
			\end{minipage}
			\label{fig:np-acc-comparison}
		}
		\subfigure[With LPCT-prune]{
			\begin{minipage}{0.3\linewidth}
				\centering
				\includegraphics[width=\textwidth]{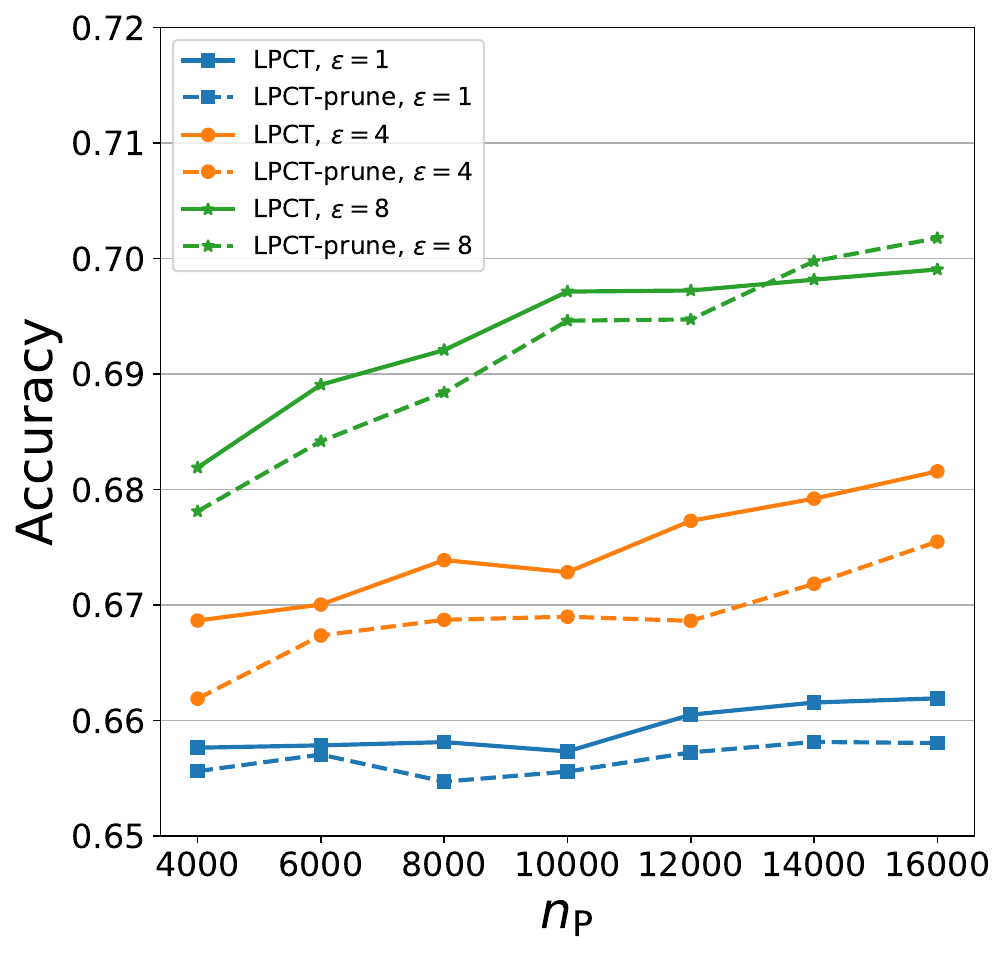}
			\end{minipage}
			\label{fig:np-acc-prune}
		}
		\caption{Analysis of accuracy with respect to $n_{\mathrm{P}}$. }
		\label{fig:np-analysis}
		\vskip -0.1in
	\end{figure}
	
	\paragraph{Accuracy - $\mathbf{n_{\mathrm{P}}}$}
	We conduct experiments to investigate the influence of the private sample size $n_{\mathrm{P}}$.
	We fix $n_{\mathrm{Q}} = 50$ and $\gamma = 0.5$ while varying $n_{\mathrm{P}}$ from $4,000$ to $16,000$.
	The results are presented in Figure \ref{fig:np-acc}.
	Additionally, we compare the performance of LPCT-P and LPCT-Q to analyze the contribution of both data sets, as shown in Figure \ref{fig:np-acc-comparison}.
	Moreover, we compare LPCT and LPCT-prune in Figure \ref{fig:np-acc-prune}.
	We refrain from invoking $\tilde{\eta}_{\mathrm{P}}^{\pi_p}$ even if the $\texttt{flag} = 1$ condition is triggered, as the termination induces sudden changes in the continuous curves, thereby rendering the findings less evident.
	In Figure \ref{fig:np-acc}, except for $\varepsilon = 0.5$, the accuracy improves as $n_{\mathrm{P}}$ increases for all $\varepsilon$ values.
	Moreover, the rate of improvement is more pronounced for higher values of $\varepsilon$.
	This observation aligns with the convergence rate established in Theorem \ref{thm:utility}.
	In \ref{fig:np-acc-comparison}, LPCT always outperforms methods using single data set, i.e. LPCT-P and LPCT-Q.
	The improvement of LPCT over LPCT-Q is more significant for large $\varepsilon$.
	In Figure \ref{fig:np-acc-prune}, we justify that LPCT-prune performs analogously to LPCT and its performance improves as $\varepsilon$ and $n_{\mathrm{P}}$ increase.

	\paragraph{Accuracy - $\mathbf{n_{\mathrm{Q}}}$}
	We conduct experiments to investigate the influence of the private sample size $n_{\mathrm{Q}}$.
	We keep $n_{\mathrm{P}} = 10,000$ and $\gamma = 0.5$ while varying $n_{\mathrm{Q}}$ from 10 to 200.
	The classification accuracy monotonically increases with respect to $n_{\mathrm{Q}}$ for all methods.
	In Figure \ref{fig:nq-acc}, the performance of LPCT-prune is closer to LPCT when $n_{\mathrm{Q}}$ gets larger.
	As explained in Section \ref{sec:plpct}, the pruning is appropriately conducted when $\mathrm{Q}$ data takes the dominance, and is disturbed when $\mathrm{P}$ data takes the dominance.
	In Figure \ref{fig:nq-acc-comparison}, it is worth noting that the performance improvement is more significant for small values of $n_{\mathrm{Q}}$, specifically when $n_{\mathrm{Q}}\leq 50$.
	Moreover, the performance of LPCT-P also improves in this region and remains approximately unchanged for larger $n_{\mathrm{Q}}$.
	This can be attributed to the information gained during partitioning, which is further validated in Figure \ref{fig:randompartition}.
	
	\begin{figure}[!t]
		\centering
		\subfigure[With LPCT-prune]{
			\begin{minipage}{0.41\linewidth}
				\centering
				\includegraphics[width=\textwidth]{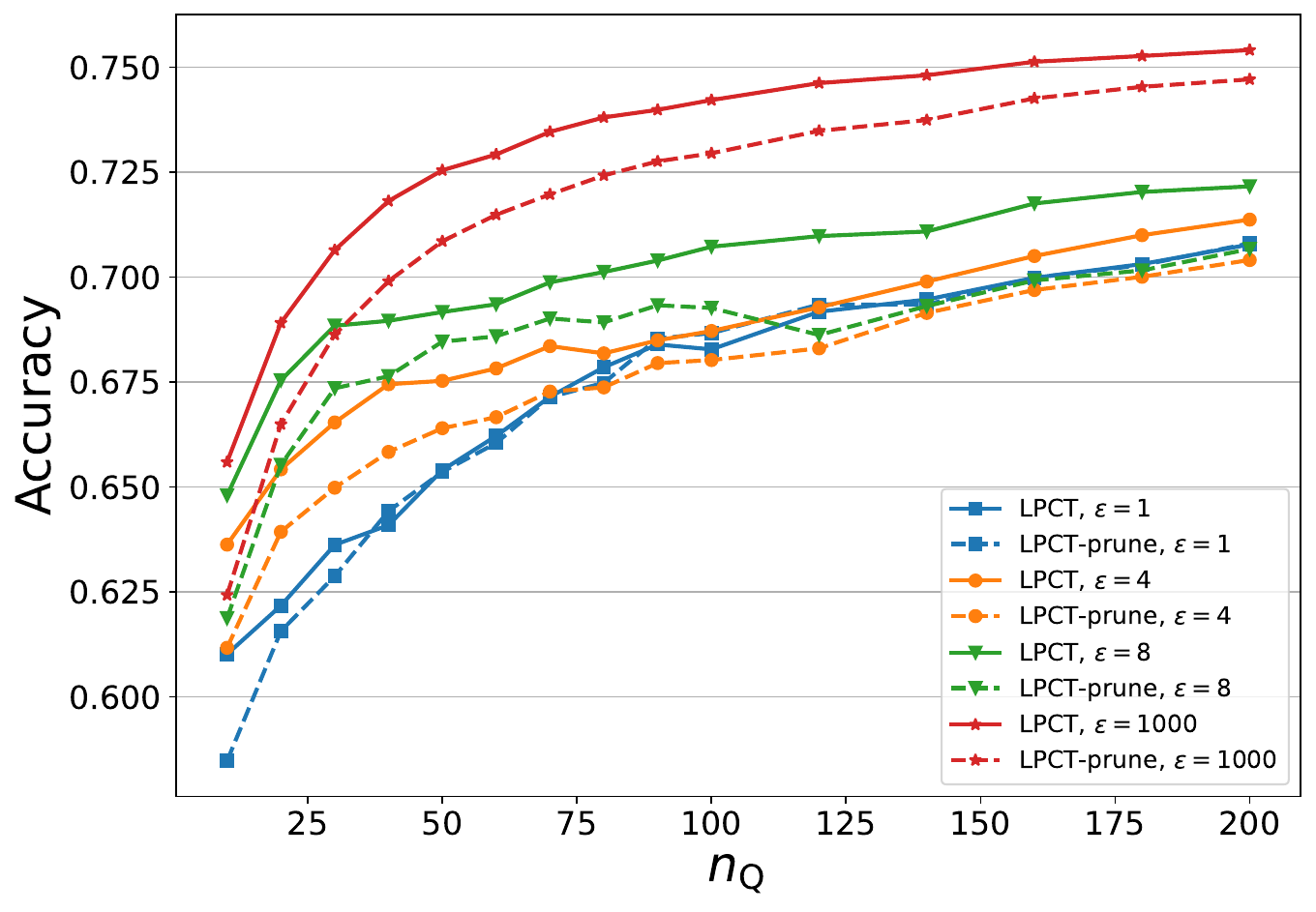}
			\end{minipage}
			\label{fig:nq-acc}
		}
		\hskip 0.2in
		\subfigure[With LPCT-P and LPCT-Q]{
			\begin{minipage}{0.41\linewidth}
				\centering
				\includegraphics[width=\textwidth]{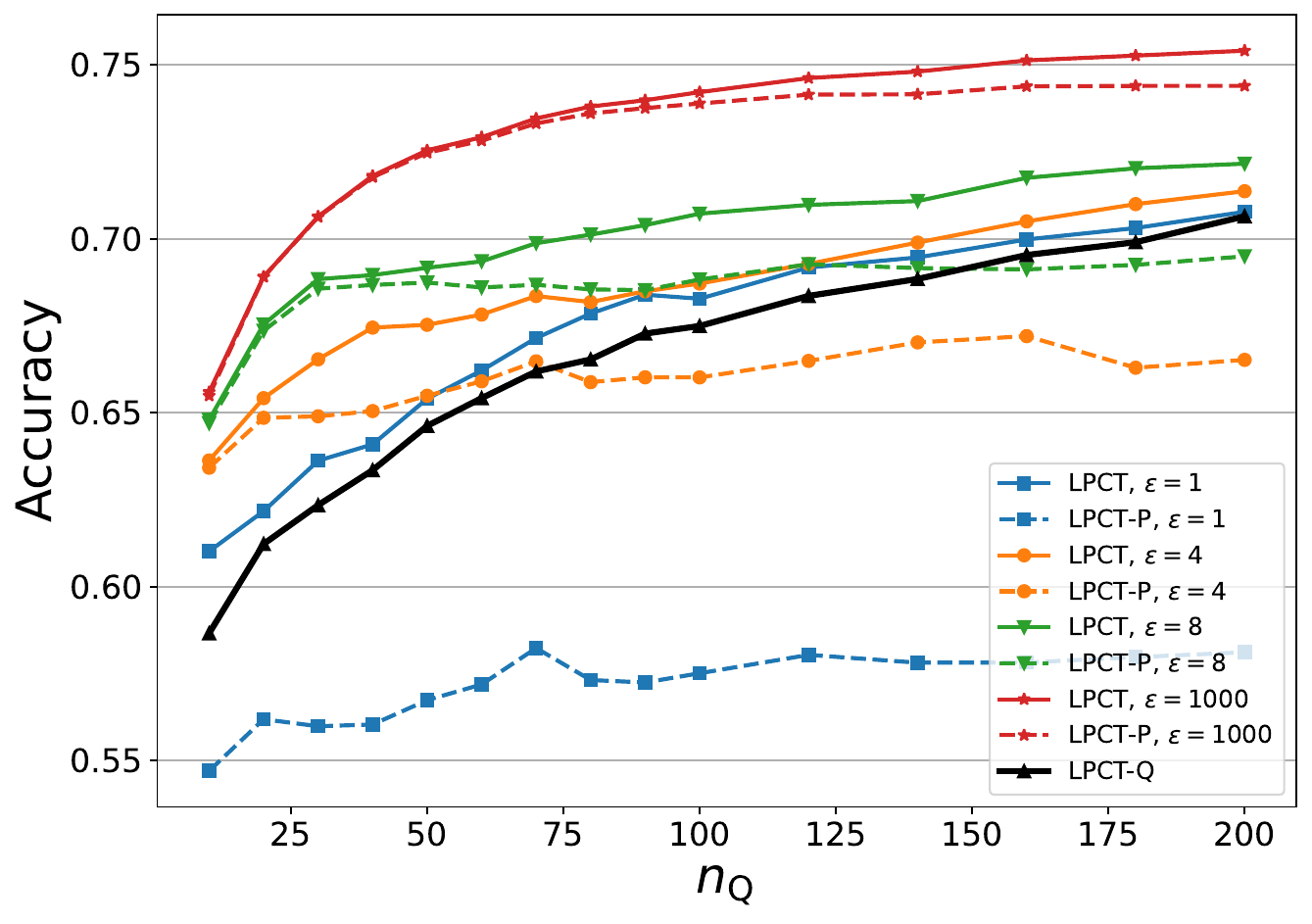}
			\end{minipage}
			\label{fig:nq-acc-comparison}
		}
		\caption{Analysis of accuracy with respect to $n_{\mathrm{Q}}$. }
		\label{fig:nq-analysis}
		\vskip -0.1in
	\end{figure}

	\subsubsection{Analysis of model parameters}
	\label{sec:simulationmodelparameter}

	\paragraph{Parameter analysis of depth}
	We conduct experiments to investigate the choice of depth.
	Under $(n_{\mathrm{P}}, n_{\mathrm{Q}}) = (10000, 50)$ and $\gamma = 0.5$, we plot the best performance of LPCT and LPCT-prune when fixing $p$ and $p_0$ to certain values, respectively in Figure \ref{fig:p-acc} and \ref{fig:p-acc-prune}.
	To avoid obscuring empirical findings, we refrain from invoking $\tilde{\eta}_{\mathrm{P}}^{\pi_p}$ even if $\texttt{flag} = 1$ is triggered, since the experiment is conducted with respect to $p_0$.
	In Figure \ref{fig:p-acc}, the accuracy first increases as $p$ increases until it reaches a certain value.
	Then the accuracy begins to decrease as $p$ increases.
	This confirms the trade-off observed in Theorem \ref{thm:utility}.
	Moreover, the $p$ that maximizes the accuracy is increasing with respect to $\varepsilon$.
	This is compatible with the theory since the optimal choice $p \asymp \log (n_{\mathrm{P}} \varepsilon^2+n_{\mathrm{Q}}^{\frac{2 \alpha+2 d}{2 \gamma \alpha+d}})$ is monotonically increasing with respect to $\varepsilon$.
	As in Figure \ref{fig:p-acc-prune}, the accuracy increases with $p_0$ at first, whereas the performance remains unchanged or decreases slightly for $p_0$ larger than the optimum.
	Note that the axes of \ref{fig:p-acc} and \ref{fig:p-acc-prune} are aligned.
	LPCT-prune behaves analogously to LPCT in terms of accuracy and optimal depth choice.
	
	\begin{figure}[!t]
		\centering
		\subfigure[$p$ of LPCT]{
			\begin{minipage}{0.3\linewidth}
				\centering
				\includegraphics[width=\textwidth]{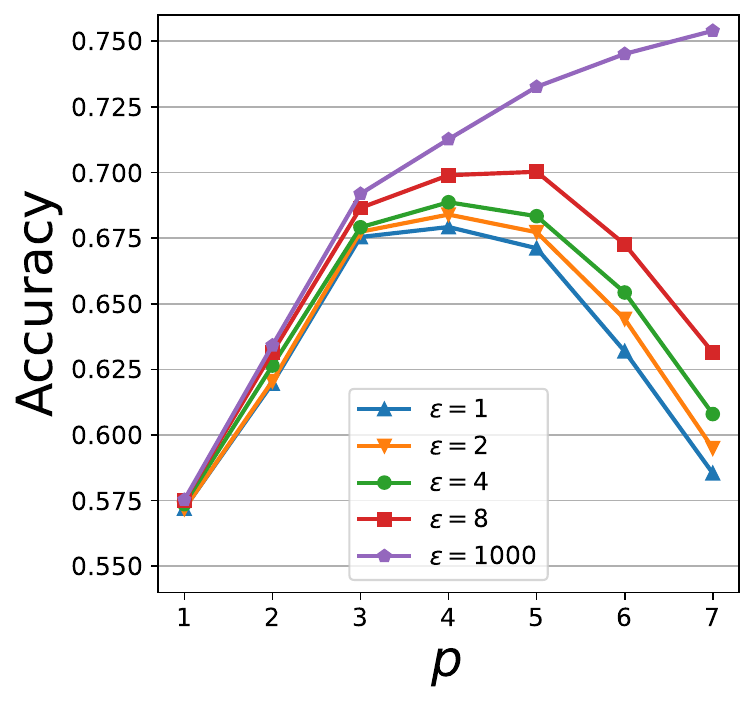}
			\end{minipage}
			\label{fig:p-acc}
		}
		\subfigure[$p_0$ of LPCT-prune]{
			\begin{minipage}{0.3\linewidth}
				\centering
				\includegraphics[width=\textwidth]{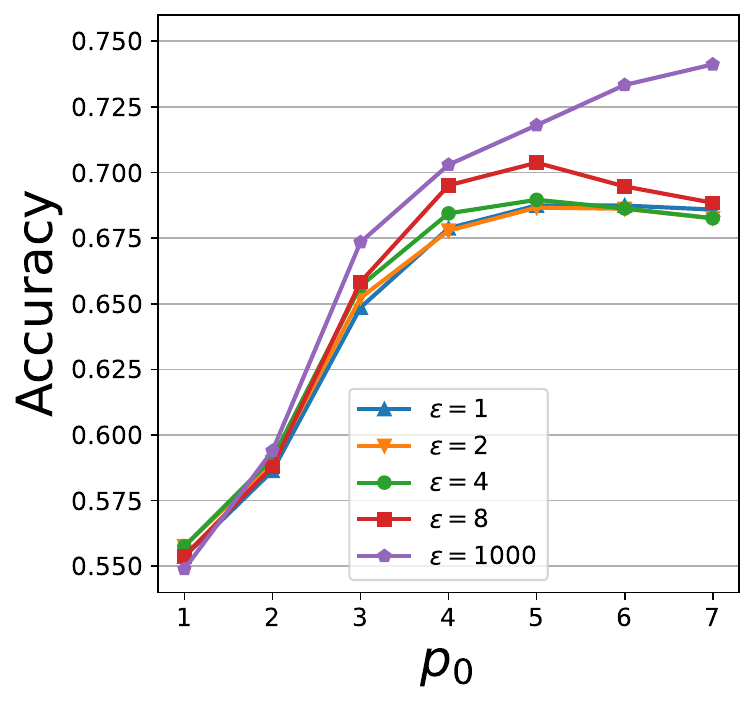}
			\end{minipage}
			\label{fig:p-acc-prune}
		}
		\subfigure[$\lambda$ of LPCT]{
			\begin{minipage}{0.3\linewidth}
				\centering
				\includegraphics[width=\textwidth]{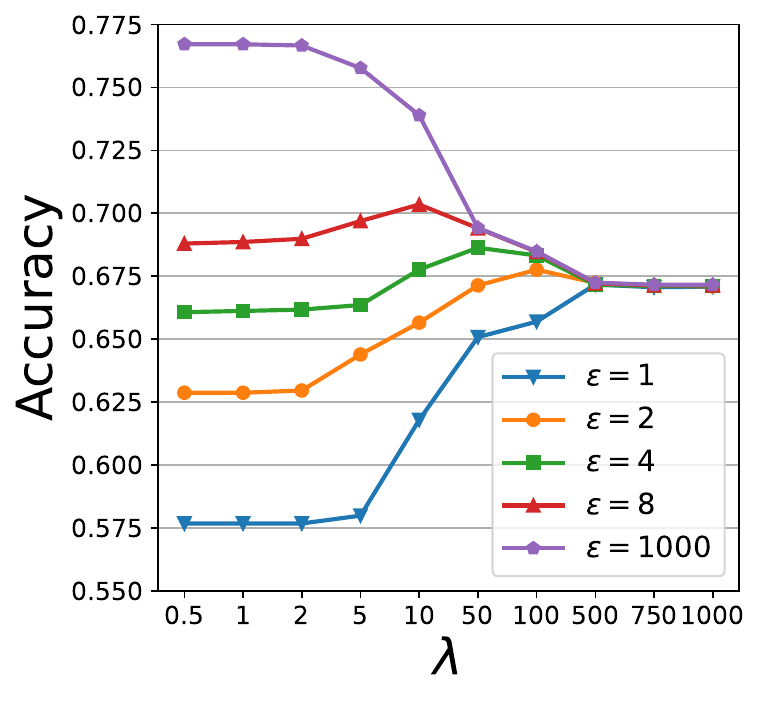}
			\end{minipage}
			\label{fig:lamda-acc}
		}
		\caption{Parameter analysis. }
		\label{fig:param-analysis-p}
		\vskip -0.1in
	\end{figure}

	\paragraph{Parameter analysis of $\mathbf{\lambda}$}
	We conduct experiments to investigate the choice of $\lambda$ in terms of accuracy.
	Under $(n_{\mathrm{P}}, n_{\mathrm{Q}}) = (10000, 1000)$ and $\gamma = 5$, we plot in Figure \ref{fig:lamda-acc} the best performance of LPCT when fixing $\lambda$ to certain values.
	In Figure \ref{fig:lamda-acc}, the relation between accuracy and $\lambda$ is inverted U-shaped for $\varepsilon =2, 4, 8$, while is monotonically increasing and decreasing for $\varepsilon = 1$ and $1000$, respectively.
	The results indicate that a properly chosen $\lambda$ is necessary as stated in Theorem \ref{thm:utility}.
	Moreover, the $\lambda$ that maximizes the accuracy is decreasing with respect to $\varepsilon$, which justify the fact that we should rely more on public data with higher privacy constraint.

	\subsubsection{Benefits of public data}
	\label{sec:simulationgainofpublicdata}

	\begin{figure}[!b]
		\centering
		\subfigure[Illustration of range estimation]{
			\begin{minipage}{0.32\linewidth}
				\vskip -0.07in
				\centering
				\includegraphics[width=\textwidth]{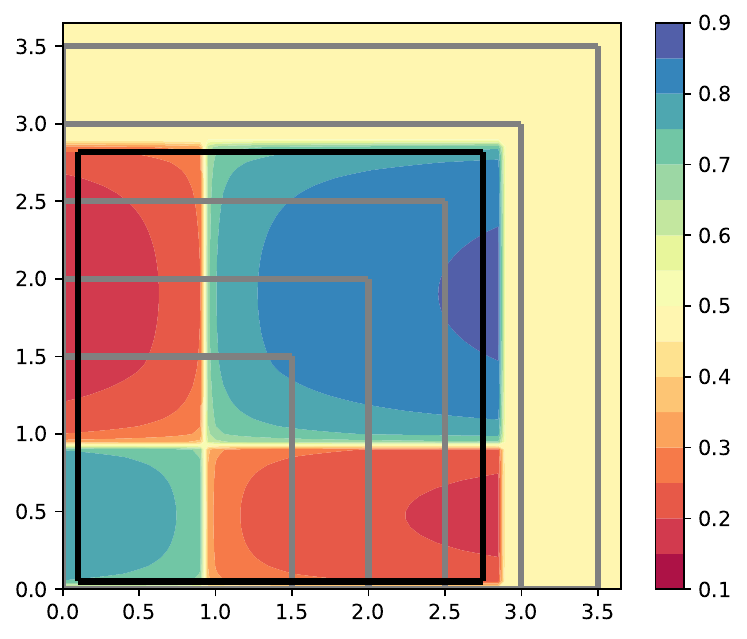}
				\vskip 0.1in
			\end{minipage}
			\label{fig:range_illustration}
		}
		\subfigure[$\varepsilon = 0.5$]{
			\begin{minipage}{0.3\linewidth}
				\centering
				\includegraphics[width=\textwidth]{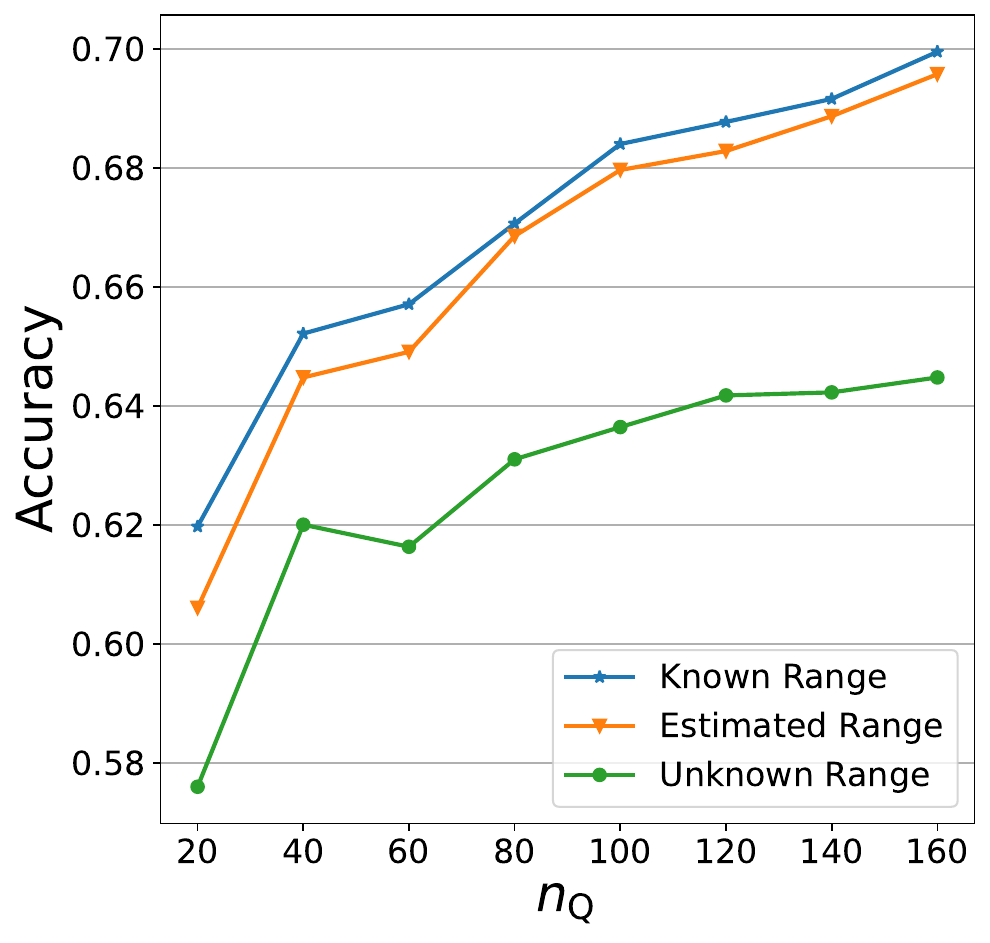}
			\end{minipage}
			\label{fig:range1}
		}
		\subfigure[$\varepsilon = 8$]{
			\begin{minipage}{0.3\linewidth}
				\centering
				\includegraphics[width=\textwidth]{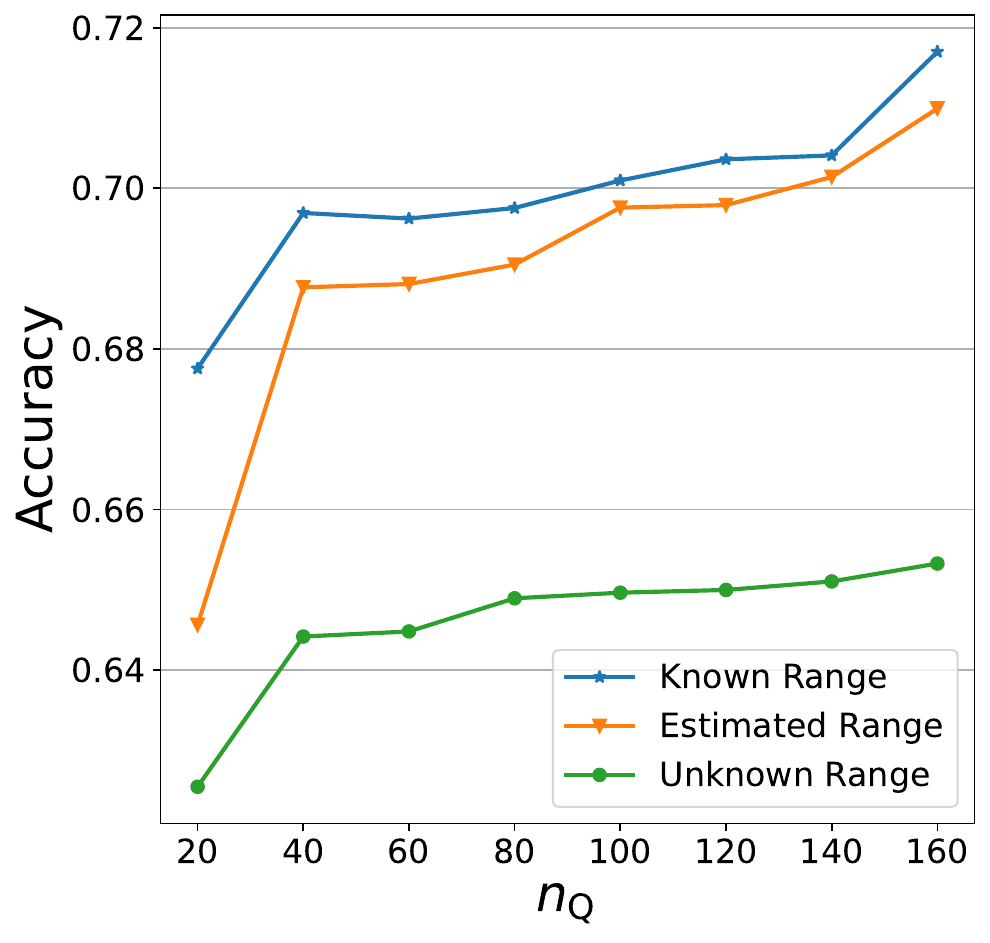}
			\end{minipage}
			\label{fig:range2}
		}
		\caption{Analysis of range parameter. In \ref{fig:range_illustration}, we have $a = 2.9$. The black square is the range estimated by public data in case (II). The grey squares represent the grid of $\overline{a}$ in case (III).  }
		\label{fig:rangeparameter}
	\end{figure}
	
	\paragraph{Range parameter}
	We demonstrate the effectiveness of range estimation using public data when the domain is unknown.
	Consider the case where $\mathcal{X} = [0,a]^d$ and $a$ is a random variable uniformly distributed in $[1,5]$.
	We compare the following scenarios:
	(I) $a$ is known. (II) $a$ is estimated using public data as shown in Theorem \ref{thm:rangeparameter}. (III) $a$ is unknown.
	Case (I) is equivalent to $\mathcal{X} = [0,1]^d$.
	In case (II), we min-max scale the samples from the estimated range to $[0,1]^d$.
	In case (III), we scale the samples from $[0,\overline{a}]^d$ to $[0,1]^d$ and select the best $\overline{a}$ among $\overline{a}\in \{ 1, 1.5,\cdots, 5\}$.
	In Figure \ref{fig:range_illustration}, we showcase the scaling schemes in (II) and (III).
	The estimated range from public data is much more concise than the best $\overline{a}$.
	Given no prior knowledge, the $\overline{a}$ chosen in practice can perform significantly worse.
	In Figure \ref{fig:range1} and \ref{fig:range2}, we observe that the performance of the estimated range matches that of (I) as $n_{\mathrm{Q}}$ increases, whereas the unknown range significantly degrades performance.
	The performance gap between the known and estimated range is more substantial when $n_{\mathrm{Q}}$ is small.
	This observation is compatible with the conclusion drawn in Theorem \ref{thm:rangeparameter} emphasizing the need for sufficient public data to resolve the unknown range issue.

	\paragraph{Partition}
	We demonstrate the effectiveness of partitioning with information of public data by comparing LPCT with LPCT-R.
	In Figure \ref{fig:randompartion1}, the showcase of $p = 3$ is presented.
	The random partition is possibly created as the top figure, where the upper left and the lower right cells are hard to distinguish as 0 or 1.
	Using sufficient public data, the max-edge partition is bound to create an informative partition like the lower figure.
	Moreover, Figure \ref{fig:randompartion2} shows that LPCT outperforms LPCT-R.

	\begin{figure}[!t]
		\centering
		\subfigure[Illustration of random partition and max-edge partition]{
			\begin{minipage}{0.2\linewidth}
				\centering
				\includegraphics[width=0.95\textwidth]{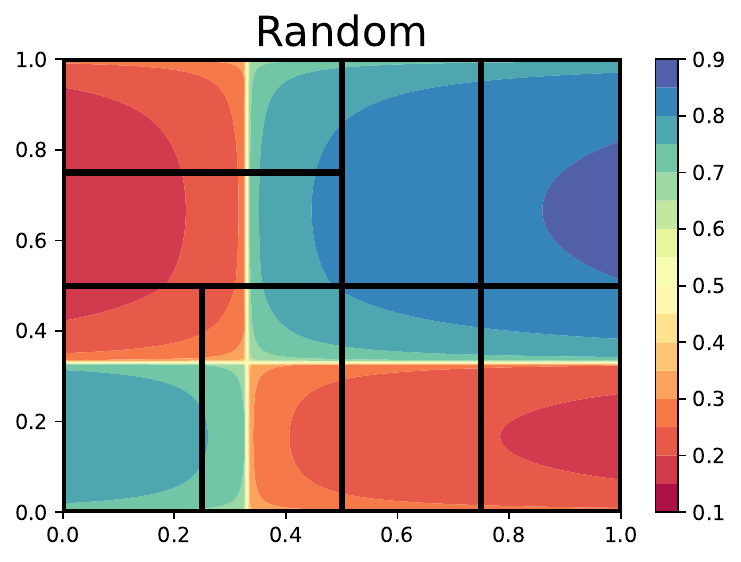}
				\includegraphics[width=0.95\textwidth]{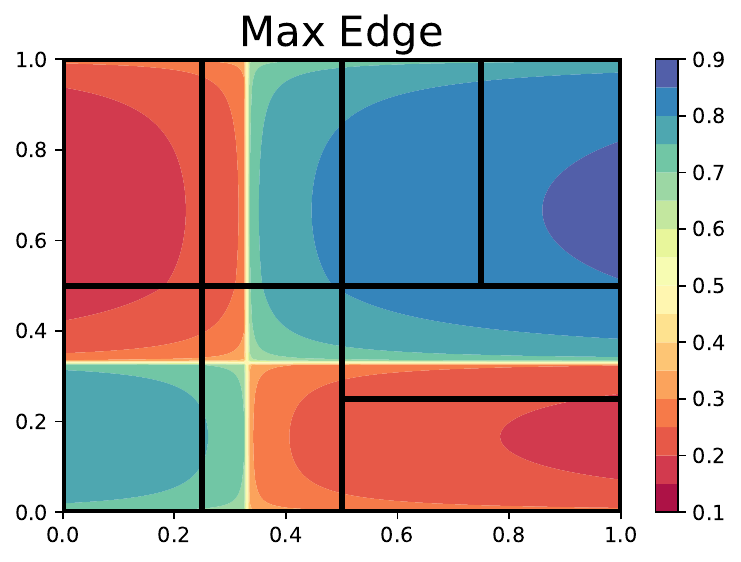}
			\end{minipage}
			\label{fig:randompartion1}
		}
		\hskip 0.2in
		\subfigure[Comparison of LPCT and LPCT-R]{
			\begin{minipage}{0.45\linewidth}
				\centering
				\includegraphics[width=\textwidth]{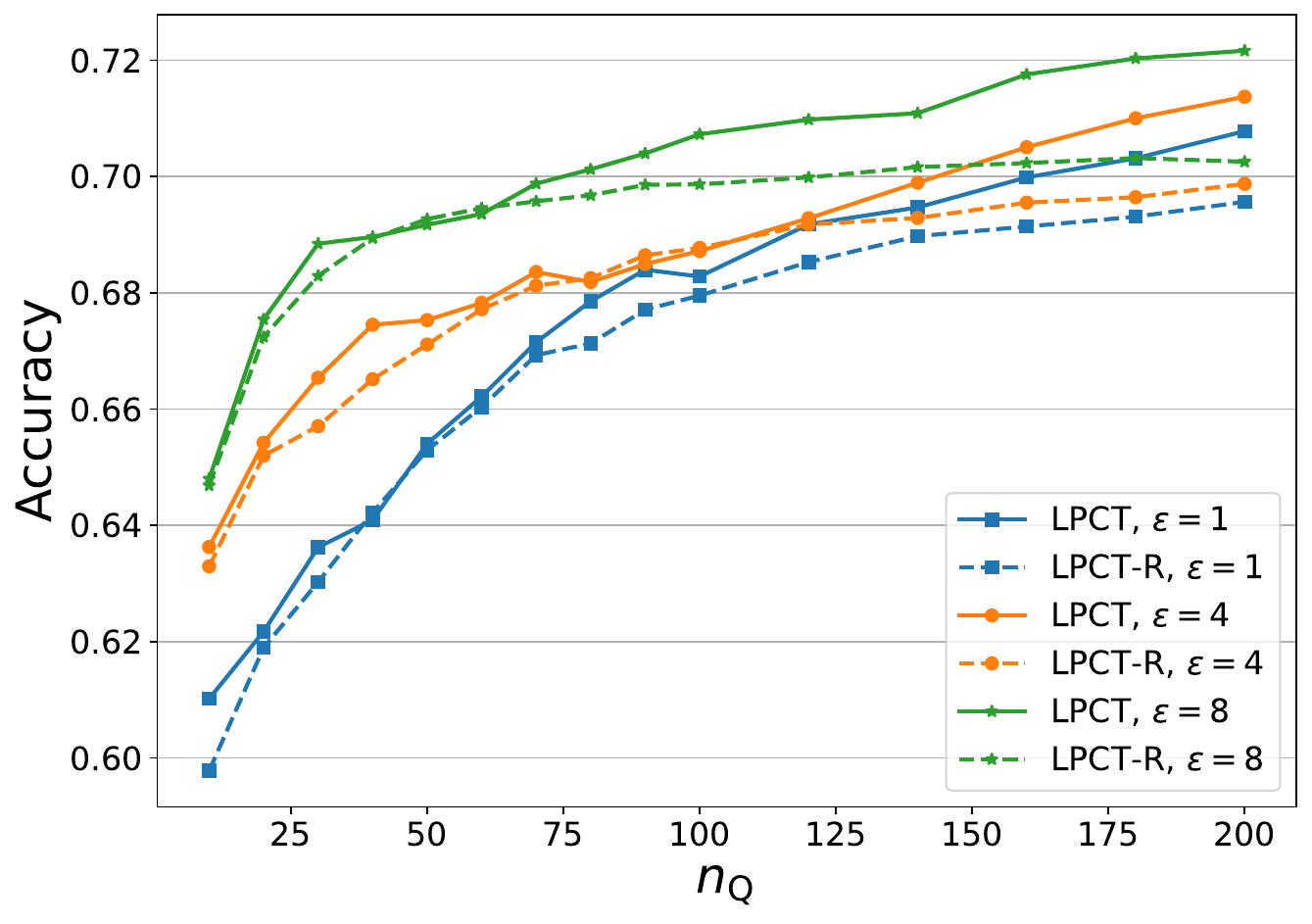}
			\end{minipage}
			\label{fig:randompartion2}
		}
		\centering
		\caption{Illustration of partition with public data. }
		\label{fig:randompartition}
	\end{figure}

	\section{Experiments on Real Data}\label{sec:realdata}

	\subsection{Data Description}\label{sec:realdatadescription}
	We conduct experiments on real-world data sets to show the superiority of LPCT.
	We utilize 11 real data sets that are available online.
	Each of these data sets contains sensitive information and necessitates privacy protection.
	Furthermore, these data sets cover a wide range of sample sizes and feature dimensions commonly encountered in real-world scenarios.
	When creating $D^{\mathrm{P}}$ and $D^{\mathrm{Q}}$ from the original data set $D$, some data sets have explicit criteria for splitting, reflecting real-world considerations.
	In these cases, we have $\mathrm{P} \neq \mathrm{Q}$.
	For others, the split is performed randomly and we have $\mathrm{P} = \mathrm{Q}$.
	A summary of key information for these data sets after pre-processing can be found in Table \ref{tab:informationrealdatasets}.
	Following the analysis in Section \ref{sec:rangeparameter}, we min-max scale each data to $[0,1]^d$ according to public data.
	Detailed data information and pre-processing procedures are provided in Appendix \ref{sec:realdatasetpreprocess}.

	\begin{table}[!t]
		\centering
		\caption{Information of real data sets.}
		\label{tab:informationrealdatasets}
		\resizebox{0.5\linewidth}{!}{
			\renewcommand{\arraystretch}{1.1}
			\setlength{\tabcolsep}{5pt}
			\begin{tabular}{|l|c|r|r|r|l|}
				\toprule
				Dataset            & $\mathrm{P} = \mathrm{Q}$ & \multicolumn{1}{c|}{$n_{\mathrm{P}}$} & \multicolumn{1}{c|}{$n_{\mathrm{Q}}$} & \multicolumn{1}{c|}{d} & \multicolumn{1}{c|}{Area} \\ \midrule
				\texttt{Anonymity} & $\checkmark$ & 453   & 50    & 14     & Social    \\
				\texttt{Diabetes}  & $\checkmark$ & 2,054 & 200   & 8      & Medical   \\
				\texttt{Email}     & $\checkmark$ & 3,977 & 200   & 3,000  & Text      \\
				\texttt{Employ}    & $\checkmark$ & 2,225 & 200   & 8      & Business  \\
				\texttt{Rice}      & $\checkmark$ & 240   & 3,510 & 7      & Image     \\
				\texttt{Census}    & $\times$     & 41,292                & 3,144 & 46     & Social    \\
				\texttt{Election}  & $\times$     & 1,752 & 222   & 9      & Social    \\
				\texttt{Employee}  & $\times$     & 3,319 & 504   & 9      & Business  \\
				\texttt{Jobs}      & $\times$     & 46,218                & 14,318                & 11     & Social    \\
				\texttt{Landcover} & $\times$     & 7,097 & 131   & 28     & Image     \\
				\texttt{Taxi}      & $\times$     & 3,297,639             & 117,367               & 93     & Social    \\ \bottomrule
			\end{tabular}
		}
	\end{table}

	\subsection{Competitors}
	
	We compare LPCT with several state-of-the-art competitors.
	For each model, we report the best result over its parameter grids based on an average of 20 replications.
	The tested methods are
	\begin{itemize}
		\item \textbf{LPCT} is described in Section \ref{sec:simulation}.
		In real data experiments, we enlarge the grids to $p\in\{1,2,3,4,5,6,7,8,10,12,14,16\}$ and $\lambda\in\{0.1$ $, 0.5, 1, $ $2, 5, 10, $ $ 50, 100, $ $ 200,$ $ 300, 400, $ $ 500, 750, 1000, 1250, $ $ 1500, 2000\}$ due to the large sample size.
		Since the max-edge rule is easily affected by useless features or categorical features \citep{ma2023decision}, we boost the performance of LPCT by incorporating the criterion reduction scheme from the original CART \citep{breiman1984classification} to the tree construction.
		Since Proposition \ref{thm:dppartition} holds for any partition, the method is also $\varepsilon$-LDP.
		We choose the Gini index as the reduction criterion.
		\item \textbf{LPCT-prune} is the pruned locally private classification tree in Algorithm \ref{alg:prunedeta}.
		We also adopt the criterion reduction scheme for tree construction.
		\item \textbf{CT} is the original classification tree \citep{breiman1984classification} with the same grid as LPCT.
		We define two approaches, \textbf{CT-W} and \textbf{CT-Q}.
		For CT-W, we fit CT on the whole data, namely $\mathrm{P}$ and $\mathrm{Q}$ data together, with no privacy protection.
		Its accuracy will serve as an upper bound.
		For CT-Q, we fit CT on $\mathrm{Q}$ data only.
		Its performance is taken into comparison.
		We use implementation by \texttt{sklearn} \citep{scikit-learn} with default parameters while varying \texttt{max\_depth} in $\{1,2,\cdots, 16\}$.
		\item \textbf{PHIST} is the locally differentially private histogram estimator proposed by \cite{berrett2019classification}.
		We fit PHIST on private data solely as it does not leverage public data.
		We choose the number of splits $h^{-1}$ along each axis in $\{1, 2, 3, 4, 5, 6\}$.
		\item \textbf{LPDT} is the locally private decision tree originally proposed for regression \citep{ma2023decision}.
		The approach creates a partition on $\mathrm{Q}$ data and utilizes $\mathrm{P}$ data for estimation.
		We adapt the strategy to classification.
		We select $p$ in the same range as LPCT.
		\item \textbf{PGLM} is the locally private generalized linear model proposed by \cite{wang2023generalized} with logistic link function.
		Note that their privacy result is under $(\varepsilon, \delta)$-LDP.
		We let $\delta = {1}/{n_{\mathrm{P}}^2}$ for a fair comparison.
		\item \textbf{PSGD} is the locally private stochastic gradient descent developed in \cite{lowy2023optimal}.
		The presented algorithm is in a sequentially interactive manner, meaning that the privatized information from $X_i$ depends on the information from $X_{i-1}$.
		We adopt the warm start paradigm where we first find a minimizer on the public data and then initialize the training process with the minimizer.
		Following \cite{lowy2023optimal}, we choose the parameters of the privacy mechanism, i.e. PrivUnitG, by the procedure they provided.
		We let $C = 1$ and learning rate $\eta \in \{10^{-5},10^{-4},\cdots, 1 \}$.
		We consider two models: \textbf{PSGD-L} is the linear classifier using the logistic loss; \textbf{PSGD-N} is the non-linear neural network classifier with one hidden layer and ReLU activation function.
		The number of neurons is selected in $d\in\{d/2, d, 2d\}$.
	\end{itemize}

	\subsection{Experiment Results}

	We first illustrate the substantial utility gain brought by public data.
	We choose \texttt{Diabetes} and \texttt{Employee}, representing the case of $\mathrm{P} = \mathrm{Q}$ and $\mathrm{P} \neq \mathrm{Q}$, respectively.
	On each data, we randomly select a portion of private data and 80 public samples to train three models: LPCT, LPDT, and PHIST.
	PHIST does not leverage any information of public data, while LPDT benefits from public data only through the partition.
	LPCT utilizes public data in both partition and estimation.
	In Figure \ref{fig:utilitybypublic}, the accuracy increases with respect to the extent of public data utilization for both identically and non-identically distributed public data.
	Moreover, the accuracy gain is significant.
	For instance, in Figure \ref{fig:utilitybypublic2}, 80 public samples and $0.2 \times 3319 \approx 664$ private samples achieve the same accuracy as using $0.6\times 3319\approx 1991$ private samples alone.
	The observation yields potential practical clues.
	For instance, when data-collecting resources are limited, organizations can put up rewards to get a small amount of public data rather than collecting a large amount of private data.

	\begin{figure}[!b]
		\centering
		\subfigure[$\mathrm{P} = \mathrm{Q}$]{
			\begin{minipage}{0.31\linewidth}
				\centering
				\includegraphics[width=\textwidth]{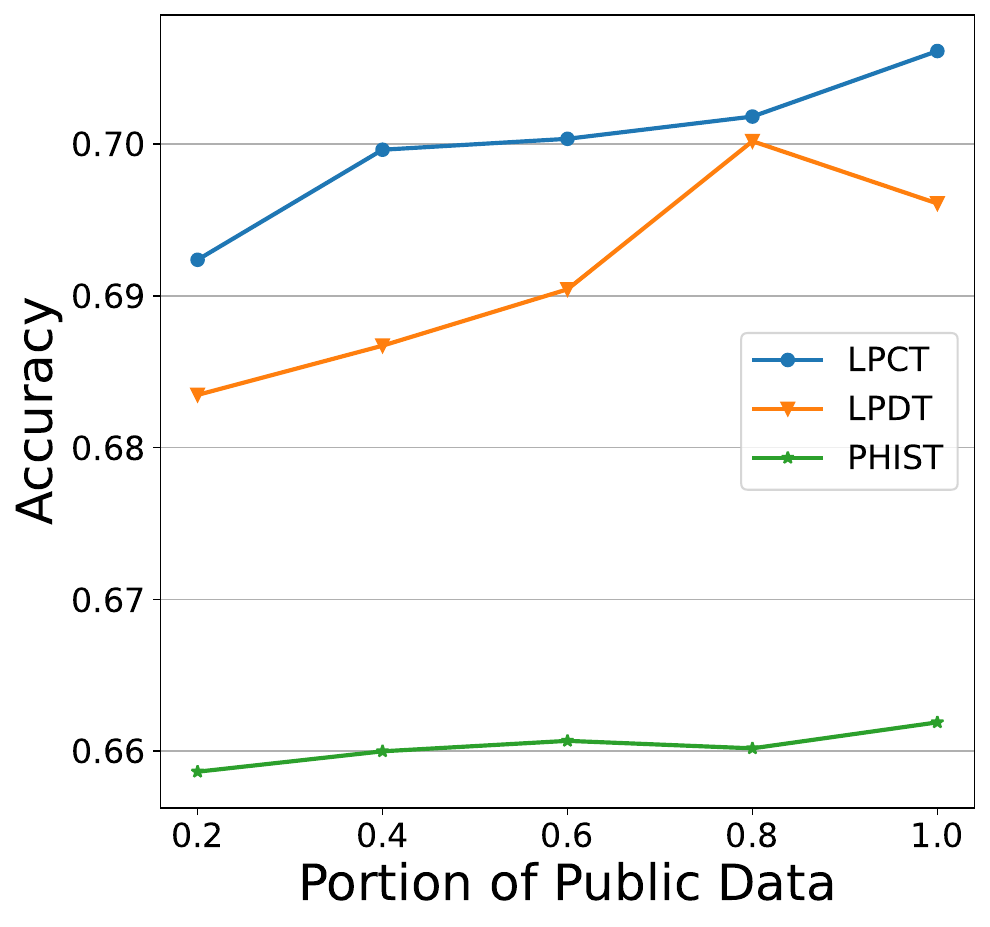}
			\end{minipage}
			\label{fig:utilitybypublic1}
		}
		\hskip 0.4in
		\subfigure[$\mathrm{P} \neq \mathrm{Q}$ ]{
			\begin{minipage}{0.31\linewidth}
				\centering
				\includegraphics[width=\textwidth]{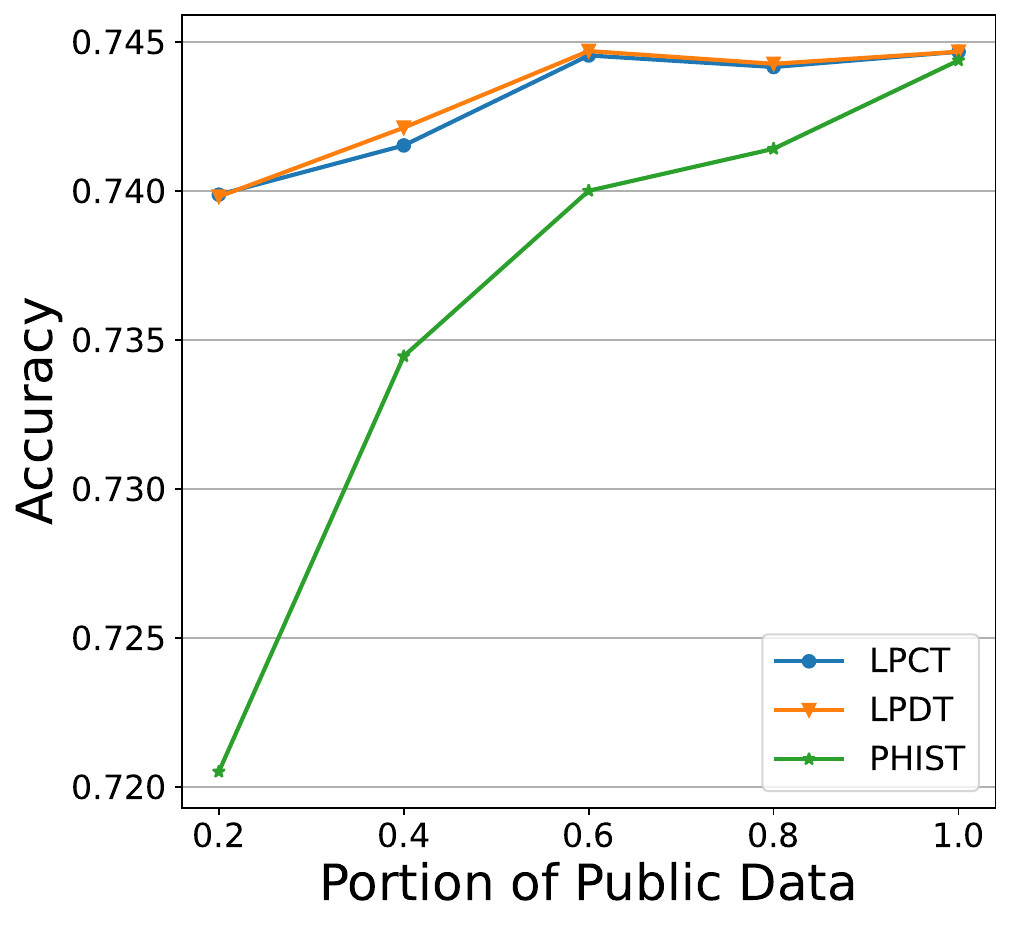}
			\end{minipage}
			\label{fig:utilitybypublic2}
		}
		\caption{ Illustration of utility gain brought by public data.} 
		\label{fig:utilitybypublic}
	\end{figure}

	
	Next, we report the averaged accuracy for $\varepsilon \in \{0.5, 2, 8\}$ in Table \ref{tab:realdata} on each real data set.
	Here, CT-W is not private and serves as a reference.
	To ensure significance, we adopt the Wilcoxon signed-rank test  \citep{wilcoxon1992individual} with a significance level of 0.05 to check if the result is significantly better.
	For each data set, trials that exceed a memory of 100GB or a running time of one hour are terminated.

	\begin{table}[p]
		\centering
		\caption{\footnotesize{Accuracy ($10^{-1}$) over real data sets.
				The best results are marked with *. The results that are not significantly worse than the best result are \textbf{bolded}.
				Due to memory limitation, PHIST is corrupted on three data sets.
				PSGD-L and PSGD-N exceed the running time of 3,600 seconds on one data set and are terminated.
				These results are marked with -.}}
		\label{tab:realdata}
		\subtable[$\varepsilon =0.5$]{
			\label{tab:realdataepsilon05}
			\centering
			\resizebox{0.93\linewidth}{!}{
				\renewcommand{\arraystretch}{1.06}
				\setlength{\tabcolsep}{5pt}
				
				\begin{tabular}{ll|llll|llllll}
					\toprule
					\multicolumn{1}{c|}{\multirow{2}{*}{Datasets}} & \multicolumn{1}{c|}{\multirow{2}{*}{CT-W}} & \multicolumn{2}{c}{LPCT}                               & \multicolumn{2}{c|}{LPCT-prune}                         & \multicolumn{1}{c}{\multirow{2}{*}{CT-Q}} & \multicolumn{1}{c}{\multirow{2}{*}{PHIST}} & \multicolumn{1}{c}{\multirow{2}{*}{LPDT}} & \multicolumn{1}{c}{\multirow{2}{*}{PGLM}} & \multicolumn{1}{c}{\multirow{2}{*}{PSGD-L}} & \multicolumn{1}{c}{\multirow{2}{*}{PSGD-N}} \\
					\multicolumn{1}{c|}{}                          & \multicolumn{1}{c|}{}                      & \multicolumn{1}{c}{Maxedge} & \multicolumn{1}{c}{CART} & \multicolumn{1}{c}{Maxedge} & \multicolumn{1}{c|}{CART} & \multicolumn{1}{c}{}                      & \multicolumn{1}{c}{}                       & \multicolumn{1}{c}{}                      & \multicolumn{1}{c}{}                      & \multicolumn{1}{c}{}                        & \multicolumn{1}{c}{}                        \\ \midrule
					\multicolumn{1}{l|}{\texttt{Anonymity}}        & 8.346                                      & \textbf{8.346}*             & \textbf{8.346}*          & \textbf{8.309}           & \textbf{8.309}           & \textbf{8.346}*                           & \textbf{8.346}*                            & {7.546}                                   & 5.899                                     & 7.790                                       & 8.123                                       \\
					\multicolumn{1}{l|}{\texttt{Diabetes}}         & 9.964                                      & 7.067                       & \textbf{7.590}*          & 6.661                       & \textbf{7.529}                     & 7.301                                     & 6.676                                      & 6.305                                     & 5.499                                     & 6.582                                       & 6.448                                       \\
					\multicolumn{1}{l|}{\texttt{Email}}            & 9.261                                      & 7.332                       & \textbf{7.487}*           & 7.205                       & 7.185                     & 7.283                                     & -                                          & 7.112                                     & 5.136                                     & 6.435                                       & 6.292                                       \\
					\multicolumn{1}{l|}{\texttt{Employ}}           & 9.048                                      & 7.403                       & \textbf{8.646}*          & 6.684                       & 6.362                     & \textbf{8.577}                            & 5.704                                      & 5.644                                     & 4.874                                     & 5.742                                       & 5.595                                       \\
					\multicolumn{1}{l|}{\texttt{Rice}}             & 8.992                                      & 9.039                       & \textbf{9.183}*          & 9.053                       & \textbf{9.173}           & \textbf{9.117}                            & 5.467                                      & 6.583                                     & 4.886                                     & 8.517                                       & 8.142                                       \\
					\multicolumn{1}{l|}{\texttt{Census}}           & 8.685                                      & 8.539                       & \textbf{8.743}*          & 8.471                       & 8.528                    & 8.285                                     & -                                          & 7.195                                     & 5.104                                     & 8.310                                       & 8.405                                       \\
					\multicolumn{1}{l|}{\texttt{Election}}         & 6.408                                      & 5.610                       & \textbf{5.846}           & 5.259                       & {5.773}            & \textbf{5.958}*                           & 5.060                                      & 5.089                                     & 4.948                                     & 5.207                                       & 5.074                                       \\
					\multicolumn{1}{l|}{\texttt{Employee}}         & 7.648                                      & \textbf{7.397}*             & 7.131                    & \textbf{7.381}                       & 7.040                     & 7.151                                     & {6.660}                                    & {6.098}                                   & 5.280                                     & 6.493                                       & 6.387                                       \\
					\multicolumn{1}{l|}{\texttt{Jobs}}             & 7.863                                      & 5.628                       & \textbf{7.814}*          & 5.561                       & \textbf{7.793}                     & 7.599                                     & 5.327                                      & 5.441                                     & 4.942                                     & 5.530                                       & 5.732                                       \\
					\multicolumn{1}{l|}{\texttt{Landcover}}        & 9.404                                      & 8.615                       & \textbf{8.960}*           & 8.299                       & 8.719                     & 6.670                                     & 8.362                                      & 8.366                                     & 4.955                                     & 8.368                                       & 8.349                                       \\
					\multicolumn{1}{l|}{\texttt{Taxi}}             & 9.601                                      &               5.647              &       \textbf{9.589}*                   & 5.637                       & \textbf{9.589}*                     & \textbf{9.490}                                     & -                                          & 5.438                                     & 4.912                                     & -                                           &      -                                       \\ \midrule
					\multicolumn{2}{c|}{\textbf{Rank Sum}}      & 36          & 18*       & 54          & 36        & 38        & 83         & 84        & 106        & 70          & 80          \\
					\multicolumn{2}{c|}{\textbf{No. 1}}         & 2           & 10*        & 2           & 5         & 5         & 1          & 0         & 0         & 0           & 0           \\ \bottomrule
				\end{tabular}
			}
		}
		\subtable[$\varepsilon = 2$]{
			\label{tab:realdataepsilon2}
			\centering
			\resizebox{0.93\linewidth}{!}{
				\renewcommand{\arraystretch}{1.06}
				\setlength{\tabcolsep}{5pt}
				\begin{tabular}{ll|llll|llllll}
					\toprule
					\multicolumn{1}{c|}{\multirow{2}{*}{Datasets}} & \multicolumn{1}{c|}{\multirow{2}{*}{CT-W}} & \multicolumn{2}{c}{LPCT}                               & \multicolumn{2}{c|}{LPCT-prune}                         & \multicolumn{1}{c}{\multirow{2}{*}{CT-Q}} & \multicolumn{1}{c}{\multirow{2}{*}{PHIST}} & \multicolumn{1}{c}{\multirow{2}{*}{LPDT}} & \multicolumn{1}{c}{\multirow{2}{*}{PGLM}} & \multicolumn{1}{c}{\multirow{2}{*}{PSGD-L}} & \multicolumn{1}{c}{\multirow{2}{*}{PSGD-N}} \\
					\multicolumn{1}{c|}{}                          & \multicolumn{1}{c|}{}                      & \multicolumn{1}{c}{Maxedge} & \multicolumn{1}{c}{CART} & \multicolumn{1}{c}{Maxedge} & \multicolumn{1}{c|}{CART} & \multicolumn{1}{c}{}                      & \multicolumn{1}{c}{}                       & \multicolumn{1}{c}{}                      & \multicolumn{1}{c}{}                      & \multicolumn{1}{c}{}                        & \multicolumn{1}{c}{}                        \\ \midrule
					\multicolumn{1}{l|}{\texttt{Anonymity}}        & 8.346                                      & \textbf{8.346}*             & \textbf{8.346}*          & {8.086}             & {8.168}           & \textbf{8.346}*                           & \textbf{8.346}*                            & \textbf{8.346}*                           & 5.899                                     & 7.965                                       & 8.039                                       \\
					\multicolumn{1}{l|}{\texttt{Diabetes}}         & 9.964                                      & 7.274                       & \textbf{7.590}*          & 6.661                       & 7.412                     & 7.301                                     & 6.676                                      & 6.676                                     & 5.499                                     & 6.653                                       & 6.674                                       \\
					\multicolumn{1}{l|}{\texttt{Email}}            & 9.261                                      & 7.350                       & \textbf{7.485}*          & 7.224                       & \textbf{7.459}                     & 7.283                                     & -                                          & 7.181                                     & 5.136                                     & 7.136                                       & 7.203                                       \\
					\multicolumn{1}{l|}{\texttt{Employ}}           & 9.048                                      & 7.702                       & \textbf{9.121}*          & 7.005                       & 6.578                     & {8.577}                                   & 6.174                                      & 7.181                                     & 4.874                                     & 5.786                                       & 5.792                                       \\
					\multicolumn{1}{l|}{\texttt{Rice}}             & 8.992                                      & 9.027                       & \textbf{9.183}*          & 9.053                       & \textbf{9.183}*           & \textbf{9.117}                            & 5.467                                      & 8.458                                     & 4.886                                     & \textbf{9.050}                              & 8.750                                       \\
					\multicolumn{1}{l|}{\texttt{Census}}           & 8.685                                      & \textbf{8.537}              & \textbf{8.743}*          & 8.460                       & \textbf{8.633}                     & 8.285                                     & -                                          & 8.017                                     & 5.104                                     & 8.432                                       & 8.471                                       \\
					\multicolumn{1}{l|}{\texttt{Election}}         & 6.408                                      & 5.648                       & \textbf{5.850}           & 5.327                       & {5.595}            & \textbf{5.958}*                           & 5.056                                      & 5.114                                     & 4.948                                     & 5.323                                       & 5.111                                       \\
					\multicolumn{1}{l|}{\texttt{Employee}}         & 7.648                                      & \textbf{7.437}*            & 7.175                    & \textbf{7.362}                       & 7.175                     & 7.151                                     & {6.472}                                    & {6.951}                                   & 5.280                                     & 6.400                                       & 6.383                                       \\
					\multicolumn{1}{l|}{\texttt{Jobs}}             & 7.863                                      & 5.628                       & \textbf{7.821}*          & 5.597                       & \textbf{7.778}            & 7.599                                     & 5.333                                      & 5.591                                     & 4.942                                     & 6.788                                       & 5.934                                       \\
					\multicolumn{1}{l|}{\texttt{Landcover}}        & 9.404                                      & 8.715                       & \textbf{8.960}*          & 8.623                       & 8.734                     & 6.670                                     & 8.367                                      & 8.493                                     & 4.955                                     & 8.553                                       & 8.369                                       \\
					\multicolumn{1}{l|}{\texttt{Taxi}}             & 9.601                                      &               5.659              &         \textbf{9.590}*                 & 5.643                       & \textbf{9.580}                     & \textbf{9.490}                                     & -                                          & 5.536                                     & 4.912                                     & -                                           &       -                                      \\ \midrule
					\multicolumn{2}{c|}{\textbf{Rank Sum}}                                                      & 36                          & 18*                       & 54                          & 36                        & 38                                       & 83                                         & 84                                        & 106                                        & 70                                          & 80                                          \\
					\multicolumn{2}{c|}{\textbf{No. 1}}                                                         & 3                           & 10*                        & 1                           & 5                         & 4                                         & 1                                          & 1                                         & 0                                         & 1                                           & 0                                           \\ \bottomrule
				\end{tabular}
			}
		}
		\subtable[$\varepsilon = 8$]{
			\label{tab:realdataepsilon8}
			\centering
			\resizebox{0.93\linewidth}{!}{
				\renewcommand{\arraystretch}{1.06}
				\setlength{\tabcolsep}{5pt}
				\begin{tabular}{ll|llll|llllll}
					\toprule
					\multicolumn{1}{c|}{\multirow{2}{*}{Datasets}} & \multicolumn{1}{c|}{\multirow{2}{*}{CT-W}} & \multicolumn{2}{c}{LPCT}               & \multicolumn{2}{c|}{LPCT-prune}         & \multicolumn{1}{c}{\multirow{2}{*}{CT-Q}} & \multicolumn{1}{c}{\multirow{2}{*}{PHIST}} & \multicolumn{1}{c}{\multirow{2}{*}{LPDT}} & \multicolumn{1}{c}{\multirow{2}{*}{PGLM}} & \multicolumn{1}{c}{\multirow{2}{*}{PSGD-L}} & \multicolumn{1}{c}{\multirow{2}{*}{PSGD-N}} \\
					\multicolumn{1}{c|}{}          & \multicolumn{1}{c|}{}      & \multicolumn{1}{c}{Maxedge} & \multicolumn{1}{c}{CART} & \multicolumn{1}{c}{Maxedge} & \multicolumn{1}{c|}{CART} & \multicolumn{1}{c}{}      & \multicolumn{1}{c}{}       & \multicolumn{1}{c}{}      & \multicolumn{1}{c}{}      & \multicolumn{1}{c}{}        & \multicolumn{1}{c}{}        \\ \midrule
					\multicolumn{1}{l|}{\texttt{Anonymity}}    & 8.346  & \textbf{8.346}*   & \textbf{8.346}*   & {8.221}            & {8.309}  & \textbf{8.346}* & \textbf{8.346}*   & \textbf{8.346}* & 5.395  & \textbf{8.338}    & 8.154 \\
					\multicolumn{1}{l|}{\texttt{Diabetes}}  & 9.964 & 7.503 & \textbf{7.930}* & 6.648  & 7.433  & 7.301   & 6.676   & 7.496      & 5.588  & 7.244  & 7.101\\
					\multicolumn{1}{l|}{\texttt{Email}}  & 9.261  & 7.355  & 7.520    & 7.251  & 7.491  & 7.283  & - & 7.326  & 5.271 & \textbf{8.446}   & 7.060   \\
					\multicolumn{1}{l|}{\texttt{Employ}}           & 9.048      & 7.619       & \textbf{9.068}*          & 7.393       & 7.492     & 8.577     & 9.043      & 7.572     & 5.237     & 5.795       & 5.798       \\
					\multicolumn{1}{l|}{\texttt{Rice}}             & 8.992      & 9.011       & \textbf{9.183}*          & 9.053       & \textbf{9.137}            & \textbf{9.117}            & 8.858      & 9.017     & 6.283     & 9.017       & 8.875       \\
					\multicolumn{1}{l|}{\texttt{Census}}           & 8.685      & 8.557       & \textbf{8.743}*          & 8.404       & \textbf{8.671}     & 8.285     & -          & 8.527     & 5.389     & 8.530       & 8.154       \\
					\multicolumn{1}{l|}{\texttt{Election}}         & 6.408      & 5.894       & \textbf{6.082}*          & 5.321       & 5.563     & \textbf{5.958}            & 5.015      & 5.892     & 5.462     & 5.765       & 5.705       \\
					\multicolumn{1}{l|}{\texttt{Employee}}         & 7.648      & \textbf{7.441}*             & 7.173    & \textbf{7.384}       & 7.105     & 7.151     & \textbf{7.437}             & \textbf{7.439}            & 4.930     & 6.775       & 6.883       \\
					\multicolumn{1}{l|}{\texttt{Jobs}}  & 7.863      & 5.650       & \textbf{7.861}*          & 5.584       & \textbf{7.794}     & 7.599     & 5.992      & 5.641     & 6.802     & 7.712       & 7.414       \\
					\multicolumn{1}{l|}{\texttt{Landcover}}        & 9.404      & 8.737       & 8.963    & 8.620       & 8.745     & 6.670     & 8.367      & 8.695     & 5.551     & \textbf{9.137}*             & 8.948       \\
					\multicolumn{1}{l|}{\texttt{Taxi}}             & 9.601      &   5.666          &     \textbf{9.610}*     & 5.637       & \textbf{9.579}     & \textbf{9.490}     & -          &      5.620     & 5.087     & -       & -      \\ \midrule
					\multicolumn{2}{c|}{\textbf{Rank Sum}}      & 42          & 18*       & 66          & 36       & 49        & 90         & 68        & 104        & 54          & 78          \\
					\multicolumn{2}{c|}{\textbf{No. 1}}         & 2           & 8*        & 1           & 4         & 4         & 2          & 2         & 0         & 3           & 0           \\ \bottomrule
				\end{tabular}
			}
		}
	\end{table}

	From Table \ref{tab:realdata}, we can see that LPCT-based methods generally outperform their competitors, both in the sense of average performance (rank sum) and best performance (number of No.1).
	Compared to LPCT, LPCT-prune performs reasonably worse due to its data-driven nature.
	In practice, however, all other methods require parameter tuning which limits the privacy budget for the final estimation.
	Thus, their real performance is expected to drop down while that of LPCT-prune remains.
	Methods with the CART rule substantially outperform those with the max-edge rule, as the CART rule is robust to useless features or categorical features and can fully employ data information.
	We note that on some data sets, the results of LPCT remain the same for all $\varepsilon$.
	This is the case where public data takes dominance and the estimator is barely affected by private data.
	Moreover, we observe that linear classifiers such as PSDG-L and PGLM perform poorly on some data sets.
	This is attributed to the non-linear underlying nature of these data sets.

	We compare the efficiency of the methods by reporting the total running time in Table \ref{tab:time}.
	PSGD-L and PSGD-N are generally slower and even take more than one hour for a single round on \texttt{taxi} data.
	Compared to these sequentially-interactive LDP methods, non-interactive methods achieve acceptable running time.
	We argue that the performance gap between LPCT and CT is caused by \texttt{cython} acceleration in \texttt{sklearn}.
	Moreover, PHIST causes memory leakage on three data sets.
	Though histogram-based methods and tree-based methods possess the same storage complexity theoretically \citep{ma2023decision}, LPCT avoids memory leakage when $d$ is large. 
	The results show that LPCT is an efficient method.
	
	\begin{table}[!t]
		\caption{Running time (s) on real data sets.}
		\label{tab:time}
		\centering
		\resizebox{1\linewidth}{!}{
			\renewcommand{\arraystretch}{1}
			\setlength{\tabcolsep}{5pt}
			\begin{tabular}{l|l|llll|llllll}
				\toprule
				\multicolumn{1}{c|}{\multirow{2}{*}{Datasets}} & \multicolumn{1}{c|}{\multirow{2}{*}{CT-W}} & \multicolumn{2}{c}{LPCT}                               & \multicolumn{2}{c|}{LPCT-prune}                         & \multicolumn{1}{c}{\multirow{2}{*}{CT-Q}} & \multicolumn{1}{c}{\multirow{2}{*}{PHIST}} & \multicolumn{1}{c}{\multirow{2}{*}{LPDT}} & \multicolumn{1}{c}{\multirow{2}{*}{PGLM}} & \multicolumn{1}{c}{\multirow{2}{*}{PSGD-L}} & \multicolumn{1}{c}{\multirow{2}{*}{PSGD-N}} \\
				\multicolumn{1}{c|}{}          & \multicolumn{1}{c|}{}      & \multicolumn{1}{c}{Maxedge} & \multicolumn{1}{c}{CART} & \multicolumn{1}{c}{Maxedge} & \multicolumn{1}{c|}{CART} & \multicolumn{1}{c}{}      & \multicolumn{1}{c}{}       & \multicolumn{1}{c}{}      & \multicolumn{1}{c}{}      & \multicolumn{1}{c}{}        & \multicolumn{1}{c}{}        \\ \midrule
				\texttt{Anonymity}             & 3e-3       & 1e-2        & 6e-3     & 9e-3        & 1e-2      & 2e-3      & 1e-2       & 2e-3      & 2e-3      & 8e-1        & 1e0         \\
				\texttt{Diabetes}              & 1e-2       & 1e-1        & 3e-2     & 3e-2        & 3e-2      & 3e-3      & 3e-2       & 6e-3      & 3e-3      & 3e0         & 6e0         \\
				\texttt{Email}                 & 3e0        & 1e0         & 3e-2     & 2e-1        & 1e0       & 6e-2      & -          & 1e0       & 4e0       & 8e0         & 3e3         \\
				\texttt{Employ}                & 5e-3       & 9e-2        & 2e-1     & 1e0         & 9e-1      & 3e-3      & 3e-2       & 4e-2      & 4e-3      & 5e0         & 7e0         \\
				\texttt{Rice}  & 1e-2       & 2e-2        & 1e-2     & 1e-2        & 2e-2      & 2e-2      & 5e-3       & 6e-3      & 8e-3      & 2e0         & 2e0         \\
				\texttt{Census}                & 6e-1       & 1e0         & 5e0      & 6e-1        & 9e-1      & 3e-1      & -          & 2e-1      & 5e-2      & 2e1         & 3e1         \\
				\texttt{Election}              & 1e-2       & 6e-2        & 3e-2     & 3e-2        & 3e-2      & 3e-3      & 1e0        & 3e-2      & 3e-3      & 3e0         & 5e0         \\
				\texttt{Employee}              & 6e-3       & 8e-2        & 1e-1     & 4e-2        & 1e-1      & 3e-3      & 5e-2       & 5e-2      & 4e-3      & 7e0         & 1e2         \\
				\texttt{Jobs}  & 1e-1       & 1e0         & 1e0      & 6e-1        & 6e-1      & 3e-2      & 8e-1       & 5e-1      & 3e-2      & 1e2         & 1e2         \\
				\texttt{Landcover}             & 2e-1       & 1e-1        & 8e-2     & 9e-2        & 1e-1      & 5e-3      & 2e-1       & 9e-2      & 5e-3      & 1e1         & 2e1         \\
				\texttt{Taxi}  & 1e2        &    2e2         &    7e1      & 4e1         & 4e1       & 1e0       & -          & 1e2       & 1e0       & -           & -                           \\ \bottomrule
			\end{tabular}
		}
	\end{table}

	\section{Conclusion and Discussions}\label{sec:comment}

	In this work, we address the under-explored problem of public data assisted LDP learning by investigating non-parametric classification under the posterior drift assumption. 
	We propose the locally private classification tree that achieves the mini-max optimal convergence rate. 
	Moreover, we introduce a data-driven pruning procedure that avoids parameter tuning and produces an estimator with a fast convergence rate. 
	Extensive experiments on synthetic and real data sets validate the superior performance of these methods. 
	Both theoretical and experimental results suggest that public data is significantly more effective compared to data under privacy protection.
	This finding leads to practical suggestions.
	For instance, prioritizing non-private data collection should be considered. When data-collecting resources are limited, organizations can offer rewards to encourage users to share non-private data instead of collecting a large amount of privatized data.

	We summarize the ways that public data contributes to LPCT.
	The benefits lie in four-fold: 
	(1) Elimination of range parameter. 
	Since the domain is unknown in the absence of the raw data, the convergence rate and performance of partition-based estimators are degraded by a range parameter. 
	For LPCT, public data helps to eliminate the range parameter both theoretically and empirically, as demonstrated in Section \ref{sec:rangeparameter} and \ref{sec:simulationgainofpublicdata}, respectively. 
	(2) Implicit information in partition. 
	Through the partition created on public data, LPCT implicitly feeds information from public data to the estimation. 
	The performance gain is explained empirically in Section \ref{sec:simulationgainofpublicdata}. 
	(3) Augmented estimation.
	In the weighted average estimation, the public data reduces the instability caused by perturbations and produces more accurate estimations.
	Theoretically, the convergence rate can be significantly improved.
	Empirically, even a small amount of public data improves the performance significantly (Section \ref{sec:simulation} and \ref{sec:realdata}).
	(4) Data-driven pruning. 
	As shown in Section \ref{sec:plpct}, we are capable of deriving the data-driven approach only with the existence of public data.

	We discuss the potential future works and improvements. 
	The assumption that $\mathrm{P}_X$ and $\mathrm{Q}_X$ are identical can be mitigated.
	Similar to \cite{ma2023decision}, we only need the density ratio to be bounded. 
	Moreover, the marginal densities are not necessarily lower bounded, if we add a parameter to guarantee that sufficient samples are contained in each cell \citep{ma2023decision}. 
	In practice, $\mathcal{X}_{\mathrm{P}}$ and $\mathcal{X}_{\mathrm{Q}}$ can be distinctive \citep{yu2021large, gu2023choosing}.
	This problem is related to domain adaption \citep{redko2020survey} and can serve as future work direction. 
	Determining whether a data set satisfies Assumption \ref{asp:rse} is of independent interest, as demonstrated in \cite{gu2023choosing, yu2023selective}.

	\acks{We would like to thank the reviewers and action editor for their help and advice, which led to a significant improvement of the article.
		The research is supported by the Special Funds of the National Natural Science Foundation of China (Grant No. 72342010). 
		Yuheng Ma is supported by the Outstanding Innovative Talents Cultivation Funded Programs 2023 of Renmin University of China.
		This research is also supported by Public Computing Cloud, Renmin University of China.
	}

	\newpage

	\appendix

	\section{Methodology}\label{app:methodology}

	\subsection{Estimator Summarization}\label{sec:estimatorsummarization}

	\begin{table}[htbp]
		\centering
		\resizebox{0.76\linewidth}{!}{
			\renewcommand{\arraystretch}{1.2}
			\setlength{\tabcolsep}{6pt}
			\begin{tabular}{lcr}
				\toprule
				Function & Meaning &  Defined \\ \hline
				$\eta_{\mathrm{P}}^*$, $\eta_{\mathrm{Q}}^*$        & true regression function of $\mathrm{P}$ and $\mathrm{Q}$       & \eqref{equ:defofgroundtruth}             \\
				$\overline{\eta}_{\mathrm{P}}^{\pi}$        & population tree estimation       & \eqref{equ:etapi}             \\
				$\widehat{\eta}_{\mathrm{P}}^{\pi}$, $\widehat{\eta}_{\mathrm{Q}}^{\pi}$        & non-private tree estimation of $\mathrm{P}$ and $\mathrm{Q}$      & \eqref{equ:reformetapi}             \\
				$\tilde{\eta}_{\mathrm{P}}^{\pi}$       & private tree estimation based on $\mathrm{P}$      & \eqref{equ:etadppi}            \\
				$\tilde{\eta}_{\mathrm{P},\mathrm{Q}}^{\pi}$       &  private tree estimation      & \eqref{equ:twosampleeta}            \\ 
				{$\widehat{\eta}_{\mathrm{P},\mathrm{Q}}^{\pi}$     }  & {non-private two-sample tree estimation  }    & \eqref{equ:twosamplenonprivateeta}            \\ 
				$ \tilde{\eta}^{\pi_k}_{\mathrm{P}, \mathrm{Q}}$ & ancestor private tree estimation      & \eqref{equ:defofetapik}\\
				$ \tilde{\eta}^{prune}_{\mathrm{P}, \mathrm{Q}}$ & pruned  private tree estimation      & Algorithm \ref{alg:prunedeta}\\
				\bottomrule
			\end{tabular}
		}
	\end{table}
	
	\subsection{Derivation of the Closed-form Solution for Pruning Procedure}\label{sec:derivationpruning}
	
	Remind that 
	\begin{align*}
		r_{k}^j :=\sqrt{\frac{4C_1\left(2^{p_0 - k + 3}\varepsilon^{-2} n_{\mathrm{P}}\right)\vee \left( \sum_{j'}\sum_{i=1}^{n_{\mathrm{P}}}\tilde{U}_i^{\mathrm{P}j'} \right) + C_1 \lambda^2\sum_{j'}\sum_{i=1}^{n_{\mathrm{P}}}U_i^{\mathrm{Q}j'}}{\left( \sum_{j'} \sum_{i=1}^{n_{\mathrm{P}}} \tilde{U}^{\mathrm{P} j}_i + \lambda \sum_{j'} \sum_{i=1}^{n_{\mathrm{Q}}}{U}^{\mathrm{Q} j}_i\right)^2}}. 
	\end{align*}
	For each {$\mathrm{ancestor}(A^{j}_{(p_0)}, k)$}, define the quantity 
	\begin{align*}
		w^j_k = \frac{\sum_{\mathrm{ancestor}(A^{j'}_{(p_0)}, k) = \mathrm{ancestor}(A^{j}_{(p_0)}, k) }
			\sum_{i=1}^{n_{\mathrm{P}}} \tilde{U}^{\mathrm{P} j' }_i 
		}
		{ 
			\sum_{\mathrm{ancestor}(A^{j'}_{(p_0)}, k) = \mathrm{ancestor}(A^{j}_{(p_0)}, k) }\left(
			\sum_{i=1}^{n_{\mathrm{P}}} \tilde{U}^{\mathrm{P} j' }_i + \lambda\sum_{i=1}^{n_{\mathrm{Q}}}{U}^{\mathrm{Q} j'}_i\right)
		}.
	\end{align*}
	Also, let $u_k =4\left(2^{p_0 - k + 3}\varepsilon^{-2} n_{\mathrm{P}}\right)\vee \left( \sum_{j'}\sum_{i=1}^{n_{\mathrm{P}}}\tilde{U}_i^{\mathrm{P}j'} \right)$.
	When the signs of $\tilde{\eta}_{\mathrm{P}}^{\pi_k} (x) - \frac{1}{2} $ and $\widehat{\eta}_{\mathrm{Q}}^{\pi_k} (x) - \frac{1}{2}$ are identical, the object in \eqref{equ:bestlamda} becomes
	\begin{align*}
		&\sqrt{C_1}\cdot \frac{|\tilde{\eta}^{\pi_k}_{\mathrm{P}, \mathrm{Q}}(x)  - \frac{1}{2}|}{r_k^j} 
		\\
		= & 
		\frac{\left|w_k^j\left(\tilde{\eta}_{\mathrm{P}}^{\pi_k} (x) - \frac{1}{2}\right) + (1-w_k^j)\cdot \left(\widehat{\eta}_{\mathrm{Q}}^{\pi_k} (x) - \frac{1}{2}\right)\right|}
		{ 
			\sqrt{
				\frac{w_k^{j2}u_k}{\left(\sum_{j'}\sum_{i=1}^{n_{\mathrm{P}}}\tilde{U}_i^{\mathrm{P}j'}\right)^2}
				+
				\frac{(1-w_k^j)^2}{\sum_{j'}\sum_{i=1}^{n_{\mathrm{Q}}}U_i^{\mathrm{Q}j'}}
			}
		}
		\\
		\leq & 
		\sqrt{
			\frac{\left(\sum_{j'}\sum_{i=1}^{n_{\mathrm{P}}}\tilde{U}_i^{\mathrm{P}j'}\right)^2}{u_k}
			\left(\tilde{\eta}^{\mathrm{P}}_{\pi_k} (x) - \frac{1}{2}\right)^2 
			+
			\sum_{j'}\sum_{i=1}^{n_{\mathrm{Q}}}U_i^{\mathrm{Q}j'} 
			\cdot 
			\left(\widehat{\eta}^{\mathrm{Q}}_{\pi_k} (x) - \frac{1}{2}\right)^2}
	\end{align*}
	where the inequality follows from Cauthy's inequality. 
	The maximum is attained when 
	\begin{align*}
		& \frac{w_k^{j2}u_k}{\left(\sum_{j'}\sum_{i=1}^{n_{\mathrm{P}}}\tilde{U}_i^{\mathrm{P}j'}\right)^2}
		\cdot \sum_{j'}\sum_{i=1}^{n_{\mathrm{Q}}}U_i^{\mathrm{Q}j'} 
		\cdot 
		\left(\widehat{\eta}_{\mathrm{Q}}^{\pi_k} (x) - \frac{1}{2}\right)^2
		\\
		= &
		\frac{(1-w_k^j)^2}{\sum_{j'}\sum_{i=1}^{n_{\mathrm{Q}}}U_i^{\mathrm{Q}j'}}
		\cdot 
		\frac{\left(\sum_{j'}\sum_{i=1}^{n_{\mathrm{P}}}\tilde{U}_i^{\mathrm{P}j'}\right)^2}{u_k}
		\left(\tilde{\eta}_{\mathrm{P}}^{\pi_k} (x) - \frac{1}{2}\right)^2 .
	\end{align*}
	By bringing in $w_k^j$, this leads to the solution 
	\begin{align*}
		\lambda =   \frac{u_k}{\sum_{j'}\sum_{i=1}^{n_{\mathrm{P}}}\tilde{U}_i^{\mathrm{P}j'}}\cdot \frac{\widehat{\eta}_{\mathrm{Q}}^{\pi_k} (x) - \frac{1}{2}}{\tilde{\eta}_{\mathrm{P}}^{\pi_k} (x) - \frac{1}{2}}.
	\end{align*}
	When $\tilde{\eta}_{\mathrm{P}}^{\pi_k} (x) - \frac{1}{2} $ and $\widehat{\eta}_{\mathrm{Q}}^{\pi_k} (x) - \frac{1}{2}$ have different signs, the object in \eqref{equ:bestlamda} is maximized at the extremum, i.e. 
	when $\lambda = 0$ or $\lambda = \infty$. 
	The maximum value is 
	\begin{align*}
		\sqrt{\frac{ \max \left(
				\frac{\left(\sum_{j'}\sum_{i=1}^{n_{\mathrm{P}}}\tilde{U}_i^{\mathrm{P}j'}\right)^2}{u_k}
				\left(\tilde{\eta}_{\mathrm{P}}^{\pi_k} (x) - \frac{1}{2}\right)^2 
				,
				\sum_{j'}\sum_{i=1}^{n_{\mathrm{Q}}}U_i^{\mathrm{Q}j'} 
				\cdot 
				\left(\widehat{\eta}_{\mathrm{Q}}^{\pi_k} (x) - \frac{1}{2}\right)^2\right)}{C_1}}.
	\end{align*}
	
	\section{Proofs}\label{app:proofs}
	
	\subsection{Proof of Proposition \ref{thm:dppartition}}
	
	\begin{proof}\textbf{of Proposition  \ref{thm:dppartition}}\;\;
		Since all mechanisms $\mathrm{R}_i$ are completely the same, we use $\mathrm{R}$ for notation simplicity.
		For each conditional distribution $\mathrm{R}\left(\tilde{U}_i, \tilde{V}_i| X_i = x, Y_i = y\right)$, we can compute the density ratio as 
		\begin{align}\nonumber
			& \sup_{x,x'\in \mathbb{R}^d, \;y,y'\in \{0,1\}, S\in\sigma(\mathcal{S})}\frac{\mathrm{R}\left((\tilde{U}, \tilde{V})\in S| X_i = x, Y_i = y \right)}{\mathrm{R}\left((\tilde{U}, \tilde{V})\in S| X_i = x', Y_i = y' \right)} \\
			\leq  &  \sup_{x,x'\in \mathbb{R}^d} \frac{d\mathrm{R}(\tilde{U}|X_i = x) }{d\mathrm{R}(\tilde{U}|X_i = x') } \cdot  \sup_{x,x'\in \mathbb{R}^d, y,y'\in \{0,1\}} \frac{d\mathrm{R}(\tilde{V}|X_i = x, Y_i = y)}{d\mathrm{R}(\tilde{V}|X_i = x', Y_i = y')}.\label{equ:densityratiodecomposition}
		\end{align}
		The first part is characterized by the Laplace mechanism. 
		Since the conditional density is identical if $x$ and $x'$ belongs to a same $A^j_(p)$, we have 
		\begin{align*}
			\sup_{x,x'\in \mathbb{R}^d} \frac{d\mathrm{R}(\tilde{U}|X_i = x) }{d\mathrm{R}(\tilde{U}|X_i = x') } = \sup_{u,u'\in \{0,1\}^{|\mathcal{I}|}} \frac{d\mathrm{R}(\tilde{U}|U_i = u) }{d\mathrm{R}(\tilde{U}|U_i = u') }. 
		\end{align*}
		The right-hand side can be computed as
		\begin{align*}
			\frac{d\mathrm{R}(\tilde{U}|U_i = u) }{d\mathrm{R}(\tilde{U}|U_i = u') } = \frac{\prod_{j}\exp\left(\frac{\varepsilon}{4}|\tilde{u}_i^j-u^j|\right)}{\prod_{j} \exp\left(\frac{\varepsilon}{4}|\tilde{u}_i^j-u^{j'}|\right)} \leq  \prod_{j}\exp\left(\frac{\varepsilon}{4}|u^j-u^{j'}|\right). 
		\end{align*}
		By definition, $u$ and $u'$ are one-hot vectors and differ at most on two entries. As a result, 
		\begin{align}\label{equ:densityratioU}
			\sup_{x,x'\in \mathbb{R}^d} \frac{d\mathrm{R}(\tilde{U}|X_i = x) }{d\mathrm{R}(\tilde{U}|X_i = x') } \leq \exp\left(\frac{\varepsilon}{2}\right)
		\end{align}
		For the second part, note that $v$ and $v'$ also differ at most on two entries. Then with analogous steps, we have
		\begin{align}
			\sup_{x,x'\in \mathbb{R}^d, y,y'\in \{0,1\}} \frac{d\mathrm{R}(\tilde{V}|X_i = x, Y_i = y)}{d\mathrm{R}(\tilde{V}|X_i = x', Y_i = y')} \leq \exp\left(\frac{\varepsilon}{2}\right). \label{equ:densityratioresultY}
		\end{align}
		Bringing \eqref{equ:densityratioU} and \eqref{equ:densityratioresultY} into \eqref{equ:densityratiodecomposition} yields the desired conclusion. 
	\end{proof}
	
	\subsection{Proof of Theorem \ref{thm:lowerbound}}

	\begin{proof}\textbf{of Theorem \ref{thm:lowerbound}}\;\;
		The proof is analogous to \citet[Theorem 3.2]{cai2021transfer}. 
		We first construct two classes of distributions $\mathrm{P}_{\sigma}$ and $\mathrm{Q}_{\sigma}$ for $\sigma\in \{-1, 1\}^{m}$. 
		We then combine the information inequality under privacy constraint \citep{duchi2018minimax} and Assouad's Lemma \citep{tsybakov2008introduction}. 
		We first define two classes of distributions by construction.  
		Define the useful quantity 
		\begin{align}\label{equ:lowerboundimportantvalue}
			r = c_r \left(n_{\mathrm{P}} \left(e^{\varepsilon} - 1\right)^2 + n_{\mathrm{Q}}^{\frac{2\alpha + 2d}{2 \gamma\alpha + d}}\right)^{-\frac{1}{2\alpha + 2 d}}, \;\; w = c_w r^d, \;\; m = \lfloor c_m r^{\alpha \beta - d}\rfloor
		\end{align}
		where $c_r$, $c_w$, and $c_m$ are some universal constants. 
		Let
		\begin{align*}
			G=\left\{\left(6 k_1 r, 6 k_2 r, \ldots, 6 k_d r\right), k_i=1,2, \ldots,\left\lfloor(6 r)^{-1}\right\rfloor, i=1,2, \ldots, d\right\} .
		\end{align*}
		be a grid of $\lfloor(6 r)^{-1}\rfloor^d$ points in the unit cube. Denote $x_1, x_2, \ldots, x_{\lfloor(6 r)^{-1}\rfloor^d}$ as points in $G$.
		Since $m \ll \lfloor(6 r)^{-1}\rfloor^d$, we are interested in $B\left(x_k, 2 r\right), k=1,2, \ldots, m$ which are balls with radius $2 r$ centered at $x_k$. 
		Let $B_c= [0,1]^d \backslash \bigcup_{k=1}^m B\left(x_k, 2 r\right)$ denote the points that aren't in any of the $m$ balls.
		Define function $g(\cdot)$ on $[0, \infty)$ :
		\begin{align*}
			g(z)= \begin{cases}1 & 0 \leq z<1 \\ 2-z & 1 \leq z<2 \\ 0 & z \geq 2\end{cases}
		\end{align*}
		And define
		\begin{align*}
			h_{\mathrm{P}}(z)=C_{\alpha} r^{\alpha} g^{\alpha}(z / r) , \;\;\; h_{\mathrm{Q}}(z)=C_{\gamma} C_{\alpha}^{\gamma} r^{\alpha \gamma} g^{\alpha \gamma}(z / r) 
		\end{align*}
		where $C_{\alpha}$, $C_{\gamma}$ are some universal constants. 
		Define the hypercube $\mathcal{H}$ of pairs $\left(P_\sigma, Q_\sigma\right)$ by
		\begin{align*}
			\mathcal{H}=\left\{\left(\mathrm{P}_\sigma, \mathrm{Q}_\sigma\right), \sigma=\left(\sigma_1, \sigma_2, \ldots, \sigma_m\right) \in\{-1,1\}^m\right\}
		\end{align*}
		with $\mathrm{P}_\sigma$ and $\mathrm{Q}_\sigma$ defined as follows. 
		For conditional density, we let
		\begin{align*}
			& \eta_{\mathrm{P}, \sigma}(x)= \begin{cases}\frac{1}{2}\left(1+\sigma_k h_{\mathrm{P}}\left(\left\|x-x_k\right\|\right)\right) & \text { if } x \in B\left(x_k, 2 r\right) \text { for some } k=1,2, \ldots, m \\
				\frac{1}{2} & \text{ if } x \in B_c.\end{cases} \\
			& \eta_{\mathrm{Q}, \sigma}(x)= \begin{cases}\frac{1}{2}\left(1+\sigma_k h_{\mathrm{Q}}\left(\left\|x-x_k\right\|\right)\right) & \text { if } x \in B\left(x_k, 2 r\right) \text { for some } k=1,2, \ldots, m \\
				\frac{1}{2} & \text{ if } x \in B_c.\end{cases}
		\end{align*}
		Let $\mathrm{P}_{\sigma, X}, \mathrm{Q}_{\sigma, X}, \sigma \in\{-1,1\}^m$ all have the same marginal distribution on $\mathcal{X}$, with density $d\mathrm{P}_X(x)$ defined as follows:
		\begin{align*}
			d\mathrm{P}_X(x)= \begin{cases} {w}\cdot r^{-d} & \text { if } x \in B\left(x_k, r\right) \text { for some } k=1,2, \ldots, m \\ c_{B_c}(1-m w) & \text { if } x \in B_c \\ 0 & \text { otherwise. }\end{cases}
		\end{align*}
		where $c_{B_c}$ is an normalizing constant that makes $d\mathrm{P}$ well-defined. 
		Similar to \cite{cai2021transfer}, with a proper choice of all constants, $\mathcal{H}$ is a subset of $\mathcal{F}$ that satisfies \ref{asp:rse}, \ref{asp:alphaholder}, and \ref{asp:margin}. 
		Then it suffices to show that 
		\begin{align*}
			\inf_{\widehat{f}}\sup_{\mathcal{H}}\mathbb{E}_{X, Y}\left[\mathcal{R}_{\mathrm{P}_{\sigma}} (\widehat{f}) - \mathcal{R}_{\mathrm{P}_{\sigma}}^*  \right] \gtrsim \left(n_{\mathrm{P}} \left(e^{\varepsilon} - 1\right)^2 + n_{\mathrm{Q}}^{\frac{2 \alpha + 2 d}{2 \gamma\alpha + d}}\right)^{-\frac{(\beta+1)\alpha}{2\alpha + 2d}}. 
		\end{align*}
		Let $TV(\cdot, \cdot)$ and $H(\cdot, \cdot)$ denote the total variation distance and the Hellinger distance between measures, respectively. 
		Let $Ham(\cdot, \cdot)$ denote the hamming distance between ordered indices. 
		Let $\sigma$ and $\sigma'$ be two indices in $\{-1,1\}^{m}$ such that $Ham(\sigma, \sigma') = 1$, i.e. they differ at only one element $\sigma_k$ and $\sigma_k'$. 
		We compute the total variation between the induced $\mathrm{P}_{\sigma}$ and $\mathrm{P}_{\sigma'}$. 
		By Scheffe's theorem \citep{tsybakov2008introduction}, there holds
		\begin{align*}
			TV(\mathrm{P}_{\sigma}, \mathrm{P}_{\sigma'}) = \frac{1}{2} \int_{\mathcal{X}\times \mathcal{Y}} \left|d\mathrm{P}_{\sigma}(x,y)- d\mathrm{P}_{\sigma'}(x,y)\right|.
		\end{align*}
		Since the marginal distributions for both measures are identical, we have
		\begin{align*}
			TV(\mathrm{P}_{\sigma}, \mathrm{P}_{\sigma'}) =  & \int_{\mathcal{X}} \left|\eta_{\mathrm{P}, \sigma}(x) - \eta_{\mathrm{P}, \sigma'}(x)\right|d\mathrm{P}_{\sigma, X}(x)\\
			= & \int_{B(x_k, r) } C_{\alpha} r^{\alpha} \cdot w \cdot r^{-d} dx = C_{\alpha}wr^{\alpha}.
		\end{align*}
		According to \cite{cai2021transfer}, we have 
		\begin{align}\label{equ:orderofh2}
			H^2(\mathrm{Q}_{\sigma}, \mathrm{Q}_{\sigma'}) = C_{\gamma}^2C_{\alpha}^{2\gamma}wr^{2\alpha\gamma}.
		\end{align}
		Let conditional distribution $\mathrm{R} = \{\mathrm{R}_i\}^{n_{\mathrm{P}}}_{i = 1}$ be any transformation on $\mathcal{X}^{n_\mathrm{P}}\times \mathcal{Y}^{n_\mathrm{P}}$ that is sequentially-interactive $\varepsilon$-local differentially private as defined in Definition \ref{def:ldp}. 
		Denote $\mathrm{R}\mathrm{P}_{\sigma}^{n_\mathrm{P}}$ by the random variable mapped compositionally through $\mathrm{R}$ and $\mathrm{P}_{\sigma}^{n_\mathrm{P}}$.
		Therefore, for $Ham(\sigma,\sigma') = 1$, there holds
		\begin{align}\label{equ:decompositionofh2}
			H^2(\mathrm{R}\mathrm{P}_{\sigma}^{n_{\mathrm{P}}} \times \mathrm{Q}_{\sigma}^{n_{\mathrm{Q}}}, \mathrm{R}\mathrm{P}_{\sigma'}^{n_{\mathrm{P}}} \times \mathrm{Q}_{\sigma'}^{n_{\mathrm{Q}}}) \leq H^2(\mathrm{R}\mathrm{P}_{\sigma}^{n_{\mathrm{P}}}, \mathrm{R}\mathrm{P}_{\sigma'}^{n_{\mathrm{P}}}) + H^2( \mathrm{Q}_{\sigma}^{n_{\mathrm{Q}}} \times \mathrm{Q}_{\sigma'}^{n_{\mathrm{Q}}}). 
		\end{align}
		For the first term in \eqref{equ:decompositionofh2}, combining \citet[Lemma 2.4]{tsybakov2008introduction} and \citet[Corollary 3]{duchi2018minimax} yields
		\begin{align*}
			H^2(\mathrm{R}\mathrm{P}_{\sigma}^{n_{\mathrm{P}}}, \mathrm{R}\mathrm{P}_{\sigma'}^{n_{\mathrm{P}}}) \leq 4 (e^{\alpha}-1)^2 n_{\mathrm{P}} TV^2(\mathrm{P}_{\sigma}, \mathrm{P}_{\sigma'}) = 4 C_{\alpha}^2 (e^{\alpha}-1)^2 n_{\mathrm{P}} w^2r^{2\alpha}. 
		\end{align*}
		This together with \eqref{equ:lowerboundimportantvalue} lead to 
		\begin{align}\nonumber
			H^2(\mathrm{R}\mathrm{P}_{\sigma}^{n_{\mathrm{P}}}, \mathrm{R}\mathrm{P}_{\sigma'}^{n_{\mathrm{P}}}) \leq & 4 C_{\alpha}^2c_w^2 (e^{\varepsilon}-1)^2 n_{\mathrm{P}} r^{2\alpha + 2d}  \\
			= & 4 C_{\alpha}^2c_w^2 c_r^{2\alpha + 2 d}(e^{\varepsilon}-1)^2 n_{\mathrm{P}}  \left(n_{\mathrm{P}} \left(\varepsilon - 1\right)^2 + n_{\mathrm{Q}}^{\frac{2\alpha + 2d}{2 \gamma\alpha + d}}\right)^{-1}
			\leq  4 C_{\alpha}^2c_w^2 c_r^{2\alpha + 2 d} .\label{equ:lowerboundfirstpartupperbound}
		\end{align} 
		For the second term in \eqref{equ:decompositionofh2}, \eqref{equ:orderofh2} implies
		\begin{align*}
			H^2( \mathrm{Q}_{\sigma}^{n_{\mathrm{Q}}} \times \mathrm{Q}_{\sigma'}^{n_{\mathrm{Q}}}) \leq  n_{\mathrm{Q}} H^2( \mathrm{Q}_{\sigma}\times \mathrm{Q}_{\sigma'}) = C_{\gamma}^2C_{\alpha}^{2\gamma} n_{\mathrm{Q}} wr^{2\alpha\gamma}. 
		\end{align*}
		Again, bringing in \eqref{equ:lowerboundimportantvalue} leads to
		\begin{align}\nonumber
			H^2( \mathrm{Q}_{\sigma}^{n_{\mathrm{Q}}} \times \mathrm{Q}_{\sigma'}^{n_{\mathrm{Q}}}) \leq c_w C_{\gamma}^2C_{\alpha}^{2\gamma} n_{\mathrm{Q}} r^{2\alpha\gamma + d} = &   C_{\gamma}^2C_{\alpha}^{2\gamma} c_w n_{\mathrm{Q}}\left(n_{\mathrm{P}} \left(e^{\varepsilon} - 1\right)^2 + n_{\mathrm{Q}}^{\frac{2\alpha + 2d}{2 \gamma\alpha + d}}\right)^{-{\frac{2 \gamma\alpha + d}{2\alpha + 2d}}}\\ 
			\leq & C_{\gamma}^2C_{\alpha}^{2\gamma} c_w.  \label{equ:lowerboundsecondpartupperbound} 
		\end{align}
		Bringing \eqref{equ:lowerboundfirstpartupperbound}, \eqref{equ:lowerboundsecondpartupperbound}  into \eqref{equ:decompositionofh2}, we have 
		\begin{align*}
			H^2(\mathrm{R}\mathrm{P}_{\sigma}^{n_{\mathrm{P}}} \times \mathrm{Q}_{\sigma}^{n_{\mathrm{Q}}}, \mathrm{R}\mathrm{P}_{\sigma'}^{n_{\mathrm{P}}} \times \mathrm{Q}_{\sigma'}^{n_{\mathrm{Q}}}) \leq \frac{1}{4}
		\end{align*}
		as long as the constants are small enough such that $\max (4 C_{\alpha}^2c_w^2 c_r^{2\alpha + 2 d} , C_{\gamma}^2C_{\alpha}^{2\gamma} c_w) \leq 1/8$.
		Let 
		\begin{align*}
			\widehat{\sigma} = \underset{{\sigma} \in \{-1,1\}^m}{\arg\min} \mathcal{R}_{\mathrm{P}_{\sigma}}(\widehat{f}) - \mathcal{R}^*_{\mathrm{P}_{\sigma}}. 
		\end{align*}
		Apply \citet[Theorem 2.12]{tsybakov2008introduction}, there holds 
		\begin{align}\label{equ:applyassouadlemma}
			\inf _{\widehat{\sigma}} \max _{\sigma } E_{X,Y \sim \mathbf{P}_{\sigma}} Ham(\widehat{\sigma}, \sigma) \geq \frac{m}{2}(1-\frac{\sqrt{2}}{2}).
		\end{align}
		Next, we reduce the mini-max risk into this quantity. 
		Since 
		\begin{align*}
			\mathcal{R}_{\mathrm{P}_{\sigma}}({f}_{\widehat{\sigma}}^*) - \mathcal{R}^* \leq  \mathcal{R}_{\mathrm{P}_{\sigma}}(\widehat{f}) - \mathcal{R}^* + \mathcal{R}_{\mathrm{P}_{\widehat{\sigma}}}(\widehat{f}) - \mathcal{R}^*
			\leq  2 \left(\mathcal{R}_{\mathrm{P}_{\sigma}}(\widehat{f}) - \mathcal{R}^*\right),
		\end{align*}
		for any estimator $\widehat{f}$, we can reduce the excess risk as
		\begin{align*}
			\mathbb{E}_{X,Y}\left[\mathcal{R}_{\mathrm{P}_{\sigma}} (\widehat{f}) - \mathcal{R}_{\mathrm{P}_{\sigma}}^*  \right] 
			\geq & \frac{1}{2}   \mathbb{E}_{X,Y}\left[\mathcal{R}_{\mathrm{P}_{\sigma}} ({f}_{\widehat{\sigma}}^*) - \mathcal{R}_{\mathrm{P}_{\sigma}}^*  \right] \\
			= & \frac{1}{2} \mathbb{E}_{X,Y}\left[ \int_{\mathcal{X}} \left|\eta_{\sigma}(x)-\frac{1}{2}\right|\eins\left\{ f_{\sigma}^*(x) \neq f_{\widehat{\sigma}}^*(x)\right\}d\mathrm{P}_{\sigma,X}(x) \right]\\
			= & \frac{1}{4} \mathbb{E}_{X,Y}\left[\sum_{\sigma_k\neq \widehat{\sigma}_k} 
			\int_{B(x_k, r)} c_{\alpha} r^{\alpha } d\mathrm{P}_{\sigma,X}(x)\right] = \frac{c_{\alpha} w r^{\alpha}}{4} \mathbb{E}_{X,Y}\left[Ham(\sigma, \widehat{\sigma})\right].
		\end{align*}
		This together with \eqref{equ:applyassouadlemma} yield 
		\begin{align*}
			\inf _{\widehat{f}} \max _{\mathcal{H} } \mathbb{E}_{X,Y}\left[\mathcal{R}_{\mathrm{P}_{\sigma}} (\widehat{f}) - \mathcal{R}_{\mathrm{P}_{\sigma}}^*  \right] \geq &\;\; \frac{1}{8} (1-\frac{\sqrt{2}}{2}) c_{\alpha} w r^{\alpha} m \\
			\gtrsim & \;\; r^{(\beta+1)\alpha} = \left(n_{\mathrm{P}} \left(e^{\varepsilon} - 1\right)^2 + n_{\mathrm{Q}}^{\frac{2 \alpha + 2 d}{2 \gamma\alpha + d}}\right)^{-\frac{(\beta+1)\alpha}{2\alpha + 2d}}. 
		\end{align*}

	\end{proof}
	
	\subsection{Proof of Theorem \ref{thm:utility}}\label{sec:proofoftheoremutility}

	To prove Theorem \ref{thm:utility}, we rely on the following decomposition.
	\begin{align}
		\tilde{\eta}^{\pi}_{\mathrm{P}, \mathrm{Q}}(x) - \frac{1}{2} 
		= & \underbrace{\tilde{\eta}^{\pi}_{\mathrm{P}, \mathrm{Q}}(x) - \widehat{\eta}^{\pi}_{\mathrm{P}, \mathrm{Q}}(x)}_{\textbf{Privatized Error}} 
		+ \underbrace{\widehat{\eta}^{\pi}_{\mathrm{P}, \mathrm{Q}}(x) - \overline{\eta}^{\pi}_{\mathrm{P}, \mathrm{Q}}
			(x)}_{\textbf{Sample Error}}
		+ \underbrace{\overline{\eta}^{\pi}_{\mathrm{P}, \mathrm{Q}}(x) -\frac{1}{2}}_{\textbf{Signal Strength}} \label{equ:decompositionofexcessrisk}
	\end{align}
	where we define 
	\begin{align}\label{equ:defetahatpi}
		\widehat{\eta}^{\pi}_{\mathrm{P}, \mathrm{Q}}(x) = 
		\sum_{j\in\mathcal{I}_p}\eins\{x\in A^j_{(p)}\}\frac{
			\sum_{i=1}^{n_{\mathrm{P}}} {V}^{\mathrm{P} j }_i + \lambda \sum_{i=1}^{n_{\mathrm{Q}}}{V}^{\mathrm{Q} j}_i
		}
		{ 
			\sum_{i=1}^{n_{\mathrm{P}}} {U}^{\mathrm{P} j }_i +  \lambda\sum_{i=1}^{n_{\mathrm{Q}}}{U}^{\mathrm{Q} j}_i
		}
	\end{align}
	\begin{align}\label{equ:defetaoverlinepi}
		\overline{\eta}^{\pi}_{\mathrm{P}, \mathrm{Q}}(x) = 
		\sum_{j\in\mathcal{I}_p}\eins\{x\in A^j_{(p)}\}\frac{
			n_{\mathrm{P}} \int_{A^j_{(p)}} \eta^*_{\mathrm{P}}(x')d\mathrm{P}_X(x') + \lambda n_{\mathrm{Q}} \int_{A^j_{(p)}} \eta^*_{\mathrm{Q}}(x')d\mathrm{Q}_X(x')
		}
		{ 
			n_{\mathrm{P}} \int_{A^j_{(p)}} d\mathrm{P}_X(x') + \lambda n_{\mathrm{Q}} \int_{A^j_{(p)}} d\mathrm{Q}_X(x')
		}.
	\end{align}
	Correspondingly, we let $\widehat{f}^{\pi}_{\mathrm{P}, \mathrm{Q}}(x) = \eins\left(  \widehat{\eta}^{\pi}_{\mathrm{P}, \mathrm{Q}}(x) > \frac{1}{2}\right)$ and $\overline{f}^{\pi}_{\mathrm{P}, \mathrm{Q}}(x) = \eins\left(  \overline{\eta}^{\pi}_{\mathrm{P}, \mathrm{Q}}(x) > \frac{1}{2}\right)$.
	
	Loosely speaking, the first error term quantifies the degradation brought by adding noises to the estimator, which we call the privatized error. The second term corresponds to the expected estimation error brought by the randomness of the data, which we call the sample error. 
	The last term is called signal strength, which quantifies the uncertainty of the population regression function.
	In general, we want the absolute value of the third term to be large, (i.e. the population regression function is sure about the label) while the first and the second terms to be small (i.e. the noise is small). 
	In the proof of Theorem \ref{thm:utility}, we specify a region $\Omega_n$ containing points with strong signal strength.
	We show that the first two terms are dominated by the signal strength on $\Omega_n$. 
	The following three lemmas provide bounds for each of the three terms. 
	
	\begin{lemma}[\textbf{Bounding of Privatized Error}]\label{lem:boundingofprivatisederror}
		Let $\pi$ be constructed through the max-edge partition in Section \ref{sec:partition} with depth $p$.
		Let $\tilde{\eta}^{\pi}_{\mathrm{P},\mathrm{Q}}$ be the locally private two-sample tree classifier in \eqref{equ:twosampleeta}. 
		Let $\widehat{\eta}^{\pi}_{\mathrm{P},\mathrm{Q}}$ be defined in \eqref{equ:defetahatpi}. 
		Then for $n_{\mathrm{P}}^{-\frac{\alpha}{2\alpha + 2d}}\lesssim \varepsilon \lesssim n_{\mathrm{P}}^{\frac{d}{4\alpha + 2d}}$,  under the condition of Theorem \ref{thm:utility}, there holds
		\begin{align*}
			\| \tilde{\eta}^{\pi}_{\mathrm{P},\mathrm{Q}} - \widehat{\eta}^{\pi}_{\mathrm{P},\mathrm{Q}} \|_{\infty}
			\lesssim \frac{ 2^p \cdot \sqrt{n_{\mathrm{P}} \log (n_{\mathrm{P}} +  n_{\mathrm{Q}})}}{\varepsilon (n_{\mathrm{P}} + \lambda n_{\mathrm{Q}})
			}
		\end{align*}
		with probability $\mathrm{P}^{n_{\mathrm{P}}}\otimes \mathrm{Q}^{n_{\mathrm{Q}}}\otimes\mathrm{R}$ at least $1-3/(n_{\mathrm{P}} + n_{\mathrm{Q}})^2$.
	\end{lemma}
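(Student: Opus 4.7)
The plan is to bound the discrepancy between the privatized estimator and its non-privatized counterpart cell by cell, then union-bound over the $2^p$ leaves. Writing $A_j = \sum_{i=1}^{n_{\mathrm{P}}} V_i^{\mathrm{P} j} + \lambda \sum_{i=1}^{n_{\mathrm{Q}}} V_i^{\mathrm{Q} j}$, $B_j = \sum_{i=1}^{n_{\mathrm{P}}} U_i^{\mathrm{P} j} + \lambda \sum_{i=1}^{n_{\mathrm{Q}}} U_i^{\mathrm{Q} j}$, $X_j = \tfrac{4}{\varepsilon}\sum_{i=1}^{n_{\mathrm{P}}} \xi_i^j$, and $Y_j = \tfrac{4}{\varepsilon}\sum_{i=1}^{n_{\mathrm{P}}} \zeta_i^j$, the elementary identity
\begin{align*}
\frac{A_j + X_j}{B_j + Y_j} - \frac{A_j}{B_j} \;=\; \frac{X_j}{B_j + Y_j} \;-\; \frac{A_j}{B_j}\cdot\frac{Y_j}{B_j + Y_j}
\end{align*}
reduces the problem to (i) concentration of the Laplace sums $X_j$ and $Y_j$, and (ii) a uniform lower bound on $|B_j + Y_j|$. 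Since $|A_j/B_j|\leq 1$ by construction, once these two pieces are in hand the bound $(|X_j|+|Y_j|)/|B_j + Y_j|$ will yield the claim.

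For step (i), the variables $\xi_i^j, \zeta_i^j$ are i.i.d.\ standard Laplace, which are sub-exponential with parameter $O(1)$. A Bernstein inequality therefore gives $|X_j|\vee|Y_j| \lesssim \varepsilon^{-1}\sqrt{n_{\mathrm{P}}\log(n_{\mathrm{P}}+n_{\mathrm{Q}})}$ for each fixed cell with failure probability $\lesssim (n_{\mathrm{P}}+n_{\mathrm{Q}})^{-4}$; a union bound over the at most $2^p \lesssim n_{\mathrm{P}}+n_{\mathrm{Q}}$ cells then controls $\max_j(|X_j|\vee |Y_j|)$ up to the same rate and a failure probability of order $(n_{\mathrm{P}}+n_{\mathrm{Q}})^{-2}$.

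For step (ii), I would write $B_j = \mathbb{E}[B_j] + (B_j - \mathbb{E}[B_j])$ where $\mathbb{E}[B_j] = n_{\mathrm{P}}\mathrm{P}_X(A^j_{(p)}) + \lambda n_{\mathrm{Q}}\mathrm{Q}_X(A^j_{(p)}) \gtrsim (n_{\mathrm{P}}+\lambda n_{\mathrm{Q}})\,2^{-p}$ by Assumption \ref{asp:alphaholder} (bounded marginal density) and Lemma \ref{lem::treeproperty} (each leaf has volume of order $2^{-p}$). A Bernstein bound on the binomial-like sums $\sum_i U_i^{\mathrm{P} j}$ and $\sum_i U_i^{\mathrm{Q} j}$ shows $|B_j - \mathbb{E}[B_j]|$ is a lower-order fluctuation of $\mathbb{E}[B_j]$. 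The restriction $n_{\mathrm{P}}^{-\alpha/(2\alpha+2d)}\lesssim \varepsilon\lesssim n_{\mathrm{P}}^{d/(4\alpha+2d)}$, combined with the prescribed orders $2^p\asymp \delta^{-1}$ and $\lambda\asymp \delta^{(\gamma\alpha-\alpha)\wedge(2\gamma\alpha-d)/d}$, is exactly what guarantees that $|Y_j|$ is also of smaller order than $\mathbb{E}[B_j]$, so that $|B_j + Y_j| \gtrsim (n_{\mathrm{P}}+\lambda n_{\mathrm{Q}})\,2^{-p}$ holds uniformly in $j$ with high probability.

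Putting the pieces together yields
\begin{align*}
\bigl|\tilde\eta^{\pi}_{\mathrm{P},\mathrm{Q}}(x) - \widehat\eta^{\pi}_{\mathrm{P},\mathrm{Q}}(x)\bigr| \;\lesssim\; \frac{\max_j(|X_j|\vee|Y_j|)}{\min_j|B_j + Y_j|} \;\lesssim\; \frac{2^{p}\sqrt{n_{\mathrm{P}}\log(n_{\mathrm{P}}+n_{\mathrm{Q}})}}{\varepsilon(n_{\mathrm{P}}+\lambda n_{\mathrm{Q}})},
\end{align*}
uniformly in $x$, which is the desired inequality. The main obstacle will be verifying the uniform lower bound on $|B_j+Y_j|$: the Laplace noise $Y_j$ has standard deviation $\varepsilon^{-1}\sqrt{n_{\mathrm{P}}}$, which can be comparable to $\mathbb{E}[B_j]\asymp (n_{\mathrm{P}}+\lambda n_{\mathrm{Q}})2^{-p}$, so one has to check carefully that the admissible range of $\varepsilon$ together with the chosen $\lambda$ and $p$ keeps the former strictly smaller; this is where the lower endpoint $\varepsilon\gtrsim n_{\mathrm{P}}^{-\alpha/(2\alpha+2d)}$ is used, and where the two regimes in the formula for $\lambda$ (corresponding to $\gamma\geq d/\alpha-1$ versus $\gamma < d/\alpha-1$) enter the analysis.
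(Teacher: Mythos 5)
Your proposal is correct and follows essentially the same route as the paper: an elementary algebraic decomposition isolating the two Laplace noise contributions, sub-exponential (Bernstein) concentration of the noise sums with a union bound over the $2^p$ cells, and a uniform lower bound $|B_j+Y_j|\gtrsim (n_{\mathrm{P}}+\lambda n_{\mathrm{Q}})2^{-p}$ obtained by combining the expected cell mass with the parameter conditions on $\varepsilon$, $\lambda$, and $p$ (the paper packages this comparison in Lemma \ref{lem:boudnofnlamdasq}). The only detail worth noting is that the uniform control of $|B_j-\mathbb{E}[B_j]|$ over the data-dependent partition requires the covering-number argument of Lemma \ref{lem:cellbound} rather than a plain union bound over fixed cells, but this is exactly the tool the paper supplies.
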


	\begin{lemma}[\textbf{Bounding of Sample Error}]\label{lem:boundingofsampleerror}
		Let $\pi$ be constructed through the max-edge partition in Section \ref{sec:partition} with depth $p$.
		Let $\widehat{\eta}^{\pi}_{\mathrm{P},\mathrm{Q}}$ and $\overline{\eta}^{\pi}_{\mathrm{P},\mathrm{Q}}$ be defined in \eqref{equ:defetahatpi} and \eqref{equ:defetaoverlinepi}. 
		Then for $n_{\mathrm{P}}^{-\frac{\alpha}{2\alpha + 2d}}\lesssim \varepsilon \lesssim n_{\mathrm{P}}^{\frac{\alpha}{4\alpha + 2d}}$,  under the condition of Theorem \ref{thm:utility}, there holds
		\begin{align*}
			\|\widehat{\eta}^{\pi}_{\mathrm{P},\mathrm{Q}} - \overline{\eta}^{\pi}_{\mathrm{P},\mathrm{Q}}\|_{\infty}  \lesssim \sqrt{\frac{2^{p} (n_{\mathrm{P}} + \lambda^2 n_{\mathrm{Q}}) \cdot \log (n_{\mathrm{P}} +  n_{\mathrm{Q}}) }{ (n_{\mathrm{P}} + \lambda n_{\mathrm{Q}})^2 }}
		\end{align*}
		with probability $\mathrm{P}^{n_{\mathrm{P}}}\otimes \mathrm{Q}^{n_{\mathrm{Q}}}\otimes\mathrm{R}$ at least $1-3/(n_{\mathrm{P}} + n_{\mathrm{Q}})^2$.
	\end{lemma}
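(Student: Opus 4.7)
The plan is to fix an arbitrary leaf cell, decompose the estimation error into a numerator deviation and a denominator deviation divided by the sample denominator, control each deviation by Bernstein's inequality, and then take a union bound over the $2^p$ leaves. To lighten notation write, for a cell $A^j_{(p)}$,
\begin{align*}
N_s^j &= \sum_{i=1}^{n_{\mathrm{P}}} V_i^{\mathrm{P}j} + \lambda \sum_{i=1}^{n_{\mathrm{Q}}} V_i^{\mathrm{Q}j}, &
D_s^j &= \sum_{i=1}^{n_{\mathrm{P}}} U_i^{\mathrm{P}j} + \lambda \sum_{i=1}^{n_{\mathrm{Q}}} U_i^{\mathrm{Q}j}, \\
N_p^j &= n_{\mathrm{P}}\!\!\int_{A^j_{(p)}}\!\!\eta^*_{\mathrm{P}}\,d\mathrm{P}_X + \lambda n_{\mathrm{Q}}\!\!\int_{A^j_{(p)}}\!\!\eta^*_{\mathrm{Q}}\,d\mathrm{Q}_X, &
D_p^j &= n_{\mathrm{P}} \mathrm{P}_X(A^j_{(p)}) + \lambda n_{\mathrm{Q}} \mathrm{Q}_X(A^j_{(p)}),
\end{align*}
so that $\widehat{\eta}^{\pi}_{\mathrm{P},\mathrm{Q}}(x) = N_s^j / D_s^j$ and $\overline{\eta}^{\pi}_{\mathrm{P},\mathrm{Q}}(x) = N_p^j / D_p^j$ for $x \in A^j_{(p)}$. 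The basic algebraic identity
\[
\frac{N_s^j}{D_s^j} - \frac{N_p^j}{D_p^j} = \frac{N_s^j - N_p^j}{D_s^j} - \frac{N_p^j}{D_p^j} \cdot \frac{D_s^j - D_p^j}{D_s^j},
\]
together with $|N_p^j / D_p^j| \leq 1$, reduces the problem to controlling $|N_s^j - N_p^j|$ and $|D_s^j - D_p^j|$ and showing that $D_s^j \gtrsim D_p^j$.

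The max-edge rule halves the cell volume at every split, so $\mathrm{vol}(A^j_{(p)}) = 2^{-p}$, and the bounded density part of Assumption \ref{asp:boundedmarginal} (together with $\mathrm{P}_X = \mathrm{Q}_X$ from posterior drift) gives $\mathrm{P}_X(A^j_{(p)}) = \mathrm{Q}_X(A^j_{(p)}) \asymp 2^{-p}$; hence $D_p^j \asymp (n_{\mathrm{P}} + \lambda n_{\mathrm{Q}}) 2^{-p}$. Now $N_s^j$ is a sum of independent bounded random variables: the $\mathrm{P}$-summands $V_i^{\mathrm{P}j} \in \{0,1\}$ have variance at most $\mathrm{P}_X(A^j_{(p)}) \lesssim 2^{-p}$, and each $\lambda V_i^{\mathrm{Q}j}$ has range $\lambda$ and variance $\lesssim \lambda^2 2^{-p}$. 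Bernstein's inequality with $\delta = 1/\bigl(2^p(n_{\mathrm{P}}+n_{\mathrm{Q}})^2\bigr)$ then yields, for any fixed $j$, with probability at least $1 - \delta$,
\[
|N_s^j - N_p^j| \lesssim \sqrt{(n_{\mathrm{P}} + \lambda^2 n_{\mathrm{Q}})\, 2^{-p}\, \log\!\bigl((n_{\mathrm{P}}+n_{\mathrm{Q}})\bigr)} + (1 \vee \lambda)\log\!\bigl((n_{\mathrm{P}}+n_{\mathrm{Q}})\bigr),
\]
and the identical argument applies to $|D_s^j - D_p^j|$ (setting $Y=1$).

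Because the range term in Bernstein scales like $(1 \vee \lambda)\log n$ while the standard-deviation term scales like $\sqrt{(n_{\mathrm{P}} + \lambda^2 n_{\mathrm{Q}})2^{-p}\log n}$, the former is dominated by the latter precisely when $(n_{\mathrm{P}} + \lambda^2 n_{\mathrm{Q}})2^{-p} \gtrsim (1 \vee \lambda)^2\log n$; this is exactly what is ensured by the choice $2^p \asymp \delta^{-1}$ together with the stated range of $\varepsilon$ and $\lambda$ in Theorem \ref{thm:utility}. The same condition makes $|D_s^j - D_p^j| \leq D_p^j/2$, so $D_s^j \gtrsim (n_{\mathrm{P}} + \lambda n_{\mathrm{Q}}) 2^{-p}$ (in particular $D_s^j > 0$, so the $0/0$ convention in \eqref{equ:reformetapi} is not triggered). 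Substituting into the decomposition yields, on this event,
\[
\left|\frac{N_s^j}{D_s^j} - \frac{N_p^j}{D_p^j}\right| \lesssim \frac{|N_s^j - N_p^j| + |D_s^j - D_p^j|}{D_p^j} \lesssim \sqrt{\frac{2^{p}(n_{\mathrm{P}} + \lambda^2 n_{\mathrm{Q}})\log(n_{\mathrm{P}}+n_{\mathrm{Q}})}{(n_{\mathrm{P}} + \lambda n_{\mathrm{Q}})^2}}.
\]
A union bound over the $|\mathcal{I}_p| = 2^p$ leaves (together with $2^p \lesssim n_{\mathrm{P}}+n_{\mathrm{Q}}$ so that the extra $\log 2^p$ merges into the $\log(n_{\mathrm{P}}+n_{\mathrm{Q}})$ factor) upgrades this to a uniform bound and gives the advertised $3/(n_{\mathrm{P}}+n_{\mathrm{Q}})^2$ failure probability. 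The main technical delicacy is not the concentration itself but rather the bookkeeping that ensures both (i) the Bernstein standard-deviation term dominates the range term under the parameter regime of Theorem \ref{thm:utility}, and (ii) the sample denominator stays within a constant factor of its mean in every cell simultaneously, so that the quotient remains well controlled and finite.
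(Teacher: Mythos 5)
Your decomposition is exactly the one the paper uses (numerator deviation over the sample denominator, plus the population ratio times the denominator deviation over the sample denominator), your variance/range bookkeeping for Bernstein matches the paper's, and your observation that the variance term dominates the range term under the parameter regime of Theorem \ref{thm:utility} is precisely what the paper verifies via Lemma \ref{lem:boudnofnlamdasq}. However, there is one genuine gap: the union bound over the $2^p$ realized leaves is not valid for the $\mathrm{Q}$-data contributions. The partition $\pi$ is constructed \emph{from} $D^{\mathrm{Q}}$ (the score in \eqref{equ:varreduction} is computed on the public data), so each cell $A^j_{(p)}$ is a random set depending on the very samples $X_i^{\mathrm{Q}}$ whose indicators you want to concentrate. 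You cannot treat $\eins\{X_i^{\mathrm{Q}}\in A^j_{(p)}\}-\mathrm{Q}_X(A^j_{(p)})$ as a sum of independent mean-zero variables and apply Bernstein to it cell by cell; the summands and the cell are dependent, and a union bound over the realized (random) cells does not repair this. For the $\mathrm{P}$-data terms your argument is fine after conditioning on $D^{\mathrm{Q}}$, since the partition is then fixed, but the $\mathrm{Q}$-terms need more.

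The paper's fix, which you would need to import, is to prove concentration \emph{uniformly over the entire class of axis-aligned rectangles}: Lemma \ref{VCIndex} bounds the $L_1$ covering number of this class by $C(2d+1)(4e)^{2d+1}\varepsilon^{-2d}$, Bernstein is applied to each element of an $\varepsilon$-net with $\varepsilon = 1/(n_{\mathrm{P}}+n_{\mathrm{Q}})$, and the union bound is taken over the net rather than over the data-dependent leaves; the approximation error $4\varepsilon$ from passing between a leaf and its nearest net element is absorbed into the bound. This is the content of Lemmas \ref{lem:cellbound} and \ref{lem:cellboundY}, and it costs only the $\log(n_{\mathrm{P}}+n_{\mathrm{Q}})$ factor you already carry. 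With that substitution (and your remaining steps, which are correct), the argument goes through.
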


	\begin{lemma}[\textbf{Bounding of Signal Strength}]\label{lem:boundingofapproximationerror}
		Let $\pi$ be constructed through the max-edge partition in Section \ref{sec:partition} with depth $p$. 
		Define the region 
		\begin{align}\label{equ:defofregiondelta}
			\Delta_n := \left\{x \in \mathcal{X} \bigg|  \left|\eta_{\mathrm{P}}^*(x) - {1}/{2}\right| \geq c_{\Delta}\cdot 2^{-p \alpha / d }\right\}
		\end{align}
		for some constant $c_{\Delta}$ large enough. 
		Let $\overline{\eta}^{\pi}_{\mathrm{P},\mathrm{Q}}$ be defined in \eqref{equ:defetaoverlinepi}. 
		Then under the condition of Theorem \ref{thm:utility}, there holds
		\begin{align*}
			\left(\overline{\eta}^{\pi}_{\mathrm{P},\mathrm{Q}}(x) -\frac{1}{2}\right)\cdot \left({\eta}^{*}_{\mathrm{P}}(x) -\frac{1}{2}\right)> 0 ,\;\;\;\; \text{and} \;\;\;\; \left|\overline{\eta}^{\pi}_{\mathrm{P},\mathrm{Q}}(x) -\frac{1}{2}\right|\gtrsim  \frac{n_{\mathrm{P}} 2^{-p\alpha/d} +\lambda n_{\mathrm{Q}} 2^{-p\gamma\alpha/d} }{n_{\mathrm{P}} 
				+ \lambda n_{\mathrm{Q}} }
		\end{align*} for all $x \in \Delta_n$.
		
	\end{lemma}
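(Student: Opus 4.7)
\textbf{Proof proposal for Lemma \ref{lem:boundingofapproximationerror}.}
The plan is to reduce the claim to a pointwise argument on each cell $A_{(p)}^j$ of the partition, exploiting H\"older continuity of $\eta_\mathrm{P}^*$ together with Assumption \ref{asp:rse} to transfer the lower bound on $|\eta_\mathrm{P}^*(x) - 1/2|$ to the corresponding integrals that form $\overline{\eta}_{\mathrm{P},\mathrm{Q}}^\pi$. Fix $x \in \Delta_n$ and, without loss of generality, assume $\eta_\mathrm{P}^*(x) - 1/2 \geq c_\Delta 2^{-p\alpha/d}$ (the opposite sign is symmetric). Let $A_{(p)}^j$ denote the unique cell containing $x$.

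First I would invoke the geometric properties of the max-edge partition (the analog of the cited Lemma \ref{lem::treeproperty}) to conclude that $\mathrm{diam}(A_{(p)}^j) \lesssim 2^{-p/d}$ and $\mathrm{vol}(A_{(p)}^j) \asymp 2^{-p}$; combined with the bounded density $\underline c \le d\mathrm{P}_X \le \overline c$ and the identity $\mathrm{P}_X=\mathrm{Q}_X$, this yields $\mathrm{P}_X(A_{(p)}^j) = \mathrm{Q}_X(A_{(p)}^j) \asymp 2^{-p}$. Next, H\"older continuity (Assumption \ref{asp:alphaholder}) gives $|\eta_\mathrm{P}^*(x') - \eta_\mathrm{P}^*(x)| \le c_L \mathrm{diam}(A_{(p)}^j)^\alpha \lesssim 2^{-p\alpha/d}$ uniformly for $x' \in A_{(p)}^j$, so by taking $c_\Delta$ large enough relative to $c_L$ and the diameter constant, one guarantees
\begin{align*}
\eta_\mathrm{P}^*(x') - \tfrac12 \;\geq\; \tfrac{c_\Delta}{2}\, 2^{-p\alpha/d} \qquad \text{for every } x' \in A_{(p)}^j.
\end{align*}
In particular $\eta_\mathrm{P}^*(x') > 1/2$ throughout $A_{(p)}^j$, and Assumption \ref{asp:rse} then forces $\eta_\mathrm{Q}^*(x') - 1/2 \ge C_\gamma (\eta_\mathrm{P}^*(x') - 1/2)^\gamma \gtrsim 2^{-p\gamma\alpha/d}$ on the same cell.

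Integrating these pointwise inequalities against $d\mathrm{P}_X$ and $d\mathrm{Q}_X$ and plugging into the definition \eqref{equ:defetaoverlinepi}, the numerator of $\overline{\eta}_{\mathrm{P},\mathrm{Q}}^\pi(x) - 1/2$ becomes
\begin{align*}
n_\mathrm{P} \!\!\int_{A_{(p)}^j}\!\!\!\bigl(\eta_\mathrm{P}^*(x')-\tfrac12\bigr)d\mathrm{P}_X + \lambda n_\mathrm{Q}\!\!\int_{A_{(p)}^j}\!\!\!\bigl(\eta_\mathrm{Q}^*(x')-\tfrac12\bigr)d\mathrm{Q}_X \;\gtrsim\; \bigl(n_\mathrm{P} 2^{-p\alpha/d} + \lambda n_\mathrm{Q} 2^{-p\gamma\alpha/d}\bigr)\cdot 2^{-p},
\end{align*}
while the denominator is $(n_\mathrm{P} + \lambda n_\mathrm{Q})\,\mathrm{P}_X(A_{(p)}^j) \asymp (n_\mathrm{P} + \lambda n_\mathrm{Q})\, 2^{-p}$. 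Both parts of the claim follow at once: the numerator is strictly positive and of the same sign as $\eta_\mathrm{P}^*(x) - 1/2$, and the ratio yields the required magnitude bound.

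The main obstacle is cell regularity: the argument above quietly uses that the max-edge partition produces cells whose diameter scales like $2^{-p/d}$ (not $2^{-p}$) and whose volumes are comparable across the partition, so that both the H\"older approximation step and the density lower bound on $\mathrm{P}_X(A_{(p)}^j)$ are valid. This is a deterministic property of the splitting rule (independent of the public data because the max-edge rule cycles through longest edges), and the calibration of $c_\Delta$ must be made in terms of the resulting diameter constant; beyond that, the rest is routine.
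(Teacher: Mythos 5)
Your proposal is correct and follows essentially the same route as the paper's proof: Hölder continuity plus the max-edge diameter bound (the paper's Lemma \ref{lem::treeproperty}) give a uniform lower bound $\eta_{\mathrm{P}}^*(x')-\tfrac12\gtrsim 2^{-p\alpha/d}$ on the whole cell for $c_\Delta$ large enough, Assumption \ref{asp:rse} transfers this to $\eta_{\mathrm{Q}}^*$, and integrating against the bounded, identical marginals with $\mathrm{P}_X(A_{(p)}^j)\asymp 2^{-p}$ yields both the sign and the magnitude claims. The cell-regularity point you flag is handled exactly as you suggest — the midpoint-halving rule makes diameters and volumes deterministic regardless of the score-based choice among longest edges — so there is no gap.
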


	We also need the following technical lemma.

	\begin{lemma}\label{lem:boudnofnlamdasq}
		Suppose $n_{\mathrm{P}}^{-\frac{\alpha}{2\alpha + 2d}}\lesssim \varepsilon \lesssim n_{\mathrm{P}}^{\frac{\alpha}{4\alpha + 2d}}$. 
		If we let
		\begin{align*}
			\lambda \asymp \delta^{\frac{(\gamma - 1)\alpha}{d} + (\frac{(\gamma + 1)\alpha}{d} -1)\wedge 0} \asymp \left( \frac{n_{\mathrm{P}} \varepsilon^2}{\log (n_{\mathrm{P}} + n_{\mathrm{Q}})} + \left(\frac{n_{\mathrm{Q}}}{\log (n_{\mathrm{P}} + n_{\mathrm{Q}})}\right)^{\frac{2\alpha + 2d}{2 \gamma\alpha + d}}\right)^{-\frac{(\gamma\alpha - \alpha )\wedge (2\gamma \alpha - d)}{2\alpha + 2 d}},
		\end{align*}
		then the following statements hold:
		\begin{align*}
			\frac{\varepsilon^2 (n_{\mathrm{P}} + \lambda^2 n_{\mathrm{Q}})}{\log (n_{\mathrm{P}} +  n_{\mathrm{Q}})}\gtrsim \delta^{-\frac{ 2\alpha + 2d }{d }} , \;\;\; \frac{\left(n_{\mathrm{P}} + \lambda^2 n_{\mathrm{Q}}\right)\log (n_{\mathrm{P}} +  n_{\mathrm{Q}}) }{(n_{\mathrm{P}} + \lambda n_{\mathrm{Q}})^2} \lesssim \delta, \;\;\; \frac{\sqrt{n_{\mathrm{P}} \varepsilon^{-2} \log (n_{\mathrm{P}} +  n_{\mathrm{Q}})} }{(n_{\mathrm{P}} + \lambda n_{\mathrm{Q}})} \lesssim \delta.
		\end{align*}
	\end{lemma}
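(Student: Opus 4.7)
The plan is to verify each of the three order estimates by substituting the explicit formula for $\lambda$ and reducing each claim to an exponent inequality. I abbreviate $L := \log(n_{\mathrm{P}}+n_{\mathrm{Q}})$, $A := n_{\mathrm{P}}\varepsilon^2/L$, and $B := (n_{\mathrm{Q}}/L)^{(2\alpha+2d)/(2\gamma\alpha+d)}$, so that $\delta^{-(2\alpha+2d)/d} \asymp A+B$ and $\delta \asymp (A+B)^{-d/(2\alpha+2d)}$. The min in the exponent of $\lambda$ splits the analysis into two regimes: Regime I when $(\gamma+1)\alpha \geq d$, yielding $\lambda \asymp \delta^{(\gamma-1)\alpha/d} \asymp (A+B)^{-(\gamma-1)\alpha/(2\alpha+2d)}$; and Regime II when $(\gamma+1)\alpha < d$, yielding $\lambda \asymp \delta^{(2\gamma\alpha-d)/d} \asymp (A+B)^{(d-2\gamma\alpha)/(2\alpha+2d)}$. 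I would first check that these two piecewise expressions agree with both displayed formulas in the statement, and that they match at the boundary $(\gamma+1)\alpha = d$.

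Within each regime, I would further subdivide according to which term dominates $\delta$ (i.e., $A \gtrsim B$ vs.\ $B \gtrsim A$) and, for the second and third inequalities, according to whether $n_{\mathrm{P}}$ or $\lambda n_{\mathrm{Q}}$ dominates the denominator $n_{\mathrm{P}}+\lambda n_{\mathrm{Q}}$. In each resulting sub-case, the quantities $\lambda$, $\delta$, $A$, and $B$ are explicit monomials in $n_{\mathrm{P}}, n_{\mathrm{Q}}, \varepsilon, L$, so each claim reduces to elementary exponent arithmetic. For the first inequality, since $\varepsilon^2 n_{\mathrm{P}}/L = A$, it suffices to verify $\varepsilon^2\lambda^2 n_{\mathrm{Q}}/L \gtrsim B$ in the sub-case $B \gtrsim A$, and use the $n_{\mathrm{P}}$ term alone otherwise. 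For the second, I use the elementary bound $n_{\mathrm{P}}+\lambda^2 n_{\mathrm{Q}} \lesssim (n_{\mathrm{P}}+\lambda n_{\mathrm{Q}})(1 \vee \lambda)$ to separate the numerator, reducing the ratio to a monomial in the dominant denominator term. For the third, the upper hypothesis $\varepsilon \lesssim n_{\mathrm{P}}^{\alpha/(4\alpha+2d)}$ is needed to absorb the $\varepsilon^{-2}$ factor in the numerator, while the lower hypothesis $\varepsilon \gtrsim n_{\mathrm{P}}^{-\alpha/(2\alpha+2d)}$ pins down the relative scaling of $A$.

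The main obstacle I anticipate is organizational: the two-regime-by-multiple-sub-case analysis across three inequalities produces a sizable case matrix, and the hypotheses on $\varepsilon$ must be invoked in just the right sub-cases, particularly for inequality~3. A cleaner backup path is to observe that once a sub-case is fixed, each claim becomes a linear inequality in the exponents of $n_{\mathrm{P}}, n_{\mathrm{Q}}, \varepsilon, L$ and in $\alpha, \gamma, d$; it therefore suffices to verify each inequality at the finitely many critical boundaries $\gamma = (d-\alpha)/\alpha$ (regime change for $\lambda$), $A \asymp B$ (regime change for $\delta$), and $n_{\mathrm{P}} \asymp \lambda n_{\mathrm{Q}}$ (regime change for the denominator), with continuity and monotonicity in the exponents propagating the estimate to the rest of the parameter region.
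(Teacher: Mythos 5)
Your plan coincides with the paper's proof: the paper's Cases (i)--(iv) are exactly your $2\times 2$ grid of ``which regime of the min defining $\lambda$'' crossed with ``whether $A$ or $B$ dominates $\delta$,'' after which everything is reduced (by writing $n_{\mathrm{Q}} = n_{\mathrm{P}}^a$) to the same monomial exponent comparisons, with the hypotheses on $\varepsilon$ invoked where you indicate. The approach is correct and essentially identical to the paper's.
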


	All proofs of the above lemmas are provided in Section \ref{sec:proofoferroranalysis}.

	\begin{proof}\textbf{of Theorem  \ref{thm:utility}}\;\;
		Remind the definition in \eqref{equ:defofregiondelta}. 
		Theorem \ref{thm:utility} takes $2^{p}\asymp \delta^{-1}$ and $\lambda \asymp \delta^{(\gamma - 1) \alpha / d }$. 
		We prove the theorem by separately considering the region
		\begin{align*}
			\Omega_n := \left\{x \in \mathcal{X} \bigg|  \left|\eta_{\mathrm{P}}^*(x) - {1}/{2}\right| \geq c_{\Omega}\cdot \delta^{ \alpha / d }\right\}
		\end{align*}
		for constant $ c_{\Omega}$  such that $\Omega_n \subset \Delta_n$. 
		In this case, suppose we have $2^{-p}=  
		C_p\delta$.
		Then it suffices to take $c_{\Omega} = 2 C_p^{\alpha/d} \cdot c_{\Delta} $. 
		
		For $x\in \Omega_n$, we show that $\tilde{f}^{\pi}_{\mathrm{P},\mathrm{Q}}(x) = f^*_{\mathrm{P}}(x)$. 
		Without loss of generality, let $f^*_{\mathrm{P}}(x) = 1$. 
		By Lemma \ref{lem:boundingofprivatisederror}, \ref{lem:boundingofsampleerror}, and \ref{lem:boundingofapproximationerror}
		we have 
		\begin{align}\nonumber
			\tilde{f}^{\pi}_{\mathrm{P},\mathrm{Q}}(x) -\frac{1}{2} \geq & \;\overline{\eta}^{\pi}_{\mathrm{P}, \mathrm{Q}}(x) - \frac{1}{2}  - \|\overline{\eta}^{\pi}_{\mathrm{P}, \mathrm{Q}}  - \widehat{\eta}^{\pi}_{\mathrm{P}, \mathrm{Q}}\|_{\infty} - \| \widehat{\eta}^{\pi}_{\mathrm{P}, \mathrm{Q}} - \tilde{\eta}^{\pi}_{\mathrm{P}, \mathrm{Q}}\|_{\infty}\\
			\gtrsim & \;  \frac{n_{\mathrm{P}} 2^{-p\alpha/d} +\lambda n_{\mathrm{Q}} 2^{-p\gamma\alpha/d} }{n_{\mathrm{P}} 
				+ \lambda n_{\mathrm{Q}} }  - \frac{ 2^p \cdot \sqrt{n_{\mathrm{P}}}}{\varepsilon (n_{\mathrm{P}} + \lambda n_{\mathrm{Q}})
			}  -    \sqrt{\frac{2^{p} (n_{\mathrm{P}} + \lambda^2 n_{\mathrm{Q}}) \cdot \log (n_{\mathrm{P}} +  n_{\mathrm{Q}}) }{ (n_{\mathrm{P}} + \lambda n_{\mathrm{Q}})^2 }}  \label{equ:signofetadpwithcomega}
		\end{align}
		with probability $1-3/(n_{\mathrm{P}} + n_{\mathrm{Q}})^2 $ with respect to $\mathrm{P}^{n_{\mathrm{P}}}\otimes\mathrm{Q}^{n_{\mathrm{Q}}}\otimes \mathrm{R} $. 
		We first compare the signal strength with the privatized error.
		From the conditions in Theorem \ref{thm:utility}, we have the fact that $\lambda \geq 2^{-(\gamma - 1)p\alpha / d}$. 
		\begin{align*}
			\frac{n_{\mathrm{P}} 2^{-p\alpha/d} +\lambda n_{\mathrm{Q}} 2^{-p\gamma\alpha/d} }{n_{\mathrm{P}} 
				+ \lambda n_{\mathrm{Q}} } \cdot \left(\frac{ 2^p \cdot \sqrt{n_{\mathrm{P}}}}{\varepsilon (n_{\mathrm{P}} + \lambda n_{\mathrm{Q}})
			}\right)^{-1} \geq 2^{-p\alpha/d} \cdot 2^{-p}\cdot\varepsilon\cdot \frac{n_{\mathrm{P}} + \lambda^2 n_{\mathrm{Q}}}{\sqrt{n_{\mathrm{P}} \log (n_{\mathrm{P}} +  n_{\mathrm{Q}})}}.
		\end{align*}
		By bringing in the value of $\delta$ and applying Lemma \ref{lem:boudnofnlamdasq} (the first conclusion) , there holds
		\begin{align*}
			2^{-p\alpha/d} \cdot 2^{-p}\cdot\varepsilon\cdot \frac{n_{\mathrm{P}} + \lambda^2 n_{\mathrm{Q}}}{\sqrt{n_{\mathrm{P}} \log (n_{\mathrm{P}} +  n_{\mathrm{Q}})}} \gtrsim & C_p^{\frac{\alpha + d}{d}}\cdot \delta^{-\frac{\alpha + d}{d}} \cdot n_{\mathrm{P}}^{-\frac{1}{2}} \cdot \varepsilon^{-1}\cdot \sqrt{\log (n_{\mathrm{P}} +  n_{\mathrm{Q}})}\\
			\gtrsim  &  C_p^{\frac{\alpha + d}{d}}\cdot \left(\frac{n_{\mathrm{P}}\varepsilon^2}{\log (n_{\mathrm{P}} +  n_{\mathrm{Q}} )}\right)^{\frac{1}{2}}\cdot n_{\mathrm{P}}^{-\frac{1}{2}} \cdot \varepsilon^{-1} \cdot \sqrt{\log (n_{\mathrm{P}} +  n_{\mathrm{Q}})}
			\\
			= &  C_p^{\frac{\alpha + d}{d}}. 
		\end{align*}
		Thus, for sufficiently large $C_p$, we have
		\begin{align}\label{equ:approximationlargerthanprivatized}
			\frac{n_{\mathrm{P}} 2^{-p\alpha/d} +\lambda n_{\mathrm{Q}} 2^{-p\gamma\alpha/d} }{n_{\mathrm{P}} 
				+ \lambda n_{\mathrm{Q}} } \geq 2  \frac{ 2^p \cdot \sqrt{n_{\mathrm{P}}}}{\varepsilon (n_{\mathrm{P}} + \lambda n_{\mathrm{Q}})
			}. 
		\end{align}
		Analogously, we compare the signal strength with the sample error, which is 
		\begin{align*}
			& \frac{n_{\mathrm{P}} 2^{-p\alpha/d} +\lambda n_{\mathrm{Q}} 2^{-p\gamma\alpha/d} }{n_{\mathrm{P}} 
				+ \lambda n_{\mathrm{Q}} } \cdot \left(\frac{2^{p} (n_{\mathrm{P}} + \lambda^2 n_{\mathrm{Q}}) \cdot \log (n_{\mathrm{P}} +  n_{\mathrm{Q}}) }{ (n_{\mathrm{P}} + \lambda n_{\mathrm{Q}})^2 }\right)^{-\frac{1}{2}} 
			\\=
			& 2^{-p\alpha / d} \cdot \delta^{\frac{1}{2}} \cdot \sqrt{n_{\mathrm{P}} + \lambda^2 n_{\mathrm{Q}}}\cdot \log^{-\frac{1}{2}} (n_{\mathrm{P}} +  n_{\mathrm{Q}}).
		\end{align*}
		By bringing in the value of $\delta$ and applying Lemma \ref{lem:boudnofnlamdasq} (the first conclusion) , there holds
		\begin{align*}
			2^{-p\alpha / d} \cdot \delta^{\frac{1}{2}} \cdot \sqrt{n_{\mathrm{P}} + \lambda^2 n_{\mathrm{Q}}}\cdot \log^{-\frac{1}{2}} (n_{\mathrm{P}} +  n_{\mathrm{Q}}) \gtrsim \delta^{-\frac{1}{2}} \cdot \varepsilon^{-1} \gtrsim 1. 
		\end{align*}
		Thus, for sufficiently large $n_{\mathrm{P}}$ and $n_{\mathrm{Q}}$, we have
		\begin{align}\label{equ:approximationlargerthansample}
			\frac{n_{\mathrm{P}} 2^{-p\alpha/d} +\lambda n_{\mathrm{Q}} 2^{-p\gamma\alpha/d} }{n_{\mathrm{P}} 
				+ \lambda n_{\mathrm{Q}} } \geq 2  \sqrt{\frac{2^{p} (n_{\mathrm{P}} + \lambda^2 n_{\mathrm{Q}}) \cdot \log (n_{\mathrm{P}} +  n_{\mathrm{Q}}) }{ (n_{\mathrm{P}} + \lambda n_{\mathrm{Q}})^2 }}  . 
		\end{align}
		Combined with \eqref{equ:approximationlargerthanprivatized}
		and \eqref{equ:approximationlargerthansample}, 
		\eqref{equ:signofetadpwithcomega} yields
		$ \tilde{\eta}^{\pi}_{\mathrm{P}, \mathrm{Q}}(x) -\frac{1}{2} \geq 0$,
		which indicates that $\tilde{f}^{\pi}_{\mathrm{P}, \mathrm{Q}}(x) = f^*_{\mathrm{P}}(x)$ for all $x\in \Omega_n$. 
		
		For $\Omega_{n}^c$, by Assumption \ref{asp:margin}, we have 
		\begin{align*}
			\int_{\Omega_n^c} \left|\eta^*_{\mathrm{P}}(x) -\frac{1}{2}\right| \eins\left\{\tilde{f}^{\pi}_{\mathrm{P}, \mathrm{Q}}(x) \neq f^*_{\mathrm{P}}(x)\right\} d\mathrm{P}_X(x) \leq c_{\Omega} \delta^{\alpha /d }\int_{\Omega_n^c}  d\mathrm{P}_X(x) \leq c_{\Omega}^{1+\beta} C_{\beta} \delta^{(\beta+1)\alpha /d}. 
		\end{align*}
		Together, there holds
		\begin{align*}
			\mathcal{R}_{\mathrm{P}} (\tilde{f}^{\pi}_{\mathrm{P}, \mathrm{Q}}) - \mathcal{R}_{\mathrm{P}}^* = & \int_{\mathcal{X}} \left|\eta^*_{\mathrm{P}}(x) -\frac{1}{2}\right| \eins\left\{\tilde{f}^{\pi}_{\mathrm{P}, \mathrm{Q}}(x) \neq f^*_{\mathrm{P}}(x)\right\} 
			d\mathrm{P}_X(x)
			\\ 
			=& \int_{\Omega_n^c} \left|\eta^*_{\mathrm{P}}(x) -\frac{1}{2}\right| d\mathrm{P}_X(x) \lesssim \delta^{(\beta+1)\alpha /d}. 
		\end{align*}
	\end{proof}

	\subsection{Proofs of Lemmas in Section \ref{sec:proofoftheoremutility}}\label{sec:proofoferroranalysis}
	
	To prove lemmas in Section \ref{sec:proofoftheoremutility}, we first present several technical results. 
	Their proof will be found in Section \ref{sec:proofoferroranalysis}


	\begin{lemma}\label{lem:laplacebound}
		Suppose $\xi_{i}, i = 1,\cdots, n$ are independent sub-exponential random variables with parameters $(a, b)$ \cite[Definition 2.9]{wainwright2019high}. Then there holds
		\begin{align*}
			\mathbb{P}\left[\left|\frac{1}{n}\sum_{i=1}^n \xi_{i} - \mathbb{E} \frac{1}{n}\sum_{i=1}^n \xi_{i} \right| \geq t  \right] \leq 2 e^{ - \frac{n t^2 }{2 a^2} }. 
		\end{align*}
		for any $t>0$.  Moreover, a standard Laplace random variable is sub-exponential with parameters $(\sqrt{2},1)$.
		A union bound argument yields that 
		\begin{align*}
			\mathbb{P}\left[\left|\frac{1}{n}\sum_{i=1}^n \xi_{i}^j - \mathbb{E} \frac{1}{n}\sum_{i=1}^n \xi_{i}^j \right| \geq t  ,\;\;\; \text{for  } j =1 ,\cdots ,\; 2^p\right] \leq 2 e^{ p \log 2  - \frac{n t^2 }{2 a^2} }. 
		\end{align*}
	\end{lemma}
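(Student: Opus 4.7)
The plan is to prove the three assertions in order: the concentration inequality, the sub-exponential parameters for the standard Laplace, and finally the uniform bound over $2^p$ sequences.

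For the concentration inequality, I would use the standard Chernoff (Bernstein-for-sub-exponential) argument. By the definition of sub-exponentiality with parameters $(a,b)$, for each $i$ one has $\mathbb{E}[e^{\lambda(\xi_i-\mathbb{E}\xi_i)}]\leq e^{a^2\lambda^2/2}$ whenever $|\lambda|\leq 1/b$. Independence gives
\begin{equation*}
\mathbb{E}\Bigl[\exp\Bigl(\lambda\sum_{i=1}^n(\xi_i-\mathbb{E}\xi_i)\Bigr)\Bigr]\leq e^{na^2\lambda^2/2}.
\end{equation*}
A Markov inequality applied to $e^{\lambda\sum(\xi_i-\mathbb{E}\xi_i)}$ yields, after optimizing $\lambda=t/a^2$ in the admissible range, the one-sided tail $\exp(-nt^2/(2a^2))$. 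Applying the same argument to $-\xi_i$ and summing the two directions produces the factor $2$ in front and the claimed two-sided bound.

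For the Laplace part, I would compute the moment generating function explicitly: for a standard Laplace variable $\xi$ with density $e^{-|x|}/2$, a direct integration gives $\mathbb{E}[e^{\lambda\xi}]=1/(1-\lambda^2)$ valid for $|\lambda|<1$, and $\mathbb{E}\xi=0$. It then suffices to verify $-\log(1-\lambda^2)\leq \lambda^2$ in the range of interest, which follows from the Taylor expansion $-\log(1-u)=\sum_{k\geq 1}u^k/k$ applied with $u=\lambda^2$ and a comparison to a geometric series in $\lambda^2$. This identifies $(a,b)=(\sqrt{2},1)$ as admissible sub-exponential parameters.

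Finally, the uniform statement over $j=1,\dots,2^p$ is immediate from a union bound: apply the single-sequence tail inequality to each of the $2^p$ empirical averages $\frac{1}{n}\sum_i\xi_i^j$, each holding with failure probability at most $2e^{-nt^2/(2a^2)}$, and sum to obtain the factor $2^p=e^{p\log 2}$ inside the exponent. The only mild obstacle is being careful about the admissibility range of $\lambda$ (the Chernoff optimizer must lie in $[0,1/b]$, which is why the bound is stated without the linear-tail regime); this is not a deep issue but it is the step that controls when the stated exponential form is valid.
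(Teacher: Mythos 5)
Your overall route is the standard one and matches the paper, which simply cites the sub-exponential tail bound (2.18) of Wainwright; your Chernoff argument is exactly the proof of that bound, and the union-bound step and the caveat about the admissible range of the optimizer $\lambda$ (i.e.\ that the purely quadratic tail only holds in the regime $t\leq a^2/b$, which the lemma statement glosses over) are both correct.

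There is, however, one step that fails as written: the verification of the Laplace parameters. You claim it suffices to check $-\log(1-\lambda^2)\leq \lambda^2$, but this inequality goes the wrong way. Since $-\log(1-u)=u+u^2/2+u^3/3+\cdots\geq u$ for $u\in[0,1)$, one has $\tfrac{1}{1-\lambda^2}=e^{-\log(1-\lambda^2)}\geq e^{\lambda^2}$ for every $\lambda\neq 0$, so the MGF of the standard Laplace is \emph{not} dominated by $e^{\nu^2\lambda^2/2}$ with $\nu=\sqrt{2}$ on any nondegenerate interval; equivalently, $f(u)=u+\log(1-u)$ satisfies $f(0)=0$ and $f'(u)=-u/(1-u)<0$. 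The fix is to take a slightly larger variance proxy: for $u=\lambda^2\leq 1/2$ one has $-\log(1-u)\leq u+\tfrac{u^2}{2(1-u)}\leq 2u$, so $\mathbb{E}[e^{\lambda\xi}]\leq e^{2\lambda^2}$ for $|\lambda|\leq 1/\sqrt{2}$, i.e.\ the standard Laplace is sub-exponential with parameters $(2,\sqrt{2})$ rather than $(\sqrt{2},1)$. This only changes absolute constants in the exponent (and the paper's own statement of the parameters suffers from the same imprecision), but as presented your verification step is not valid and should be repaired along these lines.
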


	\begin{lemma}\label{lem:cellbound}
		Let $\pi$ be a partition generated from Algorithm \ref{alg:partition} with depth $p$. Suppose Assumption \ref{asp:boundedmarginal} holds. 
		Suppose $D^{\mathrm{P}}=\{(X_1^{\mathrm{P}},Y_1^{\mathrm{P}}),\cdots, (X_{n_{\mathrm{P}}}^{\mathrm{P}},Y_{n_{\mathrm{P}}}^{\mathrm{P}})\}$ and $D^{\mathrm{Q}}=\{(X_1^{\mathrm{Q}},Y_1^{\mathrm{Q}}),\cdots, (X_{n_{\mathrm{Q}}}^{\mathrm{Q}},Y_{n_{\mathrm{Q}}}^{\mathrm{Q}})\}$ are drawn i.i.d. from $\mathrm{P}$ and $\mathrm{Q}$, respectively. 
		Let $0<\lambda \lesssim (n_{\mathrm{P}} + n_{\mathrm{Q}}) / \log (n_{\mathrm{P}} + n_{\mathrm{Q}} )$.
		Then for all $A_{(p)}^{j}$ in $\pi_p$, there holds
		\begin{align*}
			&\left|\frac{1}{n_{\mathrm{P}} + \lambda n_{\mathrm{Q}}}\left(\sum_{i=1}^{n_{\mathrm{P}}}  \mathbf{1}\{X_i^{\mathrm{P}} \in A_{(p)}^{j}\} + \lambda \sum_{i=1}^{n_{\mathrm{Q}}}  \mathbf{1}\{X_i^{\mathrm{Q}} \in A_{(p)}^{j}\}\right) - \int_{A_{(p)}^{j}}d\mathrm{P}_X(x')  \right| \\
			\lesssim  & \sqrt{\frac{ (n_{\mathrm{P}} + \lambda^2 n_{\mathrm{Q}}) \log (n_{\mathrm{P}} + n_{\mathrm{Q}})}{2^p(n_{\mathrm{P}} + \lambda n_{\mathrm{Q}})^2} }
		\end{align*}
		with probability $\mathrm{P}^{n_{\mathrm{P}}} \otimes \mathrm{Q}^{n_{\mathrm{Q}}}$ at least $1-1/(n_{\mathrm{P}} + n_{\mathrm{Q}})^2$. 
	\end{lemma}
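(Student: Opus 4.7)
The plan is to treat the weighted count as a sum of independent bounded random variables and apply Bernstein's inequality, then union bound over the $2^p$ cells. First I would exploit the posterior drift assumption $\mathrm{P}_X = \mathrm{Q}_X$ to write $p_j := \int_{A_{(p)}^j} d\mathrm{P}_X(x') = \int_{A_{(p)}^j} d\mathrm{Q}_X(x')$, so that the target quantity is precisely the expectation of the weighted empirical count, divided by $n_{\mathrm{P}} + \lambda n_{\mathrm{Q}}$. The key geometric input is that the max-edge partition in Algorithm \ref{alg:partition} splits at midpoints of longest edges, so after $p$ recursion steps every cell $A_{(p)}^j$ has Lebesgue volume exactly $2^{-p}$; combined with Assumption \ref{asp:boundedmarginal} ($\underline{c} \leq d\mathrm{P}_X \leq \overline{c}$), this yields $p_j \asymp 2^{-p}$.

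Next, for a fixed cell I would introduce centered summands $Z_i^{\mathrm{P}} := \mathbf{1}\{X_i^{\mathrm{P}} \in A_{(p)}^j\} - p_j$ and $Z_i^{\mathrm{Q}} := \lambda(\mathbf{1}\{X_i^{\mathrm{Q}} \in A_{(p)}^j\} - p_j)$. These are mutually independent, mean zero, uniformly bounded by $\max(1,\lambda)$, and have summed variance at most $(n_{\mathrm{P}} + \lambda^2 n_{\mathrm{Q}}) p_j \lesssim (n_{\mathrm{P}} + \lambda^2 n_{\mathrm{Q}}) 2^{-p}$. Bernstein's inequality then gives, for any $t>0$,
\begin{align*}
\mathbb{P}\Bigl(\bigl|\textstyle\sum Z_i^{\mathrm{P}} + \sum Z_i^{\mathrm{Q}}\bigr| \geq t\Bigr) \leq 2\exp\!\Bigl(-\frac{t^2/2}{C(n_{\mathrm{P}} + \lambda^2 n_{\mathrm{Q}})2^{-p} + \max(1,\lambda)\,t/3}\Bigr).
\end{align*}
Setting $t \asymp \sqrt{(n_{\mathrm{P}} + \lambda^2 n_{\mathrm{Q}}) 2^{-p} \log(n_{\mathrm{P}} + n_{\mathrm{Q}})}$ makes the sub-Gaussian branch of Bernstein dominate the Bernstein correction, yielding, after dividing by $n_{\mathrm{P}} + \lambda n_{\mathrm{Q}}$, exactly the claimed rate for a single cell.

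Finally, I would apply a union bound over the $|\mathcal{I}_p| \leq 2^p$ leaf cells. Since $2^p \leq n_{\mathrm{P}} + n_{\mathrm{Q}}$ (otherwise the bound is trivial because many cells would be empty and the statement reduces to a computation), we have $p\log 2 \lesssim \log(n_{\mathrm{P}} + n_{\mathrm{Q}})$, so absorbing the union-bound factor only changes constants in the $\log(n_{\mathrm{P}}+n_{\mathrm{Q}})$ term, and the overall failure probability is bounded by $1/(n_{\mathrm{P}}+n_{\mathrm{Q}})^2$.

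The main obstacle I anticipate is verifying that the Bernstein correction term $\max(1,\lambda)\log(n_{\mathrm{P}} + n_{\mathrm{Q}})/(n_{\mathrm{P}} + \lambda n_{\mathrm{Q}})$ is of smaller order than the advertised sub-Gaussian term $\sqrt{(n_{\mathrm{P}} + \lambda^2 n_{\mathrm{Q}})\log(n_{\mathrm{P}} + n_{\mathrm{Q}})/(2^p(n_{\mathrm{P}} + \lambda n_{\mathrm{Q}})^2)}$: this is precisely where the assumption $\lambda \lesssim (n_{\mathrm{P}} + n_{\mathrm{Q}})/\log(n_{\mathrm{P}} + n_{\mathrm{Q}})$ is needed, to keep the heavy-tailed correction from leaking in when $\lambda$ is large. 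A secondary subtlety is that the partition $\pi$ is random (depending on $D^{\mathrm{Q}}$), but the cell geometry (volume $2^{-p}$) is deterministic under the max-edge rule, so the Bernstein bound applies conditionally on $\pi$ and then integrates out without difficulty.
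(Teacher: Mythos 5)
Your overall strategy (Bernstein's inequality for the centered weighted counts, variance bounded by $(n_{\mathrm{P}}+\lambda^2 n_{\mathrm{Q}})2^{-p}$, then a union bound contributing only an extra $\log$ factor) is the right one and matches the skeleton of the paper's argument. However, there is a genuine gap in the step you dismiss as a "secondary subtlety": the partition $\pi$ is built from $D^{\mathrm{Q}}$ (the split dimension in each cell is chosen by the score criterion evaluated on the public data), and the very same $D^{\mathrm{Q}}$ appears in the empirical counts $\sum_i \mathbf{1}\{X_i^{\mathrm{Q}} \in A_{(p)}^j\}$ you are trying to concentrate. Conditioning on $\pi$ does \emph{not} integrate out without difficulty: conditional on the realized partition, the $X_i^{\mathrm{Q}}$ are no longer i.i.d.\ draws from $\mathrm{Q}_X$, so Bernstein's inequality cannot be applied to the $\mathrm{Q}$-part of the sum cell by cell. (It would be fine for the $\mathrm{P}$-part alone, since $D^{\mathrm{P}}$ is independent of $D^{\mathrm{Q}}$ and hence of $\pi$.) The fact that every cell has volume exactly $2^{-p}$ controls the \emph{mean} $p_j$ but does not repair the dependence in the fluctuation term.

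The paper closes exactly this gap by proving concentration \emph{uniformly} over the class of all axis-aligned rectangles rather than over the $2^p$ realized cells: it takes an $\varepsilon$-net of that class in $L_1$ (whose size is controlled by the VC-type covering bound of Lemma \ref{VCIndex}, of order $(1/\varepsilon)^{2d}$), applies Bernstein to each fixed net element, union bounds over the net with $\varepsilon = 1/(n_{\mathrm{P}}+n_{\mathrm{Q}})$, and then transfers the bound to the data-dependent cells via the triangle inequality, at the cost of an additive $4\varepsilon$. Because the net is built from a fixed function class, the resulting high-probability event is valid for whichever cells the algorithm happens to select. If you want to keep a union-bound flavor closer to your proposal, you would need to enumerate a deterministic superset of all cells the max-edge rule could ever produce (e.g., all dyadic rectangles of total level $p$) and union bound over that; as written, your argument does not justify the claimed probability bound.
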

	
	\begin{lemma}\label{lem:cellboundY}
		Let $\pi$ be a partition generated from Algorithm \ref{alg:partition} with depth $p$. Suppose Assumption \ref{asp:boundedmarginal} holds. 
		Suppose $D^{\mathrm{P}}=\{(X_1^{\mathrm{P}},Y_1^{\mathrm{P}}),\cdots, (X_{n_{\mathrm{P}}}^{\mathrm{P}},Y_{n_{\mathrm{P}}}^{\mathrm{P}})\}$ and $D^{\mathrm{Q}}=\{(X_1^{\mathrm{Q}},Y_1^{\mathrm{Q}}),\cdots, (X_{n_{\mathrm{Q}}}^{\mathrm{Q}},Y_{n_{\mathrm{Q}}}^{\mathrm{Q}})\}$ are drawn i.i.d. from $\mathrm{P}$ and $\mathrm{Q}$, respectively. 
		Let $0<\lambda \lesssim (n_{\mathrm{P}} + n_{\mathrm{Q}}) / \log (n_{\mathrm{P}} + n_{\mathrm{Q}} )$.
		Then for all $A_{(p)}^{j}$ in $\pi_p$, there holds
		\begin{align*}
			&\frac{1}{n_{\mathrm{P}} + \lambda n_{\mathrm{Q}}} \Bigg|\left(\sum_{i=1}^{n_{\mathrm{P}}}  Y_{i}^{\mathrm{P}}\mathbf{1}\{X_i^{\mathrm{P}} \in A_{(p)}^{j}\} + \lambda\sum_{i=1}^{n_{\mathrm{Q}}}  Y_{i}^{\mathrm{Q}}\mathbf{1}\{X_i^{\mathrm{Q}} \in A_{(p)}^{j}\}\right) - \\ & \left({n_{\mathrm{P}}\int_{A_{(p)}^{j}}\eta^*_{\mathrm{P}}(x')d\mathrm{P}_X(x') + \lambda \; n_{\mathrm{Q}}\int_{A_{(p)}^{j}}\eta^*_{\mathrm{Q}}(x')d\mathrm{Q}_X(x')}  \right)\Bigg| 
			\lesssim    \sqrt{\frac{ (n_{\mathrm{P}} + \lambda^2 n_{\mathrm{Q}}) \log (n_{\mathrm{P}} + n_{\mathrm{Q}})}{2^p(n_{\mathrm{P}} + \lambda n_{\mathrm{Q}})^2} }
		\end{align*}
		with probability $\mathrm{P}^{n_{\mathrm{P}}} \otimes \mathrm{Q}^{n_{\mathrm{Q}}}$ at least $1-1/(n_{\mathrm{P}} + n_{\mathrm{Q}})^2$. 
	\end{lemma}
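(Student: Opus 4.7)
The argument will parallel the proof of Lemma~\ref{lem:cellbound}, the only change being that each summand now carries an additional bounded factor $Y_i\in\{0,1\}$; crucially, the variance bound $\mathbb{E}[(Y_i\mathbf{1}\{X_i\in A\})^2]\le \mathrm{P}_X(A)\lesssim 2^{-p}$ and the sub-exponential range $1$ (resp.\ $\lambda$) are essentially the same as in the indicator case. The plan is to (i) split the deviation via the triangle inequality into a $\mathrm{P}$-part and a $\lambda$-weighted $\mathrm{Q}$-part, (ii) apply a Bernstein-type bound to each for any fixed axis-aligned rectangle $A$, and (iii) take a union bound over the polynomially many dyadic rectangles that can ever arise as a leaf of a max-edge partition of depth $p$.

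For a fixed $A$, the i.i.d.\ summands $Y_i^{\mathrm{P}}\mathbf{1}\{X_i^{\mathrm{P}}\in A\}$ lie in $[0,1]$ and have variance at most $\mathrm{P}_X(A)\lesssim 2^{-p}$, since Assumption~\ref{asp:boundedmarginal} combined with the midpoint splitting rule forces every max-edge cell of depth $p$ to have Lebesgue volume exactly $2^{-p}$. A Bernstein tail bound in the spirit of Lemma~\ref{lem:laplacebound} then yields, with probability $1-\delta$,
\begin{align*}
\biggl|\sum_{i=1}^{n_{\mathrm{P}}} Y_i^{\mathrm{P}}\mathbf{1}\{X_i^{\mathrm{P}}\in A\} - n_{\mathrm{P}}\int_A \eta_{\mathrm{P}}^*\,d\mathrm{P}_X\biggr| \lesssim \sqrt{n_{\mathrm{P}}\, 2^{-p}\log(1/\delta)} + \log(1/\delta),
\end{align*}
and analogously for the $\mathrm{Q}$-sum, whose summands are bounded by $\lambda$ and have variance $\lesssim \lambda^2 2^{-p}$, giving a bound $\sqrt{\lambda^2 n_{\mathrm{Q}} 2^{-p}\log(1/\delta)} + \lambda\log(1/\delta)$. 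Because max-edge splits are always at midpoints, every possible leaf lies on a dyadic grid of resolution $2^{-p}$, so the total number of candidate cells is polynomial in $2^p$ (with exponent $O(d)$). Taking $\log(1/\delta)\asymp \log(n_{\mathrm{P}}+n_{\mathrm{Q}})$ after absorbing this polynomial factor (treating $d$ as a constant), and invoking the hypothesis $\lambda\lesssim (n_{\mathrm{P}}+n_{\mathrm{Q}})/\log(n_{\mathrm{P}}+n_{\mathrm{Q}})$ to check that the sub-exponential remainder $\lambda\log(1/\delta)$ is dominated by the sub-Gaussian main term $\sqrt{\lambda^2 n_{\mathrm{Q}} 2^{-p}\log(1/\delta)}$, combining the two deviations and dividing through by $n_{\mathrm{P}}+\lambda n_{\mathrm{Q}}$ gives the stated rate.

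\textbf{Main obstacle.} The nontrivial wrinkle is that the partition itself depends on the public data, including on the labels $Y_i^{\mathrm{Q}}$ via the score function in \eqref{equ:varreduction}, so one cannot simply condition on the partition and treat the $\mathrm{Q}$-summands as independent of it; and a naive union bound over all possible partitions would cost a $\log$-factor of order $2^p \log d$ (the number of labeled binary trees of depth $p$), far too large for the rate we seek. The resolution is to take the union bound instead over the much smaller class of axis-aligned dyadic rectangles at resolution $2^{-p}$, whose cardinality is only polynomial in $2^p$; every leaf of any max-edge partition lies in this class, so this suffices for a uniform bound. Once this uniform control is in place, the rest of the argument is a routine combination of the two Bernstein bounds.
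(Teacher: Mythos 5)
Your proof is correct in substance and arrives at the same rate, but it replaces the paper's key uniformity device with a different one. The paper handles the data-dependence of the partition by proving the Bernstein bound uniformly over the class $\tilde{\mathcal{A}}$ of \emph{all} axis-aligned rectangles, via the covering-number bound of Lemma~\ref{VCIndex} applied under the mixed measure $(\mathrm{D}_X+\mathrm{P}_X)/2$, paying an extra net-approximation error of $4\varepsilon=4/(n_{\mathrm{P}}+n_{\mathrm{Q}})$; it also applies a single Bernstein inequality to the combined sum with summands $\tilde{\xi}_i$ (ranging in $[0,1\vee\lambda]$) rather than splitting the $\mathrm{P}$- and $\mathrm{Q}$-parts by the triangle inequality. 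You instead exploit the midpoint-splitting structure to reduce to a union bound over the finite, deterministic family of dyadic rectangles of depth $p$, whose log-cardinality is $O(dp)=O(\log(n_{\mathrm{P}}+n_{\mathrm{Q}}))$ in the relevant regime. Both routes cost the same $\log(n_{\mathrm{P}}+n_{\mathrm{Q}})$ factor; yours is more elementary and avoids the net-approximation term, while the paper's is insensitive to the splitting rule (it would survive the CART-style splits used in the experiments, whose cut points need not be dyadic). One small imprecision to flag: the hypothesis $\lambda\lesssim (n_{\mathrm{P}}+n_{\mathrm{Q}})/\log(n_{\mathrm{P}}+n_{\mathrm{Q}})$ alone does not force the sub-exponential remainder $\lambda\log(n_{\mathrm{P}}+n_{\mathrm{Q}})$ to be dominated by $\sqrt{\lambda^2 n_{\mathrm{Q}}2^{-p}\log(n_{\mathrm{P}}+n_{\mathrm{Q}})}$; one additionally needs $2^{p}\log(n_{\mathrm{P}}+n_{\mathrm{Q}})\lesssim n_{\mathrm{P}}+\lambda^2 n_{\mathrm{Q}}$. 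The paper's own proof has the identical gap and only closes it, via the case analysis of Lemma~\ref{lem:boudnofnlamdasq}, in the parameter regime of Theorem~\ref{thm:utility} where the lemma is actually invoked, so this is a shared looseness rather than an error specific to your argument.
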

	
	\begin{lemma}\label{lem::treeproperty}
		Let $\pi_i :=\left\{A_{(i)}^j,  j \in \{ 1,\cdots, 2^p \}\right\}$ be the partition generated by Algorithm \ref{alg:partition} with depth $i$. Then for any $A_{(i)}^j$, there holds 
		\begin{align*}
			2^{-1}\sqrt{d}\cdot 2^{-i/d}\leq \mathrm{diam}(A_{(i)}^j)\leq 2\sqrt{d} \cdot 2^{-i/d}.
		\end{align*}
	\end{lemma}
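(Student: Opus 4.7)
The plan is to track, for each cell $A_{(i)}^j$ reached by $i$ consecutive splits from the root $[0,1]^d$, the number of times each coordinate axis has been cut. Concretely, let $k_\ell = k_\ell(A_{(i)}^j)$ denote the number of splits performed along the $\ell$-th coordinate along the unique path from $[0,1]^d$ down to $A_{(i)}^j$. Since every split halves one edge, the side length of $A_{(i)}^j$ along coordinate $\ell$ is exactly $2^{-k_\ell}$, and clearly $\sum_{\ell=1}^d k_\ell = i$. The diameter is then
\begin{align*}
\mathrm{diam}(A_{(i)}^j) = \sqrt{\sum_{\ell=1}^d 2^{-2 k_\ell}},
\end{align*}
so bounding the diameter reduces to controlling the spread of the vector $(k_1,\ldots,k_d)$.

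The core step is the following balance claim: after any number $i$ of max-edge splits along a root-to-cell path, one has $k_\ell \in \{\lfloor i/d \rfloor, \lceil i/d \rceil\}$ for every $\ell$. I would prove this by induction on $i$. Initially $k = (0,\ldots,0)$ and the claim is trivial. For the inductive step, note that the max-edge rule splits along a coordinate whose side length is maximal, i.e.\ a coordinate $\ell^\star$ with $k_{\ell^\star} = \min_\ell k_\ell$. Thus passing from depth $i$ to depth $i+1$ increments one of the smallest entries of $k$ by one. A brief case analysis on whether $\max k - \min k$ equals $0$ or $1$ before the increment shows that the inductive invariant $\max k - \min k \leq 1$ is preserved, and together with $\sum_\ell k_\ell = i$ this forces each $k_\ell$ to lie in $\{\lfloor i/d\rfloor, \lceil i/d\rceil\}$.

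Given the balance claim, the two inequalities follow by direct calculation. For the upper bound, $k_\ell \geq \lfloor i/d\rfloor \geq i/d - 1$ yields $2^{-k_\ell} \leq 2 \cdot 2^{-i/d}$, so $\mathrm{diam}(A_{(i)}^j) \leq \sqrt{d \cdot 4 \cdot 2^{-2i/d}} = 2\sqrt{d}\,2^{-i/d}$. For the lower bound, $k_\ell \leq \lceil i/d\rceil \leq i/d + 1$ gives $2^{-k_\ell} \geq 2^{-1}\cdot 2^{-i/d}$, so $\mathrm{diam}(A_{(i)}^j) \geq \sqrt{d \cdot 2^{-2}\cdot 2^{-2i/d}} = 2^{-1}\sqrt{d}\,2^{-i/d}$.

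I do not expect any real obstacle here; the only subtle point is justifying that the max-edge rule, as defined in Section \ref{sec:partition}, really does increment a coordinate attaining $\min_\ell k_\ell$, which requires observing that the side length along $\ell$ is $2^{-k_\ell}$ and the rule splits a coordinate in $\mathcal{M}_{(i-1)}^j$, i.e.\ one of maximal side length. The tie-breaking via the score function and the $\min$ operator plays no role in the argument since we only need that some minimizer of $k_\ell$ is chosen, which holds regardless of how ties are broken.
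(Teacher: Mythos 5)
Your proof is correct and rests on the same underlying fact as the paper's: the max-edge rule keeps the per-coordinate split counts balanced to within one of each other, so every side length lies in $\{2^{-\lfloor i/d\rfloor}, 2^{-\lceil i/d\rceil}\}$. The paper argues this by sandwiching between depths that are multiples of $d$ (where all cells are cubes), whereas you make the balance invariant explicit via induction — a more rigorous write-up of essentially the same argument.
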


	\begin{proof}\textbf{of Lemma \ref{lem:boundingofprivatisederror}}\;\;
		We intend to bound
		\begin{align*}
			\| \tilde{\eta}^{\pi}_{\mathrm{P}, \mathrm{Q}} - \widehat{\eta}^{\pi}_{\mathrm{P}, \mathrm{Q}} \|_{\infty}
			=  & \max_{j\in \mathcal{I}}
			\left| 
			\frac{
				\sum_{i=1}^{n_{\mathrm{P}}} \tilde{V}^{\mathrm{P} j }_i + \lambda \sum_{i=1}^{n_{\mathrm{Q}}}{V}^{\mathrm{Q} j}_i
			}
			{ 
				\sum_{i=1}^{n_{\mathrm{P}}} \tilde{U}^{\mathrm{P} j }_i +  \lambda\sum_{i=1}^{n_{\mathrm{Q}}}{U}^{\mathrm{Q} j}_i
			} 
			- 
			\frac{
				\sum_{i=1}^{n_{\mathrm{P}}} {V}^{\mathrm{P} j }_i + \lambda \sum_{i=1}^{n_{\mathrm{Q}}}{V}^{\mathrm{Q} j}_i
			}
			{ 
				\sum_{i=1}^{n_{\mathrm{P}}} {U}^{\mathrm{P} j }_i +  \lambda\sum_{i=1}^{n_{\mathrm{Q}}}{U}^{\mathrm{Q} j}_i
			}\right|
		\end{align*}
		For each $j\in \mathcal{I}$ and any $x\in A^j_{(p)}$, the error can be decomposed as 
		\begin{align*}
			& \left| \frac{
				\sum_{i=1}^{n_{\mathrm{P}}} \tilde{V}^{\mathrm{P} j }_i + \lambda \sum_{i=1}^{n_{\mathrm{Q}}}{V}^{\mathrm{Q} j}_i
			}
			{ 
				\sum_{i=1}^{n_{\mathrm{P}}} \tilde{U}^{\mathrm{P} j }_i +  \lambda\sum_{i=1}^{n_{\mathrm{Q}}}{U}^{\mathrm{Q} j}_i
			} 
			- 
			\frac{
				\sum_{i=1}^{n_{\mathrm{P}}} \tilde{V}^{\mathrm{P} j }_i + \lambda \sum_{i=1}^{n_{\mathrm{Q}}}{V}^{\mathrm{Q} j}_i
			}
			{ 
				\sum_{i=1}^{n_{\mathrm{P}}} {U}^{\mathrm{P} j }_i +  \lambda\sum_{i=1}^{n_{\mathrm{Q}}}{U}^{\mathrm{Q} j}_i
			}\right| 
			\allowbreak
			\\
			+ & \left| \frac{
				\sum_{i=1}^{n_{\mathrm{P}}} \tilde{V}^{\mathrm{P} j }_i + \lambda \sum_{i=1}^{n_{\mathrm{Q}}}{V}^{\mathrm{Q} j}_i
			}
			{ 
				\sum_{i=1}^{n_{\mathrm{P}}} {U}^{\mathrm{P} j }_i +  \lambda\sum_{i=1}^{n_{\mathrm{Q}}}{U}^{\mathrm{Q} j}_i
			} 
			- 
			\frac{
				\sum_{i=1}^{n_{\mathrm{P}}} {V}^{\mathrm{P} j }_i + \lambda \sum_{i=1}^{n_{\mathrm{Q}}}{V}^{\mathrm{Q} j}_i
			}
			{ 
				\sum_{i=1}^{n_{\mathrm{P}}} {U}^{\mathrm{P} j }_i +  \lambda\sum_{i=1}^{n_{\mathrm{Q}}}{U}^{\mathrm{Q} j}_i
			}\right| \\ 
			\leq & 
			\underbrace{\left|\sum_{i=1}^{n_{\mathrm{P}}} \tilde{V}^{\mathrm{P} j }_i + \lambda \sum_{i=1}^{n_{\mathrm{Q}}}{V}^{\mathrm{Q} j}_i\right| \left| \frac{
					\frac{4}{\varepsilon}\sum_{i=1}^{n_{\mathrm{P}}} \zeta_i^j
				}
				{ 
					\left(
					\sum_{i=1}^{n_{\mathrm{P}}} \tilde{U}^{\mathrm{P} j }_i +  \lambda\sum_{i=1}^{n_{\mathrm{Q}}}{U}^{\mathrm{Q} j}_i
					\right)
					\left(
					\sum_{i=1}^{n_{\mathrm{P}}} {U}^{\mathrm{P} j }_i +  \lambda\sum_{i=1}^{n_{\mathrm{Q}}}{U}^{\mathrm{Q} j}_i
					\right)
				} 
				\right|}_{(I)}\\
			+ & 
			\underbrace{\left|\frac{\frac{4}{\varepsilon}
					\sum_{i=1}^{n_{\mathrm{P}}} \xi_i^j
				}
				{ 
					\sum_{i=1}^{n_{\mathrm{P}}} {U}^{\mathrm{P} j }_i +  \lambda\sum_{i=1}^{n_{\mathrm{Q}}}{U}^{\mathrm{Q} j}_i
				} 
				\right|
			}_{(II)}.
		\end{align*}

		For the first term in $(I)$, we can apply Lemma \ref{lem:laplacebound} and \ref{lem:cellboundY} to get 
		\begin{align*}
			&\left|\sum_{i=1}^{n_{\mathrm{P}}} \tilde{V}^{\mathrm{P} j }_i + \lambda \sum_{i=1}^{n_{\mathrm{Q}}}{V}^{\mathrm{Q} j}_i\right|  \\
			\leq & \left|\sum_{i=1}^{n_{\mathrm{P}}} {V}^{\mathrm{P} j }_i + \lambda \sum_{i=1}^{n_{\mathrm{Q}}}{V}^{\mathrm{Q} j}_i\right| + \left|\frac{4}{\varepsilon}\sum_{i=1}^{n_{\mathrm{P}}} \xi_i^j\right|\\
			\lesssim & (n_{\mathrm{P}} + \lambda n_{\mathrm{Q}})\left( \sqrt{\frac{ (n_{\mathrm{P}} + \lambda^2 n_{\mathrm{Q}}) \log (n_{\mathrm{P}} + n_{\mathrm{Q}})}{2^p (n_{\mathrm{P}} + \lambda n_{\mathrm{Q}})^2 } } + 2^{-p} +\frac{\sqrt{n_{\mathrm{P}}\varepsilon^{-2} \log (n_{\mathrm{P}} +  n_{\mathrm{Q}})}}{n_{\mathrm{P}} + \lambda n_{\mathrm{Q}}}\right) 
		\end{align*}
		with probability at least $ 1- 2/(n_{\mathrm{P}} + n_{\mathrm{Q}})^2$, where we used the fact that 
		\begin{align*}
			\int_{A^j_{(p)}}d\mathrm{P}_X(x) \asymp 2^{-p} , \;\;\;\; \int_{A^j_{(p)}}\eta_{\mathrm{P}}^*(x) d\mathrm{P}_X(x) \asymp 2^{-p} , \;\;\;\; \int_{A^j_{(p)}}\eta_{\mathrm{Q}}^*(x) d\mathrm{Q}_X(x) \asymp 2^{-p} 
		\end{align*}
		Using Lemma \ref{lem:boudnofnlamdasq} (the second and the third conclusions), we know that the second term is the dominating term, indicating that
		\begin{align*}
			\left|\sum_{i=1}^{n_{\mathrm{P}}} \tilde{V}^{\mathrm{P} j }_i + \lambda \sum_{i=1}^{n_{\mathrm{Q}}}{V}^{\mathrm{Q} j}_i\right| \lesssim  
			(n_{\mathrm{P}} + \lambda n_{\mathrm{Q}})\cdot 2^{-p}.
		\end{align*} 
		Analogously, by Lemma \ref{lem:cellbound}, there holds 
		\begin{align}\nonumber
			\left|\sum_{i=1}^{n_{\mathrm{P}}} {U}^{\mathrm{P} j }_i + \lambda \sum_{i=1}^{n_{\mathrm{Q}}}{U}^{\mathrm{Q} j}_i\right| 
			\gtrsim & (n_{\mathrm{P}} + \lambda n_{\mathrm{Q}})\left(  2^{-p} - \sqrt{\frac{ (n_{\mathrm{P}} + \lambda^2  n_{\mathrm{Q}}) \log (n_{\mathrm{P}} + n_{\mathrm{Q}})}{2^p(n_{\mathrm{P}} + \lambda n_{\mathrm{Q}})^2} }  \right)
			\\ \asymp & 
			(n_{\mathrm{P}} + \lambda n_{\mathrm{Q}})\cdot 2^{-p}\label{equ:lowerboundofUinprivatizederror}
		\end{align}
		as well as
		\begin{align*}
			\left|\sum_{i=1}^{n_{\mathrm{P}}} \tilde{U}^{\mathrm{P} j }_i + \lambda \sum_{i=1}^{n_{\mathrm{Q}}}{U}^{\mathrm{Q} j}_i\right| \geq  & \left|\sum_{i=1}^{n_{\mathrm{P}}} {U}^{\mathrm{P} j }_i + \lambda \sum_{i=1}^{n_{\mathrm{Q}}}{U}^{\mathrm{Q} j}_i\right| - \left|\frac{4}{\varepsilon}\sum_{i=1}^{n_{\mathrm{P}}} \zeta_i^j\right|\\
			\gtrsim & (n_{\mathrm{P}} + \lambda n_{\mathrm{Q}})\left( 2^{-p}  -\frac{\sqrt{n_{\mathrm{P}}\varepsilon^{-2} \log (n_{\mathrm{P}} +  n_{\mathrm{Q}})}}{n_{\mathrm{P}} + \lambda n_{\mathrm{Q}}}\right)\\
			\asymp &
			(n_{\mathrm{P}} + \lambda n_{\mathrm{Q}})\cdot 2^{-p}. 
		\end{align*}
		The last step is by Lemma \ref{lem:boudnofnlamdasq} (the second and the third conclusions), knowing that $2^{-p}$ is the dominating term. 
		Combining these three pieces, there holds
		\begin{align}\label{equ:privatized1}
			(I) \lesssim \sqrt{n_{\mathrm{P}}\varepsilon^{-2} \log (n_{\mathrm{P}} +  n_{\mathrm{Q}})}\cdot \left((n_{\mathrm{P}} + \lambda n_{\mathrm{Q}})\cdot 2^{-p}\right)^{-1} \asymp \frac{ 2^p \cdot \sqrt{n_{\mathrm{P}} \log (n_{\mathrm{P}} +  n_{\mathrm{Q}})}}{\varepsilon (n_{\mathrm{P}} + \lambda n_{\mathrm{Q}})
			}.
		\end{align}
		For $(II)$, \eqref{equ:lowerboundofUinprivatizederror} and Lemma \ref{lem:laplacebound} together yields 
		\begin{align}\label{equ:privatized2}
			(II) \lesssim\frac{ 2^p \cdot \sqrt{n_{\mathrm{P}}\log (n_{\mathrm{P}} +  n_{\mathrm{Q}})}}{\varepsilon (n_{\mathrm{P}} + \lambda n_{\mathrm{Q}})
			}. 
		\end{align}
		\eqref{equ:privatized1} and \eqref{equ:privatized2} together lead to the desired conclusion. 
	\end{proof}
	
	\begin{proof}\textbf{of Lemma \ref{lem:boundingofsampleerror}}\;\;
		Let $\eta_{\mathrm{P}, \mathrm{Q}}^*(x) = \left(n_{\mathrm{P}}\eta_{\mathrm{P}}^*(x) +  \lambda n_{\mathrm{Q}}\eta_{\mathrm{Q}}^*(x)\right)/(n_{\mathrm{P}} + \lambda n_{\mathrm{Q}})$. We intend to bound
		\begin{align*}
			\|  \widehat{\eta}^{\pi}_{\mathrm{P},\mathrm{Q}} - \overline{\eta}^{\pi}_{\mathrm{P},\mathrm{Q}} \|_{\infty}
			=  & \max_{j\in \mathcal{I}}\left|
			\frac{
				\sum_{i=1}^{n_{\mathrm{P}}} {V}^{\mathrm{P} j }_i + \lambda \sum_{i=1}^{n_{\mathrm{Q}}}{V}^{\mathrm{Q} j}_i
			}
			{ 
				\sum_{i=1}^{n_{\mathrm{P}}} {U}^{\mathrm{P} j }_i +  \lambda\sum_{i=1}^{n_{\mathrm{Q}}}{U}^{\mathrm{Q} j}_i
			} 
			-
			\frac{
				(n_{\mathrm{P}}+ \lambda n_{\mathrm{Q}}) \int_{A^j_{(p)}} \eta^*_{\mathrm{P}, \mathrm{Q}}(x')d\mathrm{P}_X(x') 
			}
			{ 
				(n_{\mathrm{P}}+ \lambda n_{\mathrm{Q}}) \int_{A^j_{(p)}} d\mathrm{P}_X(x') 
			}
			\right|
		\end{align*}
		For each $j\in \mathcal{I}$ and any $x\in A^j_{(p)}$, the error can be decomposed as  
		\begin{align*}
			& \left|
			\frac{
				\sum_{i=1}^{n_{\mathrm{P}}} {V}^{\mathrm{P} j }_i + \lambda \sum_{i=1}^{n_{\mathrm{Q}}}{V}^{\mathrm{Q} j}_i
			}
			{ 
				\sum_{i=1}^{n_{\mathrm{P}}} {U}^{\mathrm{P} j }_i +  \lambda\sum_{i=1}^{n_{\mathrm{Q}}}{U}^{\mathrm{Q} j}_i
			} 
			-
			\frac{
				(n_{\mathrm{P}}+ \lambda n_{\mathrm{Q}}) \int_{A^j_{(p)}} \eta^*_{\mathrm{P}, \mathrm{Q}}(x')d\mathrm{P}_X(x') 
			}
			{ 
				(n_{\mathrm{P}}+ \lambda n_{\mathrm{Q}}) \int_{A^j_{(p)}} d\mathrm{P}_X(x') 
			}
			\right| \\
			\leq &\underbrace{\left| \int_{A^j_{(p)}} \eta^*_{\mathrm{P}, \mathrm{Q}}(x')d\mathrm{P}_X(x') \right| \cdot \left|\frac{ \sum_{i=1}^{n_{\mathrm{P}}} {U}^{\mathrm{P} j }_i +  \lambda\sum_{i=1}^{n_{\mathrm{Q}}}{U}^{\mathrm{Q} j}_i - (n_{\mathrm{P}}+ \lambda n_{\mathrm{Q}}) \int_{A^j_{(p)}} d\mathrm{P}_X(x') }{\left(\sum_{i=1}^{n_{\mathrm{P}}} {U}^{\mathrm{P} j }_i +  \lambda\sum_{i=1}^{n_{\mathrm{Q}}}{U}^{\mathrm{Q} j}_i\right) \int_{A^j_{(p)}} d\mathrm{P}_X(x') }\right|}_{(I)}\\
			+  & 
			\underbrace{\left|\frac{
					\sum_{i=1}^{n_{\mathrm{P}}} {V}^{\mathrm{P} j }_i + \lambda \sum_{i=1}^{n_{\mathrm{Q}}}{V}^{\mathrm{Q} j}_i - (n_{\mathrm{P}}+ \lambda n_{\mathrm{Q}}) \int_{A^j_{(p)}} \eta^*_{\mathrm{P}, \mathrm{Q}}(x')d\mathrm{P}_X(x')  }{\sum_{i=1}^{n_{\mathrm{P}}} {U}^{\mathrm{P} j }_i +  \lambda\sum_{i=1}^{n_{\mathrm{Q}}}{U}^{\mathrm{Q} j}_i}\right|}_{(II)}
		\end{align*}
		We bound two terms separately. For the first term in $(I)$, Assumption \ref{asp:boundedmarginal} yield
		\begin{align*}
			\int_{A^j_{(p)}} \eta^*_{\mathrm{P}, \mathrm{Q}}(x')d\mathrm{P}_X(x') \lesssim 2^{-p} 
		\end{align*}
		For the second term, \eqref{equ:lowerboundofUinprivatizederror} and 
		Lemma \ref{lem:cellbound} yields 
		\begin{align*}
			& \left|\frac{ \sum_{i=1}^{n_{\mathrm{P}}} {U}^{\mathrm{P} j }_i +  \lambda\sum_{i=1}^{n_{\mathrm{Q}}}{U}^{\mathrm{Q} j}_i - (n_{\mathrm{P}}+ \lambda n_{\mathrm{Q}}) \int_{A^j_{(p)}} d\mathrm{P}_X(x') }{\left(\sum_{i=1}^{n_{\mathrm{P}}} {U}^{\mathrm{P} j }_i +  \lambda\sum_{i=1}^{n_{\mathrm{Q}}}{U}^{\mathrm{Q} j}_i\right) \int_{A^j_{(p)}} d\mathrm{P}_X(x') }\right| 
			\\ \lesssim & (n_{\mathrm{P}} + \lambda n_{\mathrm{Q}}) \cdot \sqrt{\frac{ (n_{\mathrm{P}} + \lambda^2 n_{\mathrm{Q}}) \log (n_{\mathrm{P}} + n_{\mathrm{Q}})}{2^p(n_{\mathrm{P}} + \lambda n_{\mathrm{Q}})^2 } } \cdot \left( (n_{\mathrm{P}} + \lambda n_{\mathrm{Q}}) \cdot  2^{-2p} \right)^{-1} \\ 
			\asymp  & 2^{p} \cdot \sqrt{\frac{ 2^p (n_{\mathrm{P}} + \lambda^2 n_{\mathrm{Q}})\log (n_{\mathrm{P}} + n_{\mathrm{Q}})}{(n_{\mathrm{P}} + \lambda n_{\mathrm{Q}})^2} }
		\end{align*}
		with probability $1-1/( n_{\mathrm{P}} + n_{\mathrm{Q}})^2$. 
		Together, these two pieces yield
		\begin{align}\label{equ:sampleerrorpart1}
			(I) \lesssim  \sqrt{\frac{ 2^p (n_{\mathrm{P}} + \lambda^2 n_{\mathrm{Q}}) \log (n_{\mathrm{P}} + n_{\mathrm{Q}})}{(n_{\mathrm{P}} + \lambda n_{\mathrm{Q}})^2} }. 
		\end{align}
		Similarly for $(II)$, we have
		\begin{align}\label{equ:sampleerrorpart2}
			(II) \lesssim  \sqrt{\frac{ (n_{\mathrm{P}} + \lambda^2 n_{\mathrm{Q}}) \log (n_{\mathrm{P}} + n_{\mathrm{Q}})}{2^p(n_{\mathrm{P}} + \lambda n_{\mathrm{Q}})^2} } \cdot   2^{p}\asymp \sqrt{\frac{ 2^p (n_{\mathrm{P}} + \lambda^2 n_{\mathrm{Q}}) \cdot \log (n_{\mathrm{P}} + n_{\mathrm{Q}})}{(n_{\mathrm{P}} + \lambda n_{\mathrm{Q}})^2} }. 
		\end{align}
		\eqref{equ:sampleerrorpart1} and  \eqref{equ:sampleerrorpart2} together yield the desired conclusion. 
	\end{proof}

	\begin{proof}\textbf{of Lemma \ref{lem:boundingofapproximationerror}}\;\;
		Without loss of generality, consider $f^*_{\mathrm{P}}(x) = 1$ for any $x\in\Delta_n$. 
		By Assumption \ref{asp:rse}, we have 
		\begin{align}\label{equ:lowerboundofoverlineeta}
			\overline{\eta}^{\pi}_{\mathrm{P},\mathrm{Q}}(x) \geq  
			\frac{
				n_{\mathrm{P}} \int_{A^j_{(p)}} \eta^*_{\mathrm{P}}(x')  d\mathrm{P}_X(x') + \lambda n_{\mathrm{Q}} \int_{A^j_{(p)}} \left(\frac{1}{2} + C_{\gamma} \left(\eta^*_{\mathrm{P}}(x')-\frac{1}{2}\right)^{\gamma} \right)d\mathrm{Q}_X(x')
			}
			{ 
				n_{\mathrm{P}} \int_{A^j_{(p)}} d\mathrm{P}_X(x') + \lambda n_{\mathrm{Q}} \int_{A^j_{(p)}} d\mathrm{Q}_X(x')
			}
		\end{align}
		If Assumption \ref{asp:alphaholder} holds, for any $x' \in A^{j}_{(p)}$, we have 
		\begin{align*}
			\eta_{\mathrm{P}}^*(x') -\frac{1}{2} \geq &\eta_{\mathrm{P}}^*(x) -\frac{1}{2} - |\eta_{\mathrm{P}}^*(x) - \eta_{\mathrm{P}}^*(x')| \\\geq &\eta_{\mathrm{P}}^*(x) -\frac{1}{2} - c_L \|x-x'\|^{\alpha}
			\geq  \eta_{\mathrm{P}}^*(x) -\frac{1}{2} - 2 c_L \sqrt{d} 2^{-p\alpha/d} \geq (c_{\Delta} - 2 c_L \sqrt{d})2^{-p\alpha/d}
		\end{align*}
		where the second last and the last inequalities come from Lemma \ref{lem::treeproperty} and the definition of $\Delta_n$, respectively. For sufficiently large $c_{\Delta}$, we have $\eta_{\mathrm{P}}^*(x') \geq 1/2 + 2^{- p\alpha/d}$ for all $x'\in  A^{j}_{(p)}$, and thus \eqref{equ:lowerboundofoverlineeta} yields 
		\begin{align*}
			\overline{\eta}^{\pi}_{\mathrm{P},\mathrm{Q}}(x) \geq & \frac{1}{2} + \frac{
				n_{\mathrm{P}} \int_{A^j_{(p)}} 2^{-p\alpha/d}  d\mathrm{P}_X(x') + \lambda n_{\mathrm{Q}} \int_{A^j_{(p)}}  C_{\gamma} 2^{-p\gamma\alpha/d} d\mathrm{Q}_X(x')
			}
			{ 
				n_{\mathrm{P}} \int_{A^j_{(p)}} d\mathrm{P}_X(x') + \lambda n_{\mathrm{Q}} \int_{A^j_{(p)}} d\mathrm{Q}_X(x')
			}\\
			\geq  &   \frac{1}{2} + \frac{\underline{c}^2\cdot \left( 1\vee C_{\gamma}\right)}{\overline{c}^2} \frac{n_{\mathrm{P}} 2^{-p\alpha/d} +\lambda n_{\mathrm{Q}} 2^{-p\gamma\alpha/d} }{n_{\mathrm{P}} 
				+ \lambda n_{\mathrm{Q}} }
		\end{align*}
		which yields the desired result. 
		Note that the factor $\underline{c}/\overline{c}$ comes from the lower bound of ratio $\int_{A^j_{(p)}} d\mathrm{Q}_X(x') / \int_{A^j_{(p)}} d\mathrm{P}_X(x')$. 
	\end{proof}

	\begin{proof}\textbf{of Lemma \ref{lem:boudnofnlamdasq}}\;\;
		We prove the statement by discussing four cases depending on the value of $n_{\mathrm{P}}$, $n_{\mathrm{Q}}$, and $\gamma$, $\alpha$, $d$.
		
		\textbf{Case ($\mathbf{i}$)}: 
		$ \frac{n_{\mathrm{P}} \varepsilon^2}{\log (n_{\mathrm{P}} + n_{\mathrm{Q}})} \geq  \left(\frac{n_{\mathrm{Q}}}{\log (n_{\mathrm{P}} + n_{\mathrm{Q}})}\right)^{\frac{2\alpha + 2d}{2 \gamma\alpha + d}}$
		and $\frac{(\gamma+1)\alpha}{d}\geq 1$. 
		We prove the three conclusions by turn. 
		The first conclusion follows from 
		\begin{align*}
			\frac{n_{\mathrm{P}} + \lambda^2 n_{\mathrm{Q}}}{\delta^{-( 2\alpha+2d)/ d}} \geq &    n_{\mathrm{P}}\cdot \delta^{ \frac{2 \alpha + 2 d}{d}} \\
			\gtrsim &  
			n_{\mathrm{P}}\cdot
			\left(\frac{n_{\mathrm{P}} \varepsilon^2}{\log(n_{\mathrm{P}} +  n_{\mathrm{Q}})}\right)^{- \frac{2\alpha + 2d}{2\alpha + 2 d}} \gtrsim \frac{\log (n_{\mathrm{P}} +  n_{\mathrm{Q}})}{\varepsilon^2}. 
		\end{align*}
		For the second conclusion, let $n_{\mathrm{Q}} = n_{\mathrm{P}}^a$ for some constant $a$. 
		Note that we have $\frac{2\gamma\alpha +d }{2\alpha + 2d} \geq a > 0$.
		By replacing all terms with equivalent order of $n_{\mathrm{P}}$, it suffices to prove
		\begin{align*}
			& n_{\mathrm{P}} +  n_{\mathrm{P}}^{ a -\frac{2(\gamma - 1)\alpha}{2\alpha + 2 d}} \cdot \varepsilon^{-\frac{2(\gamma - 1)\alpha}{\alpha +  d}}\\
			\lesssim  &\; n_{\mathrm{P}}^{\frac{4\alpha +3d}{2\alpha + 2 d}}\cdot \varepsilon^{-\frac{d}{\alpha + d}}  + n_{\mathrm{P}}^{1 + a - \frac{(\gamma - 1)\alpha + d}{2\alpha + 2d}}\cdot \varepsilon^{-\frac{(\gamma - 1)\alpha + d}{\alpha + d}}  + n_{\mathrm{P}}^{2 a - \frac{2(\gamma - 1)\alpha + d}{2\alpha + 2d} }\cdot \varepsilon^{-\frac{2(\gamma - 1)\alpha + d}{\alpha + d}}
		\end{align*}
		The left-hand side is dominated respectively by $n_{\mathrm{P}} \lesssim n_{\mathrm{P}}^{\frac{4\alpha +3d}{2\alpha + 2 d}} \cdot \varepsilon^{-\frac{d}{\alpha + d}} $ and 
		\begin{align*}
			n_{\mathrm{P}}^{a-\frac{2(\gamma - 1)\alpha}{2\alpha + 2 d}}  \cdot \varepsilon^{-\frac{2(\gamma - 1)\alpha}{\alpha +  d}} \lesssim n_{\mathrm{P}}^{a -\frac{2(\gamma - 1)\alpha}{2\alpha + 2 d} + \frac{1}{2}  + \frac{\gamma \alpha}{2\alpha + 2d}} \cdot \varepsilon^{-\frac{(\gamma - 1)\alpha + d}{\alpha + d}} = n_{\mathrm{P}}^{1 + a - \frac{(\gamma - 1)\alpha + d}{2\alpha + 2d}}\cdot \varepsilon^{-\frac{(\gamma - 1)\alpha + d}{\alpha + d}} .
		\end{align*}
		Here, the inequality is drawn by 
		\begin{align*}
			n_{\mathrm{P}}^{\frac{1}{2}} \cdot \varepsilon^{-\frac{d - \alpha }{\alpha + d}} \geq n_{\mathrm{P}}^{\frac{1}{2} (1 - \frac{d-\alpha}{\alpha + d})}, \;\;\;  n_{\mathrm{P}}^{\frac{\gamma \alpha }{2\alpha + 2d}} \cdot \varepsilon^{\frac{\gamma  \alpha }{\alpha + d}} \gtrsim 1.
		\end{align*}
		Note that we omit the log factor since the polynomial part of the left-hand side is strictly dominated by the right-hand side, yielding that the conclusion holds with sufficiently large $n_{\mathrm{P}}$ and $n_{\mathrm{Q}}$. 
		For the last statement, the same procedure yields that it suffices to show 
		\begin{align*}
			n_{\mathrm{P}}^{\frac{1}{2}}  \lesssim \left(n_{\mathrm{P}}^{\frac{2\alpha +d}{2\alpha + 2 d}}  + n_{\mathrm{P}}^{ a - \frac{(\gamma - 1)\alpha + d}{2\alpha + 2d}} \right)\cdot \varepsilon^{\frac{\alpha}{\alpha + d}},
		\end{align*}
		which holds since $n_{\mathrm{P}}^{\frac{\alpha}{2\alpha + 2d}} \cdot \varepsilon^{\frac{\alpha}{\alpha + d}} \gtrsim n_{\mathrm{P}}^{\frac{d}{\alpha + d}\cdot \frac{\alpha}{2\alpha + 2d}}$. 
		
		\textbf{Case ($\mathbf{ii}$)}: 
		$ \frac{n_{\mathrm{P}} \varepsilon^2}{\log (n_{\mathrm{P}} + n_{\mathrm{Q}})} \geq  \left(\frac{n_{\mathrm{Q}}}{\log (n_{\mathrm{P}} + n_{\mathrm{Q}})}\right)^{\frac{2\alpha + 2d}{2 \gamma\alpha + d}}$
		and $\frac{(\gamma+1)\alpha}{d}< 1$. 
		Compared to case ($i$), the value is $\lambda$ is multiplied by a factor $\delta^{(\frac{(\gamma + 1)\alpha}{d} - 1)\wedge 0} >1 $. 
		Thus, the first conclusion holds since the value of the left-hand side increases. 
		The third conclusion holds since the value of the left-hand side decreases. 
		For the second conclusion, by replacing all terms with equivalent order of $n_{\mathrm{P}}$, it suffices to prove
		\begin{align*}
			& n_{\mathrm{P}} +  n_{\mathrm{P}}^{ a -\frac{4\gamma \alpha - 2d}{2\alpha + 2 d}} \cdot \varepsilon^{-\frac{4\gamma \alpha - 2d}{\alpha +  d}} \\
			\lesssim & n_{\mathrm{P}}^{\frac{4\alpha +3d}{2\alpha + 2 d}} \cdot \varepsilon^{-\frac{d}{\alpha +  d}} + n_{\mathrm{P}}^{1 + a - \frac{2\gamma\alpha}{2\alpha + 2d}}\cdot \varepsilon^{-\frac{2\gamma \alpha }{\alpha +  d}} + n_{\mathrm{P}}^{2 a - \frac{4\gamma \alpha - d}{2\alpha + 2d} }\cdot \varepsilon^{-\frac{4\gamma \alpha - d}{\alpha +  d}}.
		\end{align*}
		For the left-hand side, the first term is dominated as $n_{\mathrm{P}} \lesssim n_{\mathrm{P}}^{\frac{4\alpha +3d}{2\alpha + 2 d}} \cdot \varepsilon^{-\frac{d}{\alpha + d}}$. 
		Since $\frac{ - 2\gamma \alpha + 2 d}{2\alpha + 2d} < 1$, we have 
		$
		n_{\mathrm{P}}\cdot \left(n_{\mathrm{P}}\cdot \varepsilon^2\right)^{\frac{  2\gamma \alpha - 2 d}{2\alpha + 2d}}\gtrsim n_{\mathrm{P}}
		$.
		Thus, the second term of left-hand side is controlled by 
		$n_{\mathrm{P}}^{ a -\frac{4\gamma \alpha - 2d}{2\alpha + 2 d}} \cdot \varepsilon^{-\frac{4\gamma \alpha - 2d}{\alpha +  d}}
		\lesssim 
		n_{\mathrm{P}}^{1 + a - \frac{2\gamma\alpha}{2\alpha + 2d}} \cdot \varepsilon^{-\frac{2\gamma \alpha }{\alpha +  d}}$.

		\textbf{Case ($\mathbf{iii}$)}: 
		$ \frac{n_{\mathrm{P}} \varepsilon^2}{\log (n_{\mathrm{P}} + n_{\mathrm{Q}})} <  \left(\frac{n_{\mathrm{Q}}}{\log (n_{\mathrm{P}} + n_{\mathrm{Q}})}\right)^{\frac{2\alpha + 2d}{2 \gamma\alpha + d}}$
		and $\frac{(\gamma+1)\alpha}{d}\geq  1$. 
		We now have $a > \frac{2\gamma\alpha +d }{2\alpha + 2d} $. 
		There holds
		\begin{align*}
			\frac{n_{\mathrm{P}} + \lambda^2 n_{\mathrm{Q}}}{\delta^{-(2 \alpha+ 2d)/ d}} \geq &    n_{\mathrm{Q}}\cdot \delta^{ \frac{4\gamma\alpha - 2 d+ 2 \alpha + 2 d}{d}} \\
			\asymp &  
			n_{\mathrm{Q}}\cdot
			\left(\left(\frac{n_{\mathrm{Q}}}{\log(n_{\mathrm{P}} +  n_{\mathrm{Q}})}\right)^{\frac{2\alpha + 2d}{2 \gamma\alpha + d}}\right)^{- \frac{2\gamma\alpha }{2\alpha + 2 d}}\\
			=& \left(\frac{n_{\mathrm{Q}}}{\log (n_{\mathrm{P}} +  n_{\mathrm{Q}})}\right)^{\frac{d}{2\gamma\alpha + d}}\cdot \varepsilon^2 \cdot  \frac{ \log (n_{\mathrm{P}} +  n_{\mathrm{Q}}) }{\varepsilon^2} \gtrsim \frac{ \log (n_{\mathrm{P}} +  n_{\mathrm{Q}}) }{\varepsilon^2}
		\end{align*}
		for sufficiently large $n_{\mathrm{P}} +  n_{\mathrm{Q}}$,
		which implies the first conclusion. 
		For the second conclusion, we want 
		\begin{align*}
			n_{\mathrm{P}} +  n_{\mathrm{P}}^{a \frac{2\alpha + d}{2\gamma\alpha + d}} \lesssim n_{\mathrm{P}}^{a \frac{2\gamma\alpha }{2\gamma\alpha + d}} + n_{\mathrm{P}}^{1 +  a\frac{(\gamma + 1)\alpha + d}{2\gamma\alpha + d}} + n_{\mathrm{P}}^{a \frac{2(\gamma + 1)\alpha + d}{2\gamma\alpha + d} }. 
		\end{align*}
		The left-hand side is dominated by the term 
		$n_{\mathrm{P}}^{a \frac{2\alpha + d}{2\gamma\alpha + d}}$. 
		Thus, the conclusion holds since there holds
		$a \frac{2\alpha + d}{2\gamma\alpha + d} < a \frac{2(\gamma + 1)\alpha + d}{2\gamma\alpha + d}$ for all $a, \gamma, \alpha$, and $d$. 
		For the last statement, the same procedure yields that it suffices to show
		\begin{align*}
			n_{\mathrm{P}}^{\frac{1}{2}}  \lesssim \left(n_{\mathrm{P}}^{1-a\frac{d}{2\gamma\alpha +  d}} + n_{\mathrm{P}}^{ a - a\frac{2\gamma \alpha }{2\gamma\alpha + d}} \right)\cdot \varepsilon. 
		\end{align*}
		Thus, the conclusion holds since the following conditions hold:
		\begin{align*}
			\frac{1}{2} \leq 1 - \frac{d}{2\alpha + 2d}<  1 - a \frac{d }{2\gamma\alpha + d}, \;\;\;\text{ and }\;\;\ n_{\mathrm{P}}^{\frac{\alpha}{2\alpha + 2d}}\cdot \varepsilon  \gtrsim 1. 
		\end{align*}

		\textbf{Case ($\mathbf{iv}$)}: 
		$ \frac{n_{\mathrm{P}} \varepsilon^2}{\log (n_{\mathrm{P}} + n_{\mathrm{Q}})} <  \left(\frac{n_{\mathrm{Q}}}{\log (n_{\mathrm{P}} + n_{\mathrm{Q}})}\right)^{\frac{2\alpha + 2d}{2 \gamma\alpha + d}}$
		and $\frac{(\gamma+1)\alpha}{d}<  1$. 
		Compared to case ($iii$), the value is $\lambda$ is multiplied by a factor $\delta^{(\frac{(\gamma + 1)\alpha}{d} - 1)\wedge 0} >1 $. 
		Thus, the first conclusion holds since the value of the left-hand side increases. 
		The third conclusion holds since the value of the left-hand side decreases. 
		For the second conclusion, by replacing all terms with equivalent order of $n_{\mathrm{P}}$, it suffices to prove
		\begin{align*}
			n_{\mathrm{P}} +  n_{\mathrm{P}}^{a \frac{-2\gamma \alpha + 3d}{2\gamma\alpha + d}} \lesssim n_{\mathrm{P}}^{a \frac{2\gamma\alpha }{2\gamma\alpha + d}} + n_{\mathrm{P}}^{1 +  a - a \frac{2\gamma \alpha }{2\gamma\alpha + d}} + n_{\mathrm{P}}^{2 a  - a\frac{4\gamma \alpha - d}{2\gamma\alpha + d} }. 
		\end{align*}
		We have $n_{\mathrm{P}} \lesssim n_{\mathrm{P}}^{1 +   \frac{ad }{2\gamma\alpha + d}} = n_{\mathrm{P}}^{1 +  a - a \frac{2\gamma \alpha }{2\gamma\alpha + d}}$ and
		$n_{\mathrm{P}}^{a \frac{-2\gamma \alpha + 3d}{2\gamma\alpha + d}} \lesssim n_{\mathrm{P}}^{ \frac{  3ad}{2\gamma\alpha + d}} = n_{\mathrm{P}}^{2 a  - a\frac{4\gamma \alpha - d}{2\gamma\alpha + d} }$. 
	\end{proof}

	\subsection{Proofs of Results in Section \ref{sec:proofoferroranalysis}}
	
	
	\begin{proof}\textbf{of Lemma \ref{lem:laplacebound}}\;\;
		The conclusion follows from (2.18) in \cite{wainwright2019high}.  
	\end{proof}

	In the subsequent proof, we define the empirical measure $\mathrm{D} := \frac{1}{n} \sum_{i=1}^n \delta_{(X_i,Y_i)}$ given samples $D = \{(X_1, Y_1), \cdots, (X_n,Y_n)\}$, where $\delta$ is the Dirac function. 
	To conduct our analysis, we need to introduce the following fundamental descriptions of \textit{covering number} which enables an approximation of an infinite set by finite subsets.
	\begin{definition}[Covering Number]
		Let $(\mathcal{X}, d)$ be a metric space and $A \subset \mathcal{X}$. For $\varepsilon>0$, the $\varepsilon$-covering number of $A$ is denoted as 
		\begin{align*}
			\mathcal{N}(A, d, \varepsilon) 
			:= \min \biggl\{ n \geq 1 : \exists x_1, \ldots, x_n \in \mathcal{X} \text{ such that } A \subset \bigcup^n_{i=1} B(x_i, \varepsilon) \biggr\},
		\end{align*}
		where $B(x, \varepsilon) := \{ x' \in \mathcal{X} : d(x, x') \leq \varepsilon \}$.
	\end{definition}
	
	To prove Lemma \ref{VCIndex}, we need the following fundamental lemma concerning the VC dimension of random partitions in Section \ref{sec:partition}. To this end, let $p \in \mathbb{N}$ be fixed and $\pi_p$ be a partition of $\mathcal{X}$ with the number of splits $p$ and $\pi_{(p)}$ denote the collection of all partitions $\pi_p$.

	\begin{lemma}\label{VCIndex}
		Let $\tilde{\mathcal{A}}$ be the collection of all cells $\times_{i=1}^d [a_i,b_i]$ in $\mathbb{R}^d$. For all $0<\varepsilon<1$, there exists a universal constant $C$ such that
		\begin{align*}
			\mathcal{N}(\eins_{\tilde{\mathcal{A}}}, \|\cdot\|_{L_1(Q)}, \varepsilon)\leq C(2d+1)(4e)^{2d+1}(1/\varepsilon)^{2d}.
		\end{align*}
	\end{lemma}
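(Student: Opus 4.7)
The plan is to reduce the bound to a standard covering-number estimate expressed in terms of the VC index of the class of sets $\tilde{\mathcal{A}}$. Specifically, for a class of sets $\mathcal{C}$ with VC index $V(\mathcal{C}) = \mathrm{VC\text{-}dim}(\mathcal{C}) + 1$, a theorem of the type of van der Vaart--Wellner (Theorem 2.6.4) provides, for the indicator class $\mathbf{1}_{\mathcal{C}}$ and any probability measure $Q$,
\begin{align*}
\mathcal{N}(\mathbf{1}_{\mathcal{C}}, \|\cdot\|_{L_1(Q)}, \varepsilon) \;\leq\; K\, V(\mathcal{C}) (4e)^{V(\mathcal{C})} (1/\varepsilon)^{V(\mathcal{C})-1}
\end{align*}
for a universal constant $K$. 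Matching exponents with the target bound $(4e)^{2d+1}(1/\varepsilon)^{2d}$ suggests aiming for $V(\tilde{\mathcal{A}}) = 2d+1$, i.e., for the VC dimension of the family of axis-aligned closed cells in $\mathbb{R}^d$ to be exactly $2d$.

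First, I would verify the VC dimension bound. The lower bound is easy: place $2d$ points at $\pm e_i/2$ for $i=1,\dots,d$ (the canonical basis vectors) and note that any subset can be isolated by shrinking or expanding the corresponding faces of a cell centered at the origin, so $2d$ points are shattered. For the upper bound, one shows that no set of $2d+1$ points can be shattered. A clean way is via the Radon-style dimension argument: an axis-aligned cell is an intersection of $2d$ halfspaces of the form $\{x^i \geq a_i\}$ and $\{x^i \leq b_i\}$; the VC dimension of each one-dimensional halfspace family is $1$, and by the standard intersection bound for VC classes (Sauer--Shelah combined with the bound for intersections, see e.g. van der Vaart--Wellner Lemma 2.6.17) the VC dimension of the intersection is at most $2d$. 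Hence $V(\tilde{\mathcal{A}}) \leq 2d+1$.

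Next, I would apply the covering-number theorem above with $V(\tilde{\mathcal{A}}) \leq 2d+1$:
\begin{align*}
\mathcal{N}(\mathbf{1}_{\tilde{\mathcal{A}}}, \|\cdot\|_{L_1(Q)}, \varepsilon) \;\leq\; K (2d+1)(4e)^{2d+1}(1/\varepsilon)^{2d},
\end{align*}
which is the stated inequality with $C = K$. The result is independent of $Q$, as required.

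The main obstacle is not computational but rather a matter of citing (or re-deriving) the correct form of the metric-entropy bound: the VdV--Wellner bound is stated for a general envelope $F$ with the scaling $\varepsilon \|F\|_{L_r(Q)}$, and specializing to indicator classes (where $F \equiv 1$) and $r=1$ must be done carefully so the exponent on $1/\varepsilon$ matches exactly the $V(\tilde{\mathcal{A}})-1 = 2d$ appearing in the statement. The VC-dimension calculation itself is routine but the combinatorial intersection argument requires one to be precise about whether one uses VC dimension or VC index; this is the only place a factor of $+1$ can drift.
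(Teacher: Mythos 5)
Your proposal is correct and follows essentially the same route as the paper, which simply cites Theorem 9.2 of Kosorok (2008) — the same universal VC covering bound $\mathcal{N}(\eins_{\mathcal{C}},\|\cdot\|_{L_1(Q)},\varepsilon)\leq K\,V(\mathcal{C})(4e)^{V(\mathcal{C})}(1/\varepsilon)^{V(\mathcal{C})-1}$ you quote from van der Vaart--Wellner — applied with VC index $2d+1$ for axis-aligned cells. The one imprecise step is your upper bound on the VC dimension: the generic intersection bound for VC classes carries a logarithmic factor and does not by itself give exactly $2d$; the clean argument is that among any $2d+1$ points one point has each of its coordinates sandwiched between the per-coordinate minima and maxima attained by the other $2d$, so it lies in their bounding box and the subset omitting it cannot be picked out.
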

	
	\begin{proof}\textbf{of Lemma \ref{VCIndex}}\;\;
		The result follows directly from Theorem 9.2 in \cite{Kosorok2008introduction}.
	\end{proof}
	
	Before we proceed, we list the well-known Bernstein's inequality that will be used frequently in the proofs.

	\begin{lemma}[Bernstein's inequality] \label{lem::bernstein}
		Let $B>0$ and $\sigma>0$ be real numbers, and $n\geq 1$ be an integer. Furthermore, let $\xi_1,\ldots,\xi_n$ be independent random variables satisfying $\mathbb{E}_{P} \xi_i=0$, $\|\xi_i\|_\infty\leq B$, and $\mathbb{E}_{P} \sum_{i=1}^n \xi_i^2\leq \sigma^2$ for all $i=1,\ldots,n$. Then for all $\tau>0$, we have
		\begin{align*}
			P\biggl(\frac{1}{n}\sum_{i=1}^n\xi_i\geq \sqrt{\frac{2\sigma^2\tau}{n^2}}+\frac{2B\tau}{3n}\biggr)\leq e^{-\tau}.
		\end{align*}
	\end{lemma}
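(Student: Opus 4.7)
The plan is to follow the classical Cram\'er--Chernoff route, controlling the moment generating function (MGF) of each summand via a careful Taylor bound that exploits the almost-sure bound $B$ and the variance bound $\sigma^2$, and then optimizing the exponential tilt. Concretely, for $\lambda>0$ the Markov bound gives
\begin{align*}
P\biggl(\sum_{i=1}^{n}\xi_i \ge n s\biggr) \le e^{-\lambda n s}\prod_{i=1}^{n}\mathbb{E}_P\bigl[e^{\lambda\xi_i}\bigr],
\end{align*}
so the task reduces to bounding each $\mathbb{E}_P[e^{\lambda\xi_i}]$ and then choosing $\lambda$.

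Next I would bound the MGF. Using $\mathbb{E}_P\xi_i=0$, $|\xi_i|\le B$, and the series expansion,
\begin{align*}
\mathbb{E}_P\bigl[e^{\lambda\xi_i}\bigr] \;=\; 1 + \sum_{k\ge 2}\frac{\lambda^k \mathbb{E}_P[\xi_i^k]}{k!} \;\le\; 1 + \sum_{k\ge 2}\frac{\lambda^k B^{k-2}\mathbb{E}_P[\xi_i^2]}{k!},
\end{align*}
since $|\xi_i|^k\le B^{k-2}\xi_i^2$ for $k\ge 2$. The elementary inequality $k!\ge 2\cdot 3^{k-2}$ for $k\ge 2$ then yields, for $0<\lambda<3/B$,
\begin{align*}
\mathbb{E}_P\bigl[e^{\lambda\xi_i}\bigr] \;\le\; 1 + \frac{\lambda^2\,\mathbb{E}_P[\xi_i^2]/2}{1-\lambda B/3} \;\le\; \exp\!\biggl(\frac{\lambda^2\,\mathbb{E}_P[\xi_i^2]/2}{1-\lambda B/3}\biggr).
\end{align*}
Multiplying over $i$ and using $\sum_{i=1}^n\mathbb{E}_P[\xi_i^2]\le\sigma^2$ gives
\begin{align*}
P\biggl(\sum_{i=1}^{n}\xi_i \ge n s\biggr) \;\le\; \exp\!\biggl(-\lambda n s + \frac{\lambda^2 \sigma^2/2}{1-\lambda B/3}\biggr).
\end{align*}

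Finally I would optimize in $\lambda$. The standard choice $\lambda = n s/(\sigma^2 + n s B/3)$ (which lies in $(0,3/B)$) makes the exponent equal to $-n^2 s^2/\bigl(2\sigma^2 + 2 n s B/3\bigr)$, so setting the right-hand side equal to $e^{-\tau}$ amounts to solving the quadratic $n^2 s^2 = 2\tau\sigma^2 + (2 B \tau/3)\, n s$ for $s$. Using $\sqrt{a+b}\le\sqrt{a}+\sqrt{b}$ (or simply verifying that the proposed $s=\sqrt{2\sigma^2\tau/n^2}+2B\tau/(3n)$ satisfies $n^2 s^2 \ge 2\tau\sigma^2 + (2 B \tau/3) n s$) yields the stated tail bound. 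The main bookkeeping obstacle is the last step: the two-term form in the statement is not the exact Legendre inverse of the Bernstein exponent but a convenient upper bound, so I would verify the required inequality directly by expanding $n^2 s^2$ and checking that the cross terms are non-negative, rather than try to invert the quadratic exactly.
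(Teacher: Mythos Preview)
Your argument is the standard Cram\'er--Chernoff derivation of Bernstein's inequality and is correct as written; the MGF bound via $|\xi_i|^k\le B^{k-2}\xi_i^2$ and $k!\ge 2\cdot 3^{k-2}$, the geometric summation, and the final verification that $s=\sqrt{2\sigma^2\tau}/n+2B\tau/(3n)$ makes the Bernstein exponent at least $\tau$ all check out.

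The paper, however, does not prove this lemma at all: it simply introduces it as ``the well-known Bernstein's inequality'' and uses it as a black box. So there is no ``paper's own proof'' to compare against; your proposal supplies a self-contained argument where the paper just cites the result. If anything, your write-up is more than the paper requires---a one-line reference to a standard textbook (e.g., the concentration chapter of Boucheron--Lugosi--Massart or Wainwright) would match the paper's treatment.
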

	
	\begin{proof}\textbf{of Lemma \ref{lem:cellbound}}\;\;
		Since $\mathrm{P}$ and $\mathrm{Q}$ share the same marginal distribution, it is equivalent to prove
		\begin{align*}
			& \left|\frac{1}{n_{\mathrm{P}} + n_{\mathrm{Q}}}\left(\sum_{i=1}^{n_{\mathrm{P}} }  \mathbf{1}\{X_i \in A_{(p)}^{j}\} + \lambda  \sum_{i=1}^{ n_{\mathrm{Q}}}  \mathbf{1}\{X_i \in A_{(p)}^{j}\} \right) - \frac{n_{\mathrm{P}} + \lambda n_{\mathrm{Q}}}{n_{\mathrm{P}} + n_{\mathrm{Q}}}\int_{A_{(p)}^{j}}d\mathrm{P}_X(x')  \right|\\ \lesssim & \sqrt{\frac{ (n_{\mathrm{P}} + \lambda^2 n_{\mathrm{Q}}) \log ( n_{\mathrm{P}} + n_{\mathrm{Q}} )}{2^p(n_{\mathrm{P}} + n_{\mathrm{Q}})^2}}
		\end{align*}
		holds with probability $\mathrm{P}^{n_{\mathrm{P}} + n_{\mathrm{Q}}}$ at least $1-1/(n_{\mathrm{P}} + n_{\mathrm{Q}})^2$ for $X_i \sim \mathrm{P}_X$. 
		Let 
		$\tilde{\mathcal{A}}$ be the collection of all cells $\times_{i=1}^d [a_i,b_i]$ in $\mathbb{R}^d$.
		Applying Lemma \ref{VCIndex} with $(\mathrm{D}_X+\mathrm{P}_X)/2$, there exists an $\varepsilon$-net $\{\tilde{A}^k\}_{k=1}^K\subset \tilde{\mathcal{A}}$ with 
		\begin{align}\label{equ::Kcover}
			K\leq C(2d+1)(4e)^{2d+1}(1/\varepsilon)^{2d}
		\end{align}
		such that
		for any $j\in \mathcal{I}_p$, there exist some $k\in \{1,\ldots,K\}$ such that
		\begin{align*}
			\|\eins\{x\in A_{(p)}^j\}-\eins\{x\in \tilde{A}^k\}\|_{L_1((\mathrm{D}_X+\mathrm{P}_X)/2)}\leq \varepsilon,
		\end{align*}
		Since 
		\begin{align*}
			&\|\eins\{x\in A_{(p)}^j\}-\eins\{x\in\tilde{A}^k\}\|_{L_1((\mathrm{D}_X+\mathrm{P}_X)/2)}\\
			= & 1/2\cdot \|\eins\{x\in A_{(p)}^j\}-\eins\{x\in \tilde{A}^k\}\|_{L_1(\mathrm{D}_X)}+1/2\cdot \|\eins\{x\in A_{(p)}^j\}-\eins\{x\in \tilde{A}^k\}\|_{L_1(\mathrm{P}_X)},
		\end{align*}
		we get
		\begin{align}\label{equ::einsapj}
			\|\eins\{x\in A_{(p)}^j\}-\eins\{x\in \tilde{A}^k\}\|_{L_1(\mathrm{D}_X)}\leq 2\varepsilon, \quad  \|\eins\{x\in A_{(p)}^j\}-\eins\{x\in \tilde{A}^k\}\|_{L_1(\mathrm{P}_X)}\leq 2\varepsilon.
		\end{align}
		Consequently, by the definition of the covering number and the triangle inequality, for any $j\in \mathcal{I}_p$, there holds
		\begin{align*}
			&\biggl|\frac{1}{n_{\mathrm{P}} + n_{\mathrm{Q}}}\left(\sum_{i=1}^{n_{\mathrm{P}} }\eins\{x\in A_{(p)}^j\}(X_i) + \lambda \sum_{i=1}^{ n_{\mathrm{Q}}}\eins\{x\in A_{(p)}^j\}(X_i)\right)- \int_{\tilde{A}_{(p)}^j}d\mathrm{P}_X(x') \biggr|\\
			\leq &
			\biggl|\frac{1}{n_{\mathrm{P}} + n_{\mathrm{Q}}} \left( \sum_{i=1}^{n_{\mathrm{P}} } \eins\{x\in \tilde{A}^k\}(X_i) + \lambda \sum_{i=1}^{ n_{\mathrm{Q}}} \eins\{x\in \tilde{A}^k\}(X_i)\right) - \int_{\tilde{A}^k}d\mathrm{P}_X(x')\biggr| \\ & + \|\eins\{x\in A_{(p)}^j\}-\eins\{x\in \tilde{A}^k\}\|_{L_1(\mathrm{D}_X)}
			+ \|\eins\{x\in A_{(p)}^j\}-\eins\{x\in \tilde{A}^k\}\|_{L_1(\mathrm{P}_X)}
			\\ \leq& \biggl|\frac{1}{n_{\mathrm{P}} + n_{\mathrm{Q}}} \left( \sum_{i=1}^{n_{\mathrm{P}} } \eins\{x\in \tilde{A}^k\}(X_i) + \lambda \sum_{i=1}^{ n_{\mathrm{Q}}} \eins\{x\in \tilde{A}^k\}(X_i)\right) -\int_{\tilde{A}^k}d\mathrm{P}_X(x')\biggr| + 4\varepsilon.
		\end{align*}
		Therefore, we get
		\begin{align}\label{equ::Isupjip}
			& \sup_{j\in \mathcal{I}_p} \,\biggl|\frac{1}{n_{\mathrm{P}} + n_{\mathrm{Q}}}\left( \sum_{i=1}^{n_{\mathrm{P}} } \eins\{x\in A_{(p)}^j\}(X_i) + \lambda \sum_{i=1}^{ n_{\mathrm{Q}}} \eins\{x\in A_{(p)}^j\}(X_i)\right)- \int_{\tilde{A}_{(p)}^j}d\mathrm{P}_X(x') \biggr|\\
			\leq & \sup_{1\leq k\leq K} \,\biggl|\frac{1}{n_{\mathrm{P}} + n_{\mathrm{Q}}} \left(\sum_{i=1}^{n_{\mathrm{P}} } \eins\{x\in\tilde{A}^k\}(X_i) + \lambda \sum_{i=1}^{ n_{\mathrm{Q}}} \eins\{x\in\tilde{A}^k\}(X_i)\right)-\int_{\tilde{A}^k}d\mathrm{P}_X(x')\biggr| + 4\varepsilon.
		\end{align}
		For any fixed $1\leq k\leq K$, let the random variable $\xi_i$ be defined by $\xi_i:=\eins\{X_i\in\tilde{A}^k\}-\int_{\tilde{A}^k}d\mathrm{P}_X(x')$ for $i = 1,\cdots, n_{\mathrm{P}}$ and $\xi_i:= \lambda (\eins\{X_i\in\tilde{A}^k\}-\int_{\tilde{A}^k}d\mathrm{P}_X(x'))$ for $i = n_{\mathrm{P}}+1,\cdots, n_{\mathrm{P}} + n_{\mathrm{Q}}$.
		Then we have $\mathbb{E}_{\mathrm{P}_X}\xi_i=0$, $\|\xi\|_\infty\leq 1\vee \lambda $, and $\mathbb{E}_{\mathrm{P}_X} \sum_{i = 1}^{n_{\mathrm{P}}+ n_{\mathrm{Q}}}\xi_i^2\leq (n_{\mathrm{P}}+ \lambda^2 n_{\mathrm{Q}})\int_{\tilde{A}^k}d\mathrm{P}_X(x')$. 
		According to Assumption \ref{asp:boundedmarginal}, there holds $\mathbb{E}_{\mathrm{P}_X} \sum_{i = 1}^{n_{\mathrm{P}}+ n_{\mathrm{Q}}} \xi_i^2\leq (n_{\mathrm{P}}+ \lambda^2 n_{\mathrm{Q}}) \cdot \overline{c}\cdot 2^{-p}$. 
		Applying Bernstein's inequality in Lemma \ref{lem::bernstein}, we obtain
		\begin{align*}
			\biggl|\frac{1}{n_{\mathrm{P}} + n_{\mathrm{Q}}} \sum_{i=1}^{n_{\mathrm{P}} + n_{\mathrm{Q}}} \eins\{X_i\in \tilde{A}^k\}-
			\frac{n_{\mathrm{P}} + \lambda n_{\mathrm{Q}}}{n_{\mathrm{P}} + n_{\mathrm{Q}}} 
			\int_{\tilde{A}^k}d\mathrm{P}_X(x')\biggr|\leq & \sqrt{\frac{\overline{c}\; 2^{1-p} \tau (n_{\mathrm{P}} + \lambda^2 n_{\mathrm{Q}})}{(n_{\mathrm{P}} + n_{\mathrm{Q}})^2} } \\ & +\frac{2\tau \cdot 1\vee \lambda \cdot  \log (n_{\mathrm{P}} + n_{\mathrm{Q}})}{3(n_{\mathrm{P}} + n_{\mathrm{Q}})}
		\end{align*}
		with probability $\mathrm{P}^{n_{\mathrm{P}} + n_{\mathrm{Q}}}$ at least $1-2e^{-\tau}$. 
		Then the union bound together with the covering number estimate \eqref{equ::Kcover} implies that
		\begin{align*}
			& \sup_{1\leq k\leq K}\, \biggl|\frac{1}{n_{\mathrm{P}} + n_{\mathrm{Q}}} \sum_{i=1}^{n_{\mathrm{P}} + n_{\mathrm{Q}}} \eins\{X_i\in\tilde{A}^k\}- \frac{n_{\mathrm{P}} + \lambda n_{\mathrm{Q}}}{n_{\mathrm{P}} + n_{\mathrm{Q}}} 
			\int_{\tilde{A}^k}d\mathrm{P}_X(x')\biggr|\\
			\leq & \sqrt{\frac{\overline{c}\cdot 2^{1-p} (n_{\mathrm{P}} + \lambda^2 n_{\mathrm{Q}})(\tau+\log (2K))}{(n_{\mathrm{P}} + n_{\mathrm{Q}})^2}} +\frac{2(\tau+\log (2K)) \cdot 1\vee \lambda \cdot \log (n_{\mathrm{P}} + n_{\mathrm{Q}})}{3(n_{\mathrm{P}} + n_{\mathrm{Q}})}
		\end{align*}
		with probability $\mathrm{P}^{n_{\mathrm{P}} + n_{\mathrm{Q}}}$ at least $1-e^{-\tau}$. Let $\tau=2\log (n_{\mathrm{P}} + n_{\mathrm{Q}})$ and $\varepsilon=1/(n_{\mathrm{P}} + n_{\mathrm{Q}})$. Then for any $n_{\mathrm{P}} + n_{\mathrm{Q}}> N_1:=(2C)\wedge (2d+1)\wedge (4e)$, we have $\tau+\log (2K)=2\log n+ \log (2C)+\log (2d+1)+(2d+1)\log (4e)+ 2d\log n\leq (4d+5)\log (n_{\mathrm{P}} + n_{\mathrm{Q}})$. Therefore, we have
		\begin{align*}
			& \sup_{1\leq k\leq K}\, \biggl|\frac{1}{n_{\mathrm{P}} + n_{\mathrm{Q}}} \sum_{i=1}^{n_{\mathrm{P}} + n_{\mathrm{Q}}} \eins\{X_i \in \tilde{A}^k\}-
			\frac{n_{\mathrm{P}} + \lambda n_{\mathrm{Q}}}{n_{\mathrm{P}} + n_{\mathrm{Q}}} 
			\int_{\tilde{A}^k}d\mathrm{P}_X(x')\biggr|
			\\
			\leq & \sqrt{\frac{\overline{c}\cdot 2^{1-p} ( n_{\mathrm{P}} + \lambda^2 n_{\mathrm{Q}})(4d+5)\log (n_{\mathrm{P}} + n_{\mathrm{Q}})}{(n_{\mathrm{P}} + n_{\mathrm{Q}})^2}} +\frac{2(4d+5) \cdot 1\vee \lambda \cdot  \log (n_{\mathrm{P}} + n_{\mathrm{Q}})}{3(n_{\mathrm{P}} + n_{\mathrm{Q}})}
		\end{align*}
		with probability $\mathrm{P}^{n_{\mathrm{P}} + n_{\mathrm{Q}}}$ at least $1-1/{(n_{\mathrm{P}} + n_{\mathrm{Q}})}^2$. This together with \eqref{equ::Isupjip} yields that
		\begin{align*}
			& \sup_{j\in \mathcal{I}_p} \,\biggl|\frac{1}{n_{\mathrm{P}} + n_{\mathrm{Q}}}\sum_{i=1}^{n_{\mathrm{P}} + n_{\mathrm{Q}}} \eins\{x\in A_{(p)}^j\}- 
			\frac{n_{\mathrm{P}} + \lambda n_{\mathrm{Q}}}{n_{\mathrm{P}} + n_{\mathrm{Q}}} 
			\int_{\tilde{A}_{(p)}^j}d\mathrm{P}_X(x') \biggr|\\
			\leq & \sqrt{\frac{\overline{c}\cdot 2^{1-p}(4d+5)(n_{\mathrm{P}} + \lambda^2 n_{\mathrm{Q}})\log (n_{\mathrm{P}} + n_{\mathrm{Q}})}{(n_{\mathrm{P}} + n_{\mathrm{Q}})^2 }} +\frac{2(4d+5) \cdot 1\vee \lambda \cdot \log (n_{\mathrm{P}} + n_{\mathrm{Q}})}{3(n_{\mathrm{P}} + n_{\mathrm{Q}})}+\frac{4}{n_{\mathrm{P}} + n_{\mathrm{Q}}}.
		\end{align*}
		We now compare to see which term is dominating. 
		Apply Lemma \ref{lem:boudnofnlamdasq} (the first conclusion), we have 
		\begin{align*}
			\frac{2^{-p} (n_{\mathrm{P}} + \lambda^2 n_{\mathrm{Q}})\cdot  \log (n_{\mathrm{P}} +  n_{\mathrm{Q}})}{ (1\vee \lambda^2) \cdot \log^2 (n_{\mathrm{P}} +  n_{\mathrm{Q}})}
			\gtrsim
			\frac{\delta^{-\frac{ 2\alpha + d }{d }} }{\varepsilon^{2}\cdot ( 1\vee \lambda^2)}.
		\end{align*}
		When $\gamma \leq \frac{d}{\alpha} - 1$ and $\gamma \leq \frac{d}{2\alpha}$,
		\begin{align*}
			\frac{\delta^{-\frac{ 2\alpha + d }{d }} }{\varepsilon^{2}\cdot ( 1\vee \lambda^2)} = \varepsilon^{-2}\cdot \delta^{-\frac{ 2\alpha + d }{d } - \frac{(\gamma - 1)\alpha }{d}} > 
			\varepsilon^{-2}\cdot \delta^{-\frac{ 2\alpha + 3d }{2d }}\geq &
			\varepsilon^{-2}\cdot (n_{\mathrm{P}}\cdot \varepsilon^2)^{\frac{2\alpha +3 d}{4\alpha + 4d}}\\
			= & (n_{\mathrm{P}}\cdot \varepsilon^{-\frac{4\alpha + 2d}{2\alpha + 3d}})^{\frac{2\alpha + 3d}{4\alpha + 4d}} \gtrsim 1. 
		\end{align*}
		When $\gamma \leq \frac{d}{\alpha} - 1$ and $\gamma > \frac{d}{2\alpha}$,
		\begin{align*}
			\frac{\delta^{-\frac{ 2\alpha + d }{d }} }{\varepsilon^{2}\cdot ( 1\vee \lambda^2)} = \varepsilon^{-2}\cdot \delta^{-\frac{ 2\alpha + d }{d }} \geq \varepsilon^{-2}\cdot (n_{\mathrm{P}}\cdot \varepsilon^2)^{\frac{2\alpha + d}{2\alpha + 2d}} = (n_{\mathrm{P}}\cdot \varepsilon^{-\frac{2d}{2\alpha + d}})^{\frac{2\alpha + d}{2\alpha + 2d}} \gtrsim 1. 
		\end{align*}
		When $\gamma > \frac{d}{\alpha} - 1$ and $\gamma \geq 1$, the derivation is the same as the last situation.
		When $\gamma > \frac{d}{\alpha} - 1$ and $\gamma < 1$,
		\begin{align*}
			\frac{\delta^{-\frac{ 2\alpha + d }{d }} }{\varepsilon^{2}\cdot ( 1\vee \lambda^2)} = \varepsilon^{-2}\cdot \delta^{-\frac{ 2\alpha + d }{d } - \frac{2 \gamma\alpha - d}{d}} \geq 
			\varepsilon^{-2}\cdot (n_{\mathrm{P}}\cdot \varepsilon^2)^{\frac{2\alpha}{ d}}
			=  (n_{\mathrm{P}}\cdot \varepsilon^{-\frac{4\alpha-2d}{2\alpha}})^{\frac{2\alpha }{d}} \gtrsim 1. 
		\end{align*}
		These derivations imply that the first term is dominating the upper bound,  
		which implies the desired result. 
	\end{proof}

	\begin{proof}\textbf{of Lemma \ref{lem:cellboundY}}\;\;
		For notation simplicity, let $(X_i, Y_i) \sim \mathrm{P}$ for $i = 1, \cdots, n_{\mathrm{P}}$ and $(X_i, Y_i) \sim \mathrm{Q}$ for $i = n_{\mathrm{P}} + 1, \cdots, n_{\mathrm{P}} + n_{\mathrm{Q}}$. 
		Let $\tilde{Y}_i = Y_i$ for $i = 1, \cdots, n_{\mathrm{P}}$ and $\tilde{Y}_i = \lambda \cdot Y_i$ for $i = n_{\mathrm{P}} + 1, \cdots, n_{\mathrm{P}} + n_{\mathrm{Q}}$. 
		Let $\eta_{\mathrm{P}, \mathrm{Q}}^*(x) = \left(n_{\mathrm{P}}\eta_{\mathrm{P}}^*(x) +  \lambda n_{\mathrm{Q}}\eta_{\mathrm{Q}}^*(x)\right)/(n_{\mathrm{P}} + \lambda n_{\mathrm{Q}})$. 
		Let 
		$\tilde{\mathcal{A}}$ be the collection of all cells $\times_{i=1}^d [a_i,b_i]$ in $\mathbb{R}^d$.
		Then there exists an $\varepsilon$-net $\{\tilde{A}^k\}_{k=1}^K\subset \tilde{\mathcal{A}}$ with $K$ bounded by \eqref{equ::Kcover} such that 
		for any $j\in \mathcal{I}_p$,
		\eqref{equ::einsapj} holds
		for some $k\in \{1,\ldots,K\}$. 
		Consequently, by the definition of the covering number and the triangle inequality, for any $j\in \mathcal{I}_p$, there holds
		\begin{align}
			&\biggl|\sum_{i=1}^{n_{\mathrm{P}} + n_{\mathrm{Q}}}\eins\{X_i\in A_{(p)}^j\}\tilde{Y}_i -\int_{A_{(p)}^j}\eta_{{\mathrm{P}} , {\mathrm{Q}}}^*(x') d\mathrm{P}_X(x')\biggr|\nonumber\\
			\leq & \biggl|\sum_{i=1}^{n_{\mathrm{P}} + n_{\mathrm{Q}}} \eins\{X_i\in\tilde{A}^k\}\tilde{Y}_i -\int_{\tilde{A}^k} \eta_{{\mathrm{P}} , {\mathrm{Q}}}^*(x') d\mathrm{P}_X(x')\biggr| \nonumber\\
			& +
			\int_{\mathbb{R}^d} \bigl|\eins\{x'\in A_{(p)}^j\}-\eins\{x'\in\tilde{A}^k\}\bigr| \bigl|\eta_{{\mathrm{P}} , {\mathrm{Q}}}^*(x') \bigr| d\mathrm{P}_X(x') +\sum_{i=1}^{n_{\mathrm{P}} + n_{\mathrm{Q}}} \bigl|\eins\{X_i\in\tilde{A}^k\} - \eins\{X_i\in A_{(p)}^j\}\bigr| \nonumber\\
			\leq & \biggl|\sum_{i=1}^{n_{\mathrm{P}} + n_{\mathrm{Q}}} \eins\{X_i\in \tilde{A}^k\}\tilde{Y}_i -\int_{\tilde{A}^k} \eta_{{\mathrm{P}} , {\mathrm{Q}}}^*(x') d\mathrm{P}_X(x')\biggr|\nonumber\\
			&+\max_{1\leq i\leq n} \cdot \|\eins\{x\in A_{(p)}^j\}-\eins\{x\in \tilde{A}^k\}\|_{L_1(\mathrm{D}_X)} +  \|\eins\{x\in A_{(p)}^j\}-\eins\{x\in\tilde{A}^k\}\|_{L_1(\mathrm{P}_X)}\nonumber\\
			\leq & \biggl|\sum_{i=1}^{n_{\mathrm{P}} + n_{\mathrm{Q}}} \eins\{X_i\in \tilde{A}^k\}\tilde{Y}_i -\int_{\tilde{A}^k} \eta_{{\mathrm{P}} , {\mathrm{Q}}}^*(x') d\mathrm{P}_X(x')\biggr|+4\varepsilon.\label{equ::Isupjip1}
		\end{align}
		where the last inequality follow from the condition $\mathcal{Y} = \{0,1\}$.
		
		For any fixed $1\leq k\leq K$, let the random variable $\tilde{\xi}_i$ be defined by $\tilde{\xi}_i:=\eins\{X_i\in \tilde{A}^k\}\tilde{Y}_i-\int_{\tilde{A}^k} \eta^*_{\mathrm{P}}(x')\, d\mathrm{P}_X(x')$ for $i = 1,\cdots, n_{\mathrm{P}}$ and $\tilde{\xi}_i:=\eins\{X_i\in \tilde{A}^k\}\tilde{Y}_i- \lambda \int_{\tilde{A}^k} \eta^*_{\mathrm{Q}}(x')\, d\mathrm{P}_X(x')$ for $i = n_{\mathrm{P}} + 1,\cdots, n_{\mathrm{P}} + n_{\mathrm{Q}}$.
		Then we have $\mathbb{E}\tilde{\xi}_i=0$, $\|\xi\|_\infty\leq 1\vee \lambda$, and $\mathbb{E}\sum_{i=1}^{n_{\mathrm{P}} + n_{\mathrm{Q}}}\tilde{\xi}_i^2\leq  (n_{\mathrm{P}} + \lambda^2  n_{\mathrm{Q}})\int_{\tilde{A}^k}d\mathrm{P}_X(x')$. According to Assumption \ref{asp:boundedmarginal}, there holds 
		$\mathbb{E} \sum_{i=1}^{n_{\mathrm{P}} + n_{\mathrm{Q}}} \tilde{\xi}_i^2\leq  \overline{c}\cdot (n_{\mathrm{P}} + \lambda^2 n_{\mathrm{Q}})\cdot  2^{-p}$.
		Applying Bernstein's inequality in Lemma \ref{lem::bernstein}, we obtain
		\begin{align*}
			& \biggl| \frac{1}{ n_{\mathrm{P}} + n_{\mathrm{Q}} }\sum_{i=1}^{n_{\mathrm{P}} + n_{\mathrm{Q}}}\eins\{X_i\in A_{(p)}^j\}\tilde{Y}_i -
			\frac{n_{\mathrm{P}} + \lambda n_{\mathrm{Q}}}{ n_{\mathrm{P}} + n_{\mathrm{Q}} }
			\int_{A_{(p)}^j}\eta_{{\mathrm{P}} , {\mathrm{Q}}}^*(x') d\mathrm{P}_X(x')\biggr| \\
			\leq & \sqrt{\frac{ \overline{c}\cdot 2^{1-p} \tau (n_{\mathrm{P}} + \lambda^2 n_{\mathrm{Q}})}{(n_{\mathrm{P}} + n_{\mathrm{Q}})^2 }}    +  \frac{2\tau \cdot 1\vee \lambda \cdot \log (n_{\mathrm{P}} + n_{\mathrm{Q}})}{3(n_{\mathrm{P}} + n_{\mathrm{Q}})}
		\end{align*}
		with probability $\mathrm{P}^{n_{\mathrm{P}} + n_{\mathrm{Q}}}$ at least $1-2e^{-\tau}$. 
		Similar to the proof of Lemma \ref{lem:cellbound}, one can show that for any $n\geq N_1$, there holds
		\begin{align*}
			& \sup_{1\leq k\leq K} \biggl| \frac{1}{ n_{\mathrm{P}} + n_{\mathrm{Q}} } \sum_{i=1}^{n_{\mathrm{P}} + n_{\mathrm{Q}}}\eins\{X_i\in A_{(p)}^j\}\tilde{Y}_i -
			\frac{n_{\mathrm{P}} + \lambda n_{\mathrm{Q}}}{ n_{\mathrm{P}} + n_{\mathrm{Q}} }
			\int_{A_{(p)}^j}\eta_{{\mathrm{P}} , {\mathrm{Q}}}^*(x') d\mathrm{P}_X(x')\biggr| \\ \leq  &\sqrt{\frac{ \overline{c}\cdot 2^{1-p} \tau (n_{\mathrm{P}} + \lambda^2 n_{\mathrm{Q}}) }{(n_{\mathrm{P}} + n_{\mathrm{Q}})^2}}  +\frac{2\tau \cdot 1\vee \lambda \cdot \log (n_{\mathrm{P}} + n_{\mathrm{Q}})}{3(n_{\mathrm{P}} + n_{\mathrm{Q}})}
		\end{align*}
		with probability $\mathrm{P}^{n_{\mathrm{P}}+n_{\mathrm{Q}}}$ at least $1-1/(n_{\mathrm{P}} + n_{\mathrm{Q}})^2$. This together with \eqref{equ::Isupjip1} yields that
		\begin{align*}
			&\biggl| \frac{1 }{ n_{\mathrm{P}} +  n_{\mathrm{Q}} } 
			\sum_{i=1}^{n_{\mathrm{P}} + n_{\mathrm{Q}}}\eins\{X_i\in A_{(p)}^j\}\tilde{Y}_i -
			\frac{n_{\mathrm{P}} + \lambda  n_{\mathrm{Q}} }{ n_{\mathrm{P}} +  n_{\mathrm{Q}} } \int_{A_{(p)}^j}\eta_{{\mathrm{P}} , {\mathrm{Q}}}^*(x') d\mathrm{P}_X(x')\biggr| \\\leq& \sqrt{\frac{\overline{c}\cdot 2^{1-p}(4d+5) ( n_{\mathrm{P}} + \lambda^2 n_{\mathrm{Q}} )\log (n_{\mathrm{P}} + n_{\mathrm{Q}})}{(n_{\mathrm{P}} + n_{\mathrm{Q}})^2 }} +\frac{2(4d+5) \cdot 1\vee\lambda\cdot  \log (n_{\mathrm{P}} + n_{\mathrm{Q}})}{3(n_{\mathrm{P}} + n_{\mathrm{Q}})}+\frac{4}{n_{\mathrm{P}} + n_{\mathrm{Q}}}.
		\end{align*}
		This implies the desired result by analogous derivations as in the last part of the proof of Lemma \ref{lem:cellbound}.
	\end{proof}

	\begin{proof}\textbf{of Lemma \ref{lem::treeproperty}}\;\;
		According to the max-edge partition rule, when the depth of the tree $i$ is a multiple of dimension $d$, each cell of the tree partition is a high-dimensional cube with a side length $2^{-i/d}$.
		On the other hand, when the depth of the tree $p$ is not a multiple of dimension $d$, we consider the max-edge tree partition with depth $\lfloor i/d\rfloor$ and $\lceil i/d\rceil$, whose corresponding side length of the higher dimensional cube is $2^{-\lfloor i/d\rfloor}$ and $2^{-\lceil i/d \rceil}$. Note that in the splitting procedure of max-edge partition, the side length of each sub-rectangle decreases monotonically with the increase of $p$, so the side length of a random tree partition cell is between $2^{-\lceil i/d \rceil}$ and $2^{-\lfloor i/d \rfloor}$. This implies that
		\begin{align*}
			\sqrt{d}\cdot 2^{-\lceil i/d \rceil}\leq \mathrm{diam}(A_{(i)}^j)\leq \sqrt{d}\cdot 2^{-\lfloor i/d \rfloor}
		\end{align*}
		Since $i/d-1\leq \lfloor i/d\rfloor\leq \lceil i/d\rceil \leq i/d+1$, we immediately get $2^{-1}\sqrt{d}\cdot 2^{-i/d}\leq \mathrm{diam}(A_{
			(i)
		}^j)\leq 2\sqrt{d} \cdot 2^{-i/d}$. 
	\end{proof}

	\subsection{Proof of Theorem \ref{thm:rangeparameter}}

	\begin{proof}\textbf{of Theorem \ref{thm:rangeparameter}}\;\;
		We let $\tilde{f}^{\pi, \mathcal{M}}_{\mathrm{P}, \mathrm{Q}} := (\tilde{f}^{\pi}_{\mathrm{P}, \mathrm{Q}} \cdot \eins_{[0,1]^d}) \circ \mathcal{M}$ and $X\sim \mathrm{P}_X$.
		The excess risk can be decomposed by 
		\begin{align*}
			\mathcal{R}_{\mathrm{P}}\left(\tilde{f}^{\pi, \mathcal{M}}_{\mathrm{P}, \mathrm{Q}}\right)-\mathcal{R}_{\mathrm{P}}^*  = &\int_{[0,1]^d} L(x,y, \tilde{f}^{\pi, \mathcal{M}}_{\mathrm{P}, \mathrm{Q}}(x))d\mathrm{P}(x,y) - \int_{[0,1]^d} L(x,y, f^{*}(x))d\mathrm{P}(x,y) \\
			+ & \int_{[0,1]^{dc}} L(x,y, \tilde{f}^{\pi, \mathcal{M}}_{\mathrm{P}, \mathrm{Q}}(x))d\mathrm{P}(x,y) - \int_{[0,1]^{dc}} L(x,y, f^{*}(x))d\mathrm{P}(x,y)\\
			\lesssim & \; \delta^{\frac{\alpha(\beta + 1)}{d}} +  \mathbb{P}(\mathcal{M}(X)\notin \times_{j=1}^d[0,1])
		\end{align*}
		where $\delta$ is defined in Theorem \ref{thm:utility}.
		Note that
		\begin{align*}
			\mathbb{P}(\mathcal{M}(X)\notin \times_{j=1}^d[0,1]) =
			\mathbb{P}(X\notin \times_{j=1}^d[\widehat{a^j} , \widehat{b^j}]]) \leq  \sum_{j=1}^d \mathbb{P}(X^j\notin [\widehat{a^j},\widehat{b^j}]).
		\end{align*}
		Denote the CDF of $X^j$ as $F^j$. Since
		\begin{align*}
			\mathbb{P}\left(\left| F^j(\widehat{a^j}) -F^j(a^j)\right|>\frac{2\log n_{\mathrm{Q}}}{n_{\mathrm{Q}}}\right) & =\mathrm{P}\left(  F^j(\widehat{a^j} ) >\frac{2\log n_{\mathrm{Q}}}{n_{\mathrm{Q}}}\right) \\
			& = \left(1 - \frac{2\log n_{\mathrm{Q}}}{ n_{\mathrm{Q}}}\right)^{n_{\mathrm{Q}}} = \frac{1}{n_{\mathrm{Q}}^2}
		\end{align*}
		and similar result holds for $b^j$, we have $\mathbb{P}(X^j\notin [\widehat{a^j},\widehat{b^j}]) \lesssim \frac{\log n_{\mathrm{Q}}}{n_{\mathrm{Q}}}$
		with probability $1- 1/n_{\mathrm{Q}}^2$. 
		Thus, there holds $\mathbb{P}(X\notin \times_{j=1}^d[\widehat{a^j},\widehat{b^j}])) \lesssim \frac{\log n_{\mathrm{Q}}}{n_{\mathrm{Q}}}$
		with probability $1- d/n_{\mathrm{Q}}^2$. 
	\end{proof}

	\subsection{Proof of Proposition \ref{prop:lepskibound}}

	\begin{proof}\textbf{of Proposition \ref{prop:lepskibound}}\;\;
		We prove the conclusion for $k = p_0$ first. 
		When $k = p_0$ and $x\in A^j_{(k)}$, the term $\tilde{\eta}^{\pi_k}_{\mathrm{P}, \mathrm{Q}}(x)-\mathbb{E}_{Y|X}\left[\widehat{\eta}^{\pi_k}_{\mathrm{P}, \mathrm{Q}}(x) \right]$ can be written as 
		\begin{align*}
			\frac{1}{\sum_{i=1}^{n_{\mathrm{P}}} \tilde{U}^{\mathrm{P} j}_i + \lambda\sum_{i=1}^{n_{\mathrm{Q}}}{U}^{\mathrm{Q} j}_i}\cdot \biggl(
			\overbrace{\sum_{i = 1}^{n_{\mathrm{P}}}\left(V_i^{\mathrm{P}j} - \mathbb{E}_{Y^{\mathrm{P}}|X^{\mathrm{P}}}[V_i^{\mathrm{P}j}]\right)}^{(I)}
			+ \overbrace{\lambda\sum_{i = 1}^{n_{\mathrm{Q}}}\left(V_i^{\mathrm{Q}j} - \mathbb{E}_{Y^{\mathrm{Q}}|X^{\mathrm{Q}}}[V_i^{\mathrm{Q}j}]\right)}^{(II)}\\	
			+ \underbrace{\frac{4}{\varepsilon}\sum_{i = 1}^{n_{\mathrm{P}}} \xi_i^j - 
				\frac{\sum_{i=1}^{n_{\mathrm{P}}} \mathbb{E}_{Y^{\mathrm{P}}|X^{\mathrm{P}}}[V_i^{\mathrm{P}j}] + \lambda\sum_{i=1}^{n_{\mathrm{Q}}}\mathbb{E}_{Y^{\mathrm{Q}}|X^{\mathrm{Q}}}[V_i^{\mathrm{Q}j}]}{\sum_{i=1}^{n_{\mathrm{P}}} {U}^{\mathrm{P} j}_i + \lambda\sum_{i=1}^{n_{\mathrm{Q}}}{U}^{\mathrm{Q} j}_i}  \cdot \frac{4}{\varepsilon}
				\sum_{i = 1}^{n_{\mathrm{P}}} \zeta_i^j}_{(III)}  
			\biggr).
		\end{align*}
		
		We bound the three parts separately. 
		Since $ \mathbb{E}_{Y^{\mathrm{P}}|X^{\mathrm{P}}}[V_i^{\mathrm{P}j}]  \leq U_i^{\mathrm{P}j}$, there holds 
		\begin{align*}
			\left|\frac{\sum_{i=1}^{n_{\mathrm{P}}} \mathbb{E}_{Y^{\mathrm{P}}|X^{\mathrm{P}}}[V_i^{\mathrm{P}j}] + \lambda\sum_{i=1}^{n_{\mathrm{Q}}}\mathbb{E}_{Y^{\mathrm{Q}}|X^{\mathrm{Q}}}[V_i^{\mathrm{Q}j}]}{\sum_{i=1}^{n_{\mathrm{P}}} {U}^{\mathrm{P} j}_i + \lambda\sum_{i=1}^{n_{\mathrm{Q}}}{U}^{\mathrm{Q} j}_i} \right| \leq 1. 
		\end{align*}
		Applying Lemma \ref{lem:laplacebound}, $(III)$ is bounded by
		\begin{align}\label{equ:singleboundfornoise}
			|(III)|\leq  2 \sqrt{ 6\varepsilon^{-2} n_{\mathrm{P}} \log (n_{\mathrm{P}} + n_{\mathrm{Q}})}
		\end{align}
		with probability  $\mathrm{R}$ at least $1 - 1 / (n_{\mathrm{P}} + n_{\mathrm{Q}})^2$ for all $j$.
		The proofs for $(I)$ and $(II)$ are analogous to the proof of Lemma \ref{lem:cellboundY} which utilizes Bernstein's inequality with the VC dimension of possible squares on $\mathbb{R}^d$. 
		$(I)$  is an independent sum of Bernoulli random variables with at most $\sum_{i=1}^{n_{\mathrm{P}}}U_i^{\mathrm{P}j}$ non-zero elements.
		Thus by Bernstein's inequality (Lemma \ref{lem::bernstein}), we have
		\begin{align*}
			|(I)| \leq \sqrt{\frac{\sum_{i=1}^{n_{\mathrm{P}}}U_i^{\mathrm{P}j}\cdot \tau }{2}}
		\end{align*}
		with probability $\mathrm{P}_{Y^{\mathrm{P}} | X^{\mathrm{P}}}^{n_{\mathrm{P}}}$ at least $1-e^{-\tau}$. 
		The order of $p_0$ yields that $p_0 \leq \log (n_{\mathrm{P}}\varepsilon^2 + n_{\mathrm{Q}})$. 
		Since there are at most $2^{p_0+1}$ cells in the union of all $\pi_k$, there holds 
		\begin{align}\label{equ:singleboundforp}
			|(I)| \leq \sqrt{\frac{\sum_{i=1}^{n_{\mathrm{P}}}U_i^{\mathrm{P}j}\cdot (\tau + (p_0+1)\log 2) }{2}} \leq &\sqrt{\sum_{i=1}^{n_{\mathrm{P}}}U_i^{\mathrm{P}j}(2\log (n_{\mathrm{P}} + n_{\mathrm{Q}}) + \log \varepsilon )}\\
			\leq & \sqrt{3\sum_{i=1}^{n_{\mathrm{P}}}U_i^{\mathrm{P}j}\log (n_{\mathrm{P}} + n_{\mathrm{Q}})}\nonumber
		\end{align}
		with probability  $\mathrm{P}_{Y^{\mathrm{P}} | X^{\mathrm{P}}}^{n_{\mathrm{P}}}$ at least $1- 1/ (n_{\mathrm{P}} + n_{\mathrm{Q}})^2$. 
		By Lemma \ref{lem:laplacebound}, for all $j$, with probability $\mathrm{R}$ at least $1 - 1/(n_{\mathrm{P}} + n_{\mathrm{Q}})^2$, we have
		\begin{align*}
			|\sum_{i=1}^{n_{\mathrm{P}}}\tilde{U}_i^{\mathrm{P}j} - \sum_{i=1}^{n_{\mathrm{P}}}{U}_i^{\mathrm{P}j}|\leq  2\sqrt{3\varepsilon^{-2} n_{\mathrm{P}} \log (n_{\mathrm{P}} + n_{\mathrm{Q}})}.
		\end{align*}
		If $\sum_{i=1}^{n_{\mathrm{P}}}\tilde{U}_i^{\mathrm{P}j} \geq 8 \varepsilon^{-2} n_{\mathrm{P}}$, 
		there holds 
		\begin{align*}
			\sum_{i=1}^{n_{\mathrm{P}}}\tilde{U}_i^{\mathrm{P}j} \geq 8 \varepsilon^{-2} n_{\mathrm{P}} -  2\sqrt{3\varepsilon^{-2} n_{\mathrm{P}} \log (n_{\mathrm{P}} + n_{\mathrm{Q}})} \geq 6 \varepsilon^{-2} n_{\mathrm{P}} ,
		\end{align*}
		as well as 
		\begin{align*}
			\frac{4}{3}\sum_{i=1}^{n_{\mathrm{P}}}\tilde{U}_i^{\mathrm{P}j} \geq  &\sum_{i=1}^{n_{\mathrm{P}}}{U}_i^{\mathrm{P}j} -  2\sqrt{3\varepsilon^{-2} n_{\mathrm{P}} \log (n_{\mathrm{P}} + n_{\mathrm{Q}})}  + \frac{1}{3} \sum_{i=1}^{n_{\mathrm{P}}}\tilde{U}_i^{\mathrm{P}j} \\
			\geq &  \sum_{i=1}^{n_{\mathrm{P}}}{U}_i^{\mathrm{P}j} - \sqrt{3\sum_{i=1}^{n_{\mathrm{P}}}\tilde{U}_i^{\mathrm{P}j}\log (n_{\mathrm{P}} + n_{\mathrm{Q}})}+ \frac{1}{3} \sum_{i=1}^{n_{\mathrm{P}}}\tilde{U}_i^{\mathrm{P}j} \geq  \sum_{i=1}^{n_{\mathrm{P}}}{U}_i^{\mathrm{P}j}
		\end{align*}
		for sufficiently large $n_{\mathrm{P}}$.
		In this case, we have 
		\begin{align}
			|(I)|  +  |(III)| \leq & \sqrt{3\sum_{i=1}^{n_{\mathrm{P}}}U_i^{\mathrm{P}j}\log (n_{\mathrm{P}} + n_{\mathrm{Q}})} + 2 \sqrt{ 6\varepsilon^{-2} n_{\mathrm{P}} \log (n_{\mathrm{P}} + n_{\mathrm{Q}})}\nonumber
			\\
			\leq &  2\sqrt{\sum_{i=1}^{n_{\mathrm{P}}}\tilde{U}_i^{\mathrm{P}j}\log (n_{\mathrm{P}} + n_{\mathrm{Q}})} +  2 \sqrt{  \sum_{i=1}^{n_{\mathrm{P}}}\tilde{U}_i^{\mathrm{P}j} \log (n_{\mathrm{P}} + n_{\mathrm{Q}})} \nonumber
			\\
			= & 4 \sqrt{  \sum_{i=1}^{n_{\mathrm{P}}}\tilde{U}_i^{\mathrm{P}j} \log (n_{\mathrm{P}} + n_{\mathrm{Q}})} .\label{equ:boundoflepski121}
		\end{align}
		If $\sum_{i=1}^{n_{\mathrm{P}}}\tilde{U}_i^{\mathrm{P}j} < 8 \varepsilon^{-2} n_{\mathrm{P}}$, 
		there holds 
		\begin{align*}
			\sum_{i=1}^{n_{\mathrm{P}}}{U}_i^{\mathrm{P}j}  \leq \sum_{i=1}^{n_{\mathrm{P}}}\tilde{U}_i^{\mathrm{P}j} + 2\sqrt{3\varepsilon^{-2} n_{\mathrm{P}} \log (n_{\mathrm{P}} + n_{\mathrm{Q}})} \leq 8 \varepsilon^{-2} n_{\mathrm{P}} + \frac{8}{3}\varepsilon^{-2} n_{\mathrm{P}}
		\end{align*}
		for sufficiently large $n_{\mathrm{P}}$.
		In this case, we get 
		\begin{align}\nonumber
			|(I)|  +  |(III)| \leq & \sqrt{3\sum_{i=1}^{n_{\mathrm{P}}}U_i^{\mathrm{P}j}\log (n_{\mathrm{P}} + n_{\mathrm{Q}})} + 2 \sqrt{ 6\varepsilon^{-2} n_{\mathrm{P}} \log (n_{\mathrm{P}} + n_{\mathrm{Q}})}\\
			\leq & 2\sqrt{ 8 \varepsilon^{-2} n_{\mathrm{P}}\log (n_{\mathrm{P}} + n_{\mathrm{Q}})} + 2 \sqrt{ 8\varepsilon^{-2} n_{\mathrm{P}} \log (n_{\mathrm{P}} + n_{\mathrm{Q}})}\nonumber
			\\
			= & 4 \sqrt{ 8\varepsilon^{-2} n_{\mathrm{P}} \log (n_{\mathrm{P}} + n_{\mathrm{Q}})}.\label{equ:boundoflepski122}
		\end{align}
		Combinging \eqref{equ:boundoflepski121} and \eqref{equ:boundoflepski122}, we get 
		\begin{align}\label{equ:boundoflepski12}
			|(I)|  +  |(III)| \leq 4 \sqrt{ \left(8\varepsilon^{-2} n_{\mathrm{P}}\right)\vee \left( \sum_{i=1}^{n_{\mathrm{P}}}\tilde{U}_i^{\mathrm{P}j} \right) \log (n_{\mathrm{P}} + n_{\mathrm{Q}})}.
		\end{align}
		Similarly to \eqref{equ:singleboundforp}, we have 
		\begin{align}\label{equ:singleboundforq}
			|(II)| \leq \lambda \sqrt{\frac{\sum_{i=1}^{n_{\mathrm{P}}}U_i^{\mathrm{Q}j}\cdot (\tau + (p_0+1)\log 2) }{2}} \leq &\lambda \sqrt{\sum_{i=1}^{n_{\mathrm{P}}}U_i^{\mathrm{Q}j}(2\log (n_{\mathrm{P}} + n_{\mathrm{Q}}) + \log \varepsilon)}\\
			\leq & \lambda \sqrt{3\sum_{i=1}^{n_{\mathrm{P}}}U_i^{\mathrm{Q}j}\log (n_{\mathrm{P}} + n_{\mathrm{Q}}) }\nonumber
		\end{align}
		with probability  $\mathrm{Q}_{Y^{\mathrm{Q}} | X^{\mathrm{Q}}}^{n_{\mathrm{Q}}}$ at least $1- 1/ (n_{\mathrm{P}} + n_{\mathrm{Q}})^2$.
		Combining \eqref{equ:boundoflepski12} and \eqref{equ:singleboundforq}, we get
		\begin{align*}
			|(I) + (II) + (III)| \leq & \left(4 \sqrt{ \left(8\varepsilon^{-2} n_{\mathrm{P}}\right)\vee \left( \sum_{i=1}^{n_{\mathrm{P}}}\tilde{U}_i^{\mathrm{P}j} \right)}
			+ \lambda \sqrt{3\sum_{i=1}^{n_{\mathrm{P}}}U_i^{\mathrm{Q}j} }\right) \cdot\sqrt{\log (n_{\mathrm{P}} + n_{\mathrm{Q}})}
			\\ 
			\leq & \sqrt{4C_1\left(8\varepsilon^{-2} n_{\mathrm{P}}\right)\vee \left( \sum_{i=1}^{n_{\mathrm{P}}}\tilde{U}_i^{\mathrm{P}j} \right) + C_1 \lambda^2\sum_{i=1}^{n_{\mathrm{P}}}U_i^{\mathrm{Q}j}} .
		\end{align*}
		with probability $\mathrm{P}_{Y^{\mathrm{P}} | X^{\mathrm{P}}}^{n_{\mathrm{P}}} \otimes \mathrm{Q}_{Y^{\mathrm{Q}} | X^{\mathrm{Q}}}^{n_{\mathrm{Q}}}\otimes \mathrm{R}$ at least $1-3 / (n_{\mathrm{P}} + n_{\mathrm{Q}})^2$.
		Here, we define constant $C_1$ for notation simplicity 
		\begin{align*}
			C_1  = 8\log (n_{\mathrm{P}} + n_{\mathrm{Q}}).
		\end{align*}
		We now prove the conclusion for $k<p_0$ with a slight modification on the above statement.
		For the term $(II)$, summing among all nodes with the same ancestor yields the sum of $2^{p_0 - k} \cdot n_{\mathrm{P}}$ Laplace random variables, for which we have
		\begin{align*}
			|(I) + (II) + (III)| \leq \sqrt{4C_1\left(2^{p_0 - k + 3}\varepsilon^{-2} n_{\mathrm{P}}\right)\vee \left( \sum_{j'}\sum_{i=1}^{n_{\mathrm{P}}}\tilde{U}_i^{\mathrm{P}j'} \right) + C_1 \lambda^2\sum_{j'}\sum_{i=1}^{n_{\mathrm{P}}}U_i^{\mathrm{Q}j'}}.
		\end{align*}
		The proof is completed by bringing the bound into the decomposition in the beginning.
	\end{proof}

	\subsection{Proof of Theorem \ref{thm:utility3}}
	
	In the proof, depth $p_0$, $p$, $k$ are all integers.
	Without loss of generality, we only write $p_0$, $p$, $k$ equal some order which will only cause difference in constants.

	\begin{proof}\textbf{of Theorem \ref{thm:utility3}}\;\;
		In the following proof, we give several version of upper bounds of $\left|\tilde{\eta}^{prune}_{\mathrm{P}, \mathrm{Q}}(x)  - \mathbb{E}_{Y|X}\left[\widehat{\eta}^{prune}_{\mathrm{P}, \mathrm{Q}}(x) \right]\right| \lesssim b$ for each fixed $x\in \Omega_n$. 
		Then, we consider the region $\Omega_n$ which is 
		\begin{align*}
			\Omega_n := \left\{x \in \mathcal{X} \bigg|  \left|\eta_{\mathrm{P}}^*(x) - \frac{1}{2}\right| \geq c_{\Omega}\cdot b\right\}.
		\end{align*}
		For some $c_{\Omega}$, by techniques analogous to the proof of Theorem \ref{thm:utility}, we can show that all $x$ in $\Omega$ is assigned to the same label as the Bayes classifier.
		For each fixed $x$, let $A^{j_k}$ denote the partition cell in $\pi_k$ that contains $x$. 
		We will discuss the stopping time of the pruning process and the resulting point-wise error. 
		We prove the conclusion under two cases respectively.

		\textbf{Case (i)}: $\left( \frac{n_{\mathrm{P}} \varepsilon^2}{\log (n_{\mathrm{P}} + n_{\mathrm{Q}})}\right) \lesssim \left(\frac{n_{\mathrm{Q}}}{\log (n_{\mathrm{P}} + n_{\mathrm{Q}})}\right)^{\frac{2\alpha + 2d}{2 \gamma\alpha + d}}$. 
		In this case, $\delta^{-1} \lesssim n_{\mathrm{Q}}^{\frac{d}{2\gamma\alpha + d}} \lesssim n_{\mathrm{Q}} \leq \lfloor\frac{d}{2+2 d} \log _2\left(n_{\mathrm{P}} \varepsilon^2+n_{\mathrm{Q}}^{\frac{2+2 d}{d}}\right)\rfloor$.
		Without loss of generality, we let ${\eta}_{\mathrm{P}}^{*}(x)-\frac{1}{2}\geq 0$.
		Following the analysis in Section \ref{sec:derivationpruning}, the final statistics will satisfy
		\begin{align*}
			\frac{|\tilde{\eta}_{\mathrm{P}, \mathrm{Q}}^{\pi_k}(x) - \frac{1}{2}|}{r_k^j}  = & \sqrt{\frac{\left(\sum_{j'}\sum_{i=1}^{n_{\mathrm{P}}}\tilde{U}_i^{\mathrm{P}j'}\right)^2}{u_k} \left(\tilde{\eta}_{\mathrm{P}}^{\pi_k}(x) - \frac{1}{2}\right)^2 + \sum_{j^{\prime}} \sum_{i=1}^{n_{\mathrm{Q}}} U_i^{\mathrm{Q} j^{\prime}} \left(\widehat{\eta}_{\mathrm{Q}}^{\pi_k}(x) - \frac{1}{2}\right)^2 }\\
			\geq & \sqrt{\sum_{j^{\prime}} \sum_{i=1}^{n_{\mathrm{Q}}} U_i^{\mathrm{Q} j^{\prime}} } \left|\widehat{\eta}_{\mathrm{Q}}^{\pi_k}(x) - \frac{1}{2}\right|.
		\end{align*}
		By Proposition \ref{prop:lepskibound}, there exists a constant $\overline{C}_p$ such that 
		\begin{align}
			\left|\widehat{\eta}_{ \mathrm{Q}}^{\pi_k}(x)-\mathbb{E}_{Y|X} \widehat{\eta}_{ \mathrm{Q}}^{\pi_k}(x) \right|
			\leq   \overline{C}_p \sqrt{\frac{\log (n_{\mathrm{P}} + n_{\mathrm{Q}})}{\sum_{j^{\prime}} \sum_{i=1}^{n_{\mathrm{Q}}} U_i^{\mathrm{Q} j^{\prime}} }} \label{equ:lepskibound1}
		\end{align}
		with probability $\mathrm{P}_{Y^{\mathrm{P}} | X^{\mathrm{P}}}^{n_{\mathrm{P}}} \otimes \mathrm{Q}_{Y^{\mathrm{Q}} | X^{\mathrm{Q}}}^{n_{\mathrm{Q}}}\otimes \mathrm{R}$ at least  $1-3/(n_{\mathrm{P}} + n_{\mathrm{Q}})^2$.
		Consider the region
		\begin{align*}
			\Omega_n := \left\{x \in \mathcal{X} \bigg|  \left|\eta_{\mathrm{P}}^*(x) - \frac{1}{2}\right| \geq C_{\Omega}\cdot \delta^{\alpha / d} \right\}.
		\end{align*}
		Let $2^{-k} =  \delta$. 
		For each $x\in\Delta_n$, by Lemma \ref{lem:boundingofapproximationerror}, there exist $\underline{C}_{p1}$, $\underline{C}_{p2}$ such that,
		\begin{align}\label{equ:lepskibound2}
			\min_{x\in A^{j_k}} \mathbb{E}_{Y|X} \widehat{\eta}_{\mathrm{Q}}^{\pi_k}(x) - \frac{1}{2}  \geq  \underline{C}_p^1 \left(\mathbb{E}_{Y|X} \widehat{\eta}_{ \mathrm{Q}}^{\pi_k}(x) - \frac{1}{2} \right)\geq \underline{C}_{p1} (C_{\Omega} - \underline{C}_{p2}) \delta^{\gamma \alpha / d}.
		\end{align}
		Also, by Lemma \ref{lem:cellbound}, there exists constant $\underline{C}_{p3}$ such that
		\begin{align}\label{equ:lepskibound3}
			\sum_{j^{\prime}} \sum_{i=1}^{n_{\mathrm{Q}}} U_i^{\mathrm{Q} j^{\prime}} \geq \underline{C}_{p3} \delta \cdot n_{\mathrm{Q}}
		\end{align}
		with probability $\mathrm{P}_{ X^{\mathrm{P}}}^{n_{\mathrm{P}}} \otimes \mathrm{Q}_{X^{\mathrm{Q}}}^{n_{\mathrm{Q}}}$ at least $1 - 1 / (n_{\mathrm{P}} + n_{\mathrm{Q}})^2$.
		\eqref{equ:lepskibound2} and \eqref{equ:lepskibound3} together lead to 
		\begin{align}\nonumber
			\sqrt{\sum_{j^{\prime}} \sum_{i=1}^{n_{\mathrm{Q}}} U_i^{\mathrm{Q} j^{\prime}}} \left(\mathbb{E}_{Y|X} \widehat{\eta}_{\mathrm{Q}}^{\pi_k}(x) - \frac{1}{2}\right) \geq &  \underline{C}_{p1} \underline{C}_{p3}^{1/2}(C_{\Omega} - \underline{C}_{p2})  \sqrt{n_{\mathrm{Q}}} \cdot \delta^{\frac{2\gamma \alpha + d}{2d}}\\ 
			\geq &  2^{\frac{2\gamma \alpha + d}{2d}} \underline{C}_{p1} \underline{C}_{p3}^{1/2}(C_{\Omega} - \underline{C}_{p2}) \sqrt{\log (n_{\mathrm{P}} + n_{\mathrm{Q}})}.
			\label{equ:lepskibound4}
		\end{align}
		For sufficiently large constant $C_{\Omega}$, \eqref{equ:lepskibound4} yields that the choice of $2^{-k} =  \delta$ leads to 
		\begin{align}\label{equ:lepskibound5}
			\sqrt{\sum_{j^{\prime}} \sum_{i=1}^{n_{\mathrm{Q}}} U_i^{\mathrm{Q} j^{\prime}}} \left(\mathbb{E}_{Y|X} \widehat{\eta}_{\mathrm{Q}}^{\pi_k}(x) - \frac{1}{2}\right) \geq 3 \overline{C}_p\sqrt{\log (n_{\mathrm{P}} + n_{\mathrm{Q}})}. 
		\end{align}
		Combining \eqref{equ:lepskibound1} and \eqref{equ:lepskibound5}, there holds the following 
		\begin{align*}
			& \sqrt{\sum_{j^{\prime}} \sum_{i=1}^{n_{\mathrm{Q}}} U_i^{\mathrm{Q} j^{\prime}} } \left(\widehat{\eta}_{\mathrm{Q}}^{\pi_k}(x) - \frac{1}{2}\right)  \\
			\geq & \sqrt{\sum_{j^{\prime}} \sum_{i=1}^{n_{\mathrm{Q}}} U_i^{\mathrm{Q} j^{\prime}} } \left(\mathbb{E}_{Y|X} \widehat{\eta}_{ \mathrm{Q}}^{\pi_k}(x) - \frac{1}{2} \right) - \sqrt{\sum_{j^{\prime}} \sum_{i=1}^{n_{\mathrm{Q}}} U_i^{\mathrm{Q} j^{\prime}} } \left|\widehat{\eta}_{\mathrm{Q}}^{\pi_k}(x)-\mathbb{E}_{Y|X} \widehat{\eta}_{ \mathrm{Q}}^{\pi_k}(x) \right|\\
			> & \overline{C}_p\sqrt{\log (n_{\mathrm{P}} + n_{\mathrm{Q}})}. 
		\end{align*}
		for the choice of $2^k = \delta^{-1}$.
		Thus, for $x\in \Omega$ and $x\in A^j_{(p_0)}$, the terminating condition will be triggered at least at $2^k = \delta^{-1}$, indicating that $2^{k_j}\geq \delta^{-1}$ with probability $\mathrm{P}^{n_{\mathrm{P}}} \otimes \mathrm{Q}^{n_{\mathrm{Q}}}\otimes \mathrm{R}$ at least $1 - 4 / (n_{\mathrm{P}} + n_{\mathrm{Q}})^2$.
		Thus, in this case, we have
		\begin{align}	\label{equ:comparisonofsignalstrengthandvariance1}
			\mathbb{E}_{Y|X}\left[\widehat{\eta}^{\pi_{k_j}}_{\mathrm{P}, \mathrm{Q}}(x)\right] - \frac{1}{2} =  \mathbb{E}_{Y|X}\left[\widehat{\eta}^{\pi_{k_j}}_{\mathrm{P}, \mathrm{Q}}(x)\right] - \tilde{\eta}^{\pi_{k_j}}_{\mathrm{P}, \mathrm{Q}}(x) + \tilde{\eta}^{\pi_{k_j}}_{\mathrm{P}, \mathrm{Q}}(x) - \frac{1}{2} \geq r_k^j - r_k^j \geq 0, 
		\end{align}
		which implies that the signs of $\mathbb{E}_{Y|X}\left[\widehat{\eta}^{\pi_{k_j}}_{\mathrm{P}, \mathrm{Q}}(x)\right]$ and $ \tilde{\eta}^{\pi_{k_j}}_{\mathrm{P}, \mathrm{Q}}(x)$ are identical. 
		By steps analogous to proof of Proposition \ref{lem:boundingofapproximationerror}, there holds $\mathbb{E}_{Y|X}\left[\widehat{\eta}^{\pi_{k_j}}_{\mathrm{P}, \mathrm{Q}}(x)\right] - \frac{1}{2} > 0$ for $x\in \Omega_n$.
		Thus we show that, for each $x\in \Omega_n$ and $x\in A^j_{(p_0)}$, the label is assigned correctly as the Bayes classifier. 
		Therefore, by Assumption \ref{asp:margin}, we have 
		\begin{align*}
			\mathcal{R}_{\mathrm{P}} (\tilde{f}^{prune}_{\mathrm{P}, \mathrm{Q}}) - \mathcal{R}_{\mathrm{P}}^* = & \int_{\mathcal{X}} \left|\eta^*_{\mathrm{P}}(x) -\frac{1}{2}\right| \eins\left\{\tilde{f}^{\pi}_{\mathrm{P}, \mathrm{Q}}(x) \neq f^*_{\mathrm{P}}(x)\right\} 
			d\mathrm{P}_X(x)
			\\ 
			=& \int_{\Omega_n^c} \left|\eta^*_{\mathrm{P}}(x) -\frac{1}{2}\right| d\mathrm{P}_X(x) \lesssim \delta^{(\beta+1)\alpha /d}. 
		\end{align*}

		\textbf{Case (ii)} $\left( \frac{n_{\mathrm{P}} \varepsilon^2}{\log (n_{\mathrm{P}} + n_{\mathrm{Q}})}\right) \gtrsim \left(\frac{n_{\mathrm{Q}}}{\log (n_{\mathrm{P}} + n_{\mathrm{Q}})}\right)^{\frac{2\alpha + 2d}{2 \gamma\alpha + d}}$. 
		We first note that, with
		$n_{\mathrm{P}}^{-\frac{\alpha}{2\alpha + 2d}}\lesssim \varepsilon \lesssim n_{\mathrm{P}}^{\frac{d}{4 + 2d}}$, there holds $2^{p_0-k+3} \varepsilon^{-2} n_{\mathrm{P}} \geq \sum_{j^{\prime}} \sum_{i=1}^{n_{\mathrm{P}}} \tilde{U}_i^{\mathrm{P} j^{\prime}}$ with high probability, i.e. Algorithm \ref{alg:prunedeta} will determine true in first if block. 
		Since $\varepsilon \lesssim n_{\mathrm{P}}^{\frac{d}{4 + 2d}}$, there holds $2^{p_0} \geq \left(n_{\mathrm{P}}\varepsilon^2\right)^{\frac{d}{2 + 2d}}\gtrsim \varepsilon^2$. 
		By Lemma \ref{lem:cellbound}, when $\lambda = 0$, we have $	\frac{1}{n_{\mathrm{P}}} \sum_{j^{\prime}} \sum_{i=1}^{n_{\mathrm{P}}} {U}_i^{\mathrm{P} j^{\prime}}  \asymp 2^{-k}$. 
		Together, we have
		\begin{align}\label{equ:determinofifblock1}
			2^{p_0-k+3} \varepsilon^{-2} n_{\mathrm{P}} \gtrsim \sum_{j^{\prime}} \sum_{i=1}^{n_{\mathrm{P}}} {U}_i^{\mathrm{P} j^{\prime}}. 
		\end{align}
		For the Laplace noise, there holds
		\begin{align}\label{equ:determinofifblock2}
			2^{p_0-k} \varepsilon^{-2} n_{\mathrm{P}} \gtrsim \sqrt{2^{p_0-k} \varepsilon^{-2} n_{\mathrm{P}} \log (n_{\mathrm{P}} + n_{\mathrm{Q}})}  \gtrsim \frac{4}{\varepsilon}\sum_{j^{\prime}} \sum_{i=1}^{n_{\mathrm{P}}} {\zeta}_i^{\mathrm{P} j^{\prime}}
		\end{align}
		due to Lemma \ref{lem:laplacebound}. 
		Adding \eqref{equ:determinofifblock1} and \eqref{equ:determinofifblock2} shows $2^{p_0-k+3} \varepsilon^{-2} n_{\mathrm{P}} \geq \sum_{j^{\prime}} \sum_{i=1}^{n_{\mathrm{P}}} \tilde{U}_i^{\mathrm{P} j^{\prime}}$ holds with probability $1 - 2 / (n_{\mathrm{P}} + n_{\mathrm{Q}})^2$.
		Under this case, if $k_j > \lfloor \frac{d}{2 + 2d} \log_2 (n_{\mathrm{P}} \varepsilon^2 ) \rfloor $ for all $j\in\mathcal{I}_{p_0}$, we end up with $\tilde{\eta}_{\mathrm{P}, \mathrm{Q}}^{prune} = \sum_j s_k^j \cdot \eins\{A_{(p_0)}^j\}$. 
		We define 
		\begin{align*}
			\Omega_n:=\left\{x \in \mathcal{X}|| \eta_{\mathrm{P}}^*(x)-\frac{1}{2} | \geq C_{\Omega} \cdot \delta^{\alpha (\alpha + d) / d (1 + d)}\right\}
		\end{align*}
		for some $C_{\Omega}$ large enough. 
		Assume $ \eta_{\mathrm{P}}^*(x) > 1 / 2$. 
		If $x'$ belongs to the same $A^{k_j}$ as $x$, then 
		\begin{align*}
			\eta_{\mathrm{P}}^*(x') - \frac{1}{2} \geq  \eta_{\mathrm{P}}^*(x)- \frac{1}{2} - C' 2^{- k_j \alpha / d}  \geq \eta_{\mathrm{P}}^*(x)- \frac{1}{2} - C' \delta^{\alpha (\alpha + d) / d (1 + d)} \geq 0
		\end{align*}
		for some $C'$. 
		Analogous to \eqref{equ:comparisonofsignalstrengthandvariance1}, all $x\in\Omega_n$ are correctly classified and thus 
		\begin{align*}
			\mathcal{R}_{\mathrm{P}} (\tilde{f}^{prune}_{\mathrm{P}, \mathrm{Q}}) - \mathcal{R}_{\mathrm{P}}^* = & \int_{\mathcal{X}} \left|\eta^*_{\mathrm{P}}(x) -\frac{1}{2}\right| \eins\left\{\tilde{f}^{\pi}_{\mathrm{P}, \mathrm{Q}}(x) \neq f^*_{\mathrm{P}}(x)\right\} 
			d\mathrm{P}_X(x)
			\\ 
			=& \int_{\Omega_n^c} \left|\eta^*_{\mathrm{P}}(x) -\frac{1}{2}\right| d\mathrm{P}_X(x) \lesssim \delta^{(\beta+1)\alpha (\alpha + d) /d ( 1+ d)}. 
		\end{align*}
		If there exists some $j \in \mathcal{I}_{p_0}$ such that the termination is triggered, i.e. we will end up with $\tilde{f}_{\mathrm{P}}^{\pi_p}$, the proof becomes analogous to that of Theorem \ref{thm:utility}. 
		Specifically, the trade off \eqref{equ:signofetadpwithcomega} will have a larger approximation error $ \delta^{\alpha (\alpha + d) / d (1 + d)}$ and smaller sample error as well as privacy error. 
		Thus, the final rate is dominated by the approximation error, which becomes $\delta^{(\beta+1)\alpha (\alpha + d) /d ( 1+ d)}$. 
		In Case (ii), the overall incidents happen with probability $\mathrm{P}^{n_{\mathrm{P}}} \otimes \mathrm{Q}^{n_{\mathrm{Q}}}\otimes \mathrm{R}$ at least $1 - 4 / (n_{\mathrm{P}} + n_{\mathrm{Q}})^2$.

	\end{proof}

	\section{Experiment Details}\label{app:experimentdetails}

	\subsection{Real Datasets Pre-processing}
	\label{sec:realdatasetpreprocess}
	
	We present additional information on the data sets including data source and pre-processing details.
	The data sets are sourced from \texttt{UCI Machine Learning Repository} \citep{Dua:2019} and \texttt{Kaggle}. 
	
	\texttt{Anonymity}: 
	The Anonymity data set \citep{anonymity} contains poll results about Internet privacy from a large number of people, including answers about whether the interviewee believes United States law provides reasonable privacy protection for Internet users or their conservativeness about Internet privacy. 
	We use other variables to predict whether the interviewee has ever tried to mask his/her identity when using the Internet. 
	We randomly select 50 samples as the public data. 
	
	\texttt{Diabetes}:
	This data set from \cite{diabetes} contains a diverse range of health-related attributes, meticulously collected to aid in the development of predictive models for identifying individuals at risk of diabetes.
	We randomly select 200 samples as the public data. 
	
	\texttt{Email}: 
	The data set is used for spam or not-spam classification \citep{email}.
	For each row, the count of each word (column) in that email (row) is stored in the respective cells. Thus, information regarding all emails is stored in a compact data frame.
	We randomly select 200 samples as the public data.

	\texttt{Employ}:
	The data set \citep{employ} was collected from different agencies in the Philippines consisting of mock interview performances of students. 
	The final task is to decide whether the student got employed. 
	We randomly select 200 samples as the public data.

	\texttt{Rice}:
	The Rice data set \citep{ricedata} contains morphological features obtained for two grains of rice to classify them. 
	We randomly select 300 samples as the public data.

	\texttt{Census}:
	This data set is the well-known census data \citep{census} and is used to predict whether income exceeds 50,000 per year. 
	We select the samples whose native country is the U.S. to be public data and others to be private data.

	\texttt{Election}:
	This data set from \citep{election} contains county-level demographics and whether or not Bill Clinton won each county during the 1992 U.S. presidential election. 
	The goal of this data set is to predict if Clinton won a county.
	We use samples whose state is TX as public data and the rest as private data. 
	
	\texttt{Employee}:
	The Employee data \citep{employee} contains information about employees in a company, including their educational backgrounds, work history, and employment-related factors. 
	We use these variables to predict whether the employee leaves or not.
	We select the sample to be public if the employee joined the company before 2012.

	\texttt{Jobs}:
	The data \citep{jobs}contains a comprehensive collection of information regarding job applicants and their respective employability scores to predict whether the applicant has been hired.
	We select the samples whose native country is the U.S. to be public data and others to be private data. 
	
	\texttt{Landcover}:
	The data is from \cite{johnson2016integrating}, which contains Landsat time-series satellite imagery information on given pixels and their corresponding land cover class labels (farm, forest, water, etc.) obtained from multiple sources. We classify whether the image comes from a farm or a forest.
	Within this data set, there are two kinds of label sources: labels obtained from Open-StreetMap and labels that are accurately labeled by experts.
	We take the latter as public data and the former as private data.

	\texttt{Taxi}:
	The Taxi data set was obtained from the Differential Privacy Temporal Map Challenge \citep{taxi}, which aims to develop algorithms that preserve data utility while guaranteeing individual privacy protection. 
	The data set contains quantitative and categorical information about taxi trips in Chicago, including time, distance, location, payment, and service provider. These features include the identification number of taxis ($taxi\_id$), time of each trip ($seconds$), the distance of each trip ($miles$), index of the zone where the trip starts ($pca$), index of the zone where the trip ends ($dca$), service provider ($company$), the method used to pay for the trip ($payment\_type$), amount of tips ($tips$) and fares ($fare$).
	We use the other variable to predict whether the passenger is paying by credit card or by cash. 
	We select data from one company as public data and others as private data. 
	

	\bibliography{ref}

\end{document}